\DeclareMathOperator{\E}{\mathbb{E}}
\DeclareMathOperator{\V}{\mathbb{V}}
\DeclareMathOperator{\Cov}{\textrm{Cov}}
\DeclareMathOperator{\diag}{\textrm{diag}}
\DeclareMathOperator{\sgn}{\textrm{sgn}}
\DeclareMathOperator{\cvec}{\bf c}
\DeclareMathOperator{\evec}{\bf e}
\DeclareMathOperator{\fvec}{\bf f}
\DeclareMathOperator{\gvec}{\bf g}
\DeclareMathOperator{\qvec}{\bf q}
\DeclareMathOperator{\rvec}{\bf r}
\DeclareMathOperator{\wvec}{\bf w}
\DeclareMathOperator{\xvec}{\bf x}
\DeclareMathOperator{\yvec}{\bf y}
\DeclareMathOperator{\Ivec}{\bf I}
\DeclareMathOperator{\Wvec}{\bf W}
\DeclareMathOperator{\Xvec}{\bf X}
\DeclareMathOperator{\Yvec}{\bf Y}
\DeclareMathOperator{\onevec}{\boldsymbol{1}}
\DeclareMathOperator{\zerovec}{\boldsymbol{0}}
\DeclareMathOperator{\thetavec}{\boldsymbol{\theta}}
\DeclareMathOperator{\etavec}{\boldsymbol{\eta}}
\DeclareMathOperator{\sigmavec}{\boldsymbol{\sigma}}
\DeclareMathOperator{\pivec}{\boldsymbol{\pi}}
\DeclareMathOperator{\omegavec}{\boldsymbol{\omega}}
\begin{document}

% If your paper is accepted and the title of your paper is very long,
% the style will print as headings an error message. Use the following
% command to supply a shorter title of your paper so that it can be
% used as headings.
%
%\runningtitle{I use this title instead because the last one was very long}

% If your paper is accepted and the number of authors is large, the
% style will print as headings an error message. Use the following
% command to supply a shorter version of the authors names so that
% they can be used as headings (for example, use only the surnames)
%
\runningauthor{Shivam Garg, Samuele Tosatto, Yangchen Pan, Martha White, A. Rupam Mahmood}

\twocolumn[

  \aistatstitle{An Alternate Policy Gradient Estimator for Softmax Policies}

  \aistatsauthor{Shivam Garg$^{1}$ $\;$ Samuele Tosatto$^{1}$ $\;$ Yangchen Pan$^{2}$ $\;$ Martha White$^{1 *}$ $\;$ A.~Rupam Mahmood$^{1 *}$}

  \aistatsaddress{$^{1}$University of Alberta. $^{2}$Noah's Ark Lab, Huawei (work done while at UofA). \\ $^{*}$CIFAR AI Chair, Alberta Machine Intelligence Institute (Amii). } ]

\begin{abstract}
  Policy gradient (PG) estimators are ineffective in dealing with softmax policies that are sub-optimally saturated, which refers to the situation when the policy concentrates its probability mass on sub-optimal actions. Sub-optimal policy saturation may arise from bad policy initialization or sudden changes in the environment that occur after the policy has already converged. Current softmax PG estimators require a large number of updates to overcome policy saturation, which causes low sample efficiency and poor adaptability to new situations. To mitigate this problem, we propose a novel PG estimator for softmax policies that utilizes the bias in the critic estimate and the noise present in the reward signal to escape the saturated regions of the policy parameter space. Our theoretical analysis and experiments, conducted on bandits and various reinforcement learning environments, show that this new estimator is significantly more robust to policy saturation.
\end{abstract}

\section{INTRODUCTION} \label{sec: introduction}

Policy gradient (PG) algorithms aim to optimize a sequential decision making problem defined on a set of parameterized policies: the policy parameters are optimized using standard stochastic gradient-based update to maximize the net reward received by the agent. A salient feature of the PG methods is that they can be used with both discrete and continuous action spaces, and have been successfully deployed in a variety of real world tasks such as robotics (Haarnoja et al., 2018; Mahmood, et al., 2018) and large scale simulated problems (Berner et al., 2019). 

When the action-space is discrete, policies are typically represented using a categorical distribution parameterized by a softmax function. However, softmax policies have some inherent issues. Even with access to the true gradients, they can be slow in responding to non-stationarity (Hennes et al., 2019) and their performance heavily depends on the initialization of the policy parameters (Li et al., 2021; Mei et al. 2020a; Mei et al. 2020b). Both these problems arise from an issue, which we call \textit{sub-optimal policy saturation}.

Sub-optimal policy saturation refers to the situation when the policy places a high probability mass on sub-optimal actions. An agent with a saturated policy will not be able to explore other actions and may continue to remain in the sub-optimal region if an appropriate measure is not taken. 
Sub-optimal policy saturation is a significant issue that arises in multiple scenarios: (1) Non-stationarity: when the agent is acting in a constantly changing environment, and what was once an optimal strategy may no longer work well.\ 
(2) Pre-training / transfer learning: deep reinforcement learning systems are often pre-trained on different tasks before being used on the main task. The subtle differences in the structure of these tasks might lead to a bad policy initialization which is sub-optimally saturated. And (3) Stochastic updates: policy gradient updates suffer from high variance (Williams, 1992) and therefore, it is possible for a policy to become saturated on sub-optimal actions during the course of learning.

PG methods with softmax policies are particularly susceptible to the issue of saturation. Prior work has suggested different approaches to mitigate this issue: entropic regularization (Ding et al., 2021; Ahmed et al., 2019; Haarnoja et al., 2018; Peters et al., 2010), natural PG algorithms (Kakade, 2001; Peters and Schaal, 2008), and log barrier regularization (Agarwal et al., 2021) can all make the policy more explorative. Explicit reward bonuses can also encourage the exploration of low-density state-action pairs (Auer et al. 2002). However, entropic regularization and explorative bonuses introduce additional terms in the objective, and therefore the resulting optimal policy can be different from the original one. Another solution is to replace the softmax with a function such as the escort transform (Mei et al., 2020b).
In this article, we take a different approach to address this issue. 
Instead of augmenting the optimization objective with bonuses or regularizers, we introduce a PG estimator that inherently helps to escape the sub-optimally saturated regions. Furthermore, the above augmentations can still be applied on top of our estimator.

Our proposed estimator is a simple yet effective approach for dealing with sub-optimally saturated policies thereby making PG algorithms more robust. The classic likelihood ratio estimator for softmax policies, which we call the \textit{regular estimator}, takes a frustratingly large amount of experience to escape sub-optimally saturated policy regions. The regular estimator produces near zero gradients at saturation as both its expectation and variance, even with reward noise, are vanishingly small at those regions. 
In contrast, our proposed estimator, which we call the \textit{alternate estimator}, has a non-zero variance in the same scenario that can be utilized to escape the sub-optimal regions. 
Further, the alternate estimator naturally utilizes the bias in the critic estimate to increase the policy's entropy, thereby encouraging exploration. As the critic estimate improves, this effect reduces, allowing the policy to saturate towards the optimal actions.

\textbf{Contributions:} We present the alternate PG estimator for softmax policies and show that it works by utilizing reward noise as well as the bias in the critic to escape saturated regions in the policy space. We theoretically justify this behavior in the bandit setting complemented by simple and intuitive examples. We also provide empirical verification\footnote{Code for the experiments is available at \url{https://github.com/svmgrg/alternate_pg}.} of these properties on (1) problems with artificially saturated policies and (2) on non-stationary tasks where sub-optimal policy saturation naturally arises. Our results show that this new estimator works well in both the bandit and the MDP settings with various PG algorithms such REINFORCE and Actor-Critic (Sutton et al., 2000), and is superior (or competitive) to the regular estimator under tabular, linear, and neural representations.

\section{PRELIMINARIES} \label{sec: preliminaries}
In reinforcement learning, the decision making task is described using Markov decision processes (MDPs) $\mathcal{M} := (\mathcal{S}, \mathcal{A}, \mathcal{R}, \mu, p, \gamma)$, where $\mathcal{S}$ is the set of states, $\mathcal{A}$ is the discrete set of actions, $\mathcal{R} \subset \mathbb{R}$ is the set of  rewards, $\mu \in \Delta(\mathcal{S})$\footnote{The object $\Delta(\mathcal{X})$ denotes the set of all possible probability distributions over the set $\mathcal{X}$.} is the start state distribution, $p: \mathcal{S} \times \mathcal{A} \to \Delta(\mathcal{S} \times \mathcal{R})$ is the transition dynamics, and $\gamma\in[0,1]$ is the discount factor. The agent maintains a policy $\pi: \mathcal{S} \rightarrow \Delta(\mathcal{A})$ that describes its interaction with the environment. This interaction is represented using the history $H_{T} = \{S_0, A_0, R_1, S_1, A_1, \ldots,  R_T, S_T\}$ where $S_0 \sim \mu$, $A_t \sim \pi(\cdot | S_t)$, and $S_{t+1}, R_{t+1} \sim p(\cdot, \cdot | S_t, A_t)$ for $t \in \{0, \dots, T-1\}$\footnote{In this article, we only consider episodic tasks with the random variable $T$ denoting the episode termination length. However, all the arguments presented here can be extended for continuing tasks as well.}. The agent uses such interactions to learn a policy that maximizes the expected return $\mathcal{J}=\E_\pi[\sum_0^{T-1}\gamma^t R_{t+1}] $. PG methods provide one way to accomplish this task. We now describe these methods in two typical settings.  

\subsection*{Gradient Bandits}
Discrete bandits are a simplification of the full MDP framework. They have no states and at each timestep the agent picks an action $A_t$ from a discrete action set $\mathcal{A}$ and obtains a reward $R_t \sim p(\cdot | A_t)$. In this setting, the agent's goal reduces to maximizing the expected immediate reward $\mathcal{J} := \E_\pi[R_t] =: r_\pi$. Gradient bandit algorithms maximize $\mathcal{J}$ using gradient ascent. Given a softmax policy $\pi_{\thetavec}(a) = e^{\theta_a} / \sum_{b \in \mathcal{A}} e^{\theta_b}$, where $\theta_a$ is the action preference for action $a$, the gradient $\nabla \mathcal{J}$ (\S 2.8, Sutton and Barto, 2018) is given by 
\begin{equation}
  [\nabla_{\thetavec} \mathcal{J}_{\pi}]_i = r(a_i) \cdot \pi(a_i) \cdot (1 - \pi(a_i)), \label{eq: gradient_bandit}
\end{equation}
where $r(a) := \E[R | a]$ is the reward function. However, the agent usually does not have access to $r$ for all actions at the same time and must resort to using sample based gradient estimators $\hat{\gvec}$ such that $\nabla_{\thetavec} \mathcal{J}_\pi = \E_{A \sim \pi; R \sim p(\cdot | A)} [\hat{\gvec}(A, R)]$. A typical choice is to use what we call the \textsl{regular estimator}
\begin{equation}
  [\hat{\gvec}^{\text{REG}}(A, R)]_a = (R - b) (\mathbb{I}(A = a) - \pi(a)), \label{eq: regular_bandit_estimator_scalar} 
\end{equation}
where $\mathbb{I}$ is the indicator function and $b$ is an estimate of the average reward $r_\pi$.

\subsection*{Policy Gradient Methods}
In an episodic MDP, the PG objective is $\mathcal{J}=\E_\pi[\sum_0^{T-1}\gamma^t R_{t+1}]$. The policy gradient theorem (Sutton et al., 2000) states that
\begin{equation}
  \nabla \mathcal{J} = \sum_{s \in \mathcal{S}} \nu_\pi(s) \sum_{a \in \mathcal{A}} \nabla \pi(a | s) q_{\pi}(s, a), \label{eq: pg_main}
\end{equation}
where $\nu_\pi(s) := \sum_{k=0}^\infty \gamma^k \mathbb{P}(S_k = s)$ is the $\gamma$-discounted state occupancy measure under policy $\pi$. Eq. \ref{eq: pg_main} can also be written as an expectation,
\begin{align}
  \nabla \mathcal{J} = \E_{\nu_\pi, \pi} \big[ \underbrace{ \nabla \log \pi(A | S) \big( q_\pi(S, A) - v_\pi(S) \big)}_{=: \gvec^\text{REG}(S, A)} \big]. \label{eq: pg_expectation}
\end{align}
% Comment------: Equation (4) has expectation under the occupancy measure, which is not a probability distribution.
This expression gives us the sample based gradient estimator $\gvec^\text{REG}$. Again, the agent will use $\hat{\gvec}^{\text{REG}}(S, A)$ with value function estimates $\hat{q}_\pi$ and $\hat{v}_\pi$.

\section{ALTERNATE GRADIENT BANDITS}
In this section, we introduce and analyze the alternate PG estimator. We begin by describing the vector notation which makes our calculations succinct and easier to follow. For a $k$-armed bandit problem, let $\pivec, \rvec \in \mathbb{R}^k$ denote the policy and the reward vectors with $[{\pivec}]_k = \pi(a_k)$ and $[\rvec]_k = r(a_k)$. Using these vectors, the expectations of scalar quantities become vector inner products: $\mathcal{J}_{\pi} \equiv r_\pi = \E_\pi[R] = \sum_a \pi(a) r(a) = \pivec^\top \rvec$. Further, the softmax policy can be written as $\pivec = e^{\thetavec} / (\boldsymbol{1}^\top e^{\thetavec})$, where the vector $e^{\thetavec} \in \mathbb{R}^k$ is defined as $[e^{\thetavec}]_a := e^{\theta_a}$. And $\nabla_{\thetavec} \pivec = (\Ivec - \pivec \boldsymbol{1}^\top) \diag(\pivec)$. 
%The expected PG for the softmax policy (Eq. \ref{eq: gradient_bandit}) becomes $\nabla_{\thetavec} \mathcal{J}_{\pi} = \nabla_{\thetavec} (\pivec^\top \rvec) = \pivec \odot (\rvec - r_\pi \boldsymbol{1})$ where $\odot$ represents the elementwise product and $\boldsymbol{1}$ represents a vector of ones. 
Hence, the regular policy gradient can be derived as follows:
\begin{align}
  \nabla_{\thetavec} \mathcal{J} &= \nabla_{\thetavec} (\pivec^\top \rvec) = (\Ivec - \pivec \boldsymbol{1}^\top) \diag(\pivec) \rvec \nonumber \\
  &= \E_{A \sim \pi} [r(A) (\evec_A - \pivec ) ] \nonumber \\
  &= \E_{A \sim \pi, R \sim p(\cdot | A)} [\underbrace{(R - r_\pi) (\evec_A - \pivec )}_{=: \gvec^\text{REG}(A, R)}], \label{eq: regular_gradient_bandit} 
\end{align}
% =\left( \diag(\rvec) - \pivec \rvec^\top \right) \pivec 
where $\evec_A \in \mathbb{R}^k$ represents the basis  vector with the $A$th element equal to one. The approximate version of it, as given before in Eq. \ref{eq: regular_bandit_estimator_scalar}, is:
\begin{equation}
  \hat{\gvec}^\text{REG}(A, R) = (R - b) (\evec_A - \pivec), \label{eq: regular_gradient_bandit_with_baseline} 
\end{equation}
where $b$ is an estimate of $r_\pi$. Also note that this estimator remains unbiased: $\nabla \mathcal{J} = \E[\hat{\gvec}^\text{REG}(A, R)]$.

\textbf{The Alternate Gradient Bandit Estimator} is derived by using an alternate form of the true gradient:
\begin{align}
  \nabla_{\thetavec} \mathcal{J}& = (\Ivec - \pivec \boldsymbol{1}^\top) \diag(\pivec) \rvec = \left( \diag(\rvec) - \pivec \rvec^\top \right) \pivec  \nonumber \\
  &= \diag(\rvec) \pivec - \pivec^\top \rvec \pivec = \left( \diag(\rvec) - \pivec^\top \rvec \Ivec \right) \pivec \nonumber \\
  &= \E_{A \sim \pi, R \sim p(\cdot | A)} [( R  - r_\pi ) \evec_A], \label{eq: alternate_stochastic_softmax_grad_vector}
\end{align}
where  $\gvec^\text{ALT}(A, R) := (R  - r_\pi ) \evec_A $ is the \textit{alternate} gradient bandit estimator. Using a baseline $b$ to estimate the average reward $r_\pi$, gives the following estimator:
\begin{equation}
  \hat{\gvec}^\text{ALT}(A, R) = (R - b) \evec_A. \label{eq: alternate_stochastic_softmax_grad_vector_with_baseline}
\end{equation}

\textbf{The alternate estimator is faster per-timestep than the regular estimator:} At each time-step, the regular estimator updates all the components of the action preference vector $\thetavec$, whereas the alternate estimator only updates a single component. Due to this, a gradient bandit algorithm employing the regular estimator will have a per-timestep computational complexity of $\mathcal{O}(|\mathcal{A}|)$. However, as we show in \S \ref{app: tree_sampling}, gradient bandits utilizing the alternate estimator can use a tree based sampling procedure to have a per-time-step complexity of $\mathcal{O}(\log |\mathcal{A}|)$. When the action space is large, this difference becomes substantial. Now we discuss how the alternate estimator utilizes the reward noise and the bias in the critic estimate to escape saturated regions in the policy space. Proofs of the results are given in \S \ref{app: zero_at_corner} and \S \ref{app: attractor_repellor}.

\subsection*{Alternate Estimator Utilizes Reward Noise}

\begin{figure*}[!tbp]
  \centering
  \includegraphics[scale=0.70]{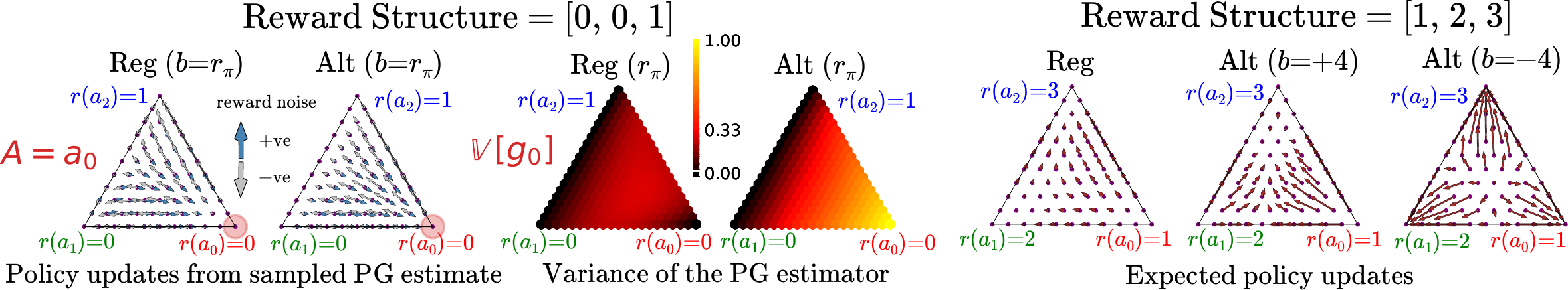}
  \caption{Policy updates and the variance for PG estimators on 3-armed bandits plotted on the probability simplex. The policy-update plots were drawn by updating the policy using the corresponding stochastic PG estimator, assuming that action $a_0$ was taken and a noise of constant magnitude was added to the reward. The blue and grey arrows correspond to the policy change direction for $+ve$ and $-ve$ reward noises. The variance plots show the heatmap for the zeroth element of the gradient estimator. Last three plots show policy updates using expectation of the estimators. We include more of these plots for other settings in \S \ref{app: bandits_intuition}.}
  \label{fig: bandit_intuition}
\end{figure*}

Whenever the policy is saturated, i.e. it places a high probability mass on some actions, the expected gradient becomes close to zero, and consequently the policy weights are not updated. Further, for the regular estimator, the variance at these ``corners'' of the probability simplex is also zero. Therefore, there is neither any gradient signal nor any noise, and the agent is unable to escape the sub-optimal region. In contrast, the alternate estimator, despite being zero in expectation, has a non-zero variance and therefore the agent can utilize the reward noise to escape the sub-optimal region. We illustrate this point with Figure \ref{fig: bandit_intuition} (left and middle) and an accompanying example:
\begin{restatable}{eg}{saturated_pg} \label{eg: saturated_pg}
  Consider a 3-armed bandit with $\mathcal{A} = \{a_0, a_1, a_2\}$, the expected reward $\rvec = [0\;0\;1]^\top$, and the policy $\pivec = [1\;0\;0]^\top$. The rewards are perturbed with a normally distributed noise $\epsilon \sim \mathcal{N}(0, \sigma^2)$, so that upon picking action $A$, the agent receives the reward $r(A) + \epsilon$. Since $r_\pi = 0$,  $R - r_\pi = 0 + \epsilon - 0 = \epsilon$. And because the agent samples $A = a_0$ at each timestep, we obtain
  \begin{align*}
    \gvec^\text{REG}(A, R) &= (R - r_\pi) (\evec_A - \pivec) \\
    &= \epsilon \cdot \Big( [1\;0\;0]^\top - [1\;0\;0]^\top \Big) = [0\;0\;0]^\top, \; \text{and} \\
    \gvec^\text{ALT}(A, R) &= (R - r_\pi) \evec_A = \epsilon \cdot [1\;0\;0]^\top = [\epsilon\;0\;0]^\top.
  \end{align*}
\end{restatable}
This example shows that at a sub-optimal corner, the agent with the alternate estimator effectively does a random walk until it escapes that region and can choose other actions to get a non-zero reward signal. Further, doing a random walk at the optimal corner is not problematic, since there is an attractive gradient signal as soon as the agent moves away from this corner. The following propositions formalize these points.
\begin{restatable}{prop}{expectation_saturated_pg} \label{thm: expectation_saturated_pg}
  Define $\mathcal{I}_c := \{a \;|\; r(a) = c\}$ for some constant $c \in \mathbb{R}$. Assume that $\exists c$ such that $\mathcal{I}_c \neq \emptyset$ and that the policy is saturated on the actions in the set $\mathcal{I}_c$: $\sum_{a \in \mathcal{I}_c} \pi(a) = 1$. Then the expected policy gradient for softmax policies is zero: $\nabla_{\thetavec} \mathcal{J} = \E [\gvec^{\text{REG}}(A, R)] = \E [\hat{\gvec}^{\text{REG}}(A, R)] = \E [\gvec^{\text{ALT}}(A, R)] = \zerovec$.
\end{restatable}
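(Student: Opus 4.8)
The plan is to reduce all four quantities to the true gradient $\nabla_{\thetavec}\mathcal{J}$ and then show the latter vanishes under the saturation hypothesis. By Eq.~\ref{eq: regular_gradient_bandit} we already have $\nabla_{\thetavec}\mathcal{J} = \E[\gvec^\text{REG}(A,R)]$; the unbiasedness remark following Eq.~\ref{eq: regular_gradient_bandit_with_baseline} gives $\E[\hat{\gvec}^\text{REG}(A,R)] = \nabla_{\thetavec}\mathcal{J}$; and Eq.~\ref{eq: alternate_stochastic_softmax_grad_vector} gives $\E[\gvec^\text{ALT}(A,R)] = \nabla_{\thetavec}\mathcal{J}$. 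So it suffices to prove $\nabla_{\thetavec}\mathcal{J} = \zerovec$ whenever $\sum_{a\in\mathcal{I}_c}\pi(a)=1$.

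For this I would use the vector form $\nabla_{\thetavec}\mathcal{J} = (\Ivec - \pivec\onevec^\top)\diag(\pivec)\rvec$ from Eq.~\ref{eq: regular_gradient_bandit}. Saturation on $\mathcal{I}_c$ forces $\pi(a)=0$ for every $a\notin\mathcal{I}_c$. Hence the $a$-th component of $\diag(\pivec)\rvec$ is $\pi(a)r(a)$, which equals $0$ when $a\notin\mathcal{I}_c$ and equals $c\,\pi(a)$ when $a\in\mathcal{I}_c$ by definition of $\mathcal{I}_c$; in both cases it equals $c\,\pi(a)$. Therefore $\diag(\pivec)\rvec = c\,\pivec$, and substituting gives $\nabla_{\thetavec}\mathcal{J} = (\Ivec - \pivec\onevec^\top)(c\,\pivec) = c\,\pivec - c\,\pivec(\onevec^\top\pivec) = \zerovec$, using $\onevec^\top\pivec = 1$.

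For completeness I would also verify the expectation forms directly, since this exposes the mechanism. Conditioning on $A$ and using $\E[R\mid A]=r(A)$ (the reward noise is mean zero given $A$), together with the facts that $A\in\mathcal{I}_c$ almost surely under $\pi$ (so $r(A)=c$), that $r_\pi = \E_\pi[r(A)] = c$, and that $\E_{A\sim\pi}[\evec_A]=\pivec$, we get $\E[\gvec^\text{ALT}(A,R)] = \E_A[(r(A)-r_\pi)\evec_A] = (c-c)\,\pivec = \zerovec$, and similarly $\E[\hat{\gvec}^\text{REG}(A,R)] = \E_A[(r(A)-b)(\evec_A-\pivec)] = (c-b)(\pivec-\pivec) = \zerovec$ for any baseline $b$.

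There is no serious obstacle here; the proposition is essentially an observation. The only points needing care are (i) invoking the earlier identities so one need only treat $\nabla_{\thetavec}\mathcal{J}$, (ii) taking the conditional expectation over the reward noise before the expectation over $A$, and (iii) noting that the regular-estimator argument goes through for an arbitrary baseline $b$, whereas the alternate estimator genuinely relies on the constant-reward structure of $\mathcal{I}_c$ (it is this that makes $r(A)-r_\pi$ vanish rather than cancellation against $\pivec$).
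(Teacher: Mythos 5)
Your proposal is correct and follows essentially the same route as the paper: both reduce the claim to showing $\nabla_{\thetavec}\mathcal{J}=\zerovec$ via the previously established unbiasedness identities, and both verify this by observing that saturation forces $\pi(a)=0$ off $\mathcal{I}_c$ while $r(a)=c=r_\pi$ on $\mathcal{I}_c$, so every component $\pi(a)(r(a)-r_\pi)$ vanishes. Your additional direct computation of the estimator expectations (and the remark that the regular case holds for arbitrary $b$) is a harmless elaboration of the same argument.
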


\begin{restatable}{prop}{variance_saturated_pg} \label{thm: variance_saturated_pg}
  Let $\sigma(a)^2 := \V[R | A=a]$ denote the variance\footnote{We compute the variance of vector random variables elementwise; for details, see \S \ref{app: additional_background}.} of the reward corresponding to action $a$. Assume that the policy is saturated on the action $c$, i.e., $\pi(c) = 1$. Then, the variance of the regular PG estimator (with or without the baseline subtraction) is zero: $\V[\gvec^{\text{REG}}(A)] = \V[\hat{\gvec}^{\text{REG}}(A, R)] = \boldsymbol{0}$. Whereas, the variance of the alternate PG estimator is non-zero: $\V[\gvec^{\text{ALT}}(A, R)] = \sigma(c)^2 \evec_c$.
\end{restatable}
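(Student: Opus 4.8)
The plan is to use the fact that saturation on a single action collapses \emph{all} the randomness in the sampled action, leaving only the reward as a source of noise. First I would record the immediate consequences of $\pi(c)=1$: the policy vector is $\pivec=\evec_c$, the average reward is $r_\pi=\pivec^\top\rvec=r(c)$, and, since $A\sim\pi$, we have $A=c$ and hence $\evec_A=\evec_c$ almost surely; the only remaining randomness is $R\sim p(\cdot\,|\,c)$.

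For the regular estimator, substitute these facts into $\gvec^\text{REG}(A,R)=(R-r_\pi)(\evec_A-\pivec)$. Almost surely $\evec_A-\pivec=\evec_c-\evec_c=\zerovec$, so $\gvec^\text{REG}(A,R)=\zerovec$ almost surely irrespective of the value of $R$; a coordinatewise constant random variable has zero variance, giving $\V[\gvec^\text{REG}(A,R)]=\zerovec$. The same computation works verbatim for $\hat{\gvec}^\text{REG}(A,R)=(R-b)(\evec_A-\pivec)$, because the baseline $b$ only multiplies the already-vanishing factor $\evec_A-\pivec$; hence $\V[\hat{\gvec}^\text{REG}(A,R)]=\zerovec$ as well.

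For the alternate estimator, substitute instead into $\gvec^\text{ALT}(A,R)=(R-r_\pi)\evec_A$, which equals $(R-r(c))\evec_c$ almost surely --- a vector supported only on coordinate $c$. Taking the variance coordinatewise: every coordinate $a\neq c$ is identically $0$ and contributes $0$, while coordinate $c$ is the scalar random variable $R-r(c)$ with $R\sim p(\cdot\,|\,c)$, whose variance is $\V[R\mid A=c]=\sigma(c)^2$ since $r(c)$ is a constant. Assembling the coordinates yields $\V[\gvec^\text{ALT}(A,R)]=\sigma(c)^2\,\evec_c$.

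The argument is essentially a direct substitution, so there is no genuine obstacle; the one point that needs care --- and which I would state explicitly, pointing to the convention in \S\ref{app: additional_background} --- is that variances of vector-valued random variables are taken elementwise, without which neither ``$\V[\cdot]=\zerovec$'' nor the final expression $\sigma(c)^2\evec_c$ would be well-typed. A secondary caution is not to treat the expectation over $A$ as a real source of variance once $\pi$ has become degenerate.
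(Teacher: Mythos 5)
Your proof is correct, but it takes a genuinely shorter route than the paper's. You exploit the fact that $\pi(c)=1$ makes $A=c$ almost surely, so the action ceases to be a source of randomness: the regular estimator becomes the a.s.-constant zero vector (the factor $\evec_A-\pivec$ vanishes identically, regardless of the baseline), and the alternate estimator collapses to $(R-r(c))\evec_c$, whose elementwise variance is read off directly as $\sigma(c)^2\evec_c$. The paper instead first derives a general formula for $\V[(\evec_A-\etavec)(R-b)]$ at an \emph{arbitrary} policy, expanding it into six variance/covariance terms (Eq.~\ref{eq: gradient_bandit_estimators_variance_expression}), and only then substitutes $\pivec=\evec_c$ with $\etavec=\pivec$ or $\etavec=\zerovec$. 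Your argument is more elementary and arguably more transparent about \emph{why} the result holds (degeneracy of the action distribution); the paper's longer computation buys a closed-form variance expression valid everywhere on the simplex, which it reuses later to generate the variance heatmaps in \S\ref{app: bandits_intuition}. You are also right to flag the elementwise-variance convention as the one typing issue, and your remark that the baseline case is covered for free (since $b$ only multiplies the vanishing factor) matches the slightly stronger claim the paper proves in its appendix version of the proposition.
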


Note that the example and the propositions presented above require the policy to lie at the boundary of the probability simplex which is unsatisfiable for softmax policies, as this would require some of the action preferences to be infinitely large. Despite that, using continuity arguments, we can still reason that for the regular estimator, both the expected gradient and its variance vanish in the proximity of the simplex boundary, whereas the alternate estimator will have non-zero variance\footnote{In practice, PG methods are often used with stochastic gradients averaged over mini-batches of data. An increase in batch-size would reduce the variance of alternate, thereby decreasing the random walk effect. However, the biasedness of the alternate estimator is independent of the batch-size and can still help in escaping policy saturation.}. In addition to this, the stochastic update for the alternate estimator is much higher than that for the regular estimator. This can be seen from Figure \ref{fig: bandit_intuition} (left): look at the length of the update arrows near the bottom-right corner on the simplex.

% Put the figure code here so as to place it before the bandit experiment sec
\begin{figure*}[t]
  \centering
  \includegraphics[scale=0.22]{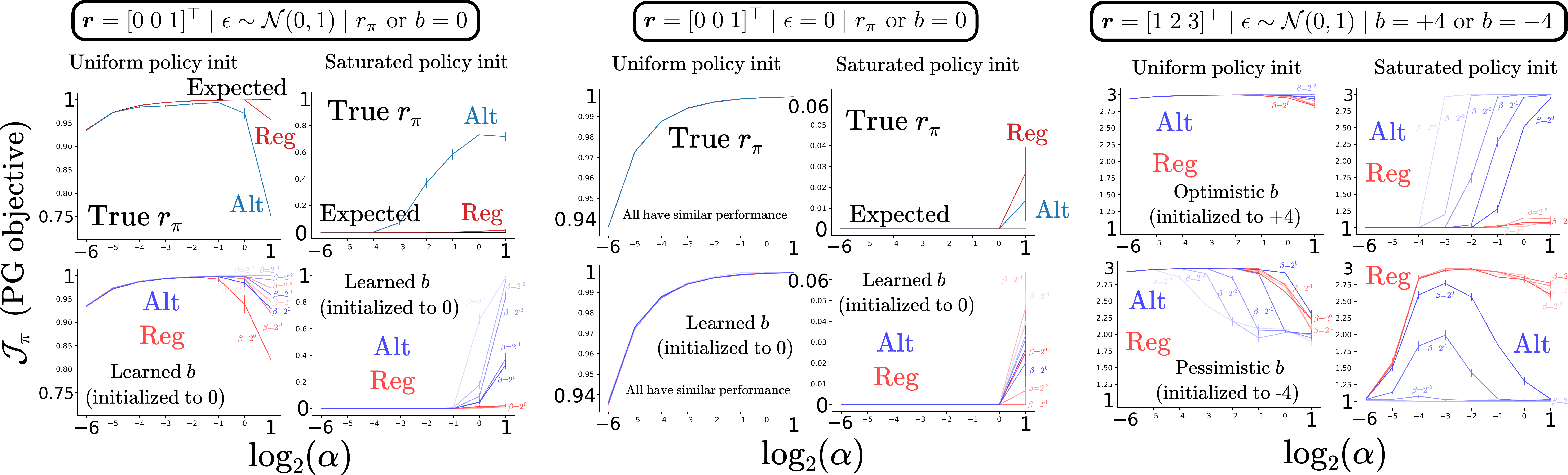}
  \caption{Parameter sensitivity plots for different PG estimators on 3-armed bandit problems. The headers give the specific experimental settings. The algorithms were run for 1000 timesteps and the plots show the mean performance during the last 50 timesteps, averaged over 150 runs.}
  \label{fig: combined_bandit_parameter_sensitivity}
\end{figure*}

\subsection*{Alternate Estimator Utilizes its Biasedness} \label{sec: alternate_utilizes_biasedness}
Apart from utilizing the reward noise, the alternate estimator can also utilize the bias in the critic estimate $b$. To see this, consider a baseline $b \neq r_\pi$. Then the alternate estimator becomes biased:
\begin{equation*}
  \nabla_{\thetavec} \mathcal{J} \neq \E[\hat{\gvec}^\text{ALT}(A, R)] = \E[(R - b) \evec_A] = \pivec \odot(\rvec - b \boldsymbol{1}).
\end{equation*}
However, this biased update $\E[\hat{\gvec}^\text{ALT}(A, R)]$ is not the gradient of any function (refer \S \ref{app: attractor_repellor}). Therefore, in order to understand its behavior, we look at its fixed point and under what conditions it acts as an attractor or a repellor. Figure \ref{fig: bandit_intuition} (right) illustrates this fixed point and its behavior based on how the baseline is initialized for one specific instance of a bandit problem. 

We now state a theorem that formalizes this property. 
%Before that, it will be useful to give some more notation. 
Let $r_1 \leq r_2 \leq \cdots \leq r_k$ represent an ordering of the true rewards for the bandit problem. Let $\pi_t = e^{\thetavec^{(t)}} \big/ \boldsymbol{1}^\top e^{\thetavec^{(t)}}$ represent the softmax policy at timestep $t$. Here, $\thetavec^{(t)}$ represents action preference vector at timestep $t$, which is updated using the rule $\thetavec^{(t+1)} = \thetavec^{(t)} + \alpha \E[\hat{\gvec}^\text{ALT}(A, R)]$ for $\alpha>0$.   
\begin{restatable}{lemma}{fixed_pt_biased_pg} \label{thm: fixed_pt_biased_pg}
  \textbf{Fixed points of the biased gradient bandit update.} \textbf{(1)} If there exists an $n \in \{1, 2, \ldots, k-1\}$ such that $r_1 \leq \cdots \leq r_n < b < r_{n+1} \leq \cdots \leq r_k$, then $\E_\pi[\hat{\gvec}^\text{ALT}(A, R)]$ is never equal to zero. \textbf{(2)} If $b = r(a)$ for at least one action, then $\E_\pi[\hat{\gvec}^\text{ALT}(A, R)] = 0$ at any point on the face of the probability simplex given by $\sum_{a \in \mathcal{I}_b} \pi(a) = 1$ with $\mathcal{I}_b := \{a | r(a) = b\}$. \textbf{(3)} If $b < r_1$ or $b > r_k$, then $\E_\pi[\hat{\gvec}^\text{ALT}(A, R)] = 0$ at a point $\pi^*$ within the simplex boundary given by $\pi^*(a) = \frac{1}{r(a) - b} \left( \sum_{c \in \mathcal{A}} \frac{1}{r(c) - b} \right)^{-1}, \; \forall a \in \mathcal{A}$.
\end{restatable}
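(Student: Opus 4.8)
The plan is to start from the expression already derived in the excerpt, $\E_\pi[\hat{\gvec}^\text{ALT}(A,R)] = \pivec \odot (\rvec - b\onevec)$, i.e. the $a$-th component equals $\pi(a)\,(r(a)-b)$. Two standing facts will be used repeatedly: a softmax policy has $\pi(a)>0$ for every $a$, and boundary points of the simplex arise only as limits in which some action preferences diverge to $-\infty$; and the softmax map is invariant under adding a constant vector $c\onevec$ to $\thetavec$, so the correct reading of ``fixed point of the update $\thetavec^{(t+1)} = \thetavec^{(t)} + \alpha\E[\hat{\gvec}^\text{ALT}]$'' at the policy level is a policy $\pi^\star$ at which $\E_{\pi^\star}[\hat{\gvec}^\text{ALT}]$ is a (possibly zero) multiple of $\onevec$.

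For part (1): under $r_1 \le \cdots \le r_n < b < r_{n+1} \le \cdots \le r_k$ every factor $r(a)-b$ is nonzero, with the entries for $a\le n$ strictly negative and those for $a \ge n+1$ strictly positive; multiplying by $\pi(a)>0$ keeps these strict signs, so $\E_\pi[\hat{\gvec}^\text{ALT}]$ has both strictly negative and strictly positive components and is therefore neither $\zerovec$ nor a multiple of $\onevec$. For part (2): on the face $\sum_{a\in\mathcal{I}_b}\pi(a)=1$, each action either satisfies $\pi(a)=0$ (when $a\notin\mathcal{I}_b$) or $r(a)-b=0$ (when $a\in\mathcal{I}_b$), so $\pi(a)(r(a)-b)=0$ for all $a$ and $\E_\pi[\hat{\gvec}^\text{ALT}]=\zerovec$ on the whole face.

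For part (3): take $b<r_1$ (the case $b>r_k$ is symmetric). Then $r(a)-b>0$ for all $a$, so the proposed $\pi^\star(a) = \frac{1}{r(a)-b}\big(\sum_{c}\frac{1}{r(c)-b}\big)^{-1}$ has all-positive entries that sum to $1$ by construction, hence lies in the interior of the simplex. Substituting gives $[\E_{\pi^\star}\hat{\gvec}^\text{ALT}]_a = \pi^\star(a)(r(a)-b) = \big(\sum_c \frac{1}{r(c)-b}\big)^{-1} =: \lambda$, the same for every $a$, so $\E_{\pi^\star}[\hat{\gvec}^\text{ALT}] = \lambda\onevec$; by shift-invariance of the softmax the update $\thetavec \mapsto \thetavec + \alpha\lambda\onevec$ does not change the policy, so $\pi^\star$ is a fixed point. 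Conversely, imposing $\pi(a)(r(a)-b)=\lambda$ for all $a$ together with $\sum_a\pi(a)=1$ pins down $\lambda$ and hence $\pi$ uniquely, giving uniqueness of this interior fixed point.

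The computations are elementary, so the main obstacle is interpretive rather than technical: in part (3) one must reconcile the literal claim ``$\E_\pi[\hat{\gvec}^\text{ALT}]=0$'' with the fact that at $\pi^\star$ this expectation is the nonzero vector $\lambda\onevec$, which is exactly where the softmax shift-invariance has to be invoked. A secondary subtlety, shared with Example~\ref{eg: saturated_pg} and Propositions~\ref{thm: expectation_saturated_pg}--\ref{thm: variance_saturated_pg}, is that the face in part (2) is not attained by any finite $\thetavec$, so that statement is a limiting one and would be made rigorous by a continuity argument as the relevant action preferences tend to $-\infty$.
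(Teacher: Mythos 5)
Your proposal is correct and follows essentially the same route as the paper: the paper likewise derives that a policy-level fixed point of the softmax update requires every component of $\E_\pi[\hat{\gvec}^\text{ALT}]$ to equal a common constant $\kappa$ (obtained there by equating $\mathrm{softmax}(\thetavec)$ with $\mathrm{softmax}(\thetavec+\alpha\gvec)$, which is the same shift-invariance you invoke), and then resolves the three cases exactly as you do by solving $\pi(a)(r(a)-b)=\kappa$ together with $\sum_a\pi(a)=1$. Your added remarks on uniqueness of the interior fixed point and on the face in part (2) being attained only in the limit of diverging preferences are consistent with, though slightly more explicit than, the paper's treatment.
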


\begin{restatable}{thm}{fixed_pt_biased_pg_attract_repulse} \label{thm: fixed_pt_biased_pg_attract_repulse}
  \textbf{Nature of the fixed point $\pivec^*$.} Assume that $\pi_t \neq \pi^*$. If the baseline is pessimistic, i.e. $b < r_1$, then for any $\alpha > 0$, the fixed point $\pi^*$ acts as a repellor: $D_{\text{KL}}(\pi^* \| \pi_{t+1}) > D_{\text{KL}}(\pi^* \| \pi_{t})$, where $D_{\text{KL}}$ denotes the KL-divergence. And if the baseline is optimistic, i.e. $b > r_k$, then given a sufficiently small positive stepsize $\alpha$ the fixed point $\pi^*$ acts as an attractor: $D_{\text{KL}}(\pi^* \| \pi_{t+1}) < D_{\text{KL}}(\pi^* \| \pi_{t})$.
\end{restatable}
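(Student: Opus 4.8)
The plan is to track the quantity $F(\thetavec) := D_{\text{KL}}(\pi^* \,\|\, \pi_{\thetavec})$ as a function of the action-preference vector and to see how one step of the biased update $\thetavec^{(t+1)} = \thetavec^{(t)} + \alpha\,\E[\hat{\gvec}^\text{ALT}(A,R)]$ changes it. Abbreviate $w(a) := r(a) - b$, $Z := \sum_{c \in \mathcal{A}} 1/w(c)$, and recall that the biased update direction at a point $\thetavec$ is $u(\thetavec) := \E[\hat{\gvec}^\text{ALT}(A,R)] = \pivec \odot (\rvec - b\boldsymbol{1})$, whose $a$-th component is $\pi_{\thetavec}(a)\,w(a)$. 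Two elementary facts drive everything: (i) from Lemma~\ref{thm: fixed_pt_biased_pg}(3), $\pi^*(a) = \frac{1}{w(a)Z}$, so $\pi^*(a)w(a) = 1/Z$ is \emph{independent of $a$}; and (ii) since $\log\pi_{\thetavec}(a) = \theta_a - \log\boldsymbol{1}^\top e^{\thetavec}$, we may write $F(\thetavec) = \mathrm{const} - \sum_a \pi^*(a)\theta_a + \log\boldsymbol{1}^\top e^{\thetavec}$, which is convex in $\thetavec$ with gradient $\nabla F(\thetavec) = \pi_{\thetavec} - \pi^*$.

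First I would compute the exact one-step change. Writing $\pi_t := \pi_{\thetavec^{(t)}}$ and $u := u(\thetavec^{(t)})$, and using $(\pi^*)^\top u = \sum_a \pi_t(a)\,[\pi^*(a)w(a)] = 1/Z$, a short calculation gives
\[
  F(\thetavec^{(t+1)}) - F(\thetavec^{(t)}) \;=\; -\frac{\alpha}{Z} \;+\; \log \E_{\pi_t}\!\big[e^{\alpha u}\big],
\]
so the theorem reduces to determining the sign of $\log\E_{\pi_t}[e^{\alpha u}] - \alpha/Z$ in each regime. The key inequality is Cauchy--Schwarz applied to the vectors with entries $\pi_t(a)\sqrt{|w(a)|}$ and $1/\sqrt{|w(a)|}$, which yields $\sum_a \pi_t(a)^2 |w(a)| \ge 1/|Z|$, with equality iff $\pi_t(a) \propto 1/w(a)$, i.e. iff $\pi_t = \pi^*$ — excluded by hypothesis. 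When $b < r_1$ all $w(a) > 0$ and $Z > 0$, so this says $\E_{\pi_t}[u] = \sum_a \pi_t(a)^2 w(a) > 1/Z$; when $b > r_k$ all $w(a) < 0$ and $Z < 0$, and it flips to $\E_{\pi_t}[u] < 1/Z$.

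For the repellor claim I would then use Jensen, $\log\E_{\pi_t}[e^{\alpha u}] \ge \alpha\E_{\pi_t}[u]$, to get $F(\thetavec^{(t+1)}) - F(\thetavec^{(t)}) \ge \alpha\big(\E_{\pi_t}[u] - 1/Z\big) > 0$ for \emph{every} $\alpha > 0$ (equivalently, convexity of $F$ gives $F(\thetavec^{(t+1)}) \ge F(\thetavec^{(t)}) + \alpha\,\nabla F(\thetavec^{(t)})^\top u$ with $\nabla F(\thetavec^{(t)})^\top u = \E_{\pi_t}[u] - 1/Z > 0$). For the attractor claim, set $\delta := 1/Z - \E_{\pi_t}[u] > 0$ and let $\rho$ be the range of the bounded quantity $a \mapsto u_a$; Hoeffding's lemma gives $\log\E_{\pi_t}[e^{\alpha u}] \le \alpha\E_{\pi_t}[u] + \alpha^2\rho^2/8$, hence $F(\thetavec^{(t+1)}) - F(\thetavec^{(t)}) \le -\alpha\delta + \alpha^2\rho^2/8 < 0$ for all $0 < \alpha < 8\delta/\rho^2$ (if one prefers, smoothness of $\alpha \mapsto F(\thetavec^{(t)} + \alpha u)$ with its strictly negative derivative at $\alpha = 0$ already gives a decrease for small $\alpha$).

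The easy half is the repellor case: convexity of $F$ (equivalently, Jensen) carries it with no work and uniformly over all step sizes. The main obstacle is the optimistic case, where convexity points the wrong way, so one genuinely needs quantitative second-order control — Hoeffding's lemma or a Taylor remainder bound on $\alpha \mapsto F(\thetavec^{(t)} + \alpha u)$ — to pin down how small $\alpha$ must be, and to make explicit that the admissible threshold $8\delta/\rho^2$ depends on the current iterate and shrinks as $\pi_t \to \pi^*$. The other place that needs care is the sign bookkeeping for $w$: getting the direction of the Cauchy--Schwarz inequality right in each regime and verifying that its equality case is exactly $\pi_t = \pi^*$.
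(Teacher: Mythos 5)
Your proposal is correct and follows essentially the same route as the paper's proof: the identity $F(\thetavec^{(t+1)})-F(\thetavec^{(t)})=-\alpha/Z+\log\E_{\pi_t}[e^{\alpha u}]$ is exactly the paper's KL relation (with $\zeta=\alpha w$), the Cauchy--Schwarz step with equality case $\pi_t=\pi^*$ is the same, Jensen handles the pessimistic case for all $\alpha>0$, and the Hoeffding/MGF bound yields the same stepsize threshold $8\delta/\rho^2$ as the paper's Eq.~\ref{eq: upper_bound_alpha_fixed_pt_attractor}. The only differences are cosmetic (working directly with $F(\thetavec)$ and its convexity rather than first writing out $\pi_{t+1}$).
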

The above theorem illustrates an important property of the alternate estimator: with an optimistically initialized baseline, the agent is updated towards a more uniform distribution $\pi^*$. And for an optimistic baseline, this distribution has a higher probability of picking the action with the maximum reward as compared to other actions; this can be seen from Figure \ref{fig: bandit_intuition} (right) and the expression for $\pi^*$ in Lemma 2.1. Therefore, if the agent were stuck in a sub-optimal corner of the probability simplex, an optimistic baseline would make its policy more uniform and encourage exploration\footnote{On the flip side, with a pessimistically initialized baseline, the alternate estimator can pre-maturely saturate towards a sub-optimal corner; see Figure \ref{fig: bandit_intuition} (right).}. And even though $\pi^*$ is different from the optimal policy, the agent with an alternate estimator can still reach the optimal policy, because as the agent learns and improves its baseline estimate, the alternate PG estimator becomes asymptotically unbiased. And hopefully by this time, the agent has already escaped the saturated policy region.

\section{BANDIT EXPERIMENTS} \label{sec: bandit_experiments}
In this section, we present experiments with the gradient bandit algorithm on 3-armed bandit tasks, with a fixed reward structure $\rvec$ and a normally distributed reward noise $\epsilon$. The experimental results serve as a demonstration and the verification of the properties of the alternate and the regular estimators discussed above. We trained five different agents on the bandit task. The policy weights were updated using either the expected gradient $\nabla \mathcal{J} = \pivec \odot (\rvec - r_\pi)$, the regular estimator with true $r_\pi$ (Eq.\ \ref{eq: regular_gradient_bandit}) or a learned baseline (Eq.\ \ref{eq: regular_gradient_bandit_with_baseline}), or the alternate estimator with true $r_\pi$ (Eq. \ref{eq: alternate_stochastic_softmax_grad_vector}) or a learned baseline (Eq.\ \ref{eq: alternate_stochastic_softmax_grad_vector_with_baseline}). The baseline was learned using a running average: $b_{t+1} = (1 - \beta) b_t + \beta R_t$. Additional details are given in \S \ref{app: experiments_bandits}.

\textbf{Experiment 1} demonstrates that the alternate estimator performs competitively with the regular estimator for uniform policy initialization and clearly outperforms it in the case of a saturated policy initialization. We set $\rvec = [0\;0\;1]^\top$ with noise $\epsilon \sim \mathcal{N}(0, 1)$. The action preferences were initialized to $\theta_{a} = 0, \; \forall a$ for uniform policy, and to $\theta_{a_0} = 10$ and $\theta_{a_1} = \theta_{a_2} = 0$ for the saturated policy. The baseline $b$ was initialized to zero. 
Figure \ref{fig: bandit_learning_curve} shows the learning curves for the best parameter configuration. Figure \ref{fig: combined_bandit_parameter_sensitivity} (left) shows the sensitivity plots; in particular it shows the final performance of the methods for different parameter settings.

\begin{figure}[!tbp]
  \centering
  \includegraphics[scale=0.30]{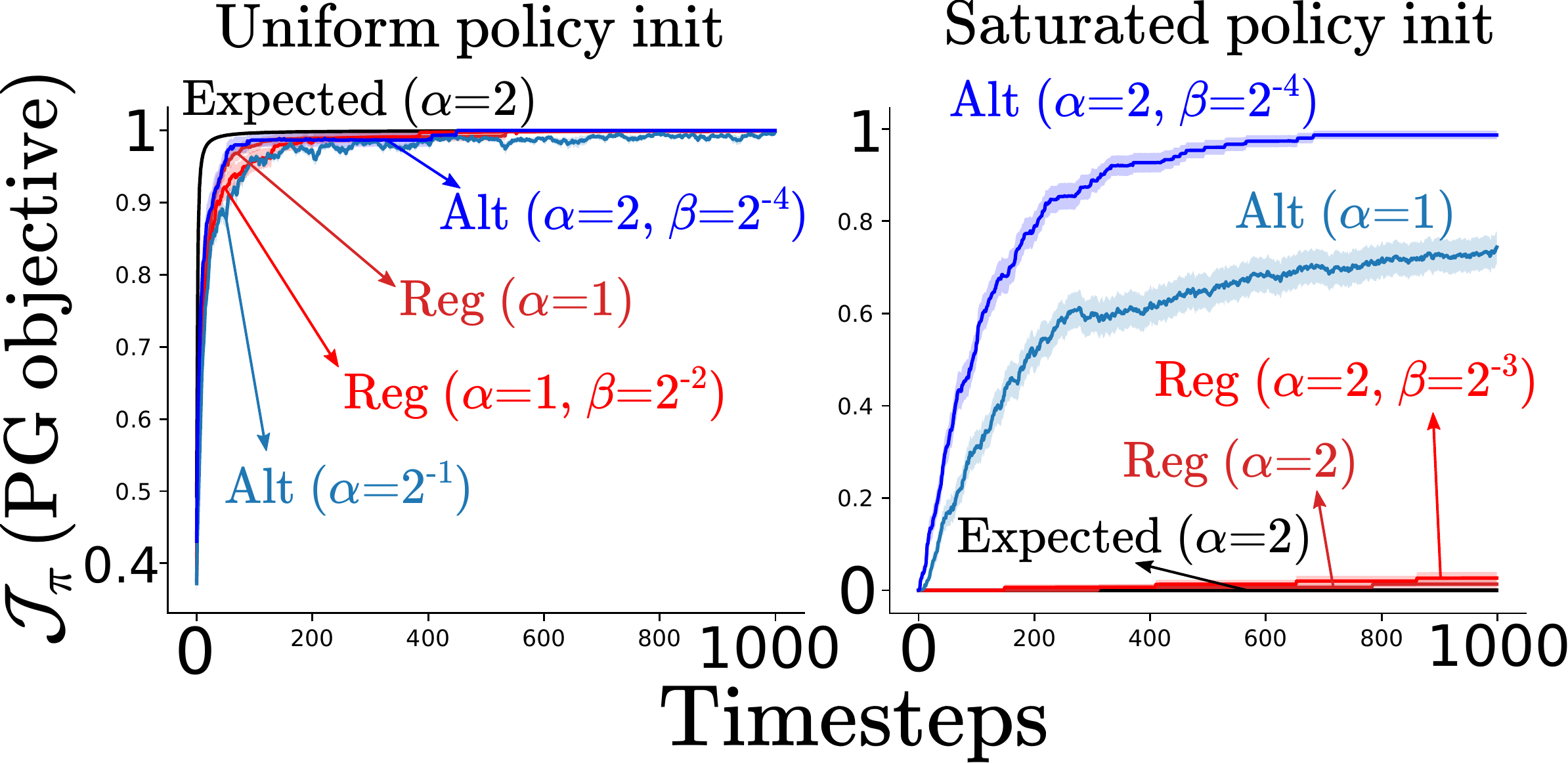}
  \caption{Learning curves for the gradient bandit algorithm. The results were averaged over 150 independent runs; shaded region shows standard error.}
  \label{fig: bandit_learning_curve}
\end{figure}

For saturated policy initialization, we observe that the alternate estimators (both $r_\pi$ and a learned $b$) learned good behaviors, whereas the expected gradient and the regular estimators failed to learn anything. Further, alternate with a learned baseline converged faster than alternate with $r_\pi$. We attribute this result to a better exploration afforded by the baseline.

\textbf{Experiment 2} retained the previous experimental setting, but set the reward noise $\epsilon = 0$. 
Figure \ref{fig: combined_bandit_parameter_sensitivity} (middle) shows the sensitivity plots for both uniform and saturated policy initializations. 
These results show that for uniform initialization, all the methods had a similar performance. For the saturated initializations, none of the methods were able to learn a good policy. This experiment verifies our claim that reward noise helps the alternate estimator to escape the sub-optimal regions in the policy space; without reward noise, the agent is unable to perform a random walk.

\textbf{Experiment 3} studies the effect of initializing the reward baseline optimistically ($b = +4$) or pessimistically ($b=-4$). The reward was set to $\rvec = [1\;2\;3]^\top$ with noise $\epsilon \sim \mathcal{N}(0, 1)$. Figure \ref{fig: combined_bandit_parameter_sensitivity} (right) shows the sensitivity plots for both the estimators. From the plots, we see that optimistic initialization greatly helped the alternate estimator in escaping the saturated policy. Whereas, a pessimistic initialization significantly hampered the performance of the alternate estimator, making it worse than the regular estimator even for the uniform policy case. In contrast, the baseline initialization affected the regular estimator in a milder way. 

\section{ALTERNATE PG ESTIMATOR FOR MDPs} \label{sec: alt_pg_mdp}
The alternate estimator readily extends to MDPs, where it enjoys properties analogous to the bandit case. We start by deriving the regular estimator for MDPs using the vector notation. Given a state $s$, let $\pivec (\cdot | s) \in \mathbb{R}^{|\mathcal{A}|}$ be the policy vector with $[\pivec (\cdot | s)]_a = \pi(a | s)$ and $[\qvec_\pi(s, \cdot)]_a = q_\pi(s, a)$ be the action value vector. The softmax policy for MDPs is $\pi(a | s) = e^{[\thetavec_{\wvec}(s)]_a} \big/ \sum_{b \in \mathcal{A}} e^{[\thetavec_{\wvec}(s)]_b}$, where $\thetavec_{\wvec}: \mathcal{S} \rightarrow \mathbb{R}^{\mathcal{A}}$ denotes the action preference vector, parameterized by $\wvec$, and $[\thetavec_{\wvec}(s)]_a$ is its $a$th element. In practice, $\thetavec_{\wvec}$ could be implemented using a neural network. For brevity, we will drop $\wvec$, i.e. $\thetavec \equiv \thetavec_{\wvec}$. Now, using the chain rule of differentiation (see \S \ref{app: additional_background}) $\nabla_{\wvec} \pivec(\cdot | s) = \big[ \nabla_{\wvec} \thetavec(s) \big] \big[ \nabla_{\thetavec} \pivec(\cdot | s) \big]$, we can re-write the PG theorem (Eq. \ref{eq: pg_main}) to explicitly separate the gradient of the action likelihood $\nabla_{\thetavec} \pivec(\cdot | s)$ and the preference $\nabla_{\wvec} \thetavec(s)$:
\begin{align}
  \nabla \mathcal{J}_{\pi} &= \sum_{s} \nu_\pi(s) \sum_{a} \nabla \pi(a | s) q_\pi(s, a) \nonumber \\
  &= \sum_{s} \nu_\pi(s) \nabla \pivec (\cdot | s) \qvec_\pi(s, \cdot) \nonumber \\
  &= \sum_{s} \nu_\pi(s) \big[ \nabla_{\wvec} \thetavec(s) \big] \nabla_{\thetavec} \pivec(\cdot | s) \qvec_\pi(s, \cdot) \label{eq: explicit_pg_matrix_vec}.
\end{align}
In the analysis that follows, we will analytically work out the gradient of the action likelihood for the softmax policy. We will leave the gradient of the preference which depends on the specific function approximator used and could be computed later using, say, an automatic differentiation package. We can write
\begin{align}
  & \nabla_{\thetavec} \pivec(\cdot | s) \qvec_\pi(s, \cdot) = ( \Ivec - \pivec(\cdot | s) \boldsymbol{1}^\top ) \diag(\pivec(\cdot | s)) \qvec_\pi(s, \cdot) \nonumber \\
  &= \diag(\pivec(\cdot | s)) \qvec_\pi(s, \cdot) - \pivec(\cdot | s) \boldsymbol{1}^\top \diag(\pivec(\cdot | s)) \qvec_\pi(s, \cdot) \nonumber \\
  &= \pivec(\cdot | s) \odot \qvec_\pi(s, \cdot) - \pivec(\cdot | s) \pivec(\cdot | s)^\top \qvec_\pi(s, \cdot) \label{eq: mdp_vector_grad_softmax_mid} \\
  &= \diag(\qvec_\pi(s, \cdot)) \pivec(\cdot | s) - \pivec(\cdot | s) \qvec_\pi(s, \cdot)^\top \pivec(\cdot | s) \nonumber \\
  &= [ \diag(\qvec_\pi(s, \cdot)) - \pivec(\cdot | s) \qvec_\pi(s, \cdot)^\top ] \pivec(\cdot | s) \nonumber \\
  &= \sum_a \pi(a | s) q_\pi(s, a) \big( \evec_A - \pivec(\cdot | s) \big). \label{eq: mdp_primary_vector_grad_softmax}
\end{align}
Putting Eq. \ref{eq: mdp_primary_vector_grad_softmax} into Eq. \ref{eq: explicit_pg_matrix_vec} gives us that $\nabla_{\wvec} \mathcal{J}_{\pi}$
\begin{align}
  &= \sum_{s} \nu_\pi(s) \sum_a \pi(a | s) \nabla_{\wvec} \thetavec(s) \big( \evec_A - \pivec(\cdot | s) \big) q_\pi(s, a) \nonumber \\
  &= \mathop{\E} \Big[ \underbrace{\Big( \nabla_{\wvec} [\thetavec(S)]_A - \sum_{a} \pi(a | S) \nabla_{\wvec} [\thetavec(S)]_a \Big) h_\pi(S, A)}_{=: \gvec^{\text{REG}}(S, A)} \Big], \nonumber
\end{align}
where we introduced the advantage function $h_\pi(S, A) := q_\pi(S, A) - v_\pi(S)$ by subtracting an action independent $v_\pi$ from $q_\pi$. Note that despite the unfamiliar expression, this is the regular estimator: $\gvec^{\text{REG}}(S, A) = \nabla_{\wvec} \log \pi(A | S) h_\pi(S, A)$ (refer \S \ref{app: theoretical_analysis_mdp}).

\begin{figure*}[!tbp]
  \centering
  \includegraphics[scale=0.23]{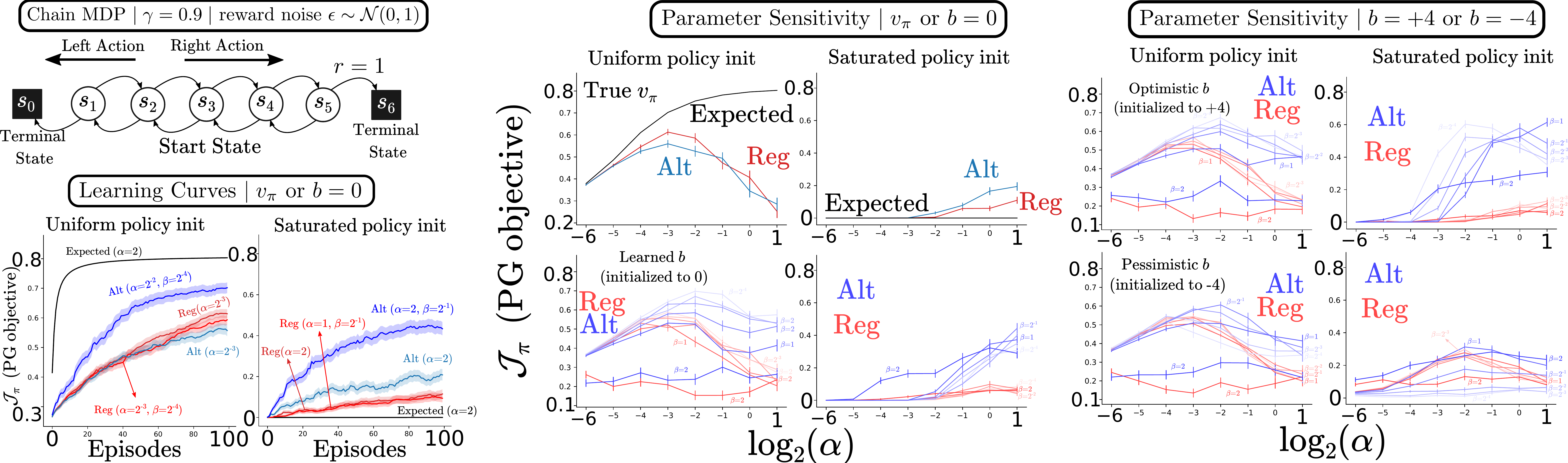}
  \caption{Learning curves and parameter sensitivity plots for REINFORCE on the five state chain MDP. Each agent was run for 100 episodes. The parameter sensitivity plots show the mean performance during the last 10 episodes, averaged over 150 runs. Policy was initialized, for all states, to $\theta_{\texttt{right}} = 0$; and $\theta_{\texttt{left}} = 0$ (for uniform), $\theta_{\texttt{left}} = 3$ (for saturated), and $\theta_{\texttt{left}} = 1$ (for saturated + pessimistic baseline).}
  \label{fig: tabular_combined}
\end{figure*}

\textbf{The alternate estimator for MDPs.} From Eq. \ref{eq: mdp_vector_grad_softmax_mid},
\begin{align}
  & \nabla_{\thetavec} \pivec(\cdot | s) \qvec_\pi(s, \cdot) \nonumber \\
  &= \diag(\qvec_\pi(s, \cdot)) \pivec(\cdot | s) - \pivec(\cdot | s)^\top \qvec_\pi(s, \cdot) \pivec(\cdot | s) \nonumber \\
  &= \big[ \diag(\qvec_\pi(s, \cdot)) - v_\pi(s) \Ivec \big] \pivec(\cdot | s) \nonumber \\
  &= \sum_a \pi(a | s) \Big(q_\pi(s, a) - v_\pi(s) \Big) \evec_A, \label{eq: mdp_alternate_vector_grad_softmax}
\end{align}
where we used that $v_\pi(s) = \sum_{a} \pi(a|s) q_\pi(s, a) = \pivec(\cdot | s)^\top \qvec_\pi(s, \cdot)$. Now put Eq. \ref{eq: mdp_alternate_vector_grad_softmax} in Eq. \ref{eq: explicit_pg_matrix_vec} to obtain
\begin{align}
  \nabla_{\wvec} \mathcal{J}_{\pi} &= \sum_{s} \nu_\pi(s) \sum_a \pi(a | s) \nabla_{\wvec} \thetavec(s) \evec_A h_\pi(S, A) \nonumber \\
  &= \E \Big[ \underbrace{\nabla_{\wvec} [\thetavec(S)]_A h_\pi(S, A)}_{=: \gvec^{\text{ALT}}(S, A)} \Big]. \label{eq: mdp_alternate_stochastic_loss}
\end{align}
The alternate gradient estimator is equivalent to the gradient of the softmax policy, if we drop the normalization constant; i.e. $\nabla_{\wvec} \log \frac{e^{[\thetavec(S)]_A}}{1} = \nabla_{\wvec} [\thetavec(S)]_A$. Also a note that even though the regular and the alternate PG estimators are equal in expectation, in general they are not equal for an arbitrary state-action pair: $\gvec^{\text{REG}}(S, A) \neq \gvec^{\text{ALT}}(S, A)$. Finally, it is straightforward to adapt the alternate estimator given in Eq. \ref{eq: mdp_alternate_stochastic_loss} to work with different PG methods such as REINFORCE, Actor-Critic, TRPO, or PPO (see \S \ref{app: diff_alternate_estimators}).

\section{TABULAR-MDP EXPERIMENTS} \label{sec: tabular_experiments}
In this section, we demonstrate that the alternate estimator for MDPs enjoys similar benefits as the alternate estimator for bandits. We use the chain environment, shown in Figure \ref{fig: tabular_combined} (bottom left), which is an episodic MDP where the expected rewards are zero everywhere expect at the rightmost transition. We train five different agents using REINFORCE (Williams, 1992). All the agents maintain a tabular policy (and in some cases additionally a tabular value function estimate). The policy is learned either using the expected PG update, or using the REINFORCE algorithm using the regular estimator (true $v_\pi$ or learned baseline) or the alternate estimator (again with true $v_\pi$ or learned baseline). The critic is estimated using Monte-Carlo sampling. Further details (including algorithm pseudocode) and additional experiments are given in \S \ref{app: experiments_tabular}.

\textbf{Experiment 4} showcases that the alternate PG estimator is competitive with the regular estimator in case of uniform policy initialization and superior to it in case of sub-optimally saturated initialization. The results are essentially the same as that from the bandit experiments. Figure \ref{fig: tabular_combined} (bottom-left) shows the learning curves for the best performing parameter configuration, and Figure \ref{fig: tabular_combined} (middle) shows the stepsize sensitivity corresponding to the final performance for each parameter setting. Observe that for the saturated initialization case, alternate with baseline is vastly superior to all the methods, and alternate with true $v_\pi$ is a little better than the regular estimators. We attribute the superior performance of alternate estimator with baseline to the bias of the estimator combined with utilizing the noise in the returns, which probably allows it to have better exploration; this point is also reinforced from the stepsize sensitivity plots: smaller $\beta$ values have higher final performance. We show the performance of the methods with no reward noise in \S \ref{app: experiments_tabular}: the results are similar to the bandit setting.

\textbf{Experiment 5} studies the performance of the alternate estimator with optimistically and pessimistically initialized baselines. Figure \ref{fig: tabular_combined} (right) shows that having an optimistic baseline significantly helps the alternate estimator; in particular, compare the performance of the alternate estimator on saturated policies with ($b = +4$) and without ($b = 0$) optimism. Whereas, a pessimistic baseline hurts its performance. Also note that the alternate estimator with the optimistic baseline prefers smaller critic stepsizes (allowing it to enjoy the optimism for longer), and vice-versa for the pessimistic baseline. Even though we found that, in general, an optimistic baseline helps the performance of the alternate estimator, for certain MDPs (where a uniform policy is bad for exploration; see \S \ref{app: uniform_policy_bad}) it can also hurt its performance.

\begin{figure*}[!tbp]
  \centering
  \includegraphics[scale=0.21]{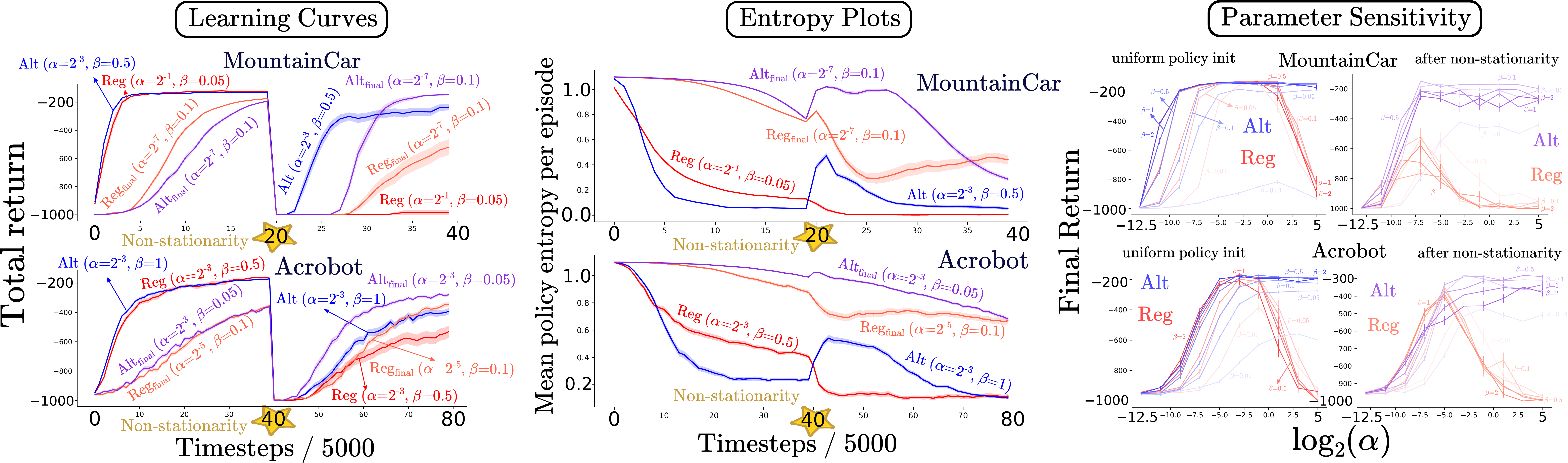}
  \caption{Learning curves, entropy plots, and sensitivity plots for Online Actor-Critic with linear function approximation on MountainCar (200k timesteps) and Acrobot (400k timesteps). Learning curves are for the best performing stepsize configurations at the time non-stationarity was introduced and at the final timestep. Entropy plots show entropy of the policy on the exact states encountered during each episode. The sensitivity plots show the mean performance during the last 5000 timesteps. All the results were averaged over 50 runs.}
  \label{fig: linear_nonstationary_combined}
\end{figure*}

\section{FUNCTION APPROXIMATION EXPERIMENTS}
In this section, we demonstrate that the desirable properties of the alternate estimator continue to hold with function approximation. We run experiments with the online Actor-Critic algorithm (Degris et al., 2012) on classical MDPs using linear function approximation (with tile-coding) and neural networks. Unlike the previous experiments, we did not saturate the policy artificially. Instead, we induced non-stationarity in the environment which naturally lead to sub-optimally saturated policies. Additional details and an extensive empirical analysis (including artificially induced policy saturation) are given in \S \ref{app: experiments_linear} and \S \ref{app: experiments_neural}.

\textbf{Experiment 6} studies the online AC algorithm on MountainCar (Moore, 1990) and Acrobot (Sutton, 1996) control tasks. To induce non-stationarity in either task, we switched the \texttt{left} and \texttt{right} actions after half-time. Figure \ref{fig: linear_nonstationary_combined} (left) shows the learning curves for the best parameter configuration. For each estimator, we selected two sets of parameter configurations: one that had the best performance right before the non-stationarity hit, and another one that had the best performance at the end of the experiment (denoted by a \texttt{final} in the subscript). For MountainCar, the alternate estimator is superior to the regular estimator for both sets of stepsizes. Remarkably, the best performing parameter set for regular (\textcolor{red}{red curve}) at timestep 100k was unable to recover from the non-stationarity; whereas alternate (\textcolor{blue}{blue curve}), despite having similar performance as regular at 100k timestep, was able to recover. On Acrobot, the difference in performance is still there but relatively smaller.

We attribute the superior performance of the alternate estimator to the bias in the critic estimate. To see this, consider MountainCar: at 100k timesteps, the critic would have converged to predict a return of about $-200$. But when the non-stationarity hit, the agent would have started receiving returns much lower than $-200$. But this means that at that time, the critic estimate became optimistic and encouraged exploration by pushing the policy towards a more uniform distribution. Figure \ref{fig: linear_nonstationary_combined} (middle) corroborates this point: the policy entropy for alternate (but not for regular) jumps right around the timestep when the non-stationarity hit. Figure \ref{fig: linear_nonstationary_combined} (right) shows the parameter sensitivity plots for these tasks.

\begin{figure}[!tbp]
  \centering
  \includegraphics[scale=0.22]{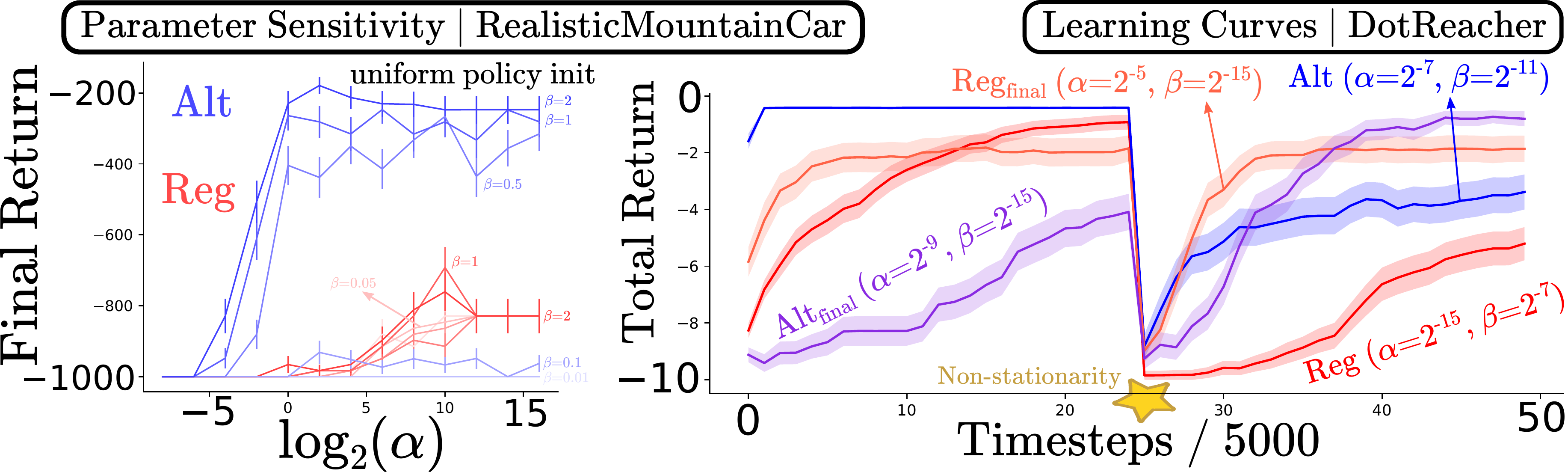}
  \caption{\textbf{(Left)} Parameter sensitivity of online AC with linear function approximation on RealisticMountainCar (100k timesteps, 50 runs); plot shows average performance in the last 5000 steps. \textbf{(Right)} Learning curves for online Actor-Critic with neural networks on DotReacher (100k timesteps, 50 runs).}
  \label{fig: realistic_plus_dotreacher}
\end{figure}

\begin{figure*}[!tbp]
  \centering
  \includegraphics[scale=0.20]{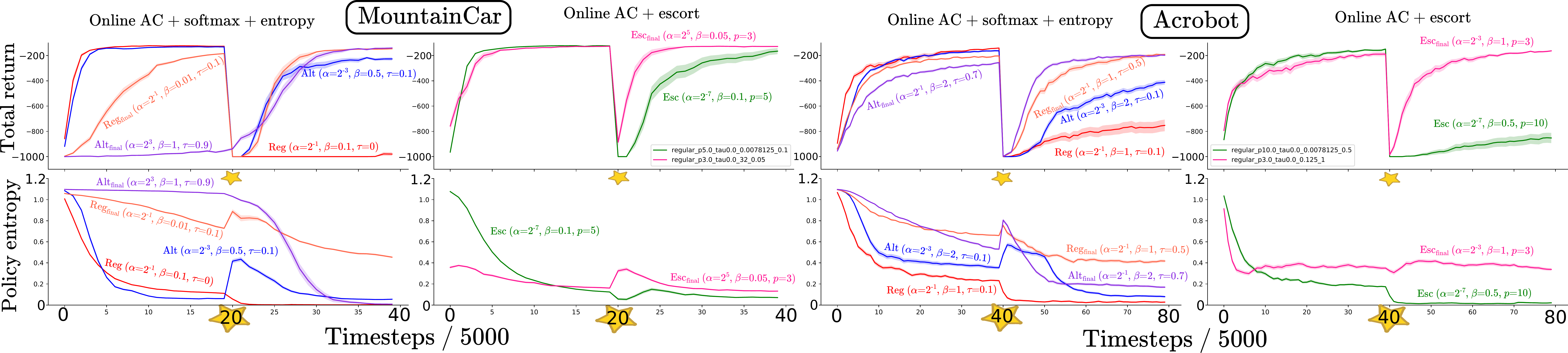}
  \caption{Learning curves and entropy plots for Online Actor-Critic with the softmax policy parameterization (regular and alternate estimators) and the escort policy parameterization. We chose the best performing stepsize configurations at the time non-stationarity was introduced and an additional set of stepsizes that performed best at the final timestep. Entropy plots show mean entropy of the policy on the exact states encountered during each episode. All the results were averaged over 50 runs.}
  \label{fig: lc_escort_entropy}
\end{figure*}

\textbf{Experiment 7} considers a difficult control environment called RealisticMountainCar (see \S \ref{app: experiments_linear}). As Figure \ref{fig: realistic_plus_dotreacher} (left) shows even with a uniformly initialized policy, the alternate estimator performed superior to the regular estimator; clearly, the optimistically initialized baseline ($b = 0$) helped in exploration.

\textbf{Experiment 8} indicates that the alternate estimator works with neural policies as well. Figure \ref{fig: realistic_plus_dotreacher} (right) shows the learning curves of online AC with neural networks on the DotReacher task (see \S \ref{app: experiments_neural}). The results follow the same pattern as Experiment 6: the alternate estimator is still competitive or superior, although the difference is less pronounced.

\textbf{Experiment 9} studies two approaches designed to handle policy saturation: entropy regularization (Ahmed et al., 2019) and escort transform (Mei et al., 2020b). Entropy regularization aims to overcome policy saturation by encouraging exploration. It augments the rewards received from the environment with the action-entropy of the policy scaled by a constant parameter $\tau$. A larger $\tau$ encourages a more uniform policy. Escort transform (ET) is an alternative to softmax policies, given by $\pi(a|s) = |[\thetavec(s)]_a|^p \big/\sum_{a'} |[\thetavec(s)]_{a'}|^p$, where $p \geq 1$ is the escort parameter. ET has inherently better convergence properties for escaping policy saturation as compared to softmax. For further details, see \S \ref{app: additional_details_online_ac} and \S \ref{app: escort_pg}.

We trained online AC with entropy regularization and softmax policy (both regular and alternate estimators) and separately with ET on non-stationary MountainCar and Acrobot tasks. Figure \ref{fig: lc_escort_entropy} shows the learning curves for the best performing parameter settings. These learning curves show that adding an entropy bonus didn't seem to improve the performance of the regular estimator (\textcolor{red}{red curve}). Regular with entropy performed worse even when compared to alternate without an entropy bonus (from Experiment 6). On the other hand, ET seemed to handle the non-stationarity somewhat well. Particularly, on MountainCar, escort (\textcolor{green}{green curve}) slightly outperformed alternate (\textcolor{blue}{blue curve}). However, on Acrobot, alternate remained superior. From the parameter sensitivity plots for these experiments (given in \S \ref{sec: escort_entropy}), we further found that alternate outperforms regular with entropy for almost all the parameters considered, We also found that escort transform's performance is somewhat sensitive to the value of the parameter $p$, which highlights another advantage of the alternate estimator: it has one less parameter than escort transform.

\section{RELATED WORKS}
We now discuss three prior works that bear a strong resemblance to the alternate estimator. Young (2019) studied the unbiased alternate estimator with a focus on saving compute by only calculating the gradient of a single action preference  per timestep (weakly related to \S \ref{app: tree_sampling}). In contrast, we focused on using the biased alternate estimator to overcome policy saturation. The alternate estimator looks very similar to the NeuRD (Hennes et al., 2019) update: $\nabla_{\wvec} \mathcal{J} = \sum_{s} \nu_\pi(s) \sum_a \nabla_{\wvec} [\thetavec(S)]_A h_\pi(S, A)$. However, NeuRD is missing the policy weighting from Eq. \ref{eq: mdp_alternate_stochastic_loss} (which would make it more aggressive than the unbiased alternate estimator). Also, NeuRD doesn't consider learned (and hence biased) baselines and was motivated from a game theoretic perspective. The attractive/repulsive effect of baselines on the alternate estimator (Theorem 1) was previously reported (termed as the \textit{non-commital} property) for the regular estimator by Chung et al. (2021). (Our experiments in Figure \ref{fig: intuition_bandits_reward123_noise0_opti_pessi} also demonstrate this for the regular estimator.) However, because of the unbiasedness of the regular estimator, this effect is much weaker compared to what is experienced by the (biased) alternate estimator.

\section{CONCLUSIONS}
We proposed an alternate policy gradient estimator for softmax policies that, as we demonstrated theoretically and empirically, effectively utilizes the reward noise and the bias in the critic to escape sub-optimally saturated regions in the policy space. Our analysis, conducted on multiple bandit and MDP tasks, suggests that this estimator works well with different PG algorithms and different function approximation schemes. The alternate estimator makes existing PG methods more viable for non-stationary problems, and by extension for many practical real-life control tasks.

\textbf{Limitations and Future Work:} Inherently, the scope of this work is limited to discrete action spaces and the log-likelihood family of PG estimators. In this article, we provided a sufficient proof-of-concept that motivates a further study of the alternate estimator. Our theoretical analysis only considered the tabular bandit setting where the results were asymptotic in nature. It would be useful to extend them to MDPs with finite time analysis. A particularly interesting direction is to see, possibly using a two-timescale analysis, how the bias in the alternate estimator affects the policy optimization dynamics as the baseline estimate becomes more accurate. Empirically, our results did not explore the batch PG algorithms such as PPO (Schulman et al., 2017) or A2C (Mnih et al., 2016). We had good reasons for avoiding them: batch algorithms have a large number of interacting parts which introduces tangential factors making a careful empirical analysis much more difficult. Having said that, an important future work would be to study the alternate estimator with these algorithms on large MDPs.

\section*{Acknowledgments}
The authors gratefully acknowledge funding from the Canada CIFAR AI Chairs program, the Reinforcement Learning and Artificial Intelligence (RLAI) laboratory, the Alberta Machine Intelligence Institute (Amii), and the Natural Sciences and Engineering Research Council (NSERC) of Canada. Shivam Garg also gratefully acknowledges support from Csaba Szepesv\'{a}ri during part of the duration of this project, and thanks Compute Canada for providing computational resources.
\section*{References}
\addcontentsline{toc}{section}{References}

\medskip  % \small

\begin{list}{}{%
    \setlength{\topsep}{0pt}%
    \setlength{\leftmargin}{0.2in}%
    \setlength{\listparindent}{-0.2in}%
    \setlength{\itemindent}{-0.2in}%
    \setlength{\parsep}{\parskip}%
  }%

\item[] Agarwal, A., Kakade, S. M., Lee, J. D., Mahajan, G. (2021). On the theory of policy gradient methods: Optimality, approximation, and distribution shift. \textit{Journal of Machine Learning Research, 22}(98), 1-76.

\item[] Agarwal, A., Jiang, N., Kakade, S. M., Sun, W. (2019). Reinforcement Learning: Theory and Algorithms. \textit{CS Dept., UW Seattle, Seattle, WA, USA, Technical Report.}

\item[] Ahmed, Z., Le Roux, N., Norouzi, M., Schuurmans, D. (2019). Understanding the impact of entropy on policy optimization. In \textit{International conference on machine learning} (pp. 151-160).
  
\item[] Auer, P., Cesa-Bianchi, N., Fischer, P. (2002). Finite-time analysis of the multiarmed bandit problem. \emph{Machine learning, 47}(2), 235-256.

\item[] Barnard, E. (1993). Temporal-difference methods and Markov models. \textit{IEEE Transactions on Systems, Man, and Cybernetics, 23}(2), 357-365.

\item[] Berner, C., Brockman, G., Chan, B., Cheung, V., D\c{e}biak, P., Dennison, C., Farhi, D., Fischer, Q., Hashme, S., Hesse, C., J\'{o}zefowicz, R. (2019). Dota 2 with Large Scale Deep Reinforcement Learning. \textit{arXiv preprint arXiv:1912.06680.}

\item[] Chung, W., Thomas, V., Machado, M. C., Le Roux, N. (2021). Beyond variance reduction: Understanding the true impact of baselines on policy optimization. In \textit{International Conference on Machine Learning} (pp. 1999-2009).

\item[] Ciosek, K., Whiteson, S. (2020). Expected Policy Gradients for Reinforcement Learning. \textit{Journal of Machine Learning Research, 21}(52).

\item[] Cormen, T. H., Leiserson, C. E., Rivest, R. L.,  Stein, C. (2009). \textit{Introduction to algorithms.} MIT press.

\item[] Degris, T., Pilarski, P. M., Sutton, R. S. (2012). Model-free reinforcement learning with continuous action in practice. In \textit{American Control Conference (ACC)} (pp. 2177-2182). IEEE.

\item[] Ding, Y., Zhang, J., Lavaei, J. (2021). Beyond Exact Gradients: Convergence of Stochastic Soft-Max Policy Gradient Methods with Entropy Regularization. \textit{arXiv preprint arXiv:2110.10117.}

\item[] Hennes, D., Morrill, D., Omidshafiei, S., Munos, R., Perolat, J., Lanctot, M., Gruslys, A., Lespiau, J.B., Parmas, P., Duenez-Guzman, E., Tuyls, K., (2019). Neural replicator dynamics. \textit{arXiv preprint arXiv:1906.00190.}

\item[] Kakade, S. M. (2001). A natural policy gradient. \textit{Advances in neural information processing systems, 14.}

\item[] Kingma, D. P., Ba, J. (2014). Adam: A method for stochastic optimization. \textit{arXiv preprint arXiv:1412.6980.}

\item[] Lafferty, J., Liu, H., Wasserman, L. (2008). Concentration of measure. \textit{Available online: \url{http://www.stat.cmu.edu/~larry/=sml/Concentration.pdf}.}
  
\item[] Li, G., Wei, Y., Chi, Y., Gu, Y., Chen, Y. (2021). Softmax policy gradient methods can take exponential time to converge. \textit{arXiv preprint arXiv:2102.11270.}

\item[] Mahmood, A. R., Korenkevych, D., Vasan, G., Ma, W., Bergstra, J. (2018). Benchmarking reinforcement learning algorithms on real-world robots. \textit{Conference on robot learning.}

\item[] Mei, J., Xiao, C., Szepesvari, C., Schuurmans, D. (2020a). On the global convergence rates of softmax policy gradient methods. \textit{In International Conference on Machine Learning (pp. 6820-6829).}

\item[] Mei, J., Xiao, C., Dai, B., Li, L., Szepesv\'{a}ri, C., Schuurmans, D. (2020b). Escaping the Gravitational Pull of Softmax. \textit{Advances in Neural Information Processing Systems, 33.}

\item[] Mei, J., Dai, B., Xiao, C., Szepesv\'{a}ri, C., Schuurmans, D. (2021). Understanding the Effect of Stochasticity in Policy Optimization. arXiv preprint arXiv:2110.15572.

\item[] Mnih, V., Badia, A. P., Mirza, M., Graves, A., Lillicrap, T., Harley, T., Silver, D., Kavukcuoglu, K. (2016). Asynchronous methods for deep reinforcement learning. In \textit{International conference on machine learning} (pp. 1928-1937).

\item[] Moore, A. W. (1990). Efficient memory-based learning for robot control.

\item[] Pardo, F., Tavakoli, A., Levdik, V., Kormushev, P. (2018). Time limits in reinforcement learning. In \textit{International Conference on Machine Learning} (pp. 4045-4054).

\item[]  Paszke, A., Gross, S., Massa, F., Lerer, A., Bradbury, J., Chanan, G., Killeen, T., Lin, Z., Gimelshein, N., Antiga, L., Desmaison, A., Kopf, A., Yang, E., DeVito, Z., Raison, M., Tejani, A., Chilamkurthy, S., Steiner, B., Fang, L., Bai, J., Chintala, S. (2019). Pytorch: An imperative style, high-performance deep learning library. \textit{Advances in neural information processing systems, 32}, 8026-8037.
  
\item[] Peters, J., Schaal, S. (2008). Natural actor-critic. \textit{Neurocomputing}, 71(7-9), 1180-1190.

\item[] Peters, J., Mulling, K., Altun, Y. (2010). Relative entropy policy search. In \textit{Proceedings of the Twenty-Fourth AAAI Conference on Artificial Intelligence.}

     \item[] Schulman, J., Levine, S., Abbeel, P., Jordan, M., Moritz, P. (2015). Trust region policy optimization. In \textit{ International conference on machine learning} (pp. 1889-1897).
       
\item[] Schulman, J., Wolski, F., Dhariwal, P., Radford, A., Klimov, O. (2017). Proximal policy optimization algorithms. \textit{arXiv preprint arXiv:1707.06347.}

\item[] Singh, S. P., Sutton, R. S. (1996). Reinforcement learning with replacing eligibility traces. \textit{Machine learning, 22}(1), 123-158.
  
\item[] Sutton, R. S. (1996). Generalization in reinforcement learning: Successful examples using sparse coarse coding. \textit{Advances in neural information processing systems}, 1038-1044.

\item[] Sutton, R. S., McAllester, D. A., Singh, S. P., Mansour, Y. (2000). Policy gradient methods for reinforcement learning with function approximation. In \textit{Advances in neural information processing systems} (pp. 1057-1063).

\item[] Sutton, R. S., Barto, A. G. (2018). \textit{Reinforcement Learning: An Introduction,} Second Edition. MIT Press.
  
\item[] Williams, R. J. (1992). Simple statistical gradient-following algorithms for connectionist reinforcement learning. \textit{Machine learning, 8}(3-4), 229-256.

\item[] Young, K. (2019). Learning What to Remember: Strategies for Selective External Memory in Online Reinforcement Learning Agents. \textit{University of Alberta, M.Sc. Thesis.}

\end{list}
\onecolumn \makesupplementtitle
\appendix
\tableofcontents

\newpage \section{Minor Background} \label{app: additional_background}
\subsection{Differentiation of Vector Variables} \label{sec: differentiation_vector_variables}
In this section, we first explain the vector notation we use and then state some useful vector differentiation rules. We assume that $\thetavec \in \mathbb{R}^m$ is a vector and $\theta_i$ denotes the $i$th element of $\thetavec$, $\cvec \in \mathbb{R}^m$ is a vector independent of $\thetavec$, $\diag(\thetavec)$ denotes the $m \times m$ diagonal matrix with $\theta_j$ as the $j$th element on the main diagonal, the function $\alpha: \mathbb{R}^m \rightarrow \mathbb{R}$ defined as $\alpha(\thetavec) := \alpha(\theta_1, \theta_2, \ldots, \theta_m) \in \mathbb{R}$ is a scalar function dependent on the vector $\thetavec$, the function $\xvec: \mathbb{R}^m \rightarrow \mathbb{R}^n$ defined as $\xvec(\thetavec) := [ x_1(\thetavec) \; x_2(\thetavec) \; \ldots \; x_n(\thetavec) ]^\top \in \mathbb{R}^n$ is a vector function of $\thetavec$, and $\yvec(\xvec(\thetavec)) \in \mathbb{R}^p$ is a vector function which depends on $\xvec$ and consequently on $\thetavec$ as well. Finally, for a function $f:\mathbb{R} \rightarrow \mathbb{R}$, we define $f(\thetavec) := [ f(\theta_1) \; f(\theta_2) \; \cdots \; f(\theta_m) ]^\top$ as the vector obtained by applying the function $f$ to the vector $\thetavec$ elementwise. For example, $e^{\thetavec} = [ e^{\theta_1} \; e^{\theta_2} \; \cdots \; e^{\theta_m} ]^\top$.

The gradient of a scalar $\alpha(\thetavec)$ with respect to $\thetavec$ is the vector defined as
\begin{equation*}
  \nabla_{\thetavec} \alpha(\thetavec) := \begin{bmatrix} \frac{\partial \alpha(\thetavec)}{\partial \theta_1} & \frac{\partial \alpha(\thetavec)}{\partial \theta_2} & \ldots & \frac{\partial \alpha(\thetavec)}{\partial \theta_m} \end{bmatrix}_{m\times1}^\top.
\end{equation*}
We overload the gradient operator to take gradient of a vector $\xvec(\thetavec)$ with respect to $\thetavec$ as follows:
\begin{IEEEeqnarray*}{lCl}
  \nabla_{\thetavec} \xvec(\thetavec) &:=& \begin{bmatrix} \nabla_{\thetavec} x_1(\thetavec) & \nabla_{\thetavec} x_2(\thetavec) & \ldots & \nabla_{\thetavec} x_n(\thetavec) \end{bmatrix}_{m \times n} \\
  &\equiv& \begin{bmatrix} \frac{\partial x_1(\thetavec)}{\partial \theta_1} & \frac{\partial x_2(\thetavec)}{\partial \theta_1} & \cdots & \frac{\partial x_n(\thetavec)}{\partial \theta_1} \\ \frac{\partial x_1(\thetavec)}{\partial \theta_2} & \frac{\partial x_2(\thetavec)}{\partial \theta_2} & \cdots & \frac{\partial x_n(\thetavec)}{\partial \theta_2} \\ \vdots & \vdots & \ddots & \vdots \\ \frac{\partial x_1(\thetavec)}{\partial \theta_m} & \frac{\partial x_2(\thetavec)}{\partial \theta_m} & \cdots & \frac{\partial x_n(\thetavec)}{\partial \theta_m} \end{bmatrix}_{m \times n}.
\end{IEEEeqnarray*}
According to the above definition, $\nabla_{\thetavec} \xvec(\thetavec)$ is the transpose of the usual Jacobian matrix, and readily reduces to the gradient vector if $\xvec(\thetavec)$ is a scalar (i.e. $n=1$). Further, using this convention, if $\thetavec$ is a scalar (i.e. $m=1$), then
\begin{equation*}
  \nabla_\theta \xvec(\theta) = \begin{bmatrix} \frac{\partial x_1(\theta)}{\partial \theta} & \frac{\partial x_2(\theta)}{\partial \theta} & \cdots & \frac{\partial x_n(\theta)}{\partial \theta} \end{bmatrix}_{1 \times n}.
\end{equation*}
Finally, if both $\xvec(\thetavec)$ and $\thetavec$ are scalars (i.e. $m = n = 1$), then
\begin{equation*}
  \nabla_\theta x(\theta) = \frac{\partial x(\theta)}{\partial \theta}.
\end{equation*}

We are now ready to state some useful propositions, along with their proofs, regarding the differentiation of vector variables.

\begin{restatable}{prop}{dot_product_grad} \label{prop: dot_product_grad}
  $\nabla_{\thetavec} \big( \thetavec^\top \cvec \big) = \nabla_{\thetavec} \big( \cvec^\top \thetavec \big) = \cvec.$
\end{restatable}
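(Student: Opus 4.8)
The plan is to unwind the componentwise definition of the gradient operator stated just above and reduce the claim to a one-line partial derivative. First I would observe that $\thetavec^\top \cvec$ and $\cvec^\top \thetavec$ are both scalars equal to the same sum $\sum_{j=1}^{m} \theta_j c_j$, so the two expressions on the left coincide and it suffices to differentiate the single scalar function $\alpha(\thetavec) := \thetavec^\top \cvec = \sum_{j=1}^{m} \theta_j c_j$.

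Next I would compute the $i$th entry of $\nabla_{\thetavec} \alpha(\thetavec)$, which by the definition given above is $\partial \alpha(\thetavec) / \partial \theta_i$. Since $\cvec$ is independent of $\thetavec$ by hypothesis, each $c_j$ is constant with respect to $\theta_i$, so $\partial \alpha / \partial \theta_i = \sum_{j=1}^{m} c_j \, (\partial \theta_j / \partial \theta_i) = \sum_{j=1}^{m} c_j \, \delta_{ij} = c_i$, where $\delta_{ij}$ is the Kronecker delta.

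Finally I would assemble these entries over $i = 1, \ldots, m$ to conclude $\nabla_{\thetavec} \alpha(\thetavec) = [\, c_1 \; c_2 \; \cdots \; c_m \,]^\top = \cvec$, which combined with the first observation gives $\nabla_{\thetavec}(\thetavec^\top \cvec) = \nabla_{\thetavec}(\cvec^\top \thetavec) = \cvec$.

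There is no real obstacle here: the only thing to be careful about is invoking the correct (transpose-of-Jacobian) gradient convention from the preamble and explicitly using the independence of $\cvec$ from $\thetavec$, so that the derivative of the sum collapses to a single term. The argument is a direct computation rather than anything requiring a clever idea.
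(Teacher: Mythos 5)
Your proposal is correct and matches the paper's proof essentially verbatim: both expand $\thetavec^\top \cvec$ as $\sum_i \theta_i c_i$, take the partial derivative with respect to each $\theta_k$ to obtain $c_k$, and assemble the entries into the vector $\cvec$. The only difference is that you spell out the Kronecker-delta step and the equality of the two scalar expressions explicitly, which the paper leaves implicit.
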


\begin{proof}
  \begin{equation*}
    \frac{\partial}{\partial \theta_k} \thetavec^\top \cvec = \frac{\partial}{\partial \theta_k} \sum_{i=1}^m \theta_i c_i = c_k \qquad \Rightarrow \qquad \nabla_{\thetavec} \thetavec^\top \cvec = \cvec.
  \end{equation*}
\end{proof}

\begin{restatable}{prop}{scalar_vector_grad} \label{prop: scalar_vector_grad}
  $\nabla_{\thetavec} \Big[\alpha(\thetavec) \xvec(\thetavec)\Big] = \xvec(\thetavec) \nabla_{\thetavec} \alpha(\thetavec)^\top + \alpha(\thetavec) \nabla_{\thetavec} \xvec(\thetavec).$
\end{restatable}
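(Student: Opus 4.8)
The plan is to verify the identity entrywise, reducing it to the ordinary product rule for scalar-valued functions. First I would fix indices $k \in \{1,\dots,m\}$ and $i \in \{1,\dots,n\}$ and write down the $(k,i)$ entry of the left-hand side. By the definition of $\nabla_{\thetavec}$ acting on a vector function (the transpose-of-Jacobian layout fixed earlier in this appendix), that entry is $\partial\big(\alpha(\thetavec)\, x_i(\thetavec)\big)/\partial\theta_k$. Applying the scalar product rule for partial derivatives gives
\[
\frac{\partial\big(\alpha(\thetavec) x_i(\thetavec)\big)}{\partial \theta_k} = \frac{\partial \alpha(\thetavec)}{\partial \theta_k}\, x_i(\thetavec) + \alpha(\thetavec)\, \frac{\partial x_i(\thetavec)}{\partial \theta_k}.
\]

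Next I would match each summand with the corresponding term on the right-hand side. The term $\alpha(\thetavec)\,\partial x_i(\thetavec)/\partial\theta_k$ is precisely $\alpha(\thetavec)$ times the $(k,i)$ entry of $\nabla_{\thetavec}\xvec(\thetavec)$, so it is the $(k,i)$ entry of $\alpha(\thetavec)\nabla_{\thetavec}\xvec(\thetavec)$. The term $\big(\partial\alpha(\thetavec)/\partial\theta_k\big)\, x_i(\thetavec)$ is the product of the $k$th component of $\nabla_{\thetavec}\alpha(\thetavec)$ with the $i$th component of $\xvec(\thetavec)$, i.e.\ the $(k,i)$ entry of the rank-one outer product built from $\nabla_{\thetavec}\alpha(\thetavec)$ and $\xvec(\thetavec)$ in the layout used here. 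Since the two matrices agree in every entry, they are equal, which is the claimed identity. Note that Proposition~\ref{prop: dot_product_grad} is not needed; the only ingredients are the scalar product rule and the bookkeeping of the matrix layout.

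I do not anticipate a genuine obstacle here — the argument is a one-line computation per entry. The only point that requires care is the transpose/shape convention: I must make sure the outer-product term is written so that its $(k,i)$ entry really is $\big(\partial\alpha/\partial\theta_k\big)x_i$, consistent with the convention that $\nabla_{\thetavec}$ of a vector function is the transpose of the usual Jacobian. Once that layout is pinned down, the write-up is simply the displayed chain of equalities above followed by the entrywise identification, so the proof is essentially immediate.
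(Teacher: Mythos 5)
Your proposal is correct and is essentially identical to the paper's own proof: both verify the identity entrywise via the scalar product rule and then identify the two summands with the outer-product term and the $\alpha\nabla\xvec$ term. Your attention to the layout convention is well placed — the paper's proof in fact concludes with the outer product written as $\nabla_{\thetavec}\alpha(\thetavec)\,\xvec(\thetavec)^\top$ (the $m\times n$ form whose $(k,i)$ entry is $(\partial\alpha/\partial\theta_k)\,x_i$), which is the ordering your entrywise check pins down.
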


\begin{proof}
  \begin{IEEEeqnarray*}{lrCl}
    & \Big(\nabla_{\thetavec} \Big[\alpha(\thetavec) \xvec(\thetavec)\Big]\Big)_{ij} &=& \frac{\partial}{\partial \theta_i} \Big[ \alpha(\thetavec)x_j(\thetavec)\Big] = \frac{\partial \alpha(\thetavec)}{\partial \theta_i} x_j(\thetavec) + \alpha(\thetavec) \frac{\partial x(\thetavec)_j}{\partial \theta_i} \\
    \Rightarrow \quad & \nabla_{\thetavec} \Big[\alpha(\thetavec) \xvec(\thetavec)\Big]  &=& \nabla_{\thetavec} \alpha(\thetavec) \xvec(\thetavec)^\top + \alpha(\thetavec) \nabla_{\thetavec} \xvec(\thetavec).
  \end{IEEEeqnarray*}
\end{proof}

\begin{restatable}{prop}{chain_rule_grad} \label{prop: chain_rule_grad}
  $\nabla_{\thetavec} \yvec(\xvec(\thetavec)) = \Big[ \nabla_{\thetavec} \xvec(\thetavec) \Big] \Big[ \nabla_{\xvec(\thetavec)} \yvec(\xvec(\thetavec)) \Big].$
\end{restatable}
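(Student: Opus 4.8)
The plan is to prove the identity by comparing the two sides entrywise, using only the ordinary multivariable chain rule together with the transposed-Jacobian convention fixed above. Fix $\thetavec \in \mathbb{R}^m$ with $\xvec(\thetavec) \in \mathbb{R}^n$ and $\yvec(\xvec(\thetavec)) \in \mathbb{R}^p$, and assume $\xvec$ is differentiable at $\thetavec$ and $\yvec$ is differentiable at $\xvec(\thetavec)$ (this is the standing hypothesis that licenses the scalar chain rule). Then $\nabla_{\thetavec}\xvec(\thetavec)$ is $m\times n$, $\nabla_{\xvec(\thetavec)}\yvec(\xvec(\thetavec))$ is $n\times p$, and both $\nabla_{\thetavec}\yvec(\xvec(\thetavec))$ and the right-hand product are $m\times p$, so the shapes are consistent.

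First I would recall from the definition of the overloaded gradient that $[\nabla_{\thetavec}\xvec(\thetavec)]_{ij} = \partial x_j(\thetavec)/\partial\theta_i$ and $[\nabla_{\xvec}\yvec(\xvec)]_{jk} = \partial y_k(\xvec)/\partial x_j$. Expanding the $(i,k)$ entry of the right-hand side by the definition of matrix multiplication gives
\[
  \Big( \big[\nabla_{\thetavec}\xvec(\thetavec)\big]\big[\nabla_{\xvec(\thetavec)}\yvec(\xvec(\thetavec))\big]\Big)_{ik} = \sum_{j=1}^{n} \frac{\partial x_j(\thetavec)}{\partial\theta_i}\,\frac{\partial y_k(\xvec(\thetavec))}{\partial x_j}.
\]
On the other side, $\big(\nabla_{\thetavec}\yvec(\xvec(\thetavec))\big)_{ik} = \partial y_k(\xvec(\thetavec))/\partial\theta_i$, and applying the standard multivariable chain rule to the scalar-valued composition $\theta_i \mapsto y_k(\xvec(\thetavec))$ (holding the remaining components of $\thetavec$ fixed) yields exactly the same sum. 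Since the entries agree for all $i \in \{1,\dots,m\}$ and $k \in \{1,\dots,p\}$, the matrices coincide.

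There is essentially no hard step here: the only thing to be careful about is that the paper's $\nabla$ operator is the transpose of the usual Jacobian, so the factors appear in the order $[\nabla_{\thetavec}\xvec][\nabla_{\xvec}\yvec]$ rather than the more familiar Jacobian order $J_{\yvec}J_{\xvec}$ — this is pure bookkeeping, and the degenerate cases $m=1$, $n=1$, or $p=1$ are already subsumed by the general argument.
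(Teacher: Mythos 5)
Your proof is correct and follows essentially the same route as the paper's: both reduce the identity to the scalar multivariable chain rule applied to each component $y_k(\xvec(\thetavec))$ and then identify the resulting sum over intermediate variables with the entries of the matrix product (the paper assembles the result column by column, you compare entries directly, which is the same bookkeeping). Your shape check and the remark about the transposed-Jacobian convention explaining the order of the factors are both accurate.
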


\begin{proof}
  We show this result taking the total derivative\footnote{See \url{https://mathworld.wolfram.com/TotalDerivative.html}.} of $\yvec$ with respect to $\thetavec$.
  \begin{IEEEeqnarray*}{lrCl}
    & \Big(\nabla_{\thetavec} \yvec(\xvec(\thetavec))\Big)_{ij} &=& \frac{\partial y_j(\xvec(\thetavec))}{\partial \theta_i} = \sum_{k=1}^m \frac{\partial y_j(\xvec(\thetavec))}{\partial x_k(\thetavec)} \frac{\partial x_k(\thetavec)}{\partial \theta_i} \\
    \Rightarrow \quad & \Big(\nabla_{\thetavec} \yvec(\xvec(\thetavec))\Big)_{\cdot j} &=& \sum_{k=1}^m \frac{\partial y_j(\xvec(\thetavec))}{\partial x_k(\thetavec)} \nabla_{\thetavec} x_k(\thetavec) = \nabla_{\thetavec} \xvec(\thetavec) \nabla_{\xvec(\thetavec)} y_j(\xvec(\thetavec)) \\
    \Rightarrow \quad & \nabla_{\thetavec} \yvec(\xvec(\thetavec)) &=& \Big[ \nabla_{\thetavec} \xvec(\thetavec) \Big] \Big[ \nabla_{\xvec(\thetavec)} \yvec(\xvec(\thetavec)) \Big],
  \end{IEEEeqnarray*}
  where $\Big(\nabla_{\thetavec} \yvec(\xvec(\thetavec))\Big)_{\cdot j}$ denotes the $j$th column of the matrix $\nabla_{\thetavec} \yvec(\xvec(\thetavec))$.
\end{proof}

\begin{restatable}{prop}{elementwise_grad} \label{prop: elementwise_grad}
  Let $f$ be a function, applied elementwise to a vector $\thetavec$, i.e.
  \begin{equation*}
    f(\thetavec) := [f(\theta_1) \; f(\theta_2) \; \cdots \; f(\theta_m) ]^\top.
  \end{equation*}
  Then
  \begin{equation*}
    \nabla_{\thetavec} f(\thetavec) = \diag\left( \begin{bmatrix} \frac{\partial f(\theta_1)}{\partial \theta_1} & \frac{\partial f(\theta_2)}{\partial \theta_2} & \cdots & \frac{\partial f(\theta_m)}{\partial \theta_m} \end{bmatrix}^\top \right).
  \end{equation*}
\end{restatable}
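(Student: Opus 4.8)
The plan is to compute the matrix $\nabla_{\thetavec} f(\thetavec)$ entry by entry, using the transpose-Jacobian convention for the gradient of a vector function that was fixed at the start of this section. Under that convention, the $(i,j)$ entry of $\nabla_{\thetavec} \xvec(\thetavec)$ is $\partial x_j(\thetavec)/\partial \theta_i$; here the $j$th component function is $x_j(\thetavec) = f(\theta_j)$, so I need to evaluate $\partial f(\theta_j)/\partial \theta_i$ for every pair $(i,j)$.

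First I would record the single structural fact that drives everything: the function $f(\theta_j)$ depends on $\thetavec$ only through the $j$th coordinate $\theta_j$. Consequently $\partial f(\theta_j)/\partial \theta_i = 0$ whenever $i \neq j$, and $\partial f(\theta_j)/\partial \theta_i = \frac{\partial f(\theta_j)}{\partial \theta_j}$ when $i = j$. Hence $\big(\nabla_{\thetavec} f(\thetavec)\big)_{ij} = \frac{\partial f(\theta_j)}{\partial \theta_j}\,\mathbb{I}(i=j)$, which is exactly the statement that $\nabla_{\thetavec} f(\thetavec)$ is the diagonal matrix with diagonal entries $\frac{\partial f(\theta_1)}{\partial \theta_1}, \ldots, \frac{\partial f(\theta_m)}{\partial \theta_m}$, i.e. $\diag\big([\,\frac{\partial f(\theta_1)}{\partial \theta_1}\; \cdots\; \frac{\partial f(\theta_m)}{\partial \theta_m}\,]^\top\big)$. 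That finishes the argument.

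There is no real obstacle here; the proposition is essentially a direct unwinding of the definitions. The only point that warrants a moment's care is bookkeeping with the convention: because this paper's $\nabla_{\thetavec}\xvec$ is the transpose of the usual Jacobian, one should be sure the row index ($i$, the differentiation variable) and the column index ($j$, the component of $\xvec$) are assigned correctly — though since the resulting matrix is diagonal, the transpose is immaterial to the final form. I would present the proof in two lines: state the entry formula, then observe it vanishes off the diagonal.
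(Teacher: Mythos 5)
Your argument is correct: computing the $(i,j)$ entry as $\partial f(\theta_j)/\partial \theta_i$ under the paper's transpose-Jacobian convention and observing that it vanishes off the diagonal is exactly the right (and essentially only) way to prove this. The paper in fact states Proposition \ref{prop: elementwise_grad} without any proof, treating it as immediate from the definitions, so your two-line entrywise verification supplies precisely the missing routine argument.
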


\subsection{Expectation and Variance of Vector Variables}
We define the expectation of a vector random variable (or equivalently that of a vector function of random variables, as discussed in the previous section) in the standard way: for a $d$-dimensional random variable $\Xvec$, $\E[\Xvec] := \sum_{\Xvec} p(\Xvec) \Xvec = \sum_{x_1, x_2, \ldots, x_d} p(X_1=x_1, X_2=x_2, \ldots, X_d=x_d) \Xvec$. Note that this definition is equivalent to taking elementwise expectation, i.e $\E[\Xvec] \equiv \begin{bmatrix} \E[X_1] & \E[X_2] & \cdots & \E[X_d] \end{bmatrix}^\top$. Using this definition, the expectation of a vector function, also see \S \ref{sec: differentiation_vector_variables}, $\fvec(\cdot)$ operating on a (scalar or vector) random variable $\Yvec$, is given by $\E[\fvec(\Yvec)] = \sum_{\Yvec} p(\Yvec) \fvec(\Yvec)$.

In a similar way, the variance and covariance of vector random are computed elementwise as well: $\V[\Xvec] = \E[\Xvec \odot \Xvec] - \E[\Xvec] \odot \E[\Xvec]$ and $\Cov(\Xvec, \Yvec) = \E[\Xvec \odot \Yvec] - \E[\Xvec] \odot \E[\Yvec]$, where $\Xvec$ and $\Yvec$ are vector random variables (they are not matrices!). Also, to make the notation less cumbersome, we will often use $\xvec^2 := \xvec \odot \xvec$ for any vector $\xvec$; therefore, we can equivalently write $\V[\Xvec] = \E[\Xvec^2] - \E[\Xvec]^2$.

\section{Analysis of the Alternate Estimator in the Bandit Setting} \label{app: theoretical_analysis_bandits}
In this section, we begin by discussing the properties of the alternate bandit estimator in the bandit setting. In particular, we show that the alternate gradient bandit estimator has a per-timestep computational complexity of $\log(|\mathcal{A}|)$. We also give proofs of the statements regarding how the alternate estimator utilizes the reward noise and the bias in the critic estimate. After that, we first present another example (similar to Example \ref{eg: saturated_pg}) that shows how an optimistic baseline makes the policy more uniform. Finally, we give the complete policy update and variance plots (analogous to those in Figure \ref{fig: bandit_intuition}) for three armed bandit problems with different reward structures. 

\subsection{Per-timestep Computational Complexity} \label{app: tree_sampling}
We now show that the per-timestep cost of a discrete gradient bandit algorithm using the alternate estimator has a weaker dependence on the number of actions than the regular estimator. The gradient bandit algorithms involves multiple operations during each timestep. At the start of learning, the agent has to initialize the action preferences. Then at each timestep, it needs to compute the gradient estimate, update the action preferences and the policy, and sample an action from the policy. The sampling process itself requires the agent to maintain a sampling data structure which also needs to be updated each time the policy changes. The total per-timestep computational cost of the algorithm depends on all of these operations, which we summarize in Table \ref{table: sampling_for_gradient_bandits}.

\begin{table}[!hbt]
  \centering
  \begin{tabular}{c|c|c}
    & \textbf{Regular Estimator} & \textbf{Alternate Estimator} \\
    \textbf{Computational Process} & \textbf{+} & \textbf{+} \\
    & \textbf{Vose's Alias Sampling} & \textbf{Tree Sampling} \\
    \hline \hline
    Initialize $\thetavec$ & $\mathcal{O}(|\mathcal{A}|)$ & $\mathcal{O}(|\mathcal{A}|)$ \\
    \hline
    Compute $\hat{\gvec}$ and update $\thetavec$ & $\mathcal{O}(|\mathcal{A}|)$ & $\mathcal{O}(1)$ \\
    Compute $\pivec$ & $\mathcal{O}(|\mathcal{A}|)$ & -- \\
    Update the sampling data structure & $\mathcal{O}(|\mathcal{A}|)$ & $\mathcal{O}(\log |\mathcal{A}|)$ \\
    Sample $A \sim \pi$ & $\mathcal{O}(1)$ & $\mathcal{O}(\log |\mathcal{A}|)$ \\
    \hline \hline
    Total cost & $\mathcal{O}(|\mathcal{A}| + n_{\text{iter}} |\mathcal{A}|)$ & $\mathcal{O}(|\mathcal{A}| + n_{\text{iter}} \log |\mathcal{A}|)$
  \end{tabular}
  
  \caption{The key steps involved, along with their respective computational cost, while running the gradient bandit algorithm with the regular or the alternate estimator. We denote the total number of iterations by $n_{\text{iter}}$. Also, it is reasonable to assume that $n_{\text{iter}} \gg |\mathcal{A}|$.} 
  \label{table: sampling_for_gradient_bandits}
\end{table}

The regular estimator updates each component of the action preference vector $\thetavec$ at each timestep, which has a complexity that is linear in the number of actions. Let us assume, for illustration purposes, that the agent with the regular estimator uses Vose's alias sampling method\footnote{For a reference on different sampling methods, see \url{https://www.keithschwarz.com/darts-dice-coins/}.} for obtaining actions from the policy. As a result, the agent needs to build an \textit{alias table} which takes a linear time. Once this table is built, the agent can then sample an action in constant time from this table. However, updating the alias table each time has a linear cost as well. Therefore, the total per-timestep cost for this agent becomes $\mathcal{O}(|\mathcal{A}|)$. In contrast, the alternate estimator updates only a single component of the action preference vector at each timestep. Because of this, a gradient bandit instance using the alternate estimator can be sped up to have a total per-time-step complexity of $\mathcal{O}(\log |\mathcal{A}|)$. To achieve this, we employ a sampling procedure based on a balanced tree data-structure, which we call the \textit{sampling tree}. Although, the sampling tree method needs logarithmic time for sampling a single action, it can be updated in logarithmic time as well if only a single action preference changes at each timestep (as opposed to the alias method which would require a linear time even if only a single action preference were to change).

\subsubsection*{Using a Balanced Binary Tree for Sampling Actions}
The sampling tree is a balanced binary tree (for a reference on our notation, see Chapter 12, Cormen et al., 2009). It has one node corresponding to each action that stores a reference and the exponentiated action preference corresponding to that action. It also stores the sum of all the exponentiated action preferences corresponding to the actions lying in the left subtree of this node. Figure \ref{fig: tree} shows a schematic diagram of such a node, and one possible sampling tree\footnote{The sampling tree is not necessarily a search tree. However, one can apply procedures similar to those discussed here without changing the asymptotic computational complexity, to balanced search trees such as the Red-black trees (Chapter 13, Cormen et al., 2009) to obtain a sampling tree that is also a search tree. This might be required if one needs to maintain a list over the actions ordered by their probability; say, to pick the greedy action at each timestep.}. The cost for building the sampling tree for the first time is linear in the number of actions. The agent would need to initialize the action preferences, create a balanced tree (using Algorithm \ref{alg: build_sampling_tree}), and then calculate the aggregated exponentiated preferences of actions in the left-subtree (using Algorithm \ref{alg: left_cumulant}). And all of these operations are clearly linear in the size of the action space. 

\begin{figure}[!tbp]
  \centering
  \includegraphics[scale=0.4]{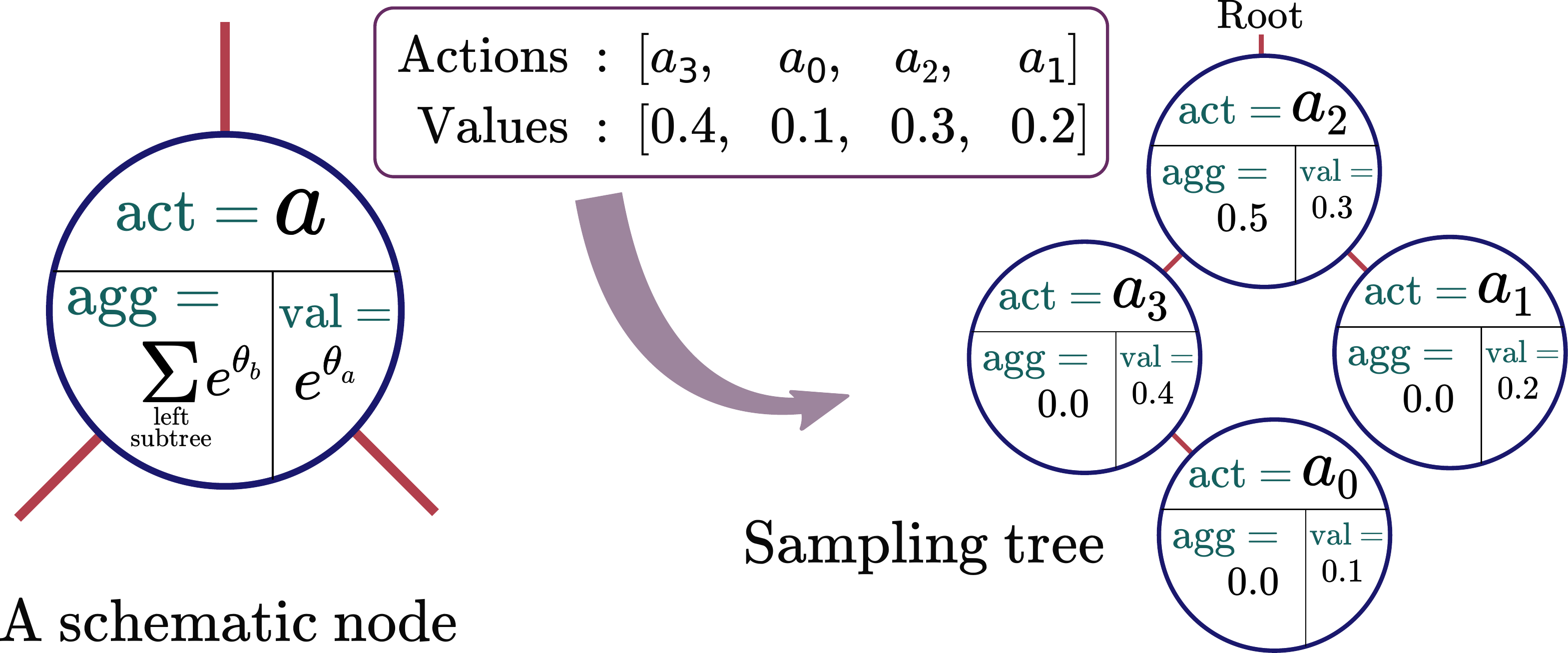}
  \caption{A typical sampling tree node and one possible tree for a softmax policy with four actions. The values refer to exponentiated action preference $e^{\theta_a}$ for that particular action.} 
  \label{fig: tree}
\end{figure}

We summarize the complete operation of building the sampling tree in Algorithm \ref{alg: building_procedure}, and further explain it with an example. Assume that we are given a randomly sorted list of actions $a_{\text{list}} = \{a_3, a_0, a_2, a_1\}$. Let us arbitrarily initialize our action preference list to $\theta_{\text{list}} = \{0.4, 0.1, 0.3, 0.2\}$. Now according to Algorithm \ref{alg: building_procedure}, we will randomly assign either $a_0$ or $a_2$ as the head node; we choose $a_2$ as the head node. Next, we create a left sub-tree from $\{a_3, a_0\}$, this time randomly choosing $a_3$ to be the head node and consequently $a_0$ as its right child. And finally, we create the right sub-tree from $a_1$. This gives us the exact tree shown in Figure \ref{fig: tree}. Note that the nodes in the resulting sampling tree are not ordered in any fashion, and nor is the tree a search tree based on the action preferences. The only required property is that it has to be balanced. 
\begin{algorithm}[!hbt]
  \caption{Full Procedure for Initializing the Sampling Tree}
  \label{alg: building_procedure}
  
  \begin{algorithmic}
    \State Input $a_{\text{list}} = \{a_{i_0}, a_{i_1}, \ldots, a_{i_{n-1}}\}$, where $i_0, i_1, \ldots, i_{n-1}$ is some permutation of $0, 1, \ldots, n-1$
    \State Initialize $\theta_{\text{list}} = \{\theta_{i_0}, \theta_{i_1}, \ldots, \theta_{i_{n-1}}\}$    
    \State head node of the tree $\leftarrow$ \textproc{Build-Sampling-Tree}($a_{\text{list}}$, $\theta_{\text{list}}$, $nil$)
    \State Call \textproc{Compute-Aggregate}(head node of the tree)
  \end{algorithmic}
\end{algorithm}

\begin{algorithm}[!hbt]
  \caption{Building the Sampling Tree}
  \label{alg: build_sampling_tree}
  
  \begin{algorithmic}
    \Function{Build-Sampling-Tree}{$a_{\text{list}}$, $\theta_{\text{list}}$, parent}
    \State $n \leftarrow \text{size}(a_{\text{list}})$
    \If{$n = 0$}
    \State \Return $nil$
    \EndIf
    \If{$n$ is odd} \texttt{\textcolor{purple}{$\;$\# odd number of actions}}
    \State $m \leftarrow (n - 1)/2$
    \Else \texttt{\textcolor{purple}{$\;$\# even number of actions}}
    \State Randomly set $m$ equal to one of $(n - 1)/2$ or $(n + 1)/2$
    \EndIf
    \State Create a new node $z$
    \State $z.act \leftarrow a_{\text{list}}[m]$
    \State $z.val \leftarrow \exp(\theta_{\text{list}}[m])$
    \State $z.p \leftarrow \text{parent}$
    \State \texttt{\textcolor{purple}{$\;$\# assume $\text{list}[i \rightarrow j] := \{\text{list}[i], \text{list}[i+1], \ldots, \text{list}[j-1]\}$}}
    \State $z.left \leftarrow$ \textproc{Build-Sampling-Tree}($a_{\text{list}}[0 \rightarrow m], \theta_{\text{list}}[0 \rightarrow m], z$) 
    \State $z.right \leftarrow$ \textproc{Build-Sampling-Tree}($a_{\text{list}}[m+1 \rightarrow n], \theta_{\text{list}}[m+1 \rightarrow n], z$)
    \State \Return{$z$}
    \EndFunction
  \end{algorithmic}
\end{algorithm}

\begin{algorithm}[!hbt]
  \caption{Calculate Left Sub-tree Aggregate}
  \label{alg: left_cumulant}
  
  \begin{algorithmic}
    \Function{Compute-Aggregate}{$z$}
    \If{$z = nil$} \texttt{\textcolor{purple}{$\;$\# if $z$ is a leaf node}}
    \State \Return{0}
    \EndIf
    \State $L \leftarrow$ \textproc{Compute-Aggregate}($z.left$)
    \State $R \leftarrow$ \textproc{Compute-Aggregate}($z.right$)
    \State $z.agg \leftarrow L$
    \State \Return{$L + z.val + R$}
    \EndFunction
  \end{algorithmic}
\end{algorithm}

 We now discuss how the agent uses this data structure for sampling actions. The sampling process itself takes logarithmic time. To sample an action from the policy, the agent first samples a key from a uniform distribution\footnote{It is standard to assume that sampling a single value from a uniform distribution takes constant time.} over the interval $\left[0, \sum_{i=1}^{|\mathcal{A}|} e^{\theta_i} \right)$, and then calls the \textproc{Action-Select} function (see Algorithm \ref{alg: action_select}) on the sampling tree with this key. Note that the \textproc{Action-Select} function runs in $\log(|\mathcal{A}|)$ time for a balanced sampling tree. The next proposition explains the functioning of the \textproc{Action-Select} function. 
  \begin{algorithm}[!hbt]
    \caption{Selecting an Action from the Sampling Tree}
    \label{alg: action_select}
    \begin{algorithmic}
      \State Generate $x \sim \mathcal{U} \left[ 0, \sum_{i=1}^{|\mathcal{A}|} e^{\theta_i} \right)$ \texttt{\textcolor{purple}{$\;$\# $\mathcal{U}$ refers to the uniform distribution}}
        \State Call \textproc{Action-Select}(head node of the tree, $x$)
        \State \texttt{\textcolor{purple}{$\;$\# function definition}}
        \Function{Action-Select}{$z, x$}
        \If{$x < z.agg$}
        \State \Return \textproc{Action-Select}($z.left, x$)
        \ElsIf{$z.agg \leq x < z.agg + z.val$}
        \State \Return{$z.act$}
        \Else \texttt{\textcolor{purple}{$\;$\# this refers to the case when $x \geq z.agg + z.val$}}
        \State $x_{new} \leftarrow x - z.agg - z.val$
        \State \Return \textproc{Action-Select}($z.right, x_{new}$)
        \EndIf
        \EndFunction
    \end{algorithmic}
  \end{algorithm}
  
  \begin{restatable}{prop}{tree_sampling_mid} \label{thm: tree_sampling_mid}
    Let $T(z_1, z_2, \ldots, z_n)$ denote a sampling tree of size $n$ with nodes $z_1, z_2, \ldots, z_n$ arranged in an inorder traversal (see Chapter 12, Cormen et al., 2009). Also, without loss of generality assume that the node $z_i$ corresponds to the action $a_i$ and stores the value $e^{\theta_i}$. Then given the sampling tree $T$ and a search key $x \in \left[ 0, \sum_{i = 1}^n e^{\theta_i} \right)$ as the input, the \textproc{Action-Select} function returns the action $z_k.act = a_k$ such that the index $k$ satisfies the relation $\sum_{i = 1}^{k-1} e^{\theta_i} \leq x < \sum_{i = 1}^{k} e^{\theta_i}$.
  \end{restatable}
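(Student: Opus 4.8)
The plan is to argue by structural induction on the sampling tree $T$. Before the induction I would record one preliminary fact about the $z.agg$ fields: by inspection of \textproc{Compute-Aggregate} (Algorithm \ref{alg: left_cumulant}), for every node $z$ the stored value $z.agg$ equals the sum of the values $e^{\theta}$ over all nodes in the left subtree of $z$. This is itself a one-line induction on subtree size, since \textproc{Compute-Aggregate} assigns $z.agg \leftarrow L$ and returns $L + z.val + R$, where $L$ and $R$ are (inductively) the full sums of the left and right subtrees. In particular, if $z_m$ is the root of $T$, so that in inorder traversal the left subtree contains $z_1, \dots, z_{m-1}$ and the right subtree contains $z_{m+1}, \dots, z_n$, then $z_m.agg = \sum_{i=1}^{m-1} e^{\theta_i}$ and $z_m.agg + z_m.val = \sum_{i=1}^{m} e^{\theta_i}$. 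The base case $n = 1$ is then immediate: the unique node has $z.agg = 0$ and $z.val = e^{\theta_1}$, the input key satisfies $x \in [0, e^{\theta_1})$, so the first branch of \textproc{Action-Select} is skipped, the second branch fires, and the function returns $a_1$, which indeed satisfies $\sum_{i=1}^{0} e^{\theta_i} = 0 \le x < e^{\theta_1} = \sum_{i=1}^{1} e^{\theta_i}$.

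For the inductive step I would split on the three branches of \textproc{Action-Select} exactly as written, using the decomposition of $T$ into its root $z_m$ and subtrees $T_L$ (nodes $z_1,\dots,z_{m-1}$) and $T_R$ (nodes $z_{m+1},\dots,z_n$). (i) If $x < z_m.agg = \sum_{i=1}^{m-1} e^{\theta_i}$, the call recurses on $T_L$ with the key unchanged; since $x \in [0, \sum_{i=1}^{m-1} e^{\theta_i})$ this is a legal search key for $T_L$, and the induction hypothesis applied to $T_L$ returns an index $k \in \{1,\dots,m-1\}$ with $\sum_{i=1}^{k-1} e^{\theta_i} \le x < \sum_{i=1}^{k} e^{\theta_i}$, as claimed. (ii) If $z_m.agg \le x < z_m.agg + z_m.val$, the function returns $z_m.act = a_m$, and by the preliminary fact this is exactly the condition $\sum_{i=1}^{m-1} e^{\theta_i} \le x < \sum_{i=1}^{m} e^{\theta_i}$, so $k = m$. (iii) Otherwise $x \ge \sum_{i=1}^{m} e^{\theta_i}$, and the call recurses on $T_R$ with the shifted key $x_{new} = x - \sum_{i=1}^{m} e^{\theta_i}$; since the hypothesis $x \in [0, \sum_{i=1}^{n} e^{\theta_i})$ gives $x_{new} \in [0, \sum_{i=m+1}^{n} e^{\theta_i})$, the induction hypothesis applies to $T_R$ (re-indexing its inorder nodes so they store $e^{\theta_{m+1}},\dots,e^{\theta_n}$) and returns an action $a_k$ with $\sum_{i=m+1}^{k-1} e^{\theta_i} \le x_{new} < \sum_{i=m+1}^{k} e^{\theta_i}$. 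Adding $\sum_{i=1}^{m} e^{\theta_i}$ to all three sides and substituting back the definition of $x_{new}$ recovers $\sum_{i=1}^{k-1} e^{\theta_i} \le x < \sum_{i=1}^{k} e^{\theta_i}$. Termination is automatic since each recursive call descends to a strictly smaller subtree.

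I expect the only delicate point to be the bookkeeping in case (iii): one must verify carefully that the shifted key $x_{new}$ lands inside the half-open interval $[0, \sum_{i=m+1}^{n} e^{\theta_i})$ attached to the right subtree, so that the induction hypothesis applies verbatim, and that the index returned for $T_R$, once translated back by $m$, aligns with the cumulative sums of the whole tree. The $z.agg$ identity and the two non-recursive branches are routine, and balancedness of the tree plays no role in correctness (only in the $\mathcal{O}(\log|\mathcal{A}|)$ running-time claim).
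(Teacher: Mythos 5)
Your proposal is correct and follows essentially the same route as the paper's proof: induction on the tree with a three-way case split mirroring the branches of \textproc{Action-Select}, recursing into the left or right subtree with the (shifted) key. The only difference is that you explicitly establish the invariant that $z.agg$ equals the sum over the left subtree via \textproc{Compute-Aggregate}, which the paper assumes implicitly; this is a welcome tightening, not a different argument.
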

  \begin{proof}
    We will use induction to prove this statement. First, consider a tree of size one, $T(z_j)$ with $z_j.val = e^{\theta_j}$ and $z_j.agg = 0$. By the assumption on the search key, $x \in \left[ 0, e^{\theta_j} \right)$, i.e. the second condition in Algorithm \ref{alg: action_select} is true, and therefore the function returns $z_j.act$. Next, assume that the function does return the intended action for any tree of size $l \leq m$ where $m$ is some arbitrary positive number for an appropriate key $x$. We now prove the proposition statement for a key $x \in \left[ 0, \sum_{i = 1}^{l+1} e^{\theta_i} \right)$, and an arbitrary tree of size $l+1$, $T(z_1, z_2, \ldots, z_{r-1}, z_r, z_{r+1}, \ldots, z_{l+1})$ where $z_r$ represents the root node and $1 \leq r \leq l+1$. Then, depending on the value of $x$, we get three cases:
        \begin{itemize}
        \item \textit{Case 1} $(x < z_r.agg = \sum_{i = 1}^{r-1} e^{\theta_i})$: the function gets called again on the sub-tree $T(z_1, \ldots, z_{r-1})$, of size $r-1 \leq l$, with the key $x \in \left[0, \sum_{i = 1}^{r-1} e^{\theta_i} \right)$.
        \item \textit{Case 2} $(\sum_{i = 1}^{r-1} e^{\theta_i} = z_r.agg \leq x < z_r.agg + z_r.val = \sum_{i = 1}^{r} e^{\theta_i})$: the function returns $z_r.act$.
        \item \textit{Case 3} $(x \geq z_r.agg + z_r.val = \sum_{i = 1}^{r} e^{\theta_i})$: the function gets called on $T(z_{r+1}, \ldots, z_{l+1})$, a sub-tree of size $l-r+1 \leq l$, with the modified key $x_{new} \in \left[0, \sum_{i=r+1}^{l+1} e^{\theta_i} \right)$.
        \end{itemize}
        This closes the induction, and thus the proposition is true for sampling trees of all sizes.
  \end{proof}
  The above proposition shows that the action $a_k$ is selected whenever the search key $x$ lies in the range $\left[ \sum_{i = 1}^{k-1} e^{\theta_i}, \sum_{i = 1}^{k} e^{\theta_i} \right)$. However, since $x \sim \mathcal{U} \left[ 0, \sum_{i=1}^{|\mathcal{A}|} e^{\theta_i} \right)$, the probability of action $a_k$ being chosen is equal to $\frac{e^{\theta_k}}{\sum_{i=1}^{|\mathcal{A}|} e^{\theta_i}}$. Therefore, using Algorithm \ref{alg: action_select} is indeed equivalent to sampling from a softmax policy with action preferences $\theta_1, \theta_2, \ldots, \theta_{|\mathcal{A}|}$. 

      When an agent, using the alternate estimator, updates the action preference for the sampled action, it only needs to update the exponentiated preference for that action in the tree and the aggregated values for all nodes lying on the path from this nodes upto the root of the tree. The procedure shown in Algorithm \ref{alg: fix_upstream} accomplishes this in a logarithmic time. Also note that the structure of the tree remains unchanged during all these operations.
      \begin{algorithm}[!hbt]
        \caption{Fix Aggregates Upstream after an Update}
        \label{alg: fix_upstream}
        \begin{algorithmic}
          \State Call \textproc{Fix-Upstream}(sampling tree, node corresponding to the sampled action $a$, $\theta^{\text{new}}_a - \theta^{\text{old}}_a$)
          \State \texttt{\textcolor{purple}{$\;$\# function definition}}
          \Function{Fix-Upstream}{$T, z, \delta$}
          \Repeat \texttt{\textcolor{purple}{$\;$\# repeat until node $z$ is the root node}}
          \If{$z.p.left = z$} \texttt{\textcolor{purple}{$\;$\# update the parent's cumulant if $z$ is the left child}}
          \State $z.p.agg \leftarrow z.p.agg + \delta$
          \EndIf
          \State $z \leftarrow z.p$ \texttt{\textcolor{purple}{\# move up the tree}}
          \Until{$z.p \neq T.nil$} 
          \EndFunction
        \end{algorithmic}
      \end{algorithm}

      \subsection{\textbf{PROOFS:} The Alternate Estimator Utilizes Reward Noise} \label{app: zero_at_corner}
      In this section, we first show that the expected policy gradient at saturated regions, or equivalently the \textit{corners} of the probability simplex boundary (see Figure \ref{fig: policy_space}), is zero. We then show that at these corners, the variance of the regular estimator is zero as well, but the variance of the alternate estimator is equal to the variance of the reward noise which in general is non-zero. The consequence of these two facts is that whenever the policy is saturated, i.e. it places a high probability mass on some actions, the expected gradient becomes close to zero, and therefore the policy weights are not updated. Further, since the variance of the regular estimator at these corners is also zero, there is neither any gradient signal nor any noise, and the agent is unable to escape the sub-optimal region. In contrast, the alternate estimator, despite being zero in expectation, has a non-zero variance and therefore the agent can utilize the reward noise to escape the sub-optimal region.   

      \begin{restatable}{prop}{expectation_saturated_pg} \label{thm: expectation_saturated_pg}
        Define $\mathcal{I}_c := \{a \;|\; r(a) = c\}$ for some $c \in \mathbb{R}$. Consider a constant $c$ such that\footnote{We can always find such a $c$ by choosing it to be equal to the expected reward of some action.} $\mathcal{I}_c \neq \emptyset$ and that the policy is saturated on the actions in the set $\mathcal{I}_c$, i.e. $\sum_{a \in \mathcal{I}_c} \pi(a) = 1$. Then the expected policy gradient for softmax policies is zero:
        \begin{equation*}
          \nabla_{\thetavec} \mathcal{J} = \E [\gvec^{\text{REG}}(A, R)] = \E [\hat{\gvec}^{\text{REG}}(A, R)] = \E [\gvec^{\text{ALT}}(A, R)] = \zerovec.
        \end{equation*}
      \end{restatable}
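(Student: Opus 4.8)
The plan is to exploit a single collapse: the saturation hypothesis forces the mean reward to equal $c$, after which each of the three listed estimators has expectation $\zerovec$ by a short computation. Concretely, since $\pi(a)=0$ for every $a\notin\mathcal{I}_c$ and $r(a)=c$ for every $a\in\mathcal{I}_c$, I would first write $r_\pi=\sum_a\pi(a)r(a)=c\sum_{a\in\mathcal{I}_c}\pi(a)=c$, and record the auxiliary fact that $\pivec$ is supported on $\mathcal{I}_c$, i.e. $\pivec=\sum_{a\in\mathcal{I}_c}\pi(a)\evec_a$ and $\sum_{a\in\mathcal{I}_c}\pi(a)=1$. Everything else follows from plugging these two facts into the closed forms already derived in the excerpt.

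For the true gradient and the two estimators that carry the exact baseline $r_\pi$, I would invoke the vector identity $\nabla_{\thetavec}\mathcal{J}=\pivec\odot(\rvec-r_\pi\boldsymbol{1})$ from Eq.~\ref{eq: regular_gradient_bandit}/Eq.~\ref{eq: alternate_stochastic_softmax_grad_vector}. Looking at component $a$: if $a\in\mathcal{I}_c$ then $[\rvec]_a-r_\pi=c-c=0$, and if $a\notin\mathcal{I}_c$ then $[\pivec]_a=0$; hence $\nabla_{\thetavec}\mathcal{J}=\zerovec$. Because $\gvec^{\text{REG}}(A,R)=(R-r_\pi)(\evec_A-\pivec)$ and $\gvec^{\text{ALT}}(A,R)=(R-r_\pi)\evec_A$ are both unbiased for $\nabla_{\thetavec}\mathcal{J}$ (same equations), their expectations are $\zerovec$ as well.

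For the remaining quantity, $\E[\hat{\gvec}^{\text{REG}}(A,R)]$ with an arbitrary baseline $b$, I would expand directly using $\E[R\mid A=a]=r(a)$:
\[
\E[\hat{\gvec}^{\text{REG}}(A,R)]=\sum_{a}\pi(a)(r(a)-b)(\evec_a-\pivec)=\sum_{a\in\mathcal{I}_c}\pi(a)(c-b)(\evec_a-\pivec),
\]
where the $a\notin\mathcal{I}_c$ terms drop out. Factoring the constant $c-b$ and using $\sum_{a\in\mathcal{I}_c}\pi(a)\evec_a=\pivec$ together with $\sum_{a\in\mathcal{I}_c}\pi(a)=1$ gives $(c-b)(\pivec-\pivec)=\zerovec$, which handles it independently of $b$.

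The computation is entirely routine; the only point requiring care — and the reason I would state it explicitly — is bookkeeping over \emph{which} estimator uses the true $r_\pi$ versus an arbitrary $b$. In particular the biased estimator $\hat{\gvec}^{\text{ALT}}$ is deliberately \emph{not} in the statement: the same expansion yields $\E[\hat{\gvec}^{\text{ALT}}(A,R)]=\sum_{a\in\mathcal{I}_c}\pi(a)(c-b)\evec_a=(c-b)\pivec$, which is nonzero unless $b=c$. That non-vanishing term is exactly the bias exploited in \S\ref{sec: alternate_utilizes_biasedness}, so I would be careful not to accidentally prove too much here.
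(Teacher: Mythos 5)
Your proposal is correct and follows essentially the same route as the paper: both arguments reduce to checking that $\pi(a)\,(r(a)-r_\pi)=0$ for every action once one observes that $r_\pi=c$ and that $\pivec$ is supported on $\mathcal{I}_c$. The only (welcome) difference is that you verify $\E[\hat{\gvec}^{\text{REG}}(A,R)]=\zerovec$ for an arbitrary baseline $b$ by direct expansion, whereas the paper leans on the previously stated unbiasedness of the regular estimator with any baseline; your closing remark that $\E[\hat{\gvec}^{\text{ALT}}(A,R)]=(c-b)\pivec$ is deliberately excluded from the claim is also consistent with the paper's later analysis.
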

      \begin{proof}
        We set $\nabla_{\thetavec} \mathcal{J} = \pivec \odot (\rvec - r_\pi \mathbf{1}) = \zerovec$ to obtain a system of $|\mathcal{A}| + 1$ linear equations in $|\mathcal{A}|$ variables:
        \begin{equation}
          \pi(a) \cdot (r(a) - r_\pi) = 0, \; \forall a \in \mathcal{A} \qquad \quad \text{and } \qquad \quad \sum_{a \in \mathcal{A}} \pi(a) = 1. \label{eq: expected_pg_fixed_pt_eqn} 
        \end{equation}
        Consider $\mathcal{I}_c = \{a \;|\; r(a) = c\}$ where $c$ is equal to the expected reward for some action. Then for a policy satisfying $\sum_{a \in \mathcal{I}_c} \pi(a) = 1$, we immediately see that $\pi(a) = 0, \forall a \notin \mathcal{I}_c$ and $r_\pi = c = r(a), \forall a \in \mathcal{I}_c$. Consequently, this policy would also satisfy Eqs. \ref{eq: expected_pg_fixed_pt_eqn} and thereby $\nabla_{\thetavec} \mathcal{J} = \zerovec$.
      \end{proof}

      The above proposition shows that the expected gradient bandit update $\nabla_{\thetavec} \mathcal{J}_\pi$ is zero only at the simplex boundary which can happen either at any one of the corners, or at a point on one of the face (i.e. a sub-space) of the simplex such that all the actions on that face have the same expected reward. The next proposition considers the variance of the estimators at the simplex corners.

      \begin{restatable}{prop}{variance_saturated_pg} \label{thm: variance_saturated_pg}
        Let $\sigma(a)^2 := \V[R | A=a]$ denote the conditional variance of the reward corresponding to action $a$. Assume that the policy is saturated on the action $c$, i.e., $\pi(c) = 1$. Also assume that $\sigma(c)^2 \neq 0$. Then, the variance of the regular PG estimator (with or without a baseline) is zero: $\V[\gvec^{\text{REG}}(A)] = \V[\hat{\gvec}^{\text{REG}}(A, R)] = \boldsymbol{0}$. Whereas, the variance of the alternate PG estimator (with or without a baseline) is non-zero: $\V[\gvec^{\text{ALT}}(A, R)] = \V[\hat{\gvec}^{\text{ALT}}(A, R)] = \sigma(c)^2 \evec_c$.
      \end{restatable}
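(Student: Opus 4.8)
The plan is to exploit that a policy saturated on action $c$ samples $A = c$ with probability one, so $\evec_A = \evec_c$ holds deterministically and the only randomness left in either estimator comes from the reward draw $R \sim p(\cdot \mid c)$. All computations then reduce to substituting $\pivec = \evec_c$ and $\evec_A = \evec_c$ into the four estimator expressions and using the elementwise definition of variance for vector random variables from \S\ref{app: additional_background}.

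First I would handle the regular estimator. Substituting $\pivec = \evec_c$ and $\evec_A = \evec_c$ into $\gvec^{\text{REG}}(A,R) = (R - r_\pi)(\evec_A - \pivec)$ gives $\evec_A - \pivec = \evec_c - \evec_c = \zerovec$, so $\gvec^{\text{REG}}$ is the zero vector almost surely, regardless of the value of $R$; the same holds for $\hat{\gvec}^{\text{REG}}(A,R) = (R-b)(\evec_A - \pivec)$ since the scalar multiplier (be it $r_\pi$ or any baseline estimate $b$) is annihilated by the zero vector. A random variable that equals $\zerovec$ almost surely has variance $\zerovec$, which gives the first claim.

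Next I would treat the alternate estimator. Here $\gvec^{\text{ALT}}(A,R) = (R - r_\pi)\evec_A = (R - r(c))\evec_c$, using $r_\pi = \pivec^\top\rvec = r(c)$ under saturation. This random vector is supported on the span of $\evec_c$: its $a$-th component is deterministically $0$ for $a \neq c$, and its $c$-th component is the scalar $R - r(c)$ with $R \sim p(\cdot\mid c)$. Computing the variance elementwise, every component other than $c$ contributes $0$, while the $c$-th component contributes $\V[R - r(c)] = \V[R \mid A = c] = \sigma(c)^2$, since subtracting the constant $r(c)$ leaves the variance unchanged. Hence $\V[\gvec^{\text{ALT}}(A,R)] = \sigma(c)^2\evec_c$. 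For $\hat{\gvec}^{\text{ALT}}(A,R) = (R-b)\evec_c$ the argument is identical: treating the baseline $b$ as a constant (or, more generally, as independent of the current reward sample), $\V[R - b] = \V[R\mid A=c] = \sigma(c)^2$, so $\V[\hat{\gvec}^{\text{ALT}}(A,R)] = \sigma(c)^2\evec_c$ as well. The hypothesis $\sigma(c)^2 \neq 0$ then makes this variance genuinely nonzero.

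There is no real obstacle in this proof — it is a direct substitution once one observes that $A = c$ almost surely — so the only things to be careful about are (i) using the elementwise variance convention consistently, and (ii) stating explicitly that the baseline $b$ plays the role of an additive constant with respect to the current reward, so it affects neither the regular estimator's value (multiplied by zero) nor the alternate estimator's variance (translation-invariance of variance).
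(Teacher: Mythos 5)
Your proof is correct, but it takes a genuinely more elementary route than the paper's. You observe that $\pi(c)=1$ forces $A=c$ almost surely, so $\evec_A=\evec_c$ deterministically and the only remaining randomness is the reward draw $R\sim p(\cdot\mid c)$; the regular estimator then vanishes identically (zero vector times any scalar), and the alternate estimator reduces to $(R-r(c))\evec_c$, whose elementwise variance is $\sigma(c)^2\evec_c$ by translation invariance. The paper instead first derives a general formula for $\V[(\evec_A-\etavec)(R-b)]$ valid for an arbitrary policy, expanding it into six variance and covariance terms built from the moment identities $\E[\evec_A R]=\pivec\odot\rvec$, $\E[\evec_A R^2]=\pivec\odot(\sigmavec^2+\rvec^2)$, etc., and only then specializes to $\pivec=\evec_c$ with $\etavec=\zerovec$ or $\etavec=\evec_c$. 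Your argument is shorter and makes the mechanism transparent (the regular estimator is the zero random vector at saturation, not merely a random vector with zero variance); what the paper's heavier derivation buys is the general-policy variance expression (Eq.~\ref{eq: gradient_bandit_estimators_variance_expression}), which it reuses elsewhere to generate the variance heatmaps over the whole simplex in \S\ref{app: bandits_intuition}. Both proofs establish the proposition as stated.
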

      \begin{proof}
        As defined in the proof statement, the conditional variance of the rewards given an action $a$ is denoted using $\sigma(a)^2$, i.e. $\sigma(a)^2 := \V[R | A = a]$. Further, define $\sigmavec^2$ to be the conditional variance vector with $[\sigmavec^2]_k = \sigma(a_k)^2$. Also for brevity, let $\xvec^2 := \xvec \odot \xvec$ for any vector $\xvec$. Then, it is straightforward to show the following statements:
        \begin{align}
          \E[R] &= \sum_{a} \pi(a) \sum_x p(x | a) x = \sum_{a} \pi(a) r(a) = r_\pi, \\
          \E[R^2] &= \E_A \Big [\E_R[R^2 | A] \Big] = \E_A\Big[ \V[R | A] + \E[R | A]^2 \Big] = \sum_a \pi(a) \Big( \sigma(a)^2 + r(a)^2 \Big) \nonumber \\
          &= \pivec^\top (\sigmavec^2 + \rvec^2), \\
          \E[\evec_A] &= \sum_{a} \pi(a) \evec_a = \pivec, \\
          \E[\evec_A R] &= \sum_{a} \pi(a) \evec_a \sum_x p(x | a) x = \sum_{a} \pi(a) \evec_a r(a) = \pivec \odot \rvec, \\
          \E[\evec_A R^2] &= \E_A \Big [ \evec_A \E_R[R^2 | A] \Big] = \sum_{a} \pi(a) \evec_a \Big( \sigma(a)^2 + r(a)^2 \Big) = \pivec \odot (\sigmavec^2 + \rvec^2).
        \end{align}
        Next, let $\etavec$ be an action independent vector and $b$ be an action independent scalar. And assume that the variance for vector variables is calculated elementwise, i.e. the variance and covariance of arbitrary vector random variables $\Xvec$ and $\Yvec$ are defined as $\V[\Xvec] := \E[\Xvec \odot \Xvec] - \E[\Xvec] \odot \E[\Xvec]$ and $\Cov(\Xvec, \Yvec) := \E[\Xvec \odot \Yvec] - \E[\Xvec] \odot \E[\Yvec]$ respectively. Then,
        \begin{align}
          & \V[(\evec_A - \etavec) (R - b)] \nonumber \\
          &= \V[\evec_A R - \evec_A b - \etavec R + \etavec b] = \V[\evec_A R - \evec_A b - \etavec R] \nonumber \\
          &= \V[\evec_A R] + \V[\evec_A b] + \V[\etavec R] - 2 \Cov(\evec_A R, \evec_A b) - 2 \Cov(\evec_A b, \etavec R) + 2 \Cov(\evec_A b, \etavec R) \nonumber \\
          &= \V[\evec_A R] + \V[\evec_A] b^2 + \V[R] \etavec^2 - 2 \Cov(\evec_A R, \evec_A) b - 2 \Cov(\evec_A R, R) \odot \etavec + 2 \Cov(\evec_A, R) \odot \etavec b. \label{eq: gradient_bandit_estimators_variance_expression}
        \end{align}
        Simplifying the above expression for a policy saturated at action $c$, i.e. $\pivec = \evec_c$, gives us
        \begin{align*}
          & \V[\evec_A R] + \V[\evec_A] b^2 + \V[R] \etavec^2 - 2 \Cov(\evec_A R, \evec_A) b - 2 \Cov(\evec_A R, R) \odot \etavec + 2 \Cov(\evec_A, R) \odot \etavec b \\
          &= \evec_c \odot \sigmavec^2 + \zerovec + \sigma(c)^2 \etavec^2 - \zerovec - 2 \sigmavec \odot \evec_c \odot \etavec + \zerovec \\
          &= \sigma(c)^2 \evec_c + \sigma(c)^2 \etavec^2 - 2 \sigma(c)^2 \eta(c)^2 \evec_c = \begin{cases} \sigma(c)^2 \evec_c & \text{if } \etavec = \zerovec \text{(i.e. for alternate estimator), or} \\
            \zerovec & \text{if } \etavec = \pivec = \evec_c \text{(i.e. for the regular estimator).} \end{cases}
        \end{align*}
        This concludes the proof.
      \end{proof}

      Note that these two propositions again require the policy to have support only at a subset of all the actions. This condition cannot be fulfilled for softmax policies since the action preferences for these actions will need to be infinitely large. Despite that, using continuity arguments, we can still reason that for the regular estimator, both the expected gradient and its variance vanish in the proximity of the simplex boundary, whereas the alternate estimator will have non-zero variance.

      \subsection{\textbf{PROOFS:} The Alternate Estimator Utilizes its Biasedness} \label{app: attractor_repellor}
      The alternate estimator can also utilize the bias in the critic estimate $b$ to escape the saturated regions of the policy space. If the critic estimate $b$ is not equal to $r_\pi$, then the alternate estimator becomes biased, whereas the regular estimator does not. And for the alternate estimator, this bias helps make the policy more uniform thereby moving the agent out of the saturated region. To see this biasedness, consider a baseline $b \neq r_\pi$ and a vector $\etavec$ which we will set equal to either $\pivec$ for the regular estimator, or $\zerovec$ for the alternate estimator. Then it is straightforward to see that
      \begin{align}
        \E[(\evec_A - \etavec) (R - b)] &= \E[\evec_A R] - \E[\evec_A b] - \E[R] \etavec + \etavec b \nonumber = \pivec \odot \rvec - \pivec b - r_\pi \etavec + \etavec b \nonumber \\
        &= \begin{cases} \pivec \odot (\rvec - r_\pi \mathbf{1}) & \text{if } \etavec = \pivec, \qquad \text{and} \\ \pivec \odot (\rvec - b \mathbf{1}) & \text{if } \etavec = \zerovec. \end{cases} \label{eq: expected_gradient_bandit_updates}
      \end{align}
      Therefore,
      \begin{equation*}
        \nabla_{\thetavec} \mathcal{J} = \E[\hat{\gvec}^\text{REG}(A, R)] = \pivec \odot (\rvec - r_\pi \mathbf{1}) \neq \pivec \odot(\rvec - b \boldsymbol{1}) = \E[\hat{\gvec}^\text{ALT}(A, R)].
      \end{equation*}

      Interestingly, this biased update $\E[\hat{\gvec}^\text{ALT}(A, R)]$ is not the gradient of any function. To show this, we adopt the same process as Barnard (1993) did for the TD update.
      \begin{restatable}{prop}{fixed_pt_biased_no_grad} \label{thm: fixed_pt_biased_no_grad}
        The biased expected gradient bandits update is not a gradient of any objective function, i.e. for no function $\mathcal{L}$, $\nabla_{\thetavec} \mathcal{L} = \pivec \odot (\rvec - b)$.
      \end{restatable}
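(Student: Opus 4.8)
The plan is to use the classical obstruction to a vector field being a gradient: if $F:\mathbb{R}^k\to\mathbb{R}^k$ satisfies $F=\nabla_{\thetavec}\mathcal{L}$ for some $\mathcal{L}$, then (since $F$ here is real-analytic, $\mathcal{L}$ is automatically $C^\infty$ and) the Jacobian of $F$ equals the Hessian of $\mathcal{L}$ and must therefore be symmetric: $\partial F_i/\partial\theta_j=\partial F_j/\partial\theta_i$ for all $i,j$. So I would assume, for contradiction, that $F(\thetavec):=\pivec\odot(\rvec-b\boldsymbol{1})$ — whose $i$th component is $F_i(\thetavec)=\pi(a_i)\,(r(a_i)-b)$ with $\pi(a_i)=e^{\theta_i}/(\boldsymbol{1}^\top e^{\thetavec})$ — is such a gradient, and then exhibit a pair of indices for which this symmetry fails.

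The computation is short and reuses the softmax derivative already recorded in the paper, namely that the $(i,j)$ entry of $\nabla_{\thetavec}\pivec$ is $\pi(a_i)\big(\mathbb{I}(i=j)-\pi(a_j)\big)$. Since $r(a_i)-b$ does not depend on $\thetavec$, this gives
\[
  \frac{\partial F_i}{\partial \theta_j} = (r(a_i)-b)\,\pi(a_i)\big(\mathbb{I}(i=j)-\pi(a_j)\big).
\]
For $i\neq j$ this equals $-(r(a_i)-b)\,\pi(a_i)\pi(a_j)$, whereas $\partial F_j/\partial\theta_i=-(r(a_j)-b)\,\pi(a_j)\pi(a_i)$. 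Symmetry of the Jacobian would then force $(r(a_i)-b)\,\pi(a_i)\pi(a_j)=(r(a_j)-b)\,\pi(a_i)\pi(a_j)$ for every pair $i\neq j$; since softmax weights are strictly positive at every finite $\thetavec$, we may cancel $\pi(a_i)\pi(a_j)$ and obtain $r(a_i)=r(a_j)$ for all $i,j$. This contradicts the standing assumption that the bandit is non-degenerate (not all actions share the same expected reward — precisely the regime in which $b\neq r_\pi$ makes the alternate update genuinely biased), so no such $\mathcal{L}$ can exist.

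I do not expect a real obstacle here; the only care needed is bookkeeping. First, regularity: if one is uncomfortable assuming $\mathcal{L}\in C^2$, the same conclusion follows from the line-integral characterization — $\nabla_{\thetavec}\mathcal{L}=F$ would force $\oint_\gamma F\cdot d\thetavec=0$ on every closed loop $\gamma$, and the computation above shows that a small rectangular loop in the $(\theta_i,\theta_j)$-plane (other coordinates held fixed) has nonzero circulation whenever $r(a_i)\neq r(a_j)$; this is the route Barnard (1993) takes for the TD update. Second, the genuinely degenerate case should be flagged: if all rewards equal some $c$, then $F=(c-b)\pivec=\nabla_{\thetavec}\big[(c-b)\log(\boldsymbol{1}^\top e^{\thetavec})\big]$ really is a gradient, so the statement implicitly requires $\rvec$ not to be a constant vector, and I would state that hypothesis explicitly.
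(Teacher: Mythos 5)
Your proof is correct and follows essentially the same route as the paper's: both assume the update is $\nabla_{\thetavec}\mathcal{L}$, differentiate once more using the softmax Jacobian, and derive a contradiction from the asymmetry of the mixed partials $\partial^2\mathcal{L}/\partial\theta_j\partial\theta_i \neq \partial^2\mathcal{L}/\partial\theta_i\partial\theta_j$ for $i\neq j$. Your additional observation that this asymmetry requires $r(a_i)\neq r(a_j)$ for some pair --- and that in the degenerate case $\rvec=c\boldsymbol{1}$ the update genuinely is the gradient of $(c-b)\log\big(\boldsymbol{1}^\top e^{\thetavec}\big)$ --- is a minor but real refinement, since the paper asserts the inequality of mixed partials for all $a\neq c$ without stating that hypothesis.
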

      \begin{proof}
        We prove this statement by contradiction. Assume that the biased update is the gradient of some function $\mathcal{L}$, i.e. $\frac{\partial \mathcal{L}}{\partial \theta_a} = \pi(a) \cdot (r(a) - b)$. Then,
        \begin{equation*}
          \frac{\partial^2 \mathcal{L}}{\partial \theta_c \partial \theta_a} = \pi(a) [\mathbb{I}(a = c) - \pi(c)] (r(a) - b) \quad \Rightarrow \quad \frac{\partial^2 \mathcal{L}}{\partial \theta_c \partial \theta_a} \neq \frac{\partial^2 \mathcal{L}}{\partial \theta_a \partial \theta_c}, \quad \text{for } a \neq c,
        \end{equation*}
        which gives a contradiction\footnote{In contrast, for the true gradient bandit update $\nabla_{\thetavec} \mathcal{J} = \pivec \odot (\rvec - r_\pi)$,
          \begin{equation*}
            \frac{\partial^2 \mathcal{J}}{\partial \theta_c \partial \theta_a} = - \pi(a) \pi(c) \big[ r(a) + r(c) - 2 r_\pi \big]  + \pi(a) \mathbb{I}(a = c) \big[ r(a) - r_\pi \big] \quad \Rightarrow \quad \frac{\partial^2 \mathcal{J}}{\partial \theta_c \partial \theta_a} = \frac{\partial^2 \mathcal{J}}{\partial \theta_a \partial \theta_c}, \quad \text{for } a \neq c.
          \end{equation*}
        }. As a result, the biased update cannot be the gradient of any objective function.
      \end{proof}

      The above result suggests that we cannot use the usual optimization analysis, such as setting the gradient equal to zero or computing second order derivatives, to study the properties of this biased update. Therefore, we instead try to understand its behavior by looking at its fixed point. It is now essential to introduce a few more terms. Let $r_1 \leq r_2 \leq \cdots \leq r_k$ represent an ordering of the true rewards for the bandit problem. Let $\pi_t = e^{\thetavec^{(t)}} \big/ \boldsymbol{1}^\top e^{\thetavec^{(t)}}$ represent the softmax policy at timestep $t$. Here, $\thetavec^{(t)}$ represents the action preference vector at timestep $t$; which is updated using the rule $\thetavec^{(t+1)} = \thetavec^{(t)} + \alpha \E[\hat{\gvec}^\text{ALT}(A, R)]$ for $\alpha>0$. 

      \begin{restatable}{lemma}{fixed_pt_biased_pg} \label{thm: fixed_pt_biased_pg}
        The fixed points of the biased gradient bandit update can be categorized as follows: \textbf{(1)} If there exists an $n \in \{1, 2, \ldots, k-1\}$ such that $r_1 \leq \cdots \leq r_n < b < r_{n+1} \leq \cdots \leq r_k$, then $\E_\pi[\hat{\gvec}^\text{ALT}(A, R)]$ is never equal to zero and has no fixed point. \textbf{(2)} If $b = r(a)$ for at least one action, then $\E_\pi[\hat{\gvec}^\text{ALT}(A, R)] = 0$ at any point on the face of the probability simplex given by $\sum_{a \in \mathcal{I}_b} \pi(a) = 1$ with $\mathcal{I}_b = \{a | r(a) = b\}$. \textbf{(3)} If $b < r_1$ or $b > r_k$, then $\E_\pi[\hat{\gvec}^\text{ALT}(A, R)] = 0$ at a point $\pi^*$ within the simplex boundary given by $\pi^*(a) = \frac{1}{r(a) - b} \left( \sum_{c \in \mathcal{A}} \frac{1}{r(c) - b} \right)^{-1}, \; \forall a \in \mathcal{A}$.
      \end{restatable}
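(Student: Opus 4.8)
The plan is to reduce the lemma to solving a single linear system on the simplex. The starting point is the identity already established in the excerpt, $\E_\pi[\hat{\gvec}^\text{ALT}(A,R)] = \pivec \odot (\rvec - b\onevec)$, whose $a$th entry is $\pi(a)(r(a)-b)$. The one structural observation I would use is that the softmax map is invariant to adding a constant vector to the preferences: $\thetavec$ and $\thetavec + c\onevec$ induce the same policy. Consequently, under the update $\thetavec^{(t+1)} = \thetavec^{(t)} + \alpha\,\E_\pi[\hat{\gvec}^\text{ALT}(A,R)]$, the policy $\pi_t$ is left unchanged --- i.e. is a fixed point of the induced policy dynamics --- exactly when the increment lies in $\mathrm{span}(\onevec)$, that is, when there is a scalar $\lambda$ with $\pi(a)(r(a)-b) = \lambda$ for every $a \in \mathcal{A}$. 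I would flag up front that this is the intended reading of ``$\E_\pi[\hat{\gvec}^\text{ALT}(A,R)] = 0$'' in the statement: in part (2) the forced value is $\lambda = 0$, so the raw expected update vanishes, whereas in part (3) $\lambda \neq 0$ and the expected update is a nonzero multiple of $\onevec$ that nonetheless leaves the softmax policy fixed. After this, the proof is casework on where $b$ falls relative to the ordered rewards $r_1 \le \cdots \le r_k$, subject to $\pi \in \Delta(\mathcal{A})$.

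For part (1), assuming $r_n < b < r_{n+1}$, I would note that $r(a) - b$ is strictly negative for some actions and strictly positive for others (and never zero). Then $\lambda = 0$ forces $\pi(a)=0$ for all $a$, impossible; and $\lambda \neq 0$ forces $\pi(a) = \lambda/(r(a)-b)$, which is positive for some $a$ and negative for others, violating $\pi(a)\ge 0$. Hence no fixed point exists. For part (2), if $b = r(a_0)$, then taking $a = a_0$ forces $\lambda = \pi(a_0)\cdot 0 = 0$, hence $\pi(a)(r(a)-b)=0$ for all $a$, i.e. $\pi(a)=0$ whenever $r(a)\neq b$; equivalently $\sum_{a\in\mathcal{I}_b}\pi(a)=1$. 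Conversely every such policy gives $\pivec\odot(\rvec-b\onevec)=\zerovec$, so this face of the simplex is precisely the fixed-point set. For part (3), if $b<r_1$ (symmetrically $b>r_k$), every $r(a)-b$ is strictly positive (resp. negative), so $\lambda=0$ is again impossible, and $\pi(a)=\lambda/(r(a)-b)$ is a nonnegative vector once $\lambda$ has the matching sign; enforcing $\sum_a\pi(a) = \lambda\sum_a 1/(r(a)-b) = 1$ pins down $\lambda = \bigl(\sum_{c\in\mathcal{A}}1/(r(c)-b)\bigr)^{-1}$ uniquely and yields the claimed $\pi^*(a) = \frac{1}{r(a)-b}\bigl(\sum_{c\in\mathcal{A}}\frac{1}{r(c)-b}\bigr)^{-1}$, which has all entries strictly positive and hence lies in the interior of the simplex. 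Uniqueness in (3) is immediate: once $\lambda$ is fixed by normalization, no degrees of freedom remain.

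I expect the main obstacle to be conceptual bookkeeping rather than any hard computation --- getting the ``fixed point'' notion exactly right (the policy update being zero modulo $\onevec$, which is literal zero only in case (2)), and handling the sign/nonnegativity constraints cleanly so that the three mutually exclusive positions of $b$ --- strictly between two distinct reward values, equal to some reward (including the endpoints), or outside the reward range --- map respectively onto ``no fixed point'', ``a face of fixed points'', and ``a unique interior fixed point''. Everything beyond that is a one-line normalization step.
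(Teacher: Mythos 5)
Your proposal is correct and matches the paper's proof in essentially every respect: both reduce the fixed-point condition to requiring $\pi(a)(r(a)-b)$ to be a constant across actions (the paper derives this constant $\kappa$ by equating $\pi^{\text{old}}$ and $\pi^{\text{new}}$ under the softmax, which is the same shift-invariance fact you invoke), and the three-way casework and normalization step are identical. Your explicit remark that in case (3) the expected update is a nonzero multiple of $\onevec$ rather than literally $\zerovec$ is a point the paper glosses over, but it does not change the argument.
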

      \begin{proof}
        Let $\gvec := \E[\hat{\gvec}^\text{ALT}(A, R)]$, and recall that we update the policy parameters using $\thetavec^{\text{new}} = \thetavec^{\text{old}} + \alpha \gvec$, and then find the policy by applying the softmax function. Now, at the fixed point, the policy should stop changing, i.e. $\pivec^{\text{old}} = \pivec^{\text{new}}$. This means that for each action $a \in \mathcal{A}$,
        \begin{equation}
          \frac{e^{\theta^{\text{old}}_a}}{\sum_b e^{\theta^{\text{old}}_b}} = \frac{e^{\theta^{\text{new}}_a}}{\sum_b e^{\theta^{\text{new}}_b}} = \frac{e^{\theta^{\text{old}}_a + \alpha g_a}}{\sum_b e^{\theta^{\text{old}}_b + \alpha g_b}} = \frac{e^{\theta^{\text{old}}_a} \cdot e^{\alpha g_a}}{\sum_b e^{\theta^{\text{old}}_b + \alpha g_b}} \quad \Leftrightarrow \quad g_a = \frac{1}{\alpha} \log \frac{\sum_b e^{\theta^{\text{old}}_b + \alpha g_b}}{\sum_b e^{\theta^{\text{old}}_b}} =: \kappa,
        \end{equation}
        where we introduced $\kappa$, a constant independent of the action $a$. Therefore, at the fixed point of the gradient bandit update, each element of the expected gradient is equal to the constant $\kappa$.

        Let us use this knowledge to solve for the policy at the fixed point of the biased PG update $\gvec = \pivec \odot (\rvec - b \mathbf{1})$. We have the following system of $|\mathcal{A}| + 1$ linear equations in $|\mathcal{A}| + 1$ variables:
        \begin{equation}
          \pi(a) \cdot (r(a) - b) = \kappa, \; \forall a \in \mathcal{A} \qquad \quad \text{and } \qquad \quad \sum_{a \in \mathcal{A}} \pi(a) = 1. \label{eq: bised_pg_fixed_pt_eqn} 
        \end{equation}
        We now consider the three cases given in the lemma statement separately:

        \noindent
        \textbf{Case 1 ($b$ lies between the expected rewards):} We know that there exist two actions $a_n$ and $a_{n+1}$ such that $r(a_n) = r_n < b < r_{n+1} = r(a_{n+1})$, then
        \begin{equation*}
          \pi(a_{n}) \cdot (r(a_{n}) - b) < 0 \qquad \text{and } \qquad \pi(a_{n+1}) \cdot (r(a_{n+1}) - b) > 0.
        \end{equation*}
        Consequently, for no value of $\kappa$, Eqs. \ref{eq: bised_pg_fixed_pt_eqn} could be satisfied, and those equations do not possess any fixed point.

        \noindent
        \textbf{Case 2 ($r(a) = b$ for atleast one action):}
        If the expected reward for any action $a$ is equal to the baseline $b$, then $\kappa = 0$, and the fixed point solution corresponds to the face of the simplex where the expected reward for each of the actions is equal to $b$:
        \begin{equation}
          \sum_{a \in \mathcal{I}_b} \pi(a) = 1, \quad \text{with } \quad \mathcal{I}_b = \{a \;|\; r(a) = b\}. \label{eq: fixed_pt_biased_pg_set1}
        \end{equation}

        \noindent
        \textbf{Case 3 ($b$ is either less than or greater than all of the expected rewards):} We assume that either $b > r(a)$ or $b < r(a)$ for all actions $a \in \mathcal{A}$. Now we solve for $\kappa$: From Eqs. \ref{eq: bised_pg_fixed_pt_eqn}, $\pi(a) = \frac{\kappa}{r(a) - b}, \; \forall a \in \mathcal{A}$, and therefore
        \begin{equation*}
          \sum_a \pi(a) = 1 \quad \Rightarrow \quad \sum_a \frac{\kappa}{r(a) - b} = 1 \quad \Rightarrow \quad \kappa = \left( \sum_a \frac{1}{r(a) - b} \right)^{-1}.
        \end{equation*}
        Therefore, the fixed point of the biased update is given by
        \begin{equation}
          \pi(a) = \frac{1}{r(a) - b} \left( \sum_{c \in \mathcal{A}} \frac{1}{r(c) - b} \right)^{-1} \quad \text{for all actions } a \in \mathcal{A}. \label{eq: imp_fixed_point_solution_biased}
        \end{equation}
        This completes the proof\footnote{Interestingly, using this style of reasoning to analyze the true expected gradient bandit would not give us any new information. Even if we assume that $g_a = \pi(a) \cdot (r(a) - b) = \kappa, \; \forall a \in \mathcal{A}$, we find that $\kappa$ is always zero for the true gradient bandit update. To see this, sum these $|\mathcal{A}|$ equations to get
          \begin{equation*}
            \sum_{a} \pi(a) \cdot (r(a) - r_\pi) = |\mathcal{A}| \kappa \quad \Rightarrow \quad \sum_{a} \pi(a) r(a) - r_\pi \sum_{a} \pi(a) = |\mathcal{A}| \kappa \quad \Rightarrow \quad \kappa = 0. 
          \end{equation*}
        }.
      \end{proof}

      Before moving forward, we state a result (Lemma 7.10, Lafferty et al., 2008) which will be useful in proving the next theorem.
      \begin{restatable}{lemma}{upper_bound_mgf} \label{thm: upper_bound_mgf}
        Let $X$ be a bounded random variable with bounds $u$ and $l$, i.e. $l \leq X \leq u$. Then
        \begin{equation*}
          \E\left[ e^{X} \right] \leq e^{(u - l)^2/8 + \E[X]}.
        \end{equation*}          
      \end{restatable}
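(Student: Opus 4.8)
The plan is to prove this classical Hoeffding-type moment generating function bound in the standard way: reduce to a centered random variable, apply convexity of the exponential to obtain a two-point upper bound, and then control the resulting deterministic function of the interval length by a second-order Taylor argument.

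First I would center the variable. Set $\mu := \E[X]$ and $Y := X - \mu$, so that $\E[Y] = 0$ and $Y$ takes values in $[a,b]$ with $a := l - \mu \le 0 \le b := u - \mu$ and $b - a = u - l$. Since $\E[e^X] = e^{\mu}\,\E[e^Y]$, it suffices to show $\E[e^Y] \le e^{(u-l)^2/8}$; and the case $u = l$ is trivial, so assume $u > l$. Next I would invoke convexity: for every $y \in [a,b]$ we may write $y = \tfrac{b-y}{b-a}\,a + \tfrac{y-a}{b-a}\,b$, and convexity of $t \mapsto e^t$ gives $e^{y} \le \tfrac{b-y}{b-a} e^{a} + \tfrac{y-a}{b-a} e^{b}$. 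Taking expectations and using $\E[Y] = 0$ yields $\E[e^Y] \le \tfrac{b}{b-a} e^{a} - \tfrac{a}{b-a} e^{b}$. Introducing $h := b - a = u - l$ and $p := -a/h = (\mu - l)/(u-l) \in [0,1]$ (so that $a = -ph$ and $b = (1-p)h$), this bound becomes $\E[e^Y] \le e^{L(h)}$ where $L(h) := -ph + \log\!\big(1 - p + p e^{h}\big)$.

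The final step is to show $L(h) \le h^2/8$. One checks directly that $L(0) = 0$ and $L'(0) = 0$, and that $L''(h) = q(1-q)$ where $q := p e^{h}/(1 - p + p e^{h}) \in [0,1]$, so $L''(h) \le 1/4$ for all $h$. Taylor's theorem with the Lagrange form of the remainder then gives $L(h) = \tfrac12 L''(\xi)\,h^2 \le h^2/8$ for some $\xi$ between $0$ and $h$. Hence $\E[e^Y] \le e^{h^2/8} = e^{(u-l)^2/8}$, and multiplying by $e^{\mu}$ gives $\E[e^X] \le e^{(u-l)^2/8 + \E[X]}$, as claimed.

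The only mildly delicate step is the last one: verifying the identities $L'(0) = 0$ and $L''(h) = q(1-q)$ and the uniform bound $q(1-q) \le 1/4$, together with checking the degenerate cases $p \in \{0,1\}$ (where $X$ is almost surely constant and the bound is immediate). All of this is elementary calculus, so I do not expect any genuine obstacle — the argument is essentially the textbook Hoeffding lemma specialized to $t = 1$.
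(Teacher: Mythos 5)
Your proof is correct. Note, however, that the paper does not actually prove this lemma at all: it is imported verbatim as a known result (Lemma 7.10 of Lafferty et al., 2008), i.e.\ Hoeffding's lemma, and used as a black box in the proof of the subsequent theorem. What you have written is the standard textbook derivation of that lemma --- centering to $Y = X - \E[X]$, the two-point convexity bound $e^{y} \le \tfrac{b-y}{b-a}e^{a} + \tfrac{y-a}{b-a}e^{b}$, and the calculus step $L(h) \le h^2/8$ via $L(0)=L'(0)=0$ and $L''=q(1-q)\le \tfrac14$ --- and all the steps check out, including the reparameterization $a=-ph$, $b=(1-p)h$ and the degenerate cases $p\in\{0,1\}$ where $X$ is a.s.\ constant. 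So your proposal supplies a self-contained proof where the paper relies on a citation; there is nothing to reconcile beyond that.
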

      To keep the analysis simple, we assume that for all actions $a$, $b > r(a)$ or $b < r(a)$. Then, depending on how $b$ is set, the fixed point $\pi^*$ can act either as an attractor or a repellor. 

      \begin{restatable}{thm}{fixed_pt_biased_pg_attract_repulse} \label{thm: fixed_pt_biased_pg_attract_repulse}
        Assume that $\pi_t \neq \pi^*$. If the baseline is pessimistic, i.e. $b < r_1$, then for any $\alpha > 0$, the fixed point $\pi^*$ acts as a repellor: $D_{\text{KL}}(\pi^* \| \pi_{t+1}) > D_{\text{KL}}(\pi^* \| \pi_{t})$, where $D_{\text{KL}}$ denotes the KL-divergence. And if the baseline is optimistic, i.e. $b > r_k$, then given a sufficiently small positive stepsize $\alpha$, the fixed point $\pi^*$ acts as an attractor: $D_{\text{KL}}(\pi^* \| \pi_{t+1}) < D_{\text{KL}}(\pi^* \| \pi_{t})$.
      \end{restatable}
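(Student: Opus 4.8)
The plan is to track the evolution of $D_{\text{KL}}(\pi^* \| \pi_t)$ under one step of the update $\thetavec^{(t+1)} = \thetavec^{(t)} + \alpha \gvec$ with $\gvec = \E[\hat{\gvec}^\text{ALT}(A,R)] = \pivec_t \odot (\rvec - b\boldsymbol{1})$, and show it decreases (optimistic) or increases (pessimistic). Writing out $D_{\text{KL}}(\pi^* \| \pi_{t+1}) - D_{\text{KL}}(\pi^* \| \pi_t)$, the log-partition terms telescope nicely: with $Z_t := \boldsymbol{1}^\top e^{\thetavec^{(t)}}$, we get
\begin{align*}
  D_{\text{KL}}(\pi^* \| \pi_{t+1}) - D_{\text{KL}}(\pi^* \| \pi_t)
  &= -\alpha \sum_a \pi^*(a) g_a + \log \frac{Z_{t+1}}{Z_t}.
\end{align*}
The first term is exactly $-\alpha \sum_a \pi^*(a)\pi_t(a)(r(a)-b)$. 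Using the closed form of $\pi^*$ from Lemma~\ref{thm: fixed_pt_biased_pg}, namely $\pi^*(a)(r(a)-b) = \kappa = \left(\sum_c \tfrac{1}{r(c)-b}\right)^{-1}$ a constant independent of $a$, this collapses to $-\alpha\kappa \sum_a \pi_t(a) = -\alpha\kappa$. So the whole difference is $-\alpha\kappa + \log(Z_{t+1}/Z_t)$, and everything reduces to estimating the ratio $Z_{t+1}/Z_t = \sum_b \pi_t(b) e^{\alpha g_b} = \E_{A\sim\pi_t}[e^{\alpha g_A}]$.

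Now I would split into the two cases. \textbf{Pessimistic case ($b < r_1$):} here $g_a = \pi_t(a)(r(a)-b) > 0$ for all $a$, and $\kappa > 0$. I would apply Jensen's inequality to the convex function $e^{\alpha(\cdot)}$: $\E[e^{\alpha g_A}] \ge e^{\alpha \E[g_A]} = e^{\alpha \sum_a \pi_t(a)^2(r(a)-b)}$. The goal is to show $\log \E[e^{\alpha g_A}] > \alpha\kappa$, i.e. $\sum_a \pi_t(a)^2(r(a)-b) > \kappa$ strictly (when $\pi_t \ne \pi^*$). Since $\pi_t(a)(r(a)-b)$ has the same sign for all $a$, I can write $\sum_a \pi_t(a)^2(r(a)-b) = \sum_a \pi_t(a)\cdot[\pi_t(a)(r(a)-b)]$ as a $\pi_t$-weighted average of the quantities $\pi_t(a)(r(a)-b)$; comparing this to $\kappa$ (which is the weighted \emph{harmonic-type} mean appearing in $\pi^*$) should give a strict inequality via a convexity/AM-HM argument, with equality precisely when all $\pi_t(a)(r(a)-b)$ are equal, i.e. when $\pi_t = \pi^*$. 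That yields $D_{\text{KL}}(\pi^*\|\pi_{t+1}) > D_{\text{KL}}(\pi^*\|\pi_t)$ for every $\alpha>0$.

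\textbf{Optimistic case ($b > r_k$):} now $g_a = \pi_t(a)(r(a)-b) < 0$ for all $a$ and $\kappa < 0$. Here Jensen goes the wrong way, so instead I would use the upper bound from Lemma~\ref{thm: upper_bound_mgf} (Hoeffding-type bound on the MGF): with $g_A$ bounded between $l := \min_a g_a$ and $u := \max_a g_a$, $\E[e^{\alpha g_A}] \le e^{\alpha^2(u-l)^2/8 + \alpha\E[g_A]}$, hence $\log\E[e^{\alpha g_A}] \le \alpha^2(u-l)^2/8 + \alpha\sum_a \pi_t(a)^2(r(a)-b)$. The target inequality $\log\E[e^{\alpha g_A}] < \alpha\kappa$ then follows if $\alpha(u-l)^2/8 < \kappa - \sum_a \pi_t(a)^2(r(a)-b)$; since $\kappa < 0$ and $\sum_a \pi_t(a)^2(r(a)-b) < 0$, I need the right-hand side to be strictly positive, which is again the statement $\kappa > \sum_a \pi_t(a)^2(r(a)-b)$ (same strict inequality as in the pessimistic case, just now both sides negative — the harmonic-type mean $\kappa$ dominates). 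Given that gap is a fixed positive number for $\pi_t \ne \pi^*$, choosing $\alpha$ small enough makes the $O(\alpha^2)$ term negligible and the decrease strict.

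The main obstacle I anticipate is the core algebraic inequality $\sum_a \pi_t(a)^2(r(a)-b) \ge \kappa = \left(\sum_a \tfrac{1}{r(a)-b}\right)^{-1}$ with equality iff $\pi_t=\pi^*$ — making this clean and rigorous (and getting the strictness right) is the crux; it is essentially a Cauchy--Schwarz / weighted power-mean statement comparing the current policy to the fixed point, and packaging it correctly in both sign regimes is where the real work lies. The secondary subtlety is handling the $\alpha^2$ remainder term uniformly enough in the optimistic case so that "sufficiently small $\alpha$" is well-defined given that the gap $\kappa - \sum_a \pi_t(a)^2(r(a)-b)$ depends on $\pi_t$; one would either fix $\pi_t$ (as the statement does, it's a one-step claim) or note the gap is continuous and positive off the fixed point.
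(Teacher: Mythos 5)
Your proposal follows the paper's proof essentially step for step: the same KL decomposition into $-\alpha\kappa$ plus the log-partition ratio $\log\E_{A\sim\pi_t}[e^{\alpha g_A}]$, the same evaluation of the cross term via the closed form of $\pi^*$, Jensen's inequality in the pessimistic case, the Hoeffding bound of Lemma~\ref{thm: upper_bound_mgf} in the optimistic case, and the same core inequality comparing $\sum_a \pi_t(a)^2(r(a)-b)$ to $\kappa$, which the paper proves exactly as you anticipate --- by Cauchy--Schwarz applied to $\sum_a \pi_t(a)\sqrt{\zeta(a)}\cdot \zeta(a)^{-1/2} = 1$, strict precisely because $\pi_t \neq \pi^*$. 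The only step you leave unexecuted is that Cauchy--Schwarz computation, and since you have correctly identified the tool, the sign flip between the two regimes, and the equality condition, there is no substantive gap.
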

      \begin{proof}
        We first express $\pi_{t+1}$ in terms of $\pi_t$. If we define
        \begin{equation}
          \zeta(a) := \alpha [r(a) - b], \label{eq: zeta_definition}
        \end{equation}
        then for each action $a$, the action preference would be updated using $\thetavec^{(t+1)}_a = \thetavec^{(t)}_a + \pi_t(a) \zeta(a)$.  Applying the softmax function on these updated action preferences gives us $\pi_{t+1}$:
        \begin{equation}
          \pi_{t+1}(a) = \frac{e^{\thetavec^{(t)}_a + \pi_t(a) \zeta(a)}}{\sum_{c \in \mathcal{A}} e^{\thetavec^{(t)}_c + \pi_t(c) \zeta(c)}} = \frac{e^{\thetavec^{(t)}_a} \cdot e^{\pi_t(a) \zeta(a)}}{\sum_{c} e^{\thetavec^{(t)}_c} \cdot e^{\pi_t(c) \zeta(c)}} = \frac{\pi_t(a) \cdot e^{\pi_t(a) \zeta(a)}}{\sum_{c} \pi_t(c) \cdot e^{\pi_t(c) \zeta(c)}}. \label{eq: pi_{t+1}}
        \end{equation}
        Let us now compute the KL-divergence between the fixed point $\pi^*$ and the policy at timestep $t+1$:
        \begin{IEEEeqnarray}{rCl}
          D_{\text{KL}}(\pi^* \| \pi_{t+1}) &:=& \sum_{a \in \mathcal{A}} \pi^*(a) \log \frac{\pi^*(a)}{\pi_{t+1}(a)} = \sum_{a} \pi^*(a) \log \left( \frac{\pi^*(a)}{\pi_t(a)} \cdot \frac{\sum_{c} \pi_t(c) \cdot e^{\pi_t(c) \zeta(c)}}{e^{\pi_t(a) \zeta(a)}} \right) \nonumber \\
          &=& \sum_{a} \pi^*(a) \log \frac{\pi^*(a)}{\pi_t(a)} + \sum_{a} \pi^*(a) \log \left( \sum_{c} \pi_t(c) \cdot e^{\pi_t(c) \zeta(c)} \right) - \sum_{a} \pi^*(a) \pi_t(a) \zeta(a) \nonumber \\
          &=& D_{\text{KL}}(\pi^* \| \pi_{t}) + \log \left( \sum_{c} \pi_t(c) \cdot e^{\pi_t(c) \zeta(c)} \right) - \left( \sum_c \frac{1}{\zeta(c)} \right)^{-1}, \label{eq: KL_relation}
        \end{IEEEeqnarray}
        where in the first line we used the expression for $\pi_{t+1}$ from Eq. \ref{eq: pi_{t+1}} and in the last line we used the expression $\pi^*(a) = \frac{1}{r(a) - b} \left( \sum_{c \in \mathcal{A}} \frac{1}{r(c) - b} \right)$ from Eq. \ref{eq: imp_fixed_point_solution_biased}, to simplify the last term. We now consider the two cases (corresponding to an optimistic and a pessimistic baseline) separately.

        \noindent \textbf{Case 1 (Pessimistic baseline):} With the assumption that $\alpha > 0$ and $b < r(a)$, we immediately see that $\zeta(a) > 0$ for all actions $a$. Noting that $\zeta(a) > 0$, we apply Cauchy-Schwartz inequality:
        \begin{IEEEeqnarray}{lrCl}
          & \left( \sum_c \pi_t(c)^2 \zeta(c) \right) \left( \sum_c \frac{1}{\zeta(c)} \right) &>& \left( \sum_c \pi_t(c) \sqrt{\zeta(c)} \cdot \frac{1}{\sqrt{\zeta(c)}} \right)^2 = 1 \nonumber \\
          \Rightarrow \quad & \sum_c \pi_t(c)^2 \zeta(c) &>& \left( \sum_c \frac{1}{\zeta(c)} \right)^{-1},
        \end{IEEEeqnarray}
        where we could use a strict inequality in the first line because for no constant $\rho$ does $\frac{\pi_t(a)}{\zeta(a)} = \rho$ for all actions $a$; if this were true, it would imply that $\pi_t = \pi^*$, which in turn would violate the assumption made in the theorem statement. Next, recall that $\log(\cdot)$ is a concave function and then apply Jensen's inequality followed by the previous result to get:
        \begin{equation}
          \log \left( \sum_{c} \pi_t(c) \cdot e^{\pi_t(c) \zeta(c)} \right) > \sum_c \pi_t(c) \log \left( e^{\pi_t(c) \zeta(c)} \right) = \sum_c \pi_t(c)^2 \zeta(c) > \left( \sum_c \frac{1}{\zeta(c)} \right)^{-1}.
        \end{equation}
        Above inequality combined with Eq. \ref{eq: KL_relation} gives us the desired result:
        \begin{equation*}
          D_{\text{KL}}(\pi^* \| \pi_{t+1}) > D_{\text{KL}}(\pi^* \| \pi_{t}).
        \end{equation*}
        
        \noindent \textbf{Case 2 (Optimistic baseline):} In contrast to the previous case, the assumptions $\alpha > 0$ and $b > r(a)$, imply that $\zeta(a) < 0$ for all actions $a$. We will first use Lemma \ref{thm: upper_bound_mgf} to upper bound the term $\log \left( \sum_{c} \pi_t(c) \cdot e^{\pi_t(c) \zeta(c)} \right)$ and then obtain the condition on $\alpha$ (as given in Eq. \ref{eq: upper_bound_alpha_fixed_pt_attractor}) to show that the fixed point is an attractor. Consider a random variable $X: \mathcal{A} \rightarrow \mathbb{R}$ defined as $X(c) = \pi_t(c) \zeta(c)$ and distributed according to $\pi_t$, i.e. $\mathbb{P}(X = c) = \pi_t(c)$. Then $\E[X] = \sum_c \pi(c)^2 \zeta(c)$ and $X$ is bounded between $l := \alpha \min_{c} \pi_t(c) [r(c) - b]$ and $u := \alpha \max_{c} \pi_t(c) [r(c) - b]$. Using Lemma \ref{thm: upper_bound_mgf} along with the fact that $\log(\cdot)$ is an increasing function gives us:
        \begin{equation}
          \log \left( \sum_c \pi_t(c) e^{\pi_t(c) \zeta(c)} \right) \leq \frac{(u-l)^2}{8} + \sum_c \pi(c)^2 \zeta(c). \label{eq: fixed_point_upper_bound_log}
        \end{equation}
        Now the question is whether there exists a stepsize $\alpha > 0$, such that the following inequality holds:
        \begin{equation}
          \frac{(u-l)^2}{8} + \sum_c \pi(c)^2 \zeta(c) \overset{?}{<} \left( \sum_c \frac{1}{\zeta(c)} \right)^{-1} \label{eq: condition_alpha_biased_fixed_pt}
        \end{equation}
        And the answer is yes! To see this, again apply Cauchy-Schwartz inequality, only this time for $\zeta(c) < 0$, to obtain $\sum_c \pi_t(c)^2 \zeta(c) < \left( \sum_c \frac{1}{\zeta(c)} \right)^{-1}$. Therefore, there does exist an $\alpha > 0$ that satisfies Ineq. \ref{eq: condition_alpha_biased_fixed_pt}. With Eq. \ref{eq: KL_relation}, Ineq. \ref{eq: fixed_point_upper_bound_log}, and an appropriate condition on $\alpha$, the intended result that $\pi^*$ is an attractor readily follows:
        \begin{IEEEeqnarray*}{lC}
          & \log \left( \sum_c \pi_t(c) e^{\pi_t(c) \zeta(c)} \right) \leq \frac{(u-l)^2}{8} + \sum_c \pi(c)^2 \zeta(c) < \left( \sum_c \frac{1}{\zeta(c)} \right)^{-1} \\
          \Rightarrow \qquad & D_{\text{KL}}(\pi^* \| \pi_{t+1}) < D_{\text{KL}}(\pi^* \| \pi_{t}).
        \end{IEEEeqnarray*}
        All that remains is to find the expression for $\alpha$ that satisfies Ineq. \ref{eq: condition_alpha_biased_fixed_pt}:
        \begin{equation}
          \alpha < \frac{8}{\left( \max_c \pi_t(c) [r(c) - b] - \min_d \pi_t(d) [r(d) - b] \right)^2} \left[ \left( \sum_c \frac{1}{r(c) - b} \right)^{-1} - \sum_c \pi_t(c)^2 [r(c) - b] \right]. \label{eq: upper_bound_alpha_fixed_pt_attractor}
        \end{equation}
        Note that the above upper bound on $\alpha$ depends on the policy at timestep $t$, the choice of the baseline, and also the complete bandit reward structure (which is not known to the agent a priori).
      \end{proof}

      We present an example in Figure \ref{fig: fixed_point_nature_proof}, which shows that the upper bound on the stepsize, given in Eq. \ref{eq: upper_bound_alpha_fixed_pt_attractor}, is not tight. This figure also serves as a verification of the above proof.
      \begin{figure}[!tbp]
        \centering
        \includegraphics[scale=0.53]{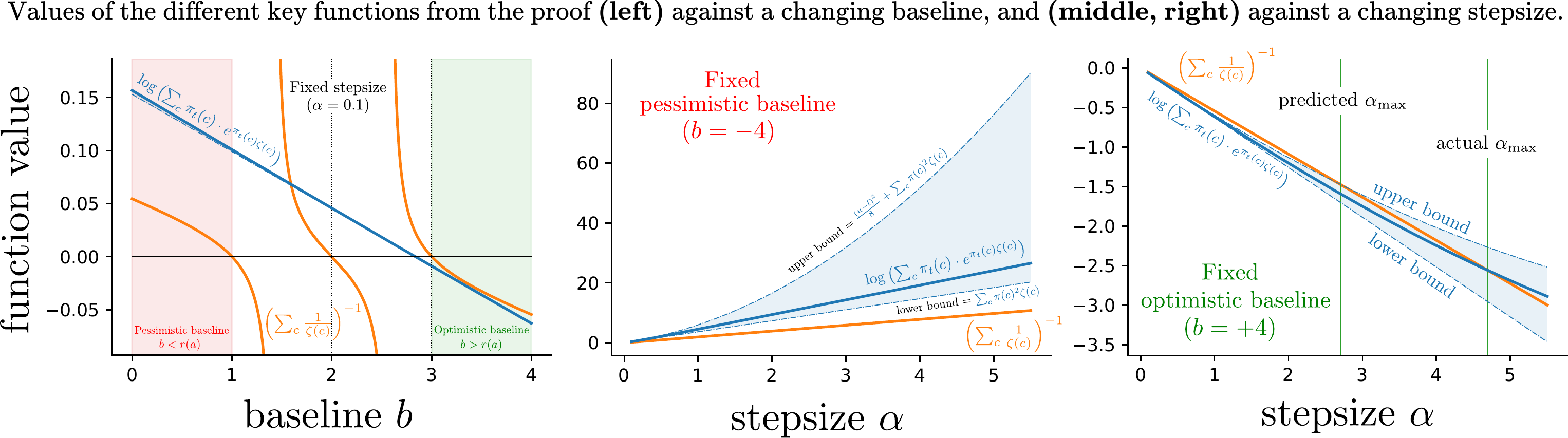}
        \caption{The verification of the inequalities used in Theorem \ref{thm: fixed_pt_biased_pg_attract_repulse}. We plot the key terms used in the proof: $\left( \sum_c \frac{1}{\zeta(c)} \right)^{-1}$, $\log \left( \sum_{c} \pi_t(c) \cdot e^{\pi_t(c) \zeta(c)} \right)$, its lower bound $\sum_c \pi(c)^2 \zeta(c)$, and its upper bound $\frac{(u - l)^2}{8} + \sum_c \pi(c)^2 \zeta(c)$, for a three armed bandit problem with $\rvec = [1 \; 2\; 3]^\top$ and the fixed policy $\pivec = [0.2 \; 0.1 \; 0.7]^\top$. The left figure shows the variation of the key terms with a changing baseline and a fixed stepsize $\alpha$. In the middle and the right plots, we fix the baseline to be pessimistic or optimistic and then vary the stepsize $\alpha$. The rightmost subplot also serves as the concrete example where the maximum stepsize predicted by Eq. \ref{eq: upper_bound_alpha_fixed_pt_attractor}, for the fixed point to remain attractive given an optimistic baseline, is not tight.}
        \label{fig: fixed_point_nature_proof}
      \end{figure}

      The above theorem illustrates an important property of the alternate estimator: with an optimistically initialized baseline, the agent is updated towards a more uniform distribution $\pi^*$. For an optimistic baseline, this distribution has a higher probability of picking the action with the maximum reward as compared to other actions; this can be seen from the expression for $\pi^*$ in Lemma \ref{thm: fixed_pt_biased_pg} or Figure \ref{fig: intuition_bandits_reward123_noise0_opti_pessi} (bottom row, second column). Therefore, if the agent were stuck in a sub-optimal corner of the probability simplex, an optimistic baseline would make the policy more uniform and encourage exploration\footnote{On the flip side, with a pessimistically initialized baseline, the alternate estimator can pre-maturely saturate towards a sub-optimal corner; see Figure \ref{fig: intuition_bandits_reward123_noise0_opti_pessi} (bottom right).}. And even though $\pi^*$ is different from the optimal policy, the agent with an alternate estimator can still reach the optimal policy, because as the agent learns and improves its baseline estimate, the alternate PG estimator becomes asymptotically unbiased. And hopefully by this time, the agent has already escaped the saturated policy region.

      One might wonder: but why the biased update is more helpful than the true gradient; after all the true gradient, unlike the biased update, doesn't just move the policy towards a point close to the optimal corner, it updates it exactly towards the optimal corner. The reason for this is that the true gradient update can be vanishingly small in saturated regions and therefore take an excessive amount of time to escape saturation (Mei et al., 2020b; Li et al., 2021). Whereas, as we hypothesize, the biased update will have a much larger update magnitude and therefore result in a faster escape. Even though, we don't theoretically show this, Example \ref{eg: saturated_pg_opti}, the policy update plots shown in Figure \ref{fig: intuition_bandits_reward123_noise0_opti_pessi} (bottom row, second column), and our experiments on bandits with an optimistic baseline initializations (see \S \ref{sec: opti_pessi_bandit_experiments}) do suggest this.

      \subsection{Example: Optimistic Initialization helps the Alternate Estimator}
      The following example considers an optimistically initialized baseline $b$ and shows how the alternate update can use this biased baseline for saturated policies.
      \begin{restatable}{eg}{saturated_pg} \label{eg: saturated_pg_opti}
        Consider the 3-armed bandit and the policy given in Example \ref{eg: saturated_pg}. Now assume that the agent doesn't have access to $r_\pi$ and instead uses an optimistic baseline $b = \delta \gg 0$. Again, the agent will sample $A = a_0$ at each timestep. Therefore:
        \begin{align*}
          \nabla_{\thetavec} \mathcal{J}_\pi &= \pivec \odot (\rvec - r_\pi \onevec) = [1\;0\;0]^\top \odot [0\;0\;1]^\top = [0\;0\;0]^\top, \\
          \hat{\gvec}^\text{REG}(A, R) &= (R - b) (\evec_A - \pivec) = (\epsilon - \delta) \cdot \Big( [1\;0\;0]^\top - [1\;0\;0]^\top \Big) = [0\;0\;0]^\top, \; \text{and} \\
          \hat{\gvec}^\text{ALT}(A, R) &= (R - b) \evec_A = (\epsilon - \delta) \cdot [1\;0\;0]^\top = [(\epsilon - \delta) \;0\;0]^\top.
        \end{align*}
      \end{restatable}
      As this example shows, with a suitable baseline (in this case a positive baseline that can override the reward noise), the alternate estimator can use the signal from the baseline to escape the saturated policy region, without relying solely on the reward noise. We formalize these two points in the sections ahead.
      
      \subsection{Visualizing the Stochastic PG Estimators for Bandits} \label{app: bandits_intuition}
      In this section, we study the properties of the \textbf{stochastic} regular and the alternate estimators. In the previous sections we primarily looked at the expected policy updates, and now we will focus on the sample based updates. We do so by plotting the stochastic policy gradient updates and the variance corresponding to the two estimators for different 3-armed bandit problems and various policies on the probability simplex (see Figure \ref{fig: policy_space}). These visualizations also illustrate the properties discussed in the previous sections.

      \begin{figure}[!tbp]
        \centering
        \includegraphics[scale=0.7]{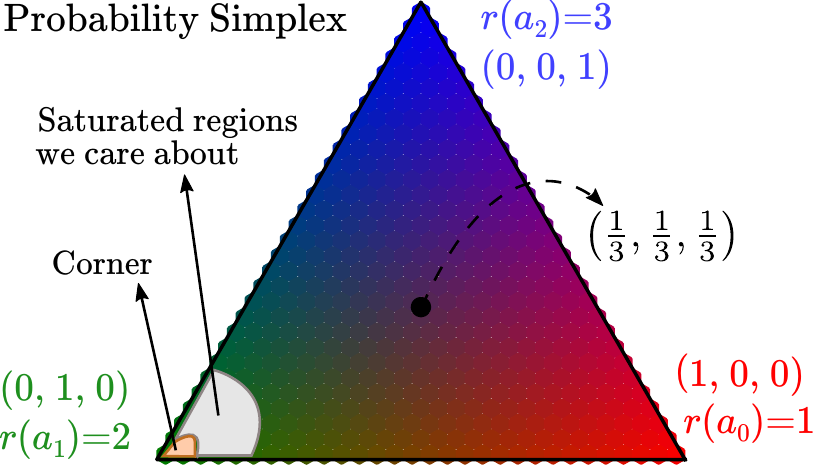}
        \caption{The probability simplex for a three armed bandit problem. Each point on the simplex corresponds to a policy. The color map shown is representative of the policy weight for the corresponding action. For instance, in regions that are more blue, the agent's policy has large probability mass on the the action $a_2$. The reward structure for this problem is $\rvec = [1\;2\;3]^\top$ with the individual components corresponding to the expected reward on the right, left, and the top corners respectively. We also highlight one corner of the simplex, and the saturated policy region near that corner. In our experiments, we focus on these regions neighbouring the corners, i.e. we focus on policies that are quite saturated but still have non-zero components for all the actions.} 
        \label{fig: policy_space}
      \end{figure}

      \textbf{The policy-update plots} were created by drawing a vector from the initial policy to the updated policy after one gradient ascent step using the corresponding PG estimator with a fixed stepsize\footnote{We chose stepsizes that resulted in a visually appropriate arrow length. To make the arrow lengths comparable across the estimators, the stepsize was kept same for the regular and the alternate estimators in the same setting (such as for the same choice of the baseline or the bandit reward structure).}. For each estimator, we show four different plots: three for the stochastic estimator, each corresponding to one of the sampled actions ($a_0, a_1$, or $a_2$); and the fourth one corresponding to the updates made by expected gradient (which itself can be the true policy gradient or the biased one). While calculating the estimator, we also added a noise of constant (unit) magnitude to the expected reward. The blue and grey arrows correspond to the policy change direction for positive and negative reward noises respectively. For clarity, let us illustrate this process with an example. Consider the bandit problem with $\mathcal{A} = \{a_0, a_1, a_2\}$, the reward structure $\rvec = [0\;0\;1]^\top$, and a positive reward noise. Let the old policy be $\pivec^{\text{old}} = [\nicefrac{1}{3}\;\nicefrac{1}{3}\;\nicefrac{1}{3}]^\top$ which will be updated using the stochastic regular estimator without baseline with the sampled action being $a_2$ and a stepsize of $\alpha = 0.4$. The action preferences corresponding to this policy can be computed by taking $\log(\cdot)$ of the action probabilities. This gives us $\thetavec^{\text{old}} = [\log(\nicefrac{1}{3})\;\log(\nicefrac{1}{3})\;\log(\nicefrac{1}{3})]^\top$. Next, we calculate the gradient update using the regular estimator without the baseline: $\hat{\gvec}(R, A) = (R - 0) \cdot (\evec_A - \pivec) = (r(a_2) + 1 - 0) \cdot ([0\;0\;1]^\top - [\nicefrac{1}{3}\;\nicefrac{1}{3}\;\nicefrac{1}{3}]^\top) = 2 \cdot [\nicefrac{-1}{3}\;\nicefrac{-1}{3}\;\nicefrac{2}{3}]^\top$. We then update the action preferences: $\thetavec^{\text{new}} = \thetavec^{\text{old}} + \alpha \hat{\gvec}(R, A)$ and in the end take the softmax of these action preferences to obtain the updated policy $\pivec^{\text{new}}$. Finally, we plot a blue colored vector (since a positive noise was added to the reward) from $\pivec^{\text{old}}$ to $\pivec^{\text{new}}$ on the probability simplex. We repeat this process for different initial policies to obtain the final plots (for instance, see Figure \ref{fig: intuition_bandits_reward001_noise1}).

      \textbf{The variance plots} were created by plotting the elementwise variance for the different stochastic PG estimators on the probability simplex. Each hexagonal point on the simplex corresponds to a particular policy value and the color specifies the variance of a particular component of the PG estimator. The variance was calculated using Eq. \ref{eq: gradient_bandit_estimators_variance_expression}. The top three rows show the variance for the three components and the last row shows the sum of variance along the three components.

      \subsubsection*{Bandit Problem with $\rvec = [0\;0\;1]^\top$ and Non-zero Reward Noise}
      Figure \ref{fig: intuition_bandits_reward001_noise1} shows the policy update plots and the variance heatmaps for the two estimators on a 3-armed bandit problem with the reward structure $\rvec = [0\;0\;1]^\top$ and reward noise $\epsilon \sim \mathcal{N}(0, 1)$. Each column corresponds to the updates and the variance for either the regular estimator with true $r_\pi$, the regular estimator without baseline $(b = 0)$, the alternate with true $r_\pi$, or the alternate estimator without baseline $(b = 0)$. We include the zero baseline case because in our experiments, when we learn the baseline, we often initialize it to zero.

      \begin{figure}[!hbp]
        \centering
        \includegraphics[scale=0.55]{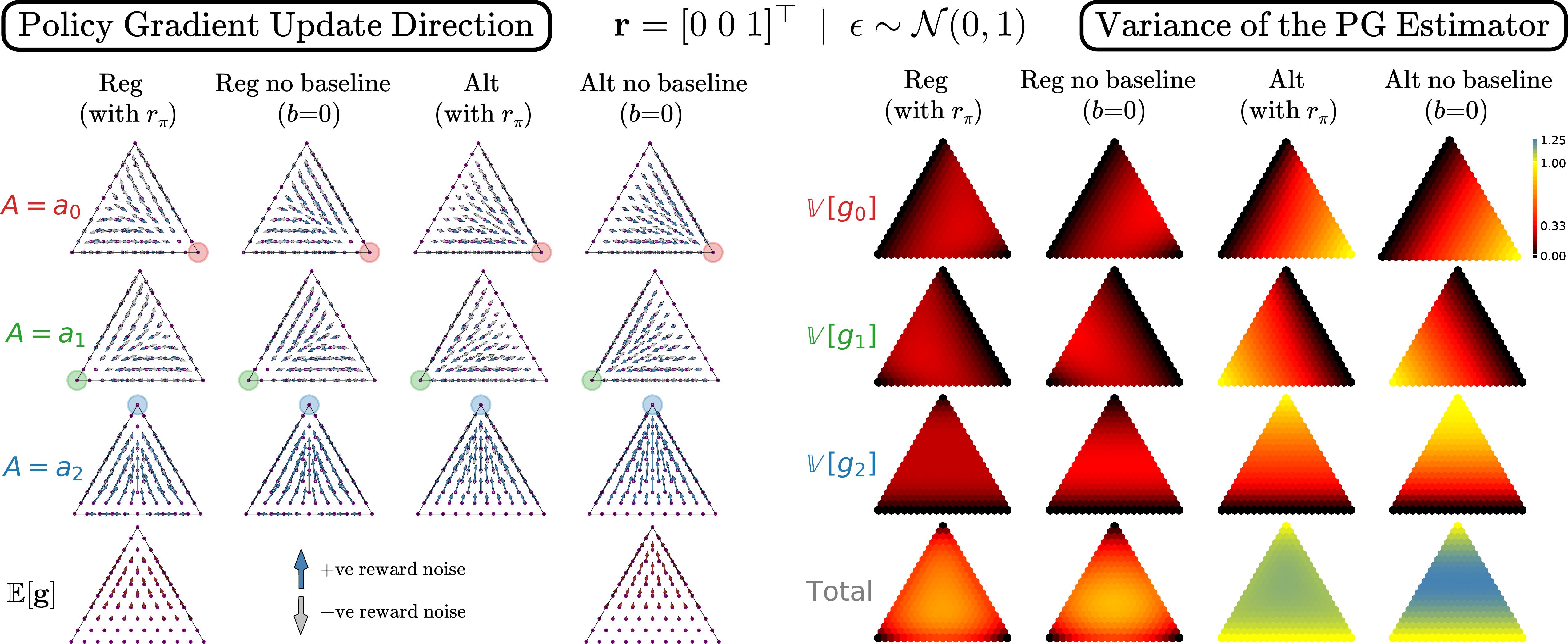}
        \caption{Policy updates and the variance plots for the PG estimators on a 3-armed bandit problem with $\rvec = [0\;0\;1]^\top$ and a non-zero reward noise. \textbf{(Update Plots)} The top three rows show the policy updates corresponding to the different stochastic estimators upon taking a particular action. Last row shows the expected update. Since, the alternate estimator with true $r_\pi$ and both the regular estimators result in the same expected update, we show only a single plot for these three settings. \textbf{(Variance Plots)} The first three rows depict the respective component of the variance vector (of the different gradient estimators) and the last row shows the total variance.} 
        \label{fig: intuition_bandits_reward001_noise1}
      \end{figure}

      From the update plots, we see that the stochastic alternate estimator has much higher update magnitudes than the regular version. For instance, consider a policy saturated say at action $a_0$, i.e. it is near the vicinity of the \textcolor{red}{right} corner. With the alternate gradient estimator, there is a chance that the policy will move towards the center of the simplex, i.e. it will be updated towards a more uniform distribution. Whereas, with the regular estimator, the agent will likely remain stuck. This observation is also reconciled by the expressions for the estimators: the term $\evec_A - \pivec$ in the regular estimator becomes very small as $\pivec$ saturates on action $A$; whereas the alternate estimator only has the term $\evec_A$ and its magnitude remains the same irrespective of the saturation. As discussed earlier, the alternate estimator with no baseline is biased. However, its ascent direction points in the same direction as the true gradient update. From the variance plots, we notice that the variance at the corner for both the regular estimators is zero, and for the alternate estimators is non-zero.

      So near a corner of the simplex, the alternate estimator utilizes the reward noise to essentially perform a random walk. At sub-optimal corners, this random movement is good since it might make the policy more uniform, from where it is easier to move towards the optimal corner. And at the optimal corners, this random movement is not particularly bad either because optimal corners represent stable equilibrium points: in expectation the gradient signal towards them.

      \begin{figure}[!tbp]
        \centering
        \includegraphics[scale=0.55]{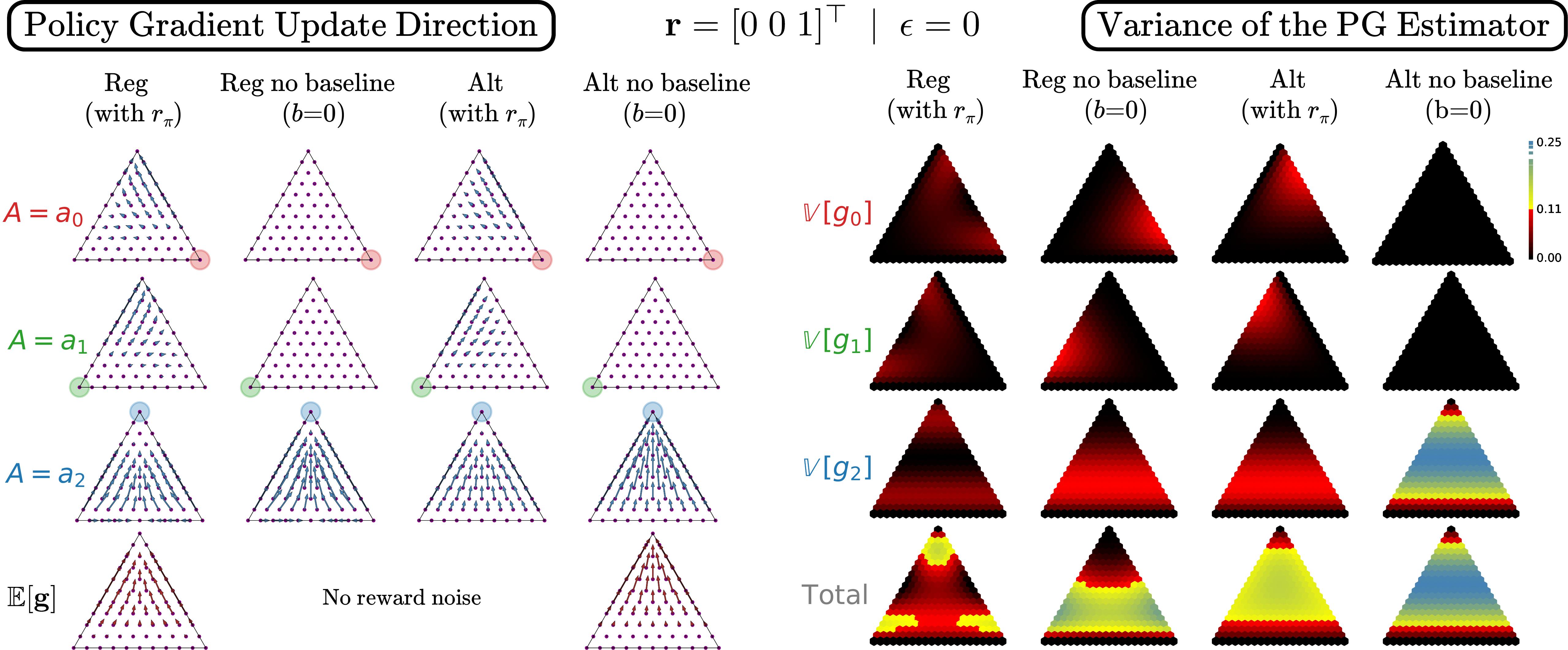}
        \caption{Policy updates and the variance for PG estimators on a 3-armed bandit problem with $\rvec = [0\;0\;1]^\top$ and no reward noise.} 
        \label{fig: intuition_bandits_reward001_noise0}
      \end{figure}

      \subsubsection*{Bandit Problem with $\rvec = [0\;0\;1]^\top$ and Zero Reward Noise}
      Now we study how these estimators behave when we take away the noise from the rewards in the bandit problem. Figure \ref{fig: intuition_bandits_reward001_noise0} shows how the policy is updated and the variance of the gradient estimators for a bandit problem with no reward noise. From these plots, we see that the benefit of the alternate estimator (large update magnitudes at the corners) disappears. With neither any reward signal nor any noise, the alternate estimator cannot escape the corners (see the experiments in \S \ref{sec: bandit_noise_plus_no_noise}). However, the alternate estimator still has a higher update magnitude as compared to the regular estimator near the corners. From the variance plots, we see that, as shown by Proposition \ref{thm: variance_saturated_pg}, the variance for all the estimators is zero at the corners; no reward noise means no variance.

      \subsubsection*{Bandit Problem with $\rvec = [1\;2\;3]^\top$ and Non-zero Reward Noise}
      In the previous settings, there was no reward signal at the sub-optimal corners. Now we consider a case in which there is a non-zero reward signal at the sub-optimal corners and the noise is comparable to it. From Figure \ref{fig: intuition_bandits_reward123_noise1} (left), we see that the alternate estimator (with true $r_\pi$) works better as compared to the regular estimator: it has high noise at sub-optimal corners which might make the policy more uniform. However, with $b=0$, this is no longer true. In fact, with the alternate estimator, we are driven into the sub-optimal corner faster than the regular without the baseline. Since $b = 0$ is pessimistic for the reward structure $\rvec = [1\;2\;3]^\top$, the policy will move away from the center of the simplex (Theorem \ref{thm: fixed_pt_biased_pg_attract_repulse}). So initially, the alternate estimator will drive the policy into the corner. Then as its baseline estimate improves and becomes closer to $r_\pi$, there is some chance that the agent will escape saturation (although our experiments in Figure \ref{fig: bad_init_pessimistic_param_study} show that the alternate estimator with a pessimistic baseline is often inferior to the regular estimator).

      In Figure \ref{fig: intuition_bandits_reward123_noise1} (right), the top row shows the variance of the policy against its entropy, and the bottom row shows the variance of the particular estimator against the policy's distance (measured using KL divergence) from that corner, i.e. for each action $a$, we plot $\V[g_a]$ against $D_{\text{KL}}(\evec_a \| \pi)$. The results are similar as the previous variance plots. These entropy figures, unlike the variance heatmaps, can also be plotted for bandit problems with more than three actions.

      \begin{figure}[!tbp]
        \centering
        \includegraphics[scale=0.6]{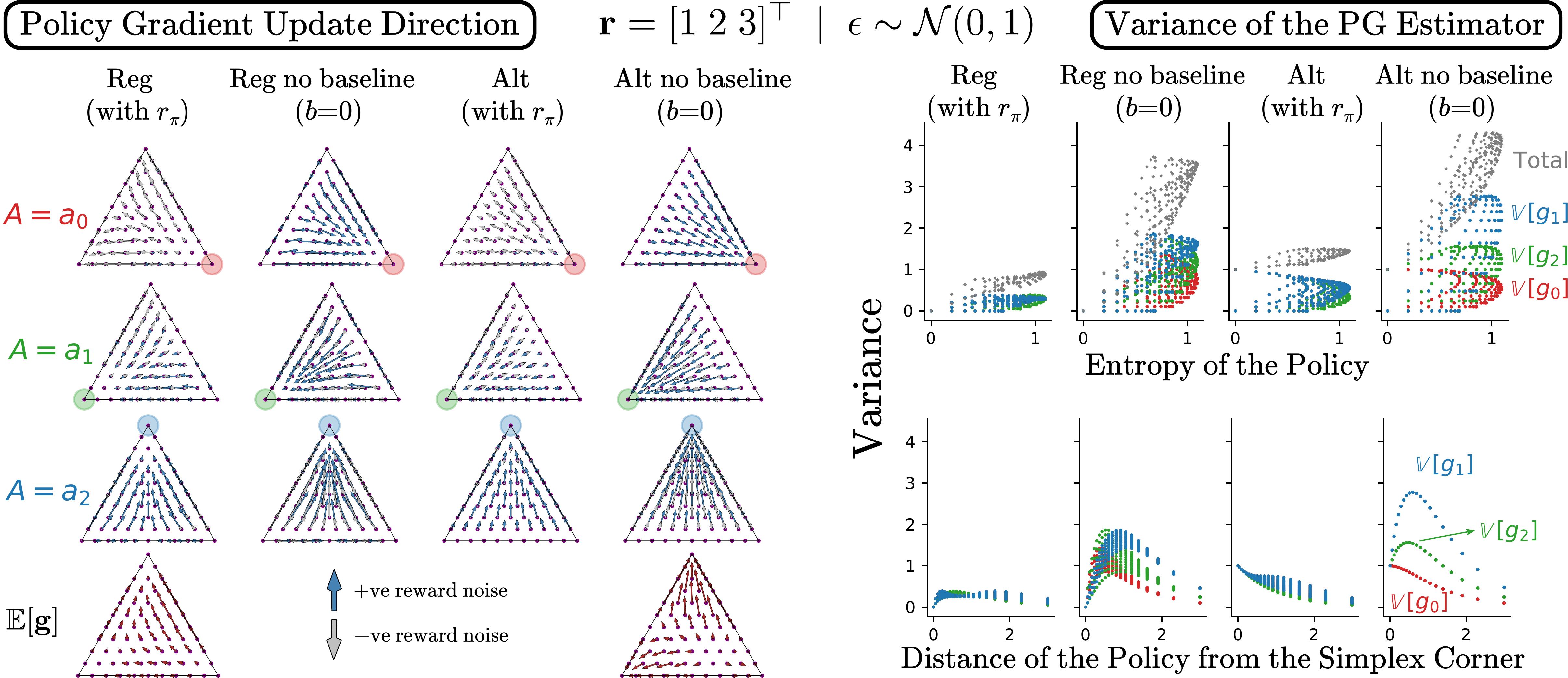}
        \caption{Policy updates and the entropy plots for PG estimators on a 3-armed bandit problem with $\rvec = [1\;2\;3]^\top$ and a non-zero reward noise.} 
        \label{fig: intuition_bandits_reward123_noise1}
      \end{figure}

      \subsubsection*{Bandit Problem with $\rvec = [1\;2\;3]^\top$ and an Optimistic or a Pessimistic Baseline}

      In Figure \ref{fig: intuition_bandits_reward123_noise0_opti_pessi}, we show the policy update plots for optimistic or pessimistic baselines. We fix the reward structure to $\rvec = [1\;2\;3]^\top$ and do not add any reward noise (the results with non-zero reward noise were effectively similar to these ones, and therefore we skipped them). For an optimistic baseline, we see that both the regular and the alternate stochastic estimators make the policy more uniform. The alternate estimator again has higher update magnitudes compared to the regular estimator. The last row illustrates the point made by Theorem \ref{thm: fixed_pt_biased_pg_attract_repulse}, that the alternate estimator with an optimistic baseline attracts the policy to a point on the simplex. Further, this attractive signal, as seen from the figure, is much stronger than the expected gradient signal offered by the regular estimator. The expected regular estimator moves the policy towards the optimal corner albeit with very small update magnitudes, especially at the corner. The trends are similar for the pessimistic baseline, except that the policy is updated aggressively towards a corner. The variance plots for this settings looked very similar to those from the previous settings, and therefore we did not include them.

      \begin{figure}[!tbp]
        \centering
        \includegraphics[scale=0.8]{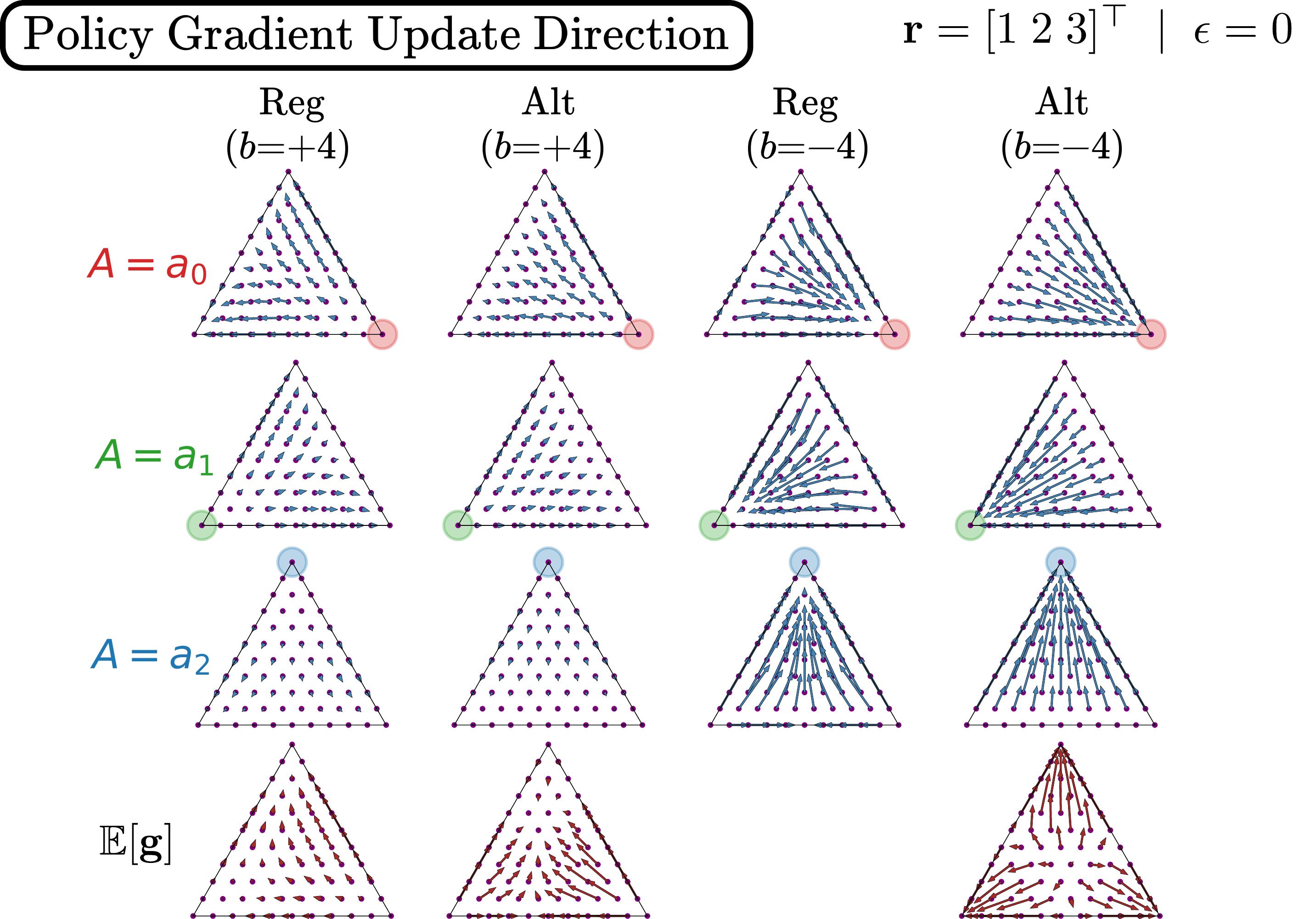}
        \caption{Policy updates for the bandit problem $\rvec = [1\;2\;3]^\top$ and no reward noise. The regular and the alternate estimators use an optimistic $(b = +4)$ or pessimistic $(b = -4)$ baseline. As discussed in \S \ref{sec: alternate_utilizes_biasedness}, both the regular estimators are unbiased, whereas both the alternate estimators are biased.} 
        \label{fig: intuition_bandits_reward123_noise0_opti_pessi}
      \end{figure}
      
      \section{Softmax PG Estimators for MDPs} \label{app: theoretical_analysis_mdp}

      In this section, we begin by quickly giving a remark about the regular estimator in the MDP setting. Then we state how to adapt the alternate PG estimator to various PG algorithms (such as PPO), and end by giving implementation details for linear online Actor-Critic algorithm.

      \subsection{Remark about the Regular Softmax PG Estimator for MDPs}
      We first show that the regular gradient estimator $\gvec^{\text{REG}}(S, A)$ (derived in \S \ref{sec: alt_pg_mdp}) despite the complicated expression is exactly equal to the familiar PG estimator, i.e.
      \begin{equation}
        \gvec^{\text{REG}}(S, A) = \nabla_{\wvec} \log \pi(A | S) \big( q_\pi(S, A) - v_\pi(S) \big).
      \end{equation}
      To see this equivalence, note that for a softmax policy $\pi$ and the state-action pair $(S, A)$,
      \begin{align*}
        \nabla_{\wvec} \log \pi(A | S) &= \nabla_{\wvec} [\thetavec(S)]_A - \nabla_{\wvec} \log \left( \sum_{a} e^{[\thetavec(S)]_a} \right) = \nabla_{\wvec} [\thetavec(S)]_A - \frac{\sum_{a} e^{[\thetavec(S)]_a} \nabla_{\wvec} [\thetavec(S)]_a}{\sum_{b} e^{[\thetavec(S)]_b}} \\
        &= \nabla_{\wvec} [\thetavec(S)]_A - \sum_{a} \pi(a | S) \nabla_{\wvec} [\thetavec(S)]_a.
      \end{align*}
      As the reader might observe, therefore the calculation in \S \ref{sec: alt_pg_mdp} merely resulted in a different expression for the already familiar PG estimator; it did not yield a new one and therefore was little more than an exercise in manipulating the vector variables $\pivec(\cdot | S)$ and $\thetavec(S)$.
      
      \subsection{Adapting the Alternate Estimator to various PG Algorithms} \label{app: diff_alternate_estimators}
      The form for the alternate estimator given by Eq. \ref{eq: mdp_alternate_stochastic_loss} needs to be adapted to a concrete policy gradient algorithm before it can be used by an RL agent. In this section, we demonstrate this adaption process for the Proximal Policy Optimization algorithm (abbreviated as PPO, Schulman et al., 2017). After that, we directly give the expressions for the alternate estimator adapted to various PG algorithms such as REINFORCE (Williams, 1991), Online Actor-Critic (Sutton and Barto, 2018), and Trust Region Policy Optimization (abbreviated as TRPO, Schulman et al., 2015).

      \subsubsection*{The Alternate PPO Estimator}
      PPO proposes the following clipped objective that the agent maximizes at each step:
      \begin{equation}
        \mathcal{J}_{\text{PPO}} := \sum_s \nu_{\pi_{\text{old}}}(s) \sum_a \pi_{\text{old}}(a | s) \cdot \min \left( \begin{matrix} \frac{\pi_{\wvec}(a | s)}{\pi_{\text{old}}(a | s)} h_{\pi_{\text{old}}}(s, a), \\ \text{clip} \left[\frac{\pi_{\wvec}(a | s)}{\pi_{\text{old}}(a | s)}, 1 - \epsilon, 1 + \epsilon \right] h_{\pi_{\text{old}}}(s, a) \end{matrix} \right),
      \end{equation}
      where $h_{\pi_{\text{old}}}(s, a) := q_{\pi_{\text{old}}}(s, a) - v_{\pi_{\text{old}}}(s)$ is the advantage function under the old policy $\pi_{\text{old}}$, and $\epsilon$ is a fixed parameter. The PPO objective introduces an importance sampling ratio to reuse the data collected by the old policy to make multiple update steps to the new policy\footnote{Despite this reuse of the data collected by the old policy, PPO is still very much an on-policy PG algorithm since it does not correct for the state distribution mismatch between $\pi_{\text{old}}$ and the current policy: $\nu_{\pi_{\text{old}}}(s) \neq \nu_{\pi_{\wvec}}(s)$.}. Further, the agent only makes updates for those samples which satisfy a particular condition that we define shortly. Using this view, the gradient of the above objective can be equivalently written as
      \begin{align}
        \nabla_{\wvec} \mathcal{J}_{\text{PPO}} &= \sum_s \nu_{\pi_{\text{old}}}(s) \sum_a \pi_{\text{old}}(a | s) \cdot \mathbb{I} \Big( \text{cond}^{\pi_{\wvec}}_{\pi_{\text{old}}}(s, a) \Big) \frac{\nabla_{\wvec} \pi_{\wvec}(a | s)}{\pi_{\text{old}}(a | s)} h_{\pi_{\text{old}}}(s, a) \nonumber \\
        &= \sum_s \nu_{\pi_{\text{old}}}(s) \sum_a \nabla_{\wvec} \pi_{\wvec}(a | s) h_{\pi_{\text{old}}}(s, a) \mathbb{I} \Big( \text{cond}^{\pi_{\wvec}}_{\pi_{\text{old}}}(s, a) \Big), \label{eqn: ppo_grad}
      \end{align}
      where $\text{cond}()$ is the Boolean condition defined as
      \begin{align*}
        \text{cond}^{\pi_{\wvec}}_{\pi_{\text{old}}}(s, a) \nonumber := \left( h_{\pi_{\text{old}}}(s, a) > 0 \;\bigwedge\; \frac{\pi_{\wvec}(a | s)}{\pi_{\text{old}}(a | s)} < 1 + \epsilon \right) \bigvee \left( h_{\pi_{\text{old}}}(s, a) < 0 \;\bigwedge\; \frac{\pi_{\wvec}(a | s)}{\pi_{\text{old}}(a | s)} > 1 - \epsilon \right).
      \end{align*}
      To reduce clutter, let us define $\tilde{h}_{\pi_{\text{old}}}^{\pi_{\wvec}}(s, a) := h_{\pi_{\text{old}}}(s, a) \mathbb{I} \Big( \text{cond}^{\pi_{\wvec}}_{\pi_{\text{old}}}(s, a) \Big)$. Then the regular PPO estimator can be written as
      \begin{equation*}
        \nabla_{\wvec} \mathcal{J}_{\text{PPO}} = \E_{S \sim \nu_{\pi_{\text{old}}}; A \sim \pi_{\text{old}}(\cdot | S)} \underbrace{\left[ \nabla_{\wvec} \log \pi_{\wvec}(A | S) \frac{\pi_{\wvec}(A | S)}{\pi_{\text{old}}(A | S)} \tilde{h}_{\pi_{\text{old}}}^{\pi_{\wvec}}(S, A) \right]}_{=: \gvec^{\text{REG-PPO}}(S, A)}.
      \end{equation*}
      Now we follow the same procedure that we used to obtain the alternate estimator in Eq. \ref{eq: mdp_alternate_vector_grad_softmax}, but for the expression that appeared in the PPO gradient (Eq. \ref{eqn: ppo_grad}):
      \begin{equation*}
        \sum_a \nabla_{\wvec} \pi_{\wvec}(a | s) \tilde{h}_{\pi_{\text{old}}}^{\pi_{\wvec}}(s, a) = \nabla_{\wvec} \thetavec(s) \sum_a \pi_{\wvec}(a | s) \left[ \tilde{h}_{\pi_{\text{old}}}^{\pi_{\wvec}}(s, a) - \sum_c \pi_{\wvec}(c | s) \tilde{h}_{\pi_{\text{old}}}^{\pi_{\wvec}}(s, c) \right] \evec_a.
      \end{equation*}
      Putting the above expression into Eq. \ref{eqn: ppo_grad}, we get the alternate estimator for PPO:
      \begin{align}
        & \nabla_{\wvec} \mathcal{J}_{\text{PPO}} \nonumber \\
        &= \sum_s \nu_{\pi_{\text{old}}}(s) \sum_a \pi_{\text{old}}(a | s) \nabla_{\wvec} [\thetavec(s)]_a \frac{\pi_{\wvec}(a | s)}{\pi_{\text{old}}(a | s)}  \left[ \tilde{h}_{\pi_{\text{old}}}^{\pi_{\wvec}}(s, a) - \sum_c \pi_{\wvec}(c | s) \tilde{h}_{\pi_{\text{old}}}^{\pi_{\wvec}}(s, c) \right] \nonumber \\
        &= \E_{S \sim \nu_{\pi_{\text{old}}}; A \sim \pi_{\text{old}}(\cdot | S)} \underbrace{\left[ \nabla_{\wvec}[\thetavec(S)]_A \frac{\pi_{\wvec}(A | S)}{\pi_{\text{old}}(A | S)}  \left( \tilde{h}_{\pi_{\text{old}}}^{\pi_{\wvec}}(S, A) - \sum_c \pi_{\wvec}(c | S) \tilde{h}_{\pi_{\text{old}}}^{\pi_{\wvec}}(S, c) \right) \right]}_{=: \gvec^{\text{ALT-PPO}}(S, A)}.
      \end{align}

      Using a process similar to that demonstrated here, we can extend the alternate estimator to different PG algorithms. We now summarize these estimators.

      \subsubsection*{Alternate REINFORCE Estimator}
      The regular REINFORCE estimator is
      \begin{equation}
        \nabla \mathcal{J_\pi} = \E_{H_\infty \sim \mathbb{P}_\pi} \underbrace{\left[\sum_{t=0}^\infty \gamma^t \nabla \log \pi(A_t | S_t) \Big(G_t - v_\pi(S_t) \Big) \right]}_{=: \gvec^{REG-REINF}(H_\infty)}.
      \end{equation}
      And the alternate estimator is given by
      %% \begin{equation*}
      %%   \nabla_{\wvec} \mathcal{J}_{\text{REINF}} = \E_{H_\infty \sim \mathbb{P}_\pi} \underbrace{\left[\sum_{t=0}^\infty \gamma^t \nabla_{\wvec} \log \pi_{\wvec}(A_t | S_t) \Big( G_t - v_\pi(S_t) \Big) \right]}_{=: \gvec^{\text{REG-REINF}}(H_\infty)},
      %% \end{equation*}
      %% and it is straightforward to obtain the alternate estimator
      \begin{equation}
        \nabla_{\wvec} \mathcal{J}_{\pi} = \E_{H_\infty \sim \mathbb{P}_\pi} \underbrace{\left[\sum_{t=0}^\infty \gamma^t \nabla_{\wvec} [\thetavec(S_t)]_{A_t} \Big( G_t - v_\pi(S_t) \Big) \right]}_{=: \gvec^{\text{ALT-REINF}}(H_\infty)}.
      \end{equation}

      \subsubsection*{Alternate Online AC Estimator}
      The regular estimator is given by
      \begin{equation}
        \nabla \mathcal{J_\pi} \propto \E_{S, K \sim d_\pi; A \sim \pi(\cdot | S); S', R \sim p(\cdot, \cdot | S, A)} \underbrace{\left[ \gamma^K \nabla \log \pi(A | S) \Big( R + \gamma v_\pi(S') - v_\pi(S) \Big) \right]}_{=: \gvec^{\text{REG-OAC}}(S, A, R, S', K)}, \label{eq: regular_online_ac_estimator_app}
      \end{equation}
      and the alternate estimator is
      %% \begin{equation*}
      %%   \nabla_{\wvec} \mathcal{J_\pi}_{\text{OAC}} \propto \E_{S, K \sim d_\pi; A \sim \pi(\cdot | S); S', R \sim p(\cdot, \cdot | S, A)} \underbrace{\left[ \gamma^K \nabla_{\wvec} \log \pi_{\wvec}(A | S) \Big( R + \gamma v_\pi(S') - v_\pi(S) \Big) \right]}_{=: \gvec^{\text{ALT-OAC}}(S, A, R, S', K)},
      %% \end{equation*}
      %% and the alternate estimator is
      \begin{equation}
        \nabla_{\wvec} \mathcal{J}_{\pi} \propto \E_{S, K \sim d_\pi; A \sim \pi(\cdot | S); S', R \sim p(\cdot, \cdot | S, A)} \underbrace{\left[ \gamma^K \nabla_{\wvec} [\thetavec(S)]_{A} \Big( R + \gamma v_\pi(S') - v_\pi(S) \Big) \right]}_{=: \gvec^{\text{ALT-OAC}}(S, A, R, S', K)}. \label{eq: alternate_online_ac_estimator}
      \end{equation}

      \subsubsection*{TRPO Estimators}
      TRPO (Eq. 14, Schulman et al., 2015) proposes that the agent solve the following constrained optimization problem at each timestep
      \begin{equation*}
        \max_{\wvec} \; \underbrace{\sum_s  \nu_{\pi_{\text{old}}}(s) \sum_a \pi_{\wvec}(a | s) q_{\pi_{\text{old}}}(s, a)}_{=: \mathcal{J}_{\text{TRPO}}} \qquad \text{subject to } \sum_s \nu_{\pi_{\text{old}}}(s) \cdot D_{\text{KL}} \big( \pi_{\text{old}}(\cdot | s) \big\| \pi_{\wvec}(\cdot | s) \big) \leq \delta,
      \end{equation*}
      where $\delta$ is a fixed parameter.

      We can re-write the gradient of the objective $\mathcal{J}_{\text{TRPO}}$ as follows
      \begin{equation*}
        \nabla_{\wvec} \mathcal{J}_{\text{TRPO}} = \E_{S \sim \nu_{\pi_{\text{old}}}; A \sim \pi_{\text{old}}(\cdot | S)} \underbrace{\left[ \nabla_{\wvec} \log \pi_{\wvec}(A | S) \frac{\pi_{\wvec}(A | S)}{\pi_{\text{old}}(A | S)} q_{\pi_{\text{old}}}(S, A) \right]}_{=:\gvec^{\text{REG-TRPO}}(S, A)}.
      \end{equation*}
      Then, it is straightforward to see that the alternate estimator would become
      \begin{equation}
        \nabla_{\wvec} \mathcal{J}_{\text{TRPO}} = \E_{S \sim \nu_{\pi_{\text{old}}}; A \sim \pi_{\text{old}}(\cdot | S)} \underbrace{\left[ \nabla_{\wvec} [\thetavec(S)]_{A} \frac{\pi_{\wvec}(A | S)}{\pi_{\text{old}}(A | S)} \Big( q_{\pi_{\text{old}}}(S, A) - \sum_c \pi_{\wvec}(c | S) q_{\pi_{\text{old}}}(S, c) \Big) \right]}_{=:\gvec^{\text{ALT-TRPO}}(S, A)}.
      \end{equation}

      The alternate estimators proposed here for TRPO and PPO require the computation of the expectation of $q_{\pi_{\text{old}}}$ and $\tilde{h}_{\pi_{\text{old}}}^{\pi_{\wvec}}$ respectively, under the current policy $\pi_{\wvec}$ and across the action space. However, if the number of actions is large, this operation becomes prohibitively expensive\footnote{Moreover, if this expectation is calculated exactly, the alternate (TRPO or PPO) estimator will remain unbiased. Depending on the context this might be a good or a bad property to have. However, in this work, we do not pursue a more involved analysis of the alternate estimators for TRPO and PPO. Also note that the objectives maximized by PPO and TRPO are different from the original PG objective $\mathcal{J}_\pi := \sum_s  \nu_{\pi_{\wvec}}(s) \sum_a \pi_{\wvec}(a | s) q_{\pi_{\wvec}}(s, a)$ which would probably make such an analysis more interesting.}. In such cases, techniques (also see Ciosek and Whiteson, 2020) such as maintaining another function approximator, analogous to a value function, or sampling a subset of actions to approximate this expectation can be used.

      \subsection{Additional Details for Implementing Online Linear AC} \label{app: additional_details_online_ac}
      In this section, we provide details that can be used when implementing a PG agent (we used these for the experiments presented in \S \ref{app: experiments_linear}) with the estimators presented above. In particular, we calculate the term $\nabla_{\wvec} \thetavec(S)$ (see the equations in the previous section) to yield analytical policy gradient update expressions. To keep things concrete, we will focus on the expected policy gradient expression, and the regular and the alternate online AC estimators for softmax policies with linear function approximation. These details can be extended in a straightforward way to work with other PG algorithms such as REINFORCE and different policy parameterizations such as the tabular policies.
      
      Recall that the softmax policy is given by $\pi(a | s) = \frac{e^{[\thetavec(s)]_a}}{\sum_{c \in \mathcal{A}} e^{[\thetavec(s)]_c}}$. For linear function approximation and $d$-dimensional state features, $[\thetavec(s)]_a := \wvec_a^\top \xvec(s)$, with $\wvec_a \in \mathbb{R}^d$ being the weight vector corresponding to action $a$, and $\xvec(s) \in \mathbb{R}^d$  being the feature vector for state $s$ (obtained, say, via tile-coding). Also let $\Wvec := [\wvec_1\; \wvec_2\; \cdots\; \wvec_{|\mathcal{A}|}]$ be a $d \times |\mathcal{A}|$ weight matrix. Further, overload the gradient notation, so that for a scalar function $\lambda(\Wvec)$, $\nabla_{\Wvec} \lambda$ is a $d \times |\mathcal{A}|$ matrix defined as $[\nabla_{\Wvec} \lambda]_{ij} = \nicefrac{\partial \lambda}{\partial w_{ij}}$. Using this definition, $\nabla_{\Wvec} \lambda = [\nabla_{\wvec_1} \lambda \; \nabla_{\wvec_2} \lambda \; \cdots \; \nabla_{\wvec_{|\mathcal{A}|}} \lambda]$, which in turn means that we can first calculate $\nabla_{\wvec_c} \mathcal{J}_\pi$ for each action $c \in \mathcal{A}$ and then stack these vectors to obtain $\nabla_{\Wvec} \mathcal{J}_\pi$. The gradient update, for the weights $\Wvec$, can be written as $\Wvec^{\text{new}} = \Wvec^{\text{old}} + \alpha \nabla_{\Wvec} \mathcal{J}_\pi$ for some $\alpha > 0$. Using this formulation, let us now compute the expressions for the expected policy gradient, the regular estimator, and the alternate estimator.

      First note that, using Proposition \ref{prop: dot_product_grad}, for an arbitrary state $s$, and actions $a$ and $c$ we get that
      \begin{equation}
        \nabla_{\wvec_c} [\thetavec(s)]_a = \nabla_{\wvec_c} \big( \wvec_a^\top \xvec(s) \big) = \mathbb{I}(c = a) \xvec(s). \label{eq: grad_w_theta_s_a}
      \end{equation}
      Now for computing the expected policy gradient, put Eq. \ref{eq: mdp_vector_grad_softmax_mid} in Eq. \ref{eq: explicit_pg_matrix_vec} to obtain
      \begin{align*}
        \nabla_{\wvec_c} \mathcal{J}_\pi &= \sum_{s} \nu_\pi(s) \big[ \nabla_{\wvec_c} \thetavec(s) \big] \pivec(\cdot | s) \odot (\qvec_\pi(s, \cdot) - v_\pi(s) \mathbf{1}) \\
        &= \sum_{s} \nu_\pi(s) \sum_a \nabla_{\wvec_c} [\thetavec(s)]_a \pi(a | s) (q_\pi(s, a) - v_\pi(s)) \\
        &= \sum_{s} \nu_\pi(s) \sum_a \mathbb{I}(c = a) \xvec(s) \pi(a | s) (q_\pi(s, a) - v_\pi(s)) \\
        &= \sum_{s} \nu_\pi(s) \xvec(s) \pi(c | s) (q_\pi(s, c) - v_\pi(s)).
      \end{align*}
      Now we stack together the vectors $\nabla_{\wvec_c} \mathcal{J}_\pi$ for all the actions $c \in \mathcal{A}$ to obtain
      \begin{equation}
        \nabla_{\Wvec} \mathcal{J}_\pi = \sum_{s} \nu_\pi(s) \xvec(s) \Big( \pivec_{\wvec}(\cdot | s) \odot \big(\qvec_\pi(s, \cdot) - v_\pi(s) \mathbf{1} \big) \Big)^\top.
      \end{equation}
      Using chain rule of differentiation for vector variables (Proposition \ref{prop: chain_rule_grad}), we obtain
      \begin{IEEEeqnarray*}{lrCl}
        & \nabla_{\wvec_c} \pi(a|s) &=& \big[ \nabla_{\wvec_c} \thetavec(s) \big] \big[ \nabla_{\thetavec(s)} \pi(a|s) \big] = \sum_{d} \nabla_{\wvec_c} [\thetavec(s)]_d \cdot \frac{\partial \pi(a|s)}{\partial [\thetavec(s)]_d} \\
        &&=& \sum_d \mathbb{I}(c = d) \xvec(s) \cdot \pi(a|s) \big[ \mathbb{I}(d = a) - \pi(d | s) \big] = \pi(a|s) \xvec(s) \big[ \mathbb{I}(c = a) - \pi(c | s) \big] \\
        \Rightarrow \quad & \nabla_{\wvec_c} \log \pi(a|s) &=& \xvec(s) \big[ \mathbb{I}(c = a) - \pi(c | s) \big].
      \end{IEEEeqnarray*}
      Again stacking the vectors $\nabla_{\wvec_c} \log \pi(a|s)$ for all actions $c \in \mathcal{A}$ gives us
      \begin{equation}
        \nabla_{\Wvec} \log \pi(a|s) = \xvec(s) \big(\evec_a - \pivec(\cdot|s) \big)^\top. \label{eq: grad_W_log_pi}
      \end{equation}
      Finally, we can compute the regular estimator for online actor-critic. Using the expression for the regular online AC estimator (given in Eq. \ref{eq: regular_online_ac_estimator_app}) for arbitrary $a \in \mathcal{A}$, $r \in \mathcal{R}$, $k \in \{0, 1, \ldots\}$, and $s, s' \in \mathcal{S}$ we obtain
      \begin{align}
        \gvec^{\text{REG-OAC}}(s, a, r, s', k) &:= \gamma^k \nabla_{\Wvec} \log \pi(a | s) \big( r + \gamma v_\pi(s') - v_\pi(s) \big) \nonumber \\
        &= \gamma^k \big( r + \gamma v_\pi(s') - v_\pi(s) \big) \xvec(s) \big(\evec_a - \pivec(\cdot|s) \big)^\top.
      \end{align}
      If we stack the vectors $\nabla_{\wvec_c} [\thetavec(s)]_a$ from Eq. \ref{eq: grad_w_theta_s_a} for different actions $c$, we would get
      \begin{equation*}
        \nabla_{\Wvec} [\thetavec(s)]_a = \xvec(s) \evec_a^\top.
      \end{equation*}
      Then the alternate estimator for online actor-critic (given in Eq. \ref{eq: alternate_online_ac_estimator}) can be computed as 
      \begin{align}
        \gvec^{\text{ALT-OAC}}(s, a, r, s', k) &:= \gamma^k \nabla_{\Wvec} [\thetavec(s)]_{a} \big( r + \gamma v_\pi(s') - v_\pi(s) \big) \nonumber \\
        &= \gamma^k \big( r + \gamma v_\pi(s') - v_\pi(s) \big) \xvec(s) \evec_a^\top.
      \end{align}

      \subsection{Entropy Regularized Policy Gradient} \label{app: additional_details_online_ac}
      In maximum entropy policy gradient (Ahmed et al., 2019; Haarnoja et al., 2018; Peters et al., 2010), the agent maximizes the entropy regularized objective $\mathcal{J}^{\text{ent}}$ defined by:
      \begin{equation}
        \mathcal{J}^{\text{ent}} := \E \left[ \sum_{t=0}^{T-1} \gamma^t \Big( R_{t+1} + \tau \mathcal{H}_\pi(S_t) \Big) \right],
      \end{equation}
      where $\mathcal{H}_\pi(s) = - \sum_{a \in \mathcal{A}} \pi(a|s) \log \pi(a|s)$ is the action-entropy of the policy\footnote{Further, we explicitly define the entropy function to be zero at the terminal state $s_\times$, i.e. $\mathcal{H}_\pi(s_\times) := 0$.} and $\tau$ is the entropy regularization parameter. Note that this objective naturally arises if the agent were to use an entropy augmented reward $r^{\text{ent}}(R_{t+1}, S_t) = R_{t+1} + \tau \mathcal{H}_\pi(S_t)$. Consequently, we can also define special value functions that track this augmented reward:
      \begin{align}
        q_\pi^{\text{ent}}(s, a) &:= \E \bigg[ \sum_{t=k}^{T-1} \gamma^{t-k} \Big( R_{t+1} + \tau \mathcal{H}_\pi(S_t) \Big) \bigg | S_k = s, A_k = a \bigg], \text{and} \\
        v_\pi^{\text{ent}}(s) &:= \E \bigg[ \sum_{t=k}^{T-1} \gamma^{t-k} \Big( R_{t+1} + \tau \mathcal{H}_\pi(S_t) \Big) \bigg | S_k = s \bigg].
      \end{align}

      Using this objective, the policy gradient can be shown to be equal to
      \begin{equation}
        \nabla \mathcal{J}^{\text{ent}} = \sum_{s} \nu_\pi(s) \sum_{a} \pi(a|s) \Big[ \nabla \log \pi(a | s) \big(q_{\pi}^{\text{ent}}(s, a) - v_{\pi}^{\text{ent}}(s) \big) + \tau \nabla \mathcal{H}_\pi(s) \Big].
      \end{equation}
      Compare this with the regular policy gradient expressions given in Eqs. \ref{eq: pg_main} and \ref{eq: pg_expectation}. Let us now compute the analytical expression for the gradient of the entropy term:
      \begin{align}
        \nabla_{\Wvec} \mathcal{H}_\pi(s) &= - \nabla_{\Wvec} \sum_{a \in \mathcal{A}} \pi(a|s) \log \pi(a|s) \nonumber \\
        &= - \sum_{a} \nabla_{\Wvec} \pi(a|s) \log \pi(a|s) - \cancelto{0}{\sum_{a} \pi(a|s) \nabla_{\Wvec} \log \pi(a|s)} = - \sum_{a} \nabla_{\Wvec} \pi(a|s) \log \pi(a|s) \nonumber \\
        &= - \sum_{a} \pi(a|s) \nabla_{\Wvec} \log \pi(a|s) \log \pi(a|s) = - \xvec(s) \sum_a \big(\evec_a - \pivec(\cdot|s) \big)^\top \pi(a|s) \log \pi(a|s) \tag*{(from Eq. \ref{eq: grad_W_log_pi})}  \\
        &= - \xvec(s) \left[ \sum_a \evec_a \pi(a|s) \log \pi(a|s) - \pivec(\cdot|s) \sum_a \pi(a|s) \log \pi(a|s) \right]^\top \nonumber \\
        &= - \xvec(s) \Big[ \pivec(\cdot|s) \odot \log \pivec(\cdot|s) + \mathcal{H}_\pi(s) \pivec(\cdot|s) \Big]^\top.
      \end{align}
      With this expression, we can now easily extend the analysis of the previous section to implement, say, entropy regularized online linear actor-critic.

      \subsection{Escort Transform} \label{app: escort_pg}
      The escort transform (Mei et al., 2020b) is a policy parameterization that is theoretically guaranteed to escape policy saturation at a fast rate. An escort policy is given by
      \begin{equation}
        \pi(a|s) = \frac{|[\thetavec(s)]_a|^p}{\sum_{a'} |[\thetavec(s)]_{a'}|^p},
      \end{equation}
      where $p \geq 1$ is a tunable parameter. For large $p$, the escort transform would behave somewhat similarly to softmax, and as $p \rightarrow \infty$, an escort policy would would approach the greedy policy. Note that the action preferences for escort transform cannot all simultaneously be equal to zero (also see Theorem 4, Mei et al., 2020b). Let us begin by computing the gradient of this transform with respect to the action preferences (for brevity, we represent $[\thetavec(s)]_a$ as $\theta_a$ in the following calculation):
      \begin{align}
        \frac{\partial}{\partial \theta_d} \pi(a|s) &= \frac{\partial}{\partial \theta_d} \frac{|\theta_a|^p}{\sum_{a'} |\theta_{a'}|^p} = \frac{\mathbb{I}(a=d) \sgn(\theta_a) \cdot p |\theta_a|^{p-1} \sum_{a'} |\theta_{a'}|^p - \sgn(\theta_d) \cdot p |\theta_d|^{p-1} |\theta_a|^p}{\left( \sum_{a'} |\theta_{a'}|^p \right)^2} \nonumber \\
        &= \frac{p}{\sum_{a'} |\theta_{a'}|^p} \Big[ \mathbb{I}(a=d) \sgn(\theta_a) \cdot |\theta_a|^{p-1} - \sgn(\theta_d) \cdot |\theta_d|^{p-1} \pi(a|s) \Big],
      \end{align}
      where $\sgn: \mathbb{R} \rightarrow \{-1, 0, 1\}$ is the sign function that returns the sign of its input (or zero if the input is zero). Now, we will compute the escort transform analog of Eq. \ref{eq: grad_W_log_pi}:
      \begin{IEEEeqnarray*}{lrCl}
        & \nabla_{\wvec_c} \pi(a|s) &=& \big[ \nabla_{\wvec_c} \thetavec(s) \big] \big[ \nabla_{\thetavec(s)} \pi(a|s) \big] = \sum_{d} \nabla_{\wvec_c} \theta_d \cdot \frac{\partial \pi(a|s)}{\partial \theta_d} \\
        &&=& \sum_d \mathbb{I}(c = d) \xvec(s) \cdot \frac{p}{\sum_{a'} |\theta_{a'}|^p} \Big[ \mathbb{I}(a=d) \sgn(\theta_a) \cdot |\theta_a|^{p-1} - \sgn(\theta_d) \cdot |\theta_d|^{p-1} \pi(a|s) \Big] \\
        &&=& \frac{p \cdot \xvec(s)}{\sum_{a'} |\theta_{a'}|^p} \Big[ \mathbb{I}(a=c) \sgn(\theta_a) \cdot |\theta_a|^{p-1} - \sgn(\theta_c) \cdot |\theta_c|^{p-1} \pi(a|s) \Big] \\
        \Rightarrow \quad & \nabla_{\wvec_c} \log \pi(a|s) &=& p \cdot \xvec(s) \left[ \mathbb{I}(a=c) \sgn(\theta_a) \cdot \frac{|\theta_a|^{p-1}}{|\theta_a|^p}  - \sgn(\theta_c) \cdot |\theta_c|^{p-1} \frac{\pi(a|s)}{|\theta_a|^p} \right] \\
        &&=& p \cdot \xvec(s) \left[ \frac{\mathbb{I}(a=c)}{\theta_a} - \frac{\sgn(\theta_c) \cdot |\theta_c|^{p-1}}{\sum_{a'} |\theta_{a'}|^p} \right].
      \end{IEEEeqnarray*}
      Finally, we stack the vectors $\nabla_{\wvec_c} \log \pi(a|s)$ for all actions $c \in \mathcal{A}$ to obtain
      \begin{equation}
        \nabla_{\Wvec} \log \pi(a|s) = p \cdot \xvec(s) \left[ \frac{\evec_a}{[\thetavec(s)]_a} - \frac{\sgn(\thetavec(s)) \odot |\thetavec(s)|^{p-1}}{\sum_{a'} |[\thetavec(s)]_{a'}|^p} \right]^\top. \label{eq: grad_W_log_pi_escort}
      \end{equation}
      Using this expression, we can directly implement online linear AC with an escort policy.
      
      \section{Gradient Bandits: Full Experiments} \label{app: experiments_bandits}
      We then give the experimental details and the algorithm pseudocode used in \S \ref{sec: bandit_experiments}. Finally, we give additional experiments and discuss them in more detail.
      
      \subsection{Experimental Details}
      This section presents multiple experiments with gradient bandits on different 3-armed bandit tasks. All the bandit problems had the action set $\mathcal{A} = \{a_0, a_1, a_2\}$, a fixed reward structure $\rvec$, and a normally distributed reward noise $\epsilon$. The experimental results serve as a demonstration and the verification of the properties of the gradient bandits estimators discussed in the main paper.

      By default, in all the experiments, we trained five different gradient bandit agents with a softmax policy: for different agents, the policy weights were updated using either the expected gradient $\nabla \mathcal{J} = \pivec \odot (\rvec - r_\pi)$, the regular estimator with true average reward $\gvec^{\textrm{REG}}(R, A) = (R - r_\pi) (\evec_A - \pivec)$ or a learned baseline $\hat{\gvec}^{\textrm{REG}}(R, A) = (R - b) (\evec_A - \pivec)$, or the alternate estimator with the true average reward $\gvec^{\textrm{ALT}}(R, A) = \evec_A (R - r_\pi)$ or a learned baseline $\hat{\gvec}^{\textrm{ALT}}(R, A) = \evec_A (R - b)$. The baseline in each case was learned using a running average: $b_{t+1} = (1 - \beta) b_t + \beta R_t$, with the stepsize $\beta$. The exact pseudocode for gradient bandits (used in the experiments of \S \ref{sec: bandit_experiments}) is given in Algorithm \ref{alg: grad_bandit_reg}. The action preferences for the softmax policy were initialized to $\theta_{a_1} = \theta_{a_2} = 0$; and $\theta_{a_0} = 0$ for uniform policy initialization or $\theta_{a_0} \in \{5, 10, 50\}$ for saturated policy initialization depending on the degree of saturation.
      
      Each experiment was run for 1000 timesteps. For each estimator, we performed a sweep over the policy stepsize $\alpha \in \{2^{-6}, 2^{-5}, \ldots, 2^1\}$ and the baseline stepsize $\beta \in \{2^{-4}, 2^{-3}, \ldots, 2^0\}$. The sensitivity plots show the mean final performance on the last 50 timesteps averaged over 150 runs for different parameter settings. To reduce clutter, we didn't label all the $\beta$s in the sensitivity plots.

      \begin{algorithm}[!hbt]
        \caption{Gradient Bandit with Regular Estimator}
        \label{alg: grad_bandit_reg}
        \begin{algorithmic}
          \State Input a softmax policy $\pi_{\thetavec}(a) := \frac{e^{\theta_a}}{\sum_{b \in \mathcal{A}} e^{\theta_b}}$ and an average reward function estimate $\hat{r}_\pi$
          \State Input the policy stepsize $\alpha$ and the average reward stepsize $\beta$
          \State Initialize $\thetavec$ and $\hat{r}_\pi$
          \For{total number of timesteps}
          \State Sample an action $A \sim \pi$ and a reward $R \sim p(\cdot | A)$
          \For{each action $a \in \mathcal{A}$}
          \State \texttt{\textcolor{purple}{\# Calculate the gradient estimator}}
          \State $\hat{g}_a^{\textrm{REG}} = \big(R - \hat{r}_\pi \big) \Big( \mathbb{I}(A = a) - \pi(a) \Big)$
          \State \texttt{\textcolor{purple}{\# Update the policy parameter}}
          \State $\theta_a \leftarrow \theta_a + \alpha \cdot \hat{g}_a^{\textrm{REG}}$
          \EndFor
          \State \texttt{\textcolor{purple}{\# Update the average reward estimate}}
          \State $\hat{r}_\pi \leftarrow \beta R + (1 - \beta) \hat{r}_\pi$
          \EndFor
        \end{algorithmic}
      \end{algorithm}
      
      For both the estimators, we swept the policy stepsize $\alpha \in \{2^{-6}, 2^{-5}, \ldots, 2^1\}$ and the reward baseline stepsize $\beta \in \{2^{-4}, 2^{-3}, \ldots, 2^0\}$. Although, we do not always label all the $\beta$s in the sensitivity plots to reduce the clutter. The policy was initialized to $\theta_{a_1} = \theta_{a_2} = 0$; and $\theta_{a_0} = 0$ (for uniform), $\theta_{a_0} \in \{0, 5, 10, 50\}$ for saturated policies depending on the degree of saturation.
      
      \subsection{Alternate Estimator can Escape Saturated Policy Regions} \label{sec: bandits_alternate_estimator_noise_and_no_noise_first_exp}
      Experiments in this section demonstrate that the alternate estimator performs competitively with the regular estimator for uniform policy initialization and clearly outperforms it in the case of a saturated policy initialization when there is reward noise present. They also show that in the absence of reward noise, the alternate estimator fails to escape the saturated policy regions. The bandit reward structure is $\rvec = [0\;0\;1]^\top$ with either normal noise $\epsilon \sim \mathcal{N}(0, 1)$ or zero reward noise.

      \subsubsection*{Learning Curves}
      Figure \ref{fig: bandit_learning_curves} shows the learning curves for the best parameter configuration based on final performance (see sensitivity plots) for the five different agents using different PG estimators for two settings: uniform policy initialization and a sub-optimally saturated policy initialization. The regular and alternate estimators, used either the true reward function $r_\pi$ or learned a reward baseline that was initialized to zero. For the uniform policy case, we observe that all the methods achieved a good performance. More interestingly, for saturated policy initialization, the alternate estimators (both $r_\pi$ and with a baseline $b$) learned good behaviors, whereas the expected PG and the regular estimators failed to learn anything. Further, alternate with a learned baseline converged faster than alternate with $r_\pi$. We attribute this result to a better exploration afforded by the baseline.    
      \begin{figure}[!hbp]
        \centering
        \includegraphics[scale=0.45]{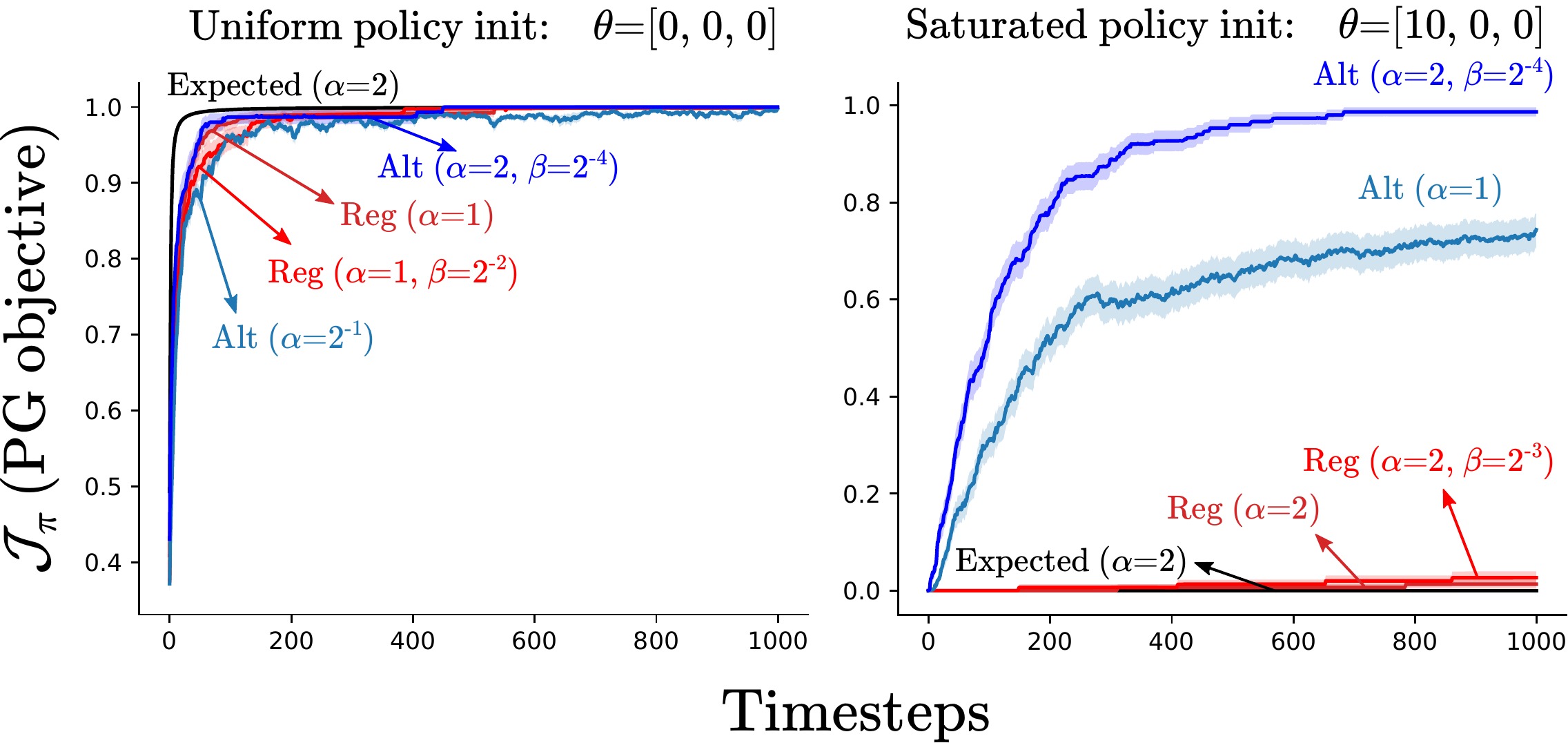}
        \caption{Learning curves for gradient bandits. The reward structure is $\rvec = [0\;0\;1]^\top$ with noise $\epsilon \sim \mathcal{N}(0, 1)$. The estimators without an accompanying value of $\beta$ use the true $r_\pi$. The results are averages over 150 independent runs and the shaded regions represent the standard error.} 
        \label{fig: bandit_learning_curves}
      \end{figure}
      
      \subsubsection*{Parameter Sensitivity Plots and Performance with and without Reward Noise} \label{sec: bandit_noise_plus_no_noise}
      Figure \ref{fig: param_sensitivity_r001} shows the parameter sensitivity of the gradient estimators for different policy saturations and reward noises. The graphs plot the mean final performance\footnote{For small stepsizes, the final performance of all the methods are very similar; it is only at higher stepsizes that we start to see a deviation of the performance of both the stochastic PG estimators from each other and the expected gradient bandits update; for a detailed discussion on stochasticity in PG methods, also see Mei et al. (2021).} during last 50 steps of the 1000 timestep run of each method. The performance was measured by analytically calculating $\mathcal{J}_\pi = \E_\pi[R]$. We observe that for uniform initialization, all the methods (irrespective of the magnitude of reward noise) had a similar performance. In contrast, for saturated policies, the alternate estimators with reward noise were able to learn for a range of stepsizes. However, without reward noise, none of the methods escaped the saturation and their policies did not improve. Note that the baseline was initialized to zero which is not optimistic for $\rvec = [0\;0\;1]^\top$ and therefore the alternate estimator could not utilize the bias in the baseline (as discussed in \S \ref{sec: alternate_utilizes_biasedness}). This experiment verifies our claim that reward noise helps the alternate estimator to escape the sub-optimal regions in the policy space; without reward noise, the agent is unable to perform a random walk.
      \begin{figure}[!tbp]
        \centering
        \includegraphics[scale=0.3]{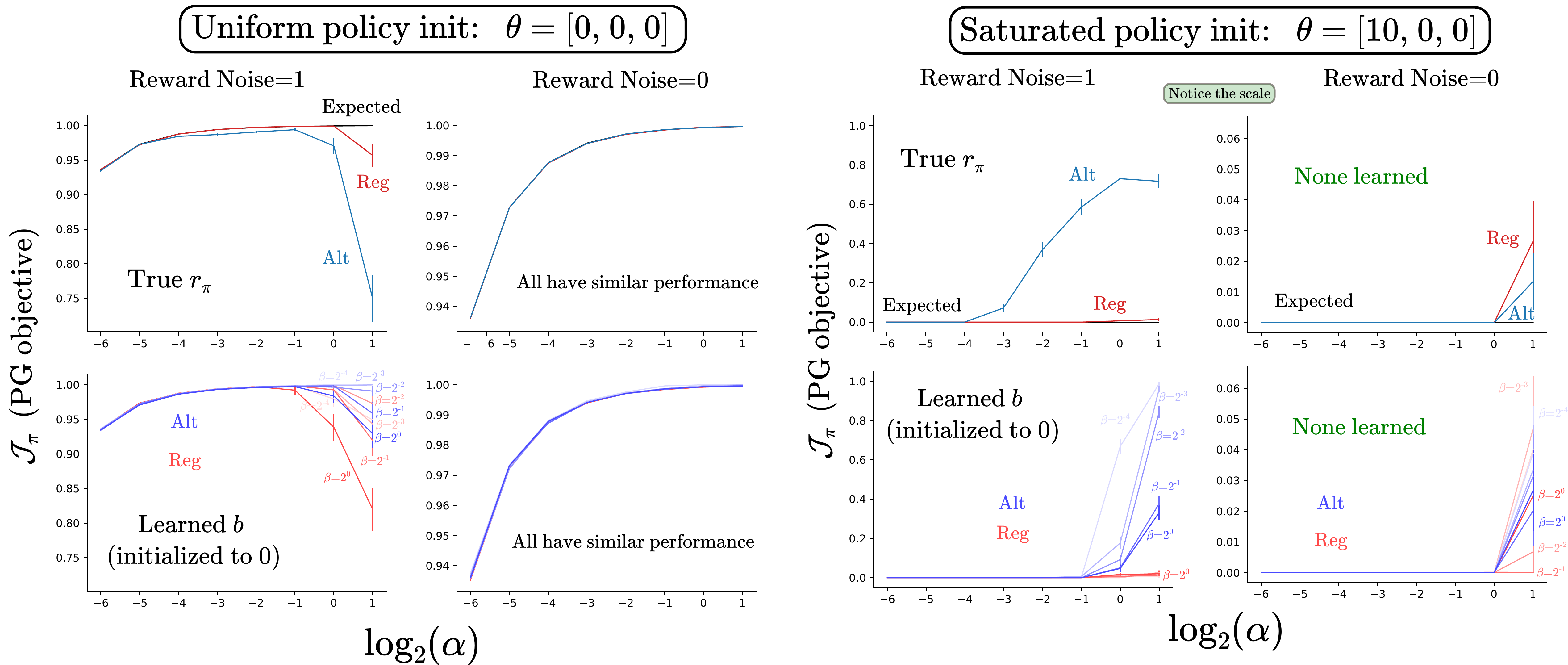}
        \caption{Parameter sensitivity plots for the different estimators on the bandit problem with $\rvec = [0\;0\;1]^\top$. We show results for eight different combinations: the estimators using $r_\pi$ or a learned baseline, uniform or saturated policy initialization, and with or without reward noise. The results were averaged over 150 runs and the bars represent standard error. The $x$ axis shows the policy stepsize $\alpha$ and the curve labels show the reward baseline stepsize $\beta$.} 
        \label{fig: param_sensitivity_r001}
      \end{figure}

      \subsubsection*{Behavior of Action Preferences under Different Estimators}
      To understand how the action preferences are influenced by different estimators, we plotted their values in Figure \ref{fig: uniform_init_learning_curves} (with non-zero reward noise) and Figure \ref{fig: uniform_init_no_noise_learning_curves} (with zero reward noise) using a set of good performing stepsize parameters. We observe that the behavior of the action preferences for the regular estimator with and without reward noise is similar: the preferences increased quite a bit during the first 100 steps and then did not change much, resulting in flat lines. This probably happened because the policy had saturated and the estimator became negligibly small, i.e. with high probability $\evec_A \approx \pivec$ and $R \approx r_\pi$, and thus $\gvec^{\text{REG}} = (R - r_\pi) (\evec_A - \pivec) \approx \zerovec$. In contrast, reward noise affected the behavior of the alternate estimator (with $r_\pi$) quite strongly: the graph for $\theta_2$ with reward noise demonstrates the characteristic random walk pattern. Whereas, in the case of no reward noise, this graph becomes smooth and keeps increasing, since even at saturation, because of missing the $- \pivec$ term, the alternate estimator $\gvec^{\text{ALT}} = (R - r_\pi) \evec_A$ remains somewhat large. The patterns for alternate with a learned $b$ are much less clear, probably because of additional effects from the bias in the baseline. We also observe that the alternate estimator with reward noise is noisier than the regular variant; this is expected because the random walk style behavior gives it a higher variance than the regular version (refer Figure \ref{fig: intuition_bandits_reward001_noise1}). Alternate without reward noise seems to be less noisy as compared to the regular variant which is again reasonable since in absence of reward noise, alternate has a lower variance than the regular estimator (refer Figure \ref{fig: intuition_bandits_reward001_noise0}).
      \begin{figure}[!tbp]
        \centering
        \includegraphics[scale=0.53]{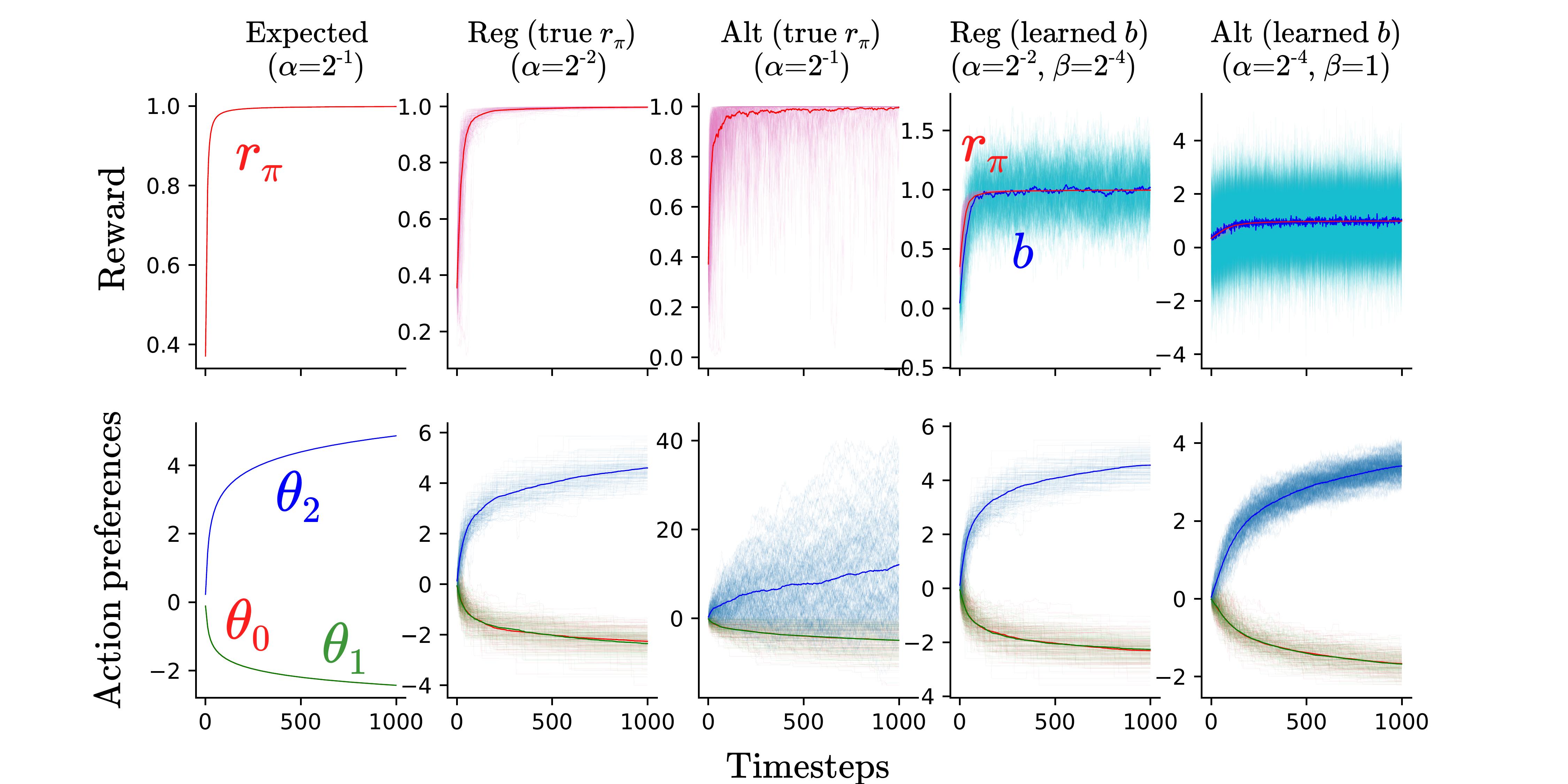}
        \caption{\textbf{(Top)} Learning curves for expected reward $r_\pi$ and value of the baseline $b$ as the policy $\pi$ is updated using different estimators on the bandit problem $\rvec = [0\;0\;1]^\top$ with noise $\epsilon \sim \mathcal{N}(0, 1)$. \textbf{(Bottom)} Learning curves for action preferences corresponding to the different estimators. In all cases, the finer lines depict the 150 individual runs and the thick lines show the averages.} 
        \label{fig: uniform_init_learning_curves}
      \end{figure}
      
      \begin{figure}[!hbp]
        \centering
        \includegraphics[scale=0.53]{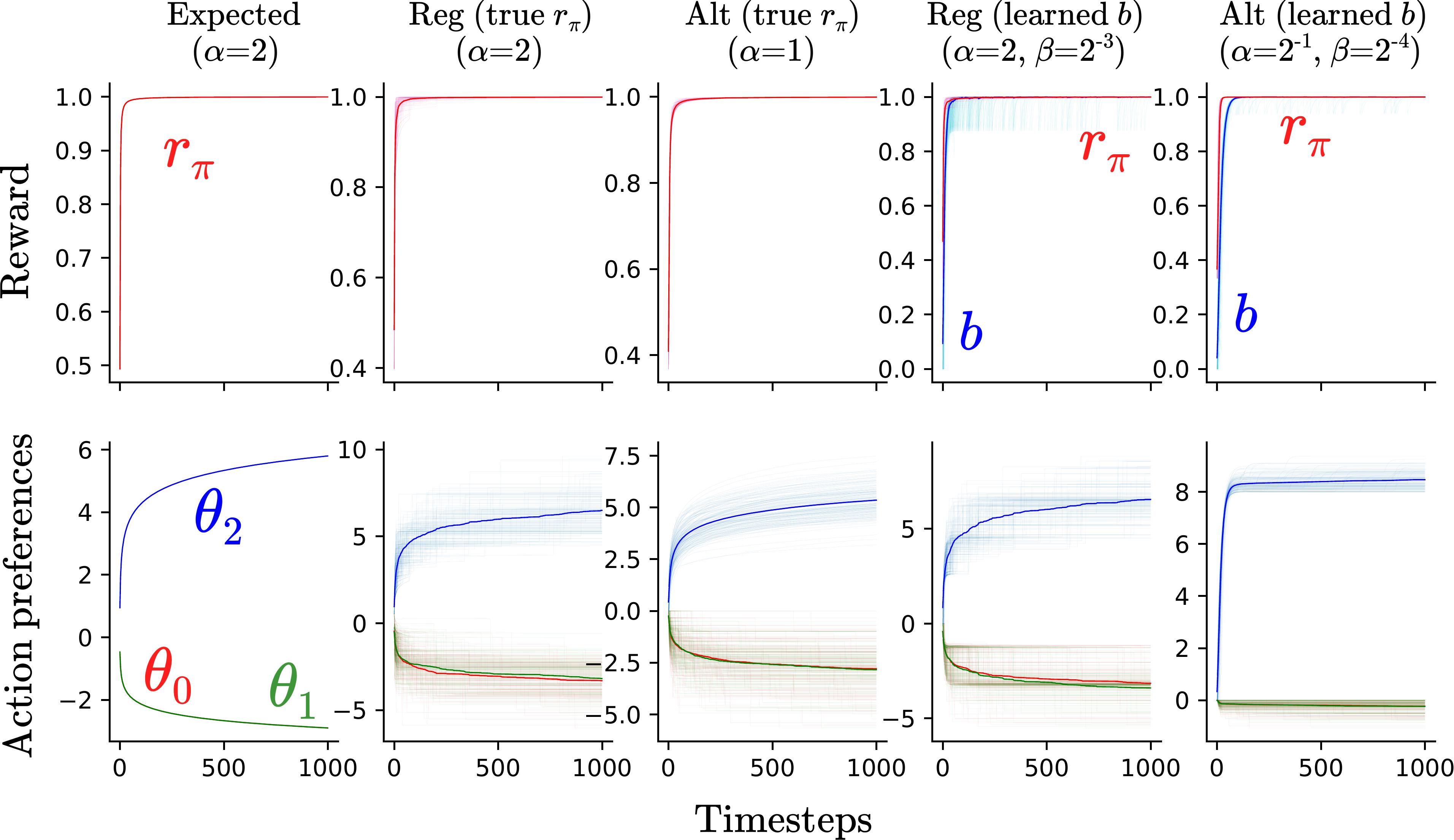}
        \caption{Average reward $r_\pi$, value of the learned baseline $b$, and the action preferences against timesteps for different methods on the bandit problem $\rvec = [0\;0\;1]^\top$ and no reward noise.}
        \label{fig: uniform_init_no_noise_learning_curves}
      \end{figure}
      
      \subsection{Gradient Bandit Estimators with Increasingly Saturated Policy Initializations}
      In this experiment, we trained the estimators on the bandit task $\rvec = [1\;2\;3]^\top$ with noise $\epsilon \sim \mathcal{N}(0, 1)$ and four different policy saturations, ranging from a uniform policy to a policy highly saturated on the worst action. Figure \ref{fig: multiple_bad_init_param_study} shows the parameter sensitivity plots depicting the final performance of the methods during the last 50 steps of their 1000 timestep run. The results show that as the policy saturation increases, the performance of all the methods degrade. Further, the alternate estimator (with true $r_\pi$) is able to handle high saturations (except for $\thetavec^{\text{init}} = [5\;0\;0]^\top$) better than the regular estimator. Whereas, the alternate estimator with a learned baseline initialized to zero (which is pessimistic for this reward structure) did not perform well. This figure, with a baseline initialized to $b = 0$, also serves as a standard (or a control set) against which the results of the next section can be compared, where we explicitly study the effect of an optimistic $(b = +4)$ and a pessimistic $(b = -4)$ reward baseline on the two estimators' performance.
      \begin{figure}[!hbp]
        \centering
        \includegraphics[scale=0.3]{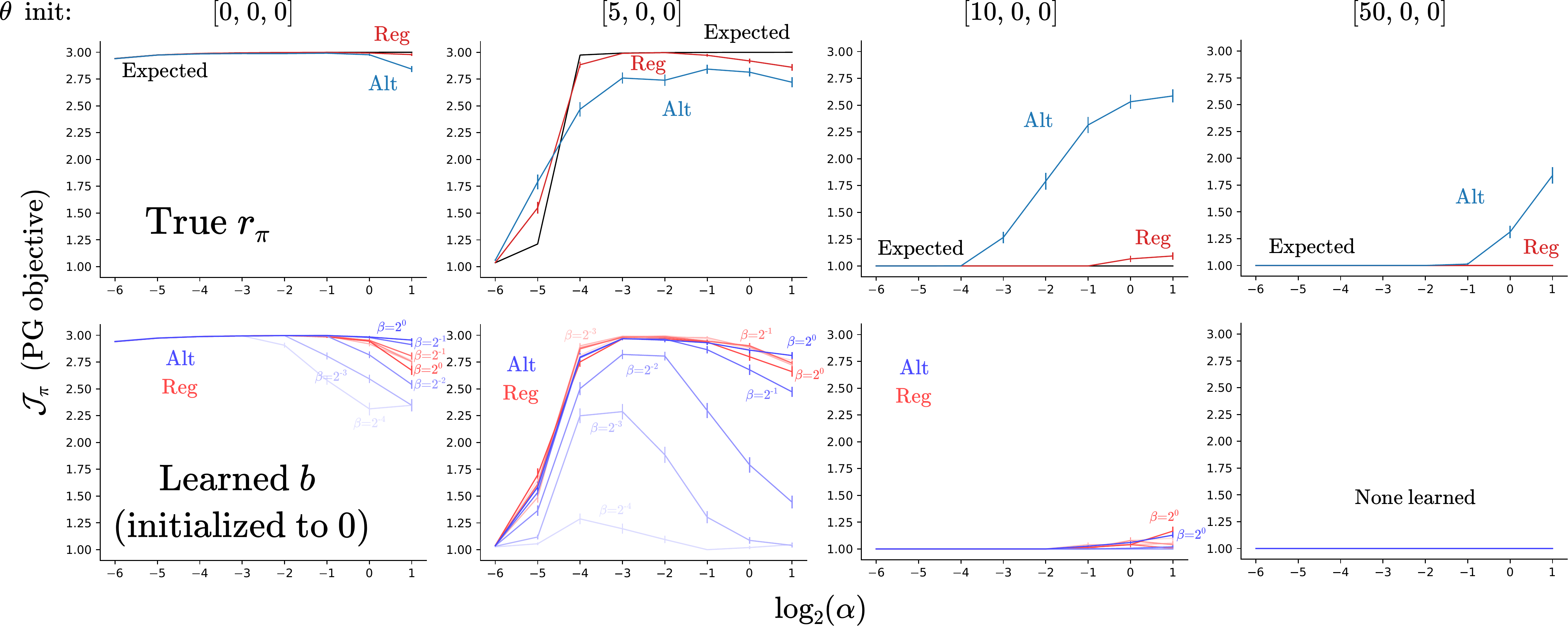}
        \caption{Parameter sensitivity of the PG estimators against increasingly saturated policy initializations on the 3-armed bandit problem $\rvec = [1\;2\;3]^\top$ with $\epsilon \sim \mathcal{N}(0, 1)$. The top row shows the performance of the estimators with the true $r_\pi$. Bottom row shows the performance for a learned baseline, initialized to zero. The subplot headers show how the action preferences were initialized.} 
        \label{fig: multiple_bad_init_param_study}
      \end{figure}

      \subsection{Effect of Bias in the Reward Baselines on the Estimators} \label{sec: opti_pessi_bandit_experiments}
      In this section, we study the effect of initializing the reward baseline optimistically ($b = +4$) or pessimistically ($b=-4$). The reward was set to $\rvec = [1\;2\;3]^\top$ with noise $\epsilon \sim \mathcal{N}(0, 1)$\footnote{The results for the bandit problem with optimistic or pessimistic baselines, and no noise (not shown here) were qualitatively similar to the ones presented here; the signal from the bias in the baseline is quite strong and affects both the estimators much more than the noise in the sampled rewards.}.

      \subsubsection*{Optimistically or Pessimistically Initialized Baseline}
      Figure \ref{fig: bad_init_optimistic_param_study} and Figure \ref{fig: bad_init_pessimistic_param_study} show the sensitivity plots for the regular and alternate estimators for an optimistically and a pessimistically initialized baseline respectively. These results show that the optimistic initialization greatly helped the alternate estimator in escaping saturated policy regions. Whereas, a pessimistic initialization significantly hampered the performance of the alternate estimator, making it worse than the regular estimator even for the uniform policy. In contrast, the baseline initialization affected the regular estimator in a milder way. Also with an optimistic initialization, the alternate estimator preferred smaller stepsizes allowing it to enjoy the optimism effect for longer; and vice-versa for a pessimistic initialization.
      
      These results also verify the fact that the alternate estimator is asymptotically unbiased: the sensitivity plots show that the agent was able to converge to the optimal corner (corresponding to action $a_2$) and thus obtained a reward of about 3. As the agent's estimate of $r_\pi$ improved, i.e. the baseline $b$ became closer to the average reward $r_\pi$, the bias in the alternate estimator disappeared.

      \begin{figure}[!tbp]
        \centering
        \includegraphics[scale=0.33]{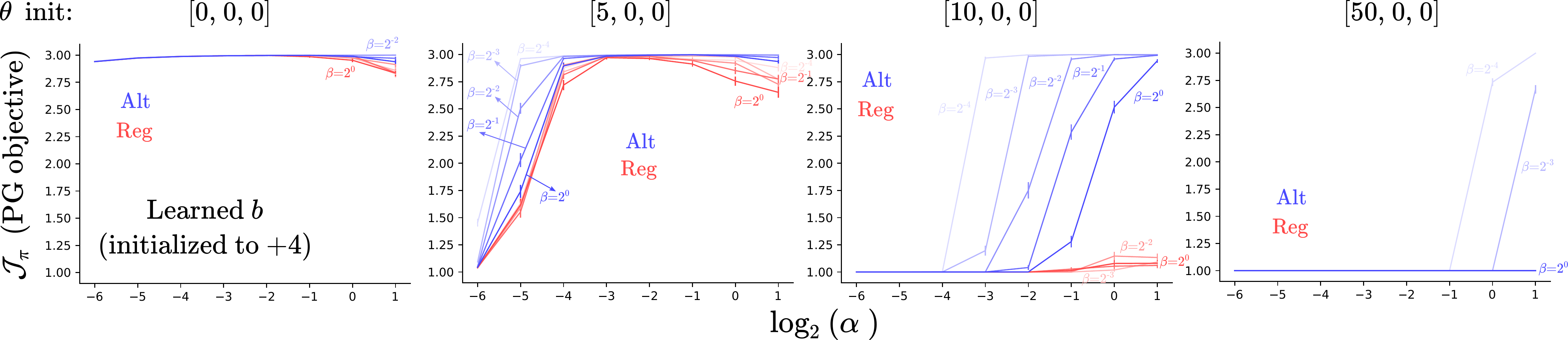}
        \caption{Parameter sensitivity plots for the different estimators for an \textbf{optimistically} initialized reward baseline ($b = +4$) on the three armed bandit problem $\rvec = [1\;2\;3]^\top$ and $\epsilon \sim \mathcal{N}(0, 1)$.} 
        \label{fig: bad_init_optimistic_param_study}
      \end{figure}

      \begin{figure}[!tbp]
        \centering
        \includegraphics[scale=0.33]{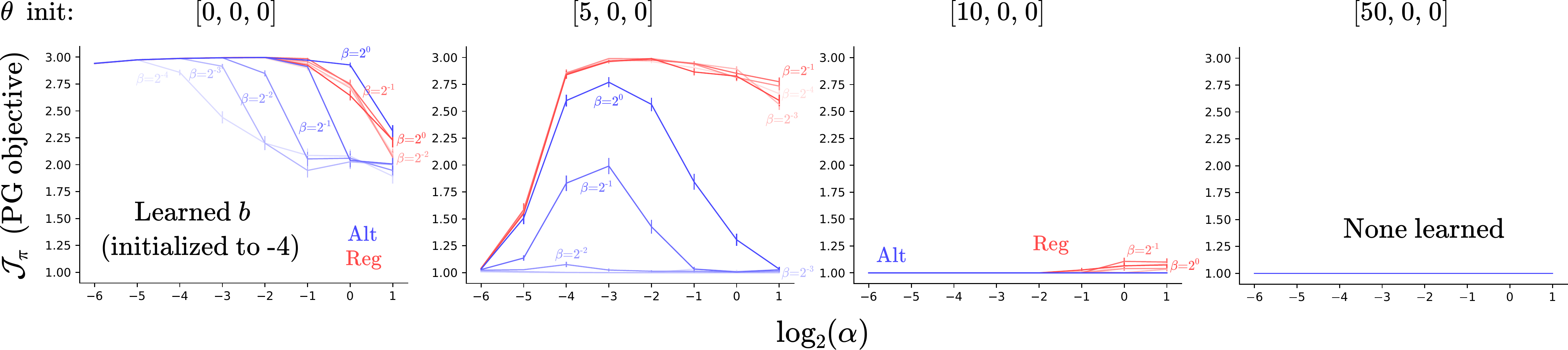}
        \caption{Parameter sensitivity plots for the different estimators for a \textbf{pessimistically} initialized reward baseline ($b = -4$) on the three armed bandit problem $\rvec = [1\;2\;3]^\top$ and $\epsilon \sim \mathcal{N}(0, 1)$.} 
        \label{fig: bad_init_pessimistic_param_study}
      \end{figure}

      \subsubsection*{Fixed Baseline} \label{sec: fixed_baseline_gradient_bandits}
      Now we study what happens when we fix the baseline and not learn it, thereby maintaining the biasedness of the alternate estimator. Figure \ref{fig: bias_alternate_param_study} shows the parameter sensitivity plots for this setting for multiple policy initializations. From the results, we see that the regular estimator was not affected by a fixed baseline (since it remains unbiased with any baseline): it was able to learn a good, but not the optimal, policy for a uniform policy initialization. In contrast, the alternate estimator was impacted heavily: for an optimistic baseline, it converged to a value of about 2.4 and for a pessimistic baseline it converged to a value of zero. The values to which the alternate estimator converged  remained the same irrespective of the policy initialization. This hints at the fact that the update signal offered by the alternate estimator, due to bias in the critic, is quite strong and can override the high saturation in softmax policies.
      
      \begin{figure}[!hbp]
        \centering
        \includegraphics[scale=0.45]{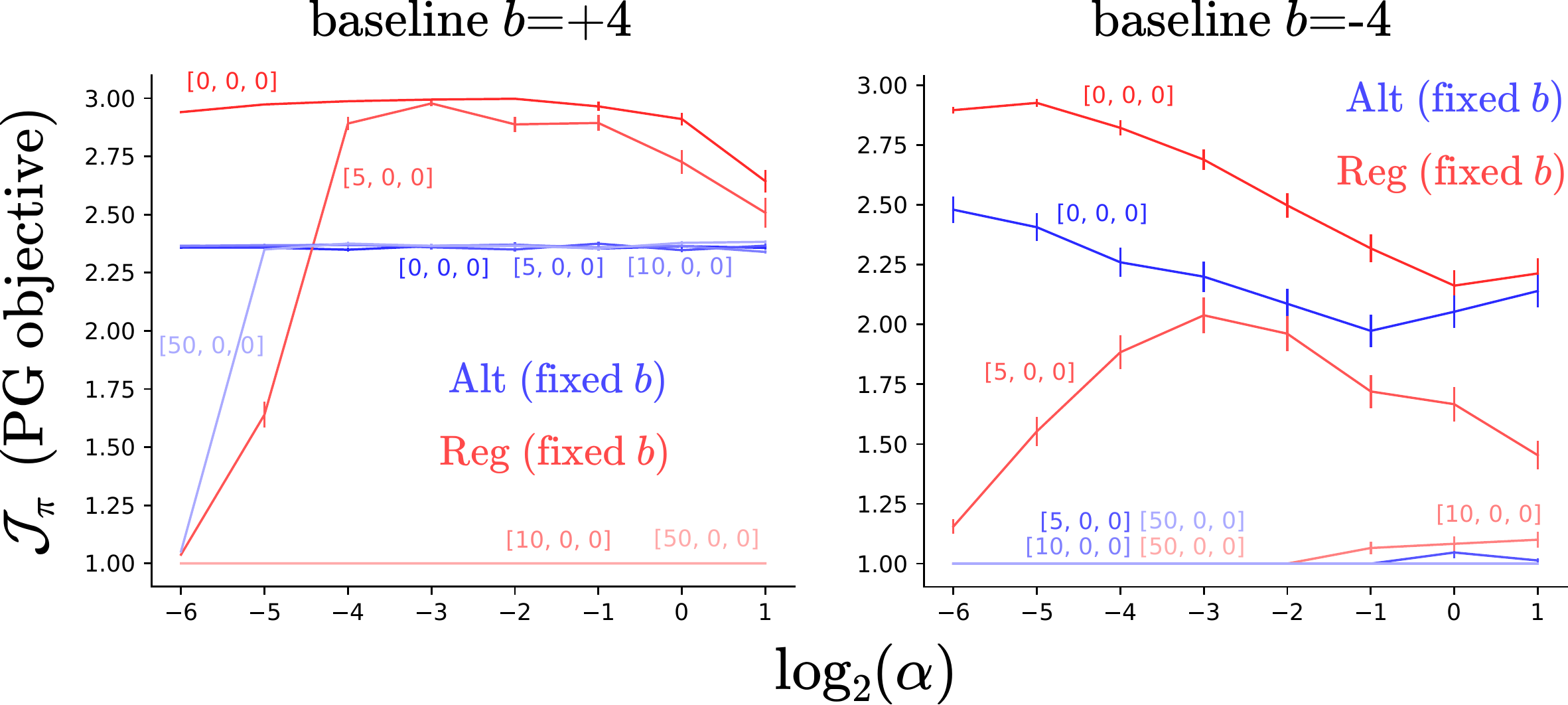}
        \caption{Bias in the alternate PG estimator. Parameter sensitivity of the PG estimators for a fixed baseline on bandit problem $\rvec = [1\;2\;3]^\top$ with $\epsilon \sim \mathcal{N}(0, 1)$. The baseline in these experiments was fixed and not updated, and therefore, the alternate estimator remained biased throughout the learning. The individual labels on the curves show the action preference initializations.} 
        \label{fig: bias_alternate_param_study}
      \end{figure}

      This experiment, additionally, serves as a verification of Lemma \ref{thm: fixed_pt_biased_pg}. Recall from the lemma that for a fixed baseline, the fixed point of the expected alternate update is given by $\pi^*(a) = \frac{1}{r(a) - b} \left( \sum_c \frac{1}{r(c) - b} \right)^{-1}$. Evaluating this expression for this experiment gives $\pivec^* = [0.18\; 0.27\; 0.55]^\top$. Using $\rvec = [1\;2\;3]^\top$, we can calculate $\mathcal{J}_{\pi^*} = {\pivec^*}^\top \rvec = 2.36$. And unsurprisingly, this value matches the value shown in Figure \ref{fig: bias_alternate_param_study} (left) to which the alternate estimator converged under various policy initializations. Further, as we had predicted from the update plots shown in Figure \ref{fig: intuition_bandits_reward123_noise0_opti_pessi} (bottom-right), the alternate estimator with a pessimistic baseline converged to the nearest corner. For policies saturated on action $a_0$, this corner corresponded to action $a_0$ as well and thus the agents obtained a reward of 1 for all the policy saturations considered.
      
      \subsection{Can the Regular Estimator be Modified to Deal with Saturated Policies?}
      In this section we consider whether certain modifications to the regular estimator can make it effective in dealing with saturated softmax policies.

      \subsubsection*{Dropping the Reward Baseline}
      From our previous experiments, we know that the higher variance of the alternate estimator helps it in escaping the corners of the probability simplex. So we investigate whether a similar increase in variance obtained by removing the reward baseline from the regular estimator will help it in escaping the corners as well. Figure \ref{fig: multiple_init_regular_no_baseline.pdf} shows the sensitivity of the final mean performance (during the last 50 timesteps of a 1000 timestep run) to the stepsize $\alpha$ for the regular estimator without any baseline on the bandit task $\rvec = [1\;2\;3]^\top$ with $\epsilon \sim \mathcal{N}(0, 1)$. The results show that the regular estimator without a baseline cannot be a good competitor to the alternate estimator for escaping the saturated policy regions. Upon careful thought, this should not be surprising: the regular estimator, with or without the baseline, has zero variance at simplex corners (see Proposition \ref{thm: variance_saturated_pg} and Figure \ref{fig: intuition_bandits_reward001_noise1}). Consequently, there is neither any gradient signal nor any noise near saturated regions and the policy gets stuck. This is unlike the alternate estimator which not just has a higher variance, but a variance structure that is better suited to escaping the saturated policy regions.    
      \begin{figure}[!hbp]
        \centering
        \includegraphics[scale=0.5]{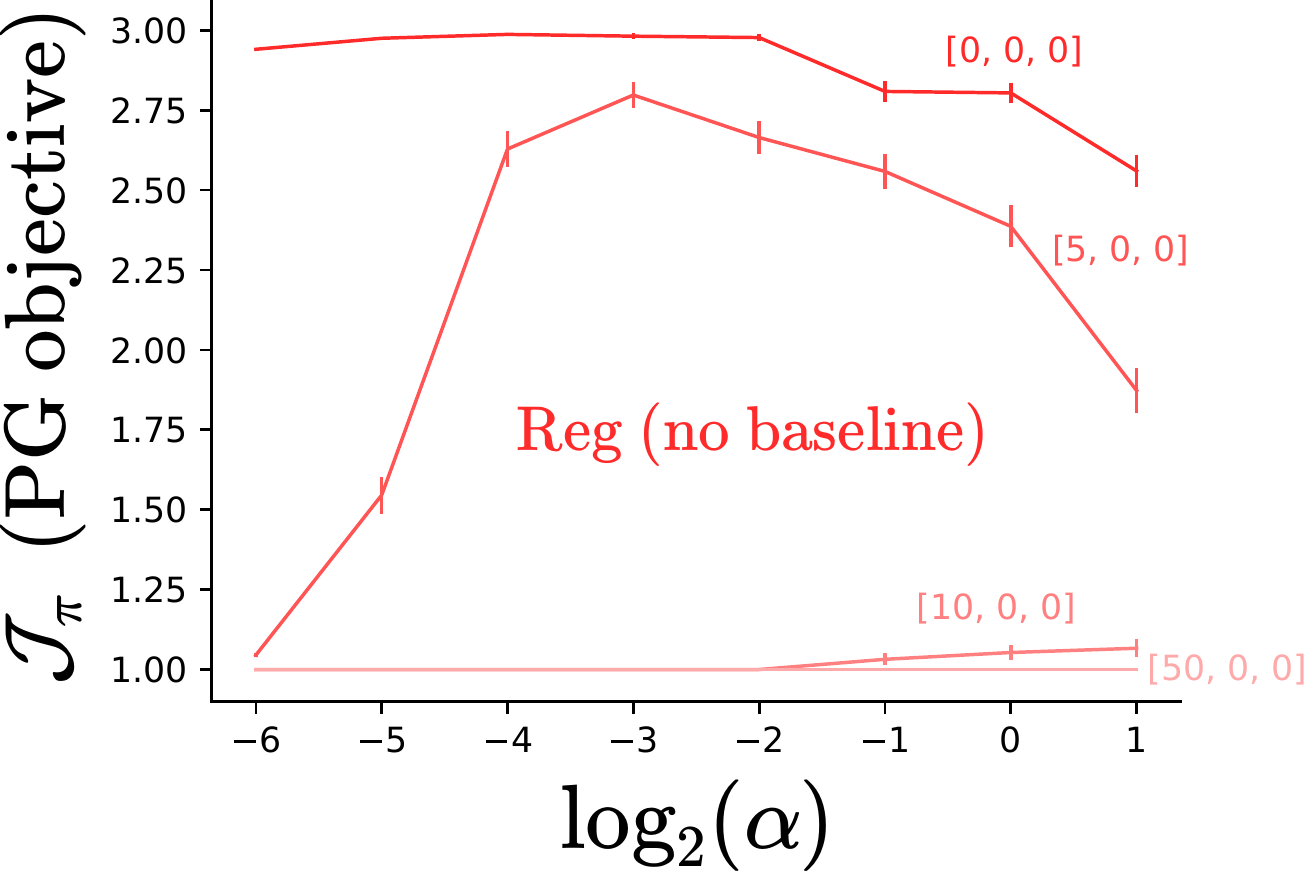}
        \caption{Parameter sensitivity plots for the regular estimator without a reward baseline. The individual curve labels show the action preference initializations.} 
        \label{fig: multiple_init_regular_no_baseline.pdf}
      \end{figure}

      \subsubsection*{Adding Normally Distributed Noise to the Policy Updates} \label{sec: gradient_noise_bandits}
      The reward noise utilized by the alternate estimator ultimately gets translated into noisy updates for the action preferences, whereas this translation doesn't go through very well for the regular estimator, especially at the corners (refer Example \ref{eg: saturated_pg}). Therefore, in this experiment, we study the effect of adding an artificial noise to the gradient updates for both the estimators. In particular, we trained the different estimators on the bandit problem $\rvec = [1\;2\;3]^\top$ with $\epsilon \sim \mathcal{N}(0, 1)$. And to make the updates noisy, we added a normally distributed noise to the gradient updates, i.e. for each action $a$, the action preferences were updated as follows: $\theta_a^{\text{new}} = \theta_a^{\text{old}} + \alpha (g_a + \xi)$, where $\xi \sim \mathcal{N}(0, \texttt{grad-noise})$; we also swept over the the parameter $\texttt{grad-noise} \in \{0, 0.1, 0.5, 1, 2\}$. We experimented with a policy saturated to $\thetavec = [10\;0\;0]^\top$ at the beginning of training; and additionally a uniform policy to gauge whether this additional update noise had a detrimental effect on the estimators' performance, since with a uniform policy there is already a strong gradient signal and the agent does not require any extraneous noise for moving towards the optimal policy. 
      
      Figure \ref{fig: competitor_true_rpi_regular_noisy} shows the sensitivity of the final performance for the expected gradient, and the regular and the alternate estimators using the true $r_\pi$. Figure \ref{fig: competitor_regular_noisy} shows similar results for the regular and alternate estimators with a learned baseline, initialized to zero. The results from this experiment suggest that adding a normal noise to the gradient bandit update works well for escaping the saturated policy regions in this bandit setting for all the five estimators; in particular, the regular estimator can escape the saturation with a high enough noise. At first, this observation might seem to weaken the case for the alternate estimator. However, note that adding gradient noise also worsens the performance of all the estimators for the uniform policy case, which is clearly an undesirable property. Now recall that the alternate estimator benefits much more from an optimistic baseline than it does from the reward noise. In contrast to the noise, an optimistic baseline actually offers an clear signal that moves the agent out of saturation and towards a more uniform policy distribution. Then, the obvious follow-up question to ask is whether adding gradient noise to the regular estimator makes it more effective than the alternate estimator with an optimistically initialized baseline. Figure \ref{fig: optimistic_noisy.pdf}, which shows the sensitivity plots for the regular and the alternate estimator with both an optimistic baseline and added gradient noise, answers this question in a negative. Comparing the results from Figure \ref{fig: optimistic_noisy.pdf} with those from Figure \ref{fig: competitor_regular_noisy} shows that the alternate estimator benefits from an optimistic baseline much more than the regular estimator benefits from added gradient noise. More importantly, even though adding gradient noise helped with this particular bandit instance, as we show in \S \ref{sec: grad_noise_mdp}, adding gradient noise doesn't help the regular estimator overcome saturated policies in MDPs.
      
      \begin{figure}[h]
        \centering
        \includegraphics[scale=0.6]{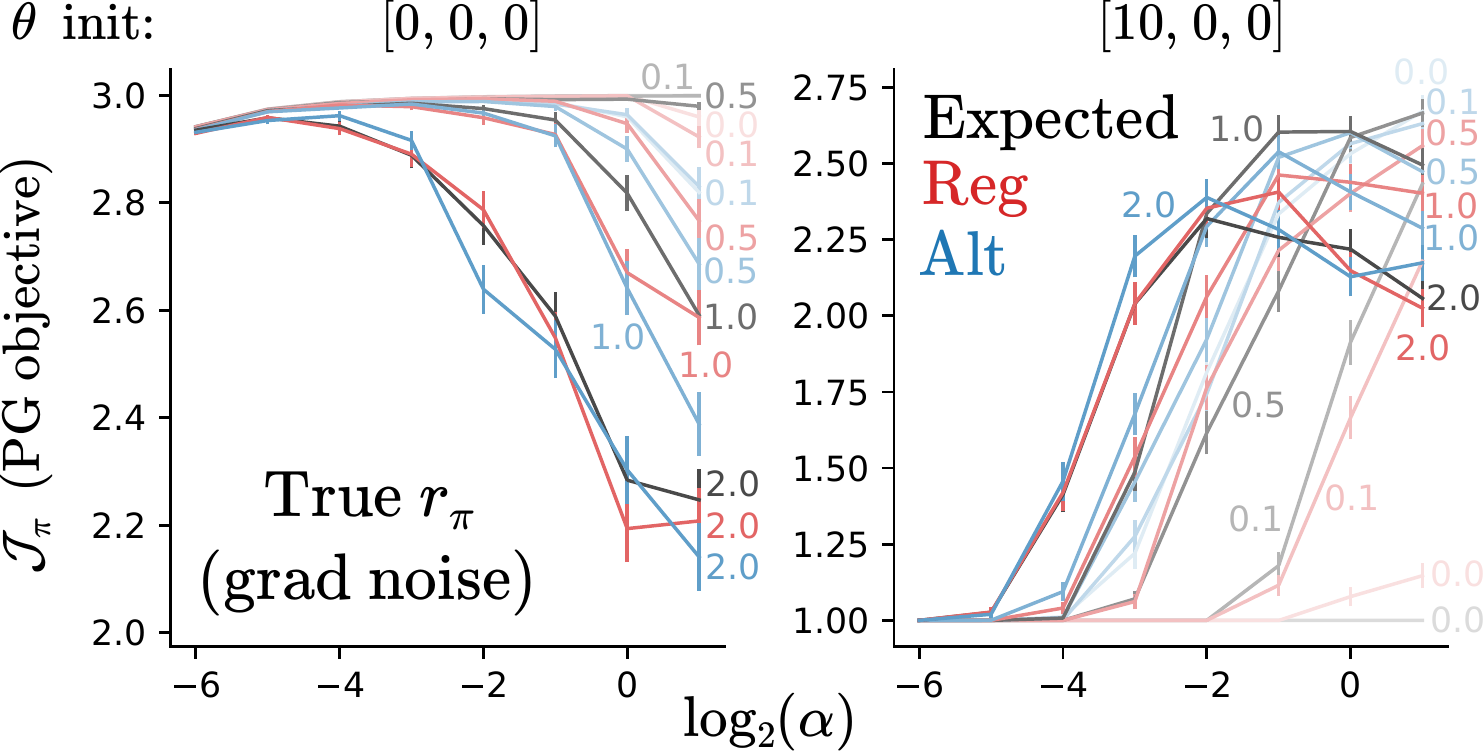}
        \caption{Sensitivity plots for the gradient bandit estimators with true $r_\pi$ and an additional normally distributed gradient noise. The individual graph labels $(0, 0.1, 0.5, 1, 2$) show the standard deviation of the normal noise added to the gradient. The subplot headers show the policy saturation at the beginning of the experiment.}
        \label{fig: competitor_true_rpi_regular_noisy}
      \end{figure}
      
      \begin{figure}[h]
        \centering
        \includegraphics[scale=0.4]{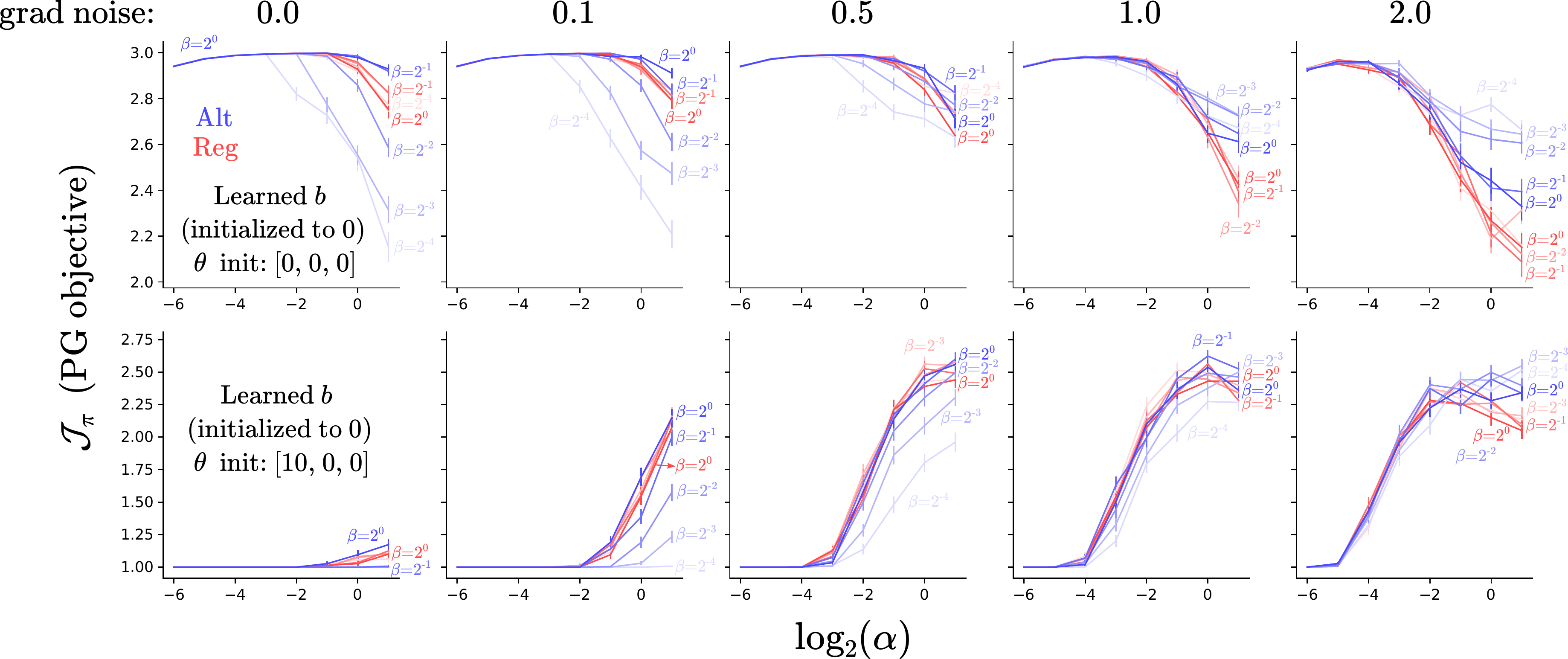}
        \caption{Sensitivity plots for the regular and the alternate estimators with added gradient noise and a learned baseline initialized to zero. The figure header shows the standard deviation of the Gaussian gradient noise. Top and bottom rows correspond to uniform and saturated policy initializations respectively. Interestingly, for the uniform policy case and a high standard deviation of the noise, the alternate estimator is more robust to the gradient noise than the regular estimator.}
        \label{fig: competitor_regular_noisy}
      \end{figure}
      
      \begin{figure}[h]
        \centering
        \includegraphics[scale=0.4]{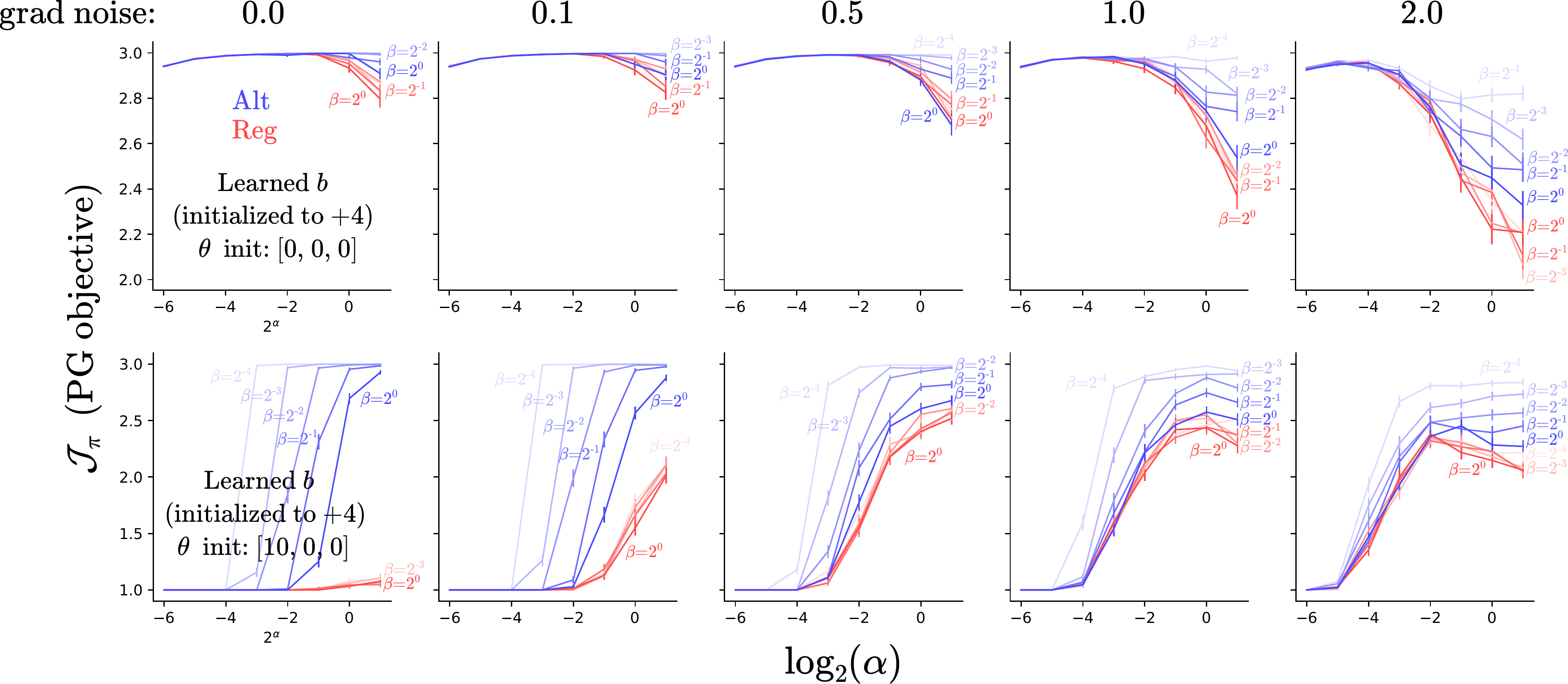}
        \caption{Sensitivity plots for the estimators with gradient noise and an optimistic baseline.} 
        \label{fig: optimistic_noisy.pdf}
      \end{figure}

      \section{REINFORCE with Tabular Representation: Full Experiments} \label{app: experiments_tabular}
      In this section, we demonstrate that the alternate estimator for MDPs enjoys similar benefits as the alternate estimator for bandits. We begin by giving the experimental details, and present additional experiments that complement those presented in \S \ref{sec: tabular_experiments}

      \subsection{Experimental Details}
      We use the chain environment\footnote{Implementation available here: \url{https://github.com/svmgrg/rl_environments/tree/main/LinearChain}.}, shown in Figure \ref{fig: chain}, which is an episodic MDP where the expected rewards are zero everywhere expect at the rightmost transition. We train five different agents using REINFORCE (exact pseudocode given in Algorithm \ref{alg: reinforce}). All the agents maintain a tabular policy and, in some cases additionally, a tabular value function estimate. The policy is learned either using the expected PG, or using the REINFORCE algorithm with either the regular estimator (true $v_\pi$ or a learned baseline) or the alternate estimator (again with true $v_\pi$ or a learned baseline). The critic is estimated using Monte-Carlo sampling.

      \begin{figure}[!hbp]
        \centering
        \includegraphics[scale=0.8]{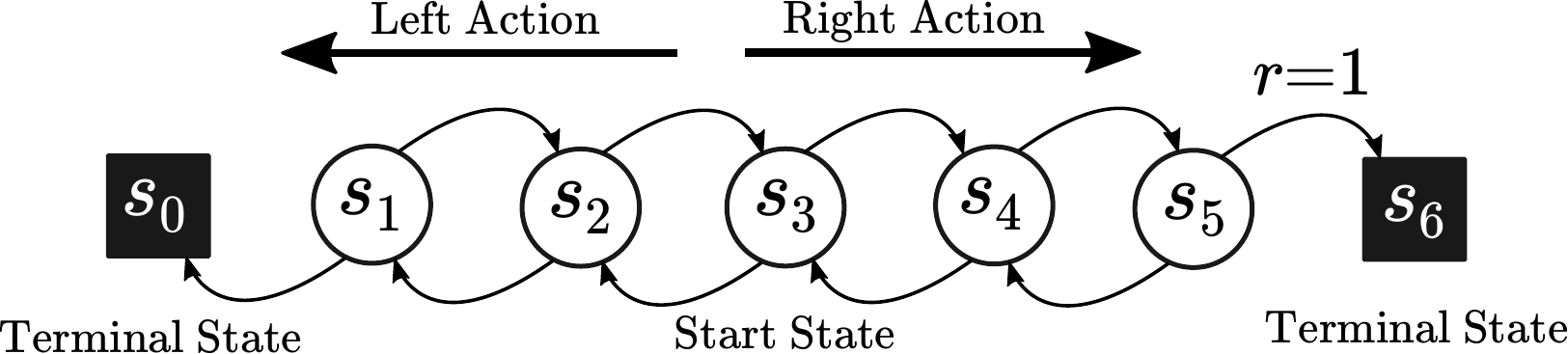}
        \caption{Chain Environment. At the beginning of each episode, the agent is reset to the state $s_3$. At each timestep, it can take the action \texttt{left} or \texttt{right}, which deterministically moves it into the corresponding next state. The episode ends when the agent reaches either of the two terminal states. The expected rewards are zero at all transitions except the one into the right most terminal state. Further, a reward noise $\epsilon \sim \mathcal{N}(0, 1)$ is added to the expected reward at each timestep.}
        \label{fig: chain}
      \end{figure}
      
      \begin{algorithm}[!hbt]
        \caption{REINFORCE with Baseline}
        \label{alg: reinforce}
        \begin{algorithmic}
          \State Input a policy $\pi_{\wvec}(a|s)$ and a value function estimate $\hat{v}_{\omegavec}(s)$ 
          \State Input the policy stepsize $\alpha$ and the value function stepsize $\beta$
          \State Initialize $\wvec$ and ${\omegavec}$
          \For{total number of episodes}
          \State Sample an episode $(S_0, A_0, R_1, \ldots, S_{T-1}, A_{T-1}, R_T, S_T) \sim \mathbb{P}_\pi$
          \State \texttt{\textcolor{purple}{\# Calculate the PG estimate}}
          \State $\hat{\gvec}^\text{{REINF}} = \sum_{t=0}^{T-1} \gamma^t (G_t - \hat{v}_{\omegavec}(S_t)) \nabla_{\wvec} \log \pi_{\wvec}(A_t|S_t)$
          \State \texttt{\textcolor{purple}{\# Update the policy}}
          \State $\wvec \leftarrow \wvec + \alpha \cdot \hat{\gvec}^\text{{REINF}}$
          \For{each step of the episode $t = 0, 1, \ldots, T-1$}
          \State Calculate the return $G_t = R_{t+1} + \gamma R_{t+2} + \cdots + \gamma^{T-t-1} R_T$
          \State \texttt{\textcolor{purple}{\# Update the value function estimate}}
          \State ${\omegavec} \leftarrow {\omegavec} + \beta \cdot \Big( G_t - \hat{v}_{\omegavec}(S_t) \Big) \nabla_{\omegavec} \hat{v}_{\omegavec}(S_t)$
          \EndFor
          \EndFor
        \end{algorithmic}
      \end{algorithm}

      All the methods were run for 100 episodes. The learning curves show the mean performance and standard error, and the sensitivity plots show the mean performance over the last 10 episodes. The results were averaged over 150 independent runs. For getting a uniform policy initialization, we set $\theta_{\texttt{left}} = \theta_{\texttt{right}} = 0$ for all the five states. A saturated policy was obtained by initializing $\theta_{\texttt{right}} = 0$ and $\theta_{\texttt{left}} \in \{1, 2, 3\}$ depending on the degree of saturation, again for all the five states. The agents either used the true value function $v_\pi$ (computed analytically) or learned a value function estimate $b$ which was initialized to either zero, $+4$, or $-4$ for each state of the MDP. We swept over the policy stepsize $\alpha \in \{2^{-6}, 2^{-5}, \ldots, 2^1\}$ and the value function baseline stepsize $\beta \in \{2^{-4}, 2^{-3}, \ldots, 2^1\}$. We set the discount factor $\gamma = 0.9$. To reduce the runtime of the experiments, if the agent was unable to solve the task after 100 timesteps, we terminated the episode\footnote{It might seem unsatisfying to artificially end an episode after 100 timesteps. However, not doing so resulted in some of the experiments running for about a million steps, which dramatically increased the runtime of the experiments. To put this number in perspective, an optimal policy can solve the chain environment in three steps. These million timestep long episodes occurred when the policy learned to go right with a high probability in one state and learned to go left in the very next state, effectively creating an unending loop between these two states.

        We justify the 100 timestep cutoff by noting that the contribution of states after a hundred steps becomes negligibly small. This can be seen by focusing on the term $\gamma^t$ in the expression for the REINFORCE estimator $\hat{g}^{\text{REINF}}$ (Algorithm \ref{alg: reinforce}): for our chosen discount factor, $\gamma^{100} = 0.9^{100} = 2.7 \times 10^{-5}$. We also note that in our experiments, we learned the value function using simple Monte-Carlo estimates and, contrary to the recommendation made by Pardo et al. (2018), did not bootstrap at the state where the episode was artificially timed out. We again justify this by the fact that $\gamma^{100} v_\pi(s_{\texttt{timeout}})$ is negligibly small compared to the returns observed in our experiments (see the scale of the $y$-axis in Figure \ref{fig: multiple_bad_init_learning_curve}) and is thus safe to ignore. Yet another argument in favor of keeping a timeout in our experiments is that, unlike timeouts with continuing MDPs where they are guaranteed to occur, in the chain environment the occurrence of the episodes being timed out decreases as the agent's policy improves.

        Also note that we are using a discount factor $\gamma \neq 1$ for an episodic problem despite the fact that episodic problems do not necessarily require discounting. This choice was primarily made to justify timing out the episode after 100 steps. However, even if we ignore the timeout issue, keeping a discount factor around for episodic MDPs can be useful. It caps the maximum return at the fixed value of $\frac{r_{\text{max}}}{1 - \gamma}$, and helps us avoid situations such as the infinite loop between two states in the supposedly episodic chain MDP. \label{footnote: timeout_rambling}}.
      
      \subsection{Performance of PG Estimators against Increasingly Saturated Policy Initializations} \label{sec: reinforce_chain_different_policy_saturations_exp}       
      
      We begin by showcasing that the alternate PG estimator is competitive with the regular estimator in case of uniform policy initialization and superior to it in case of sub-optimally saturated policy initialization. These results are essentially identical to those from the bandit experiments (\S \ref{sec: bandits_alternate_estimator_noise_and_no_noise_first_exp}). Figure \ref{fig: multiple_bad_init_learning_curve} shows the learning curves for the best performing parameter configuration (chosen from the sensitivity plots), and Figure \ref{fig: multiple_bad_init_sensitivity} shows the stepsize sensitivity corresponding to the final performance for each parameter setting. Observe that irrespective of the policy saturation, the alternate estimator with baseline is vastly superior to all the methods (except expected PG), and that the alternate estimator with true $v_\pi$ is a little better than the regular estimators. We attribute the superior performance of alternate estimator with baseline to the bias of the estimator combined with utilizing the noise in the returns, which probably allows it to have better exploration. From the sensitivity plots, we further see that as the initial policy is saturated more towards the left direction (in the chain environment), performance of all the methods worsens.

      \begin{figure}[t]
        \centering
        \includegraphics[scale=0.32]{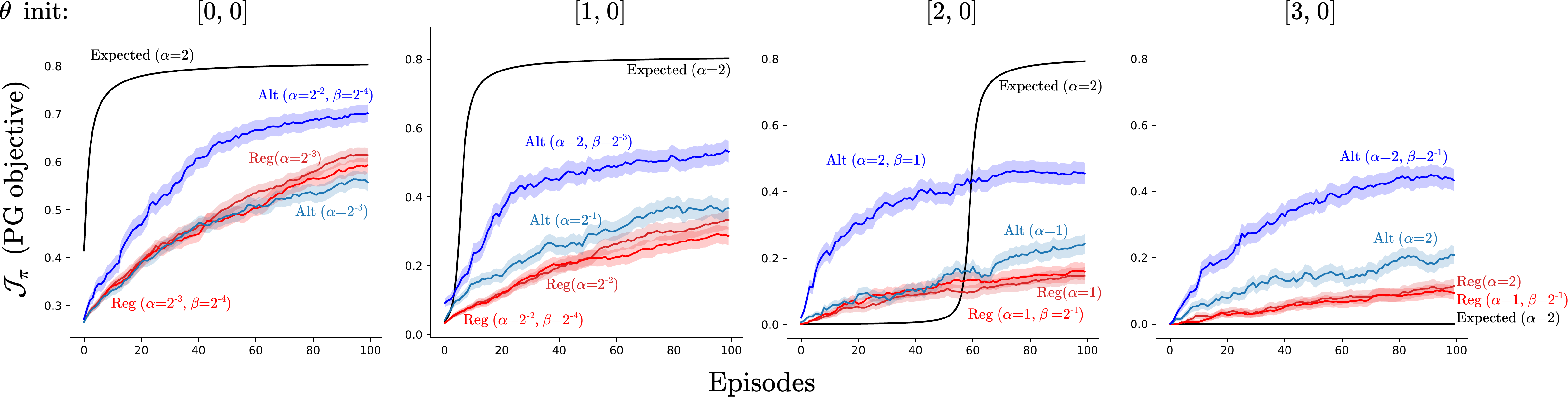}
        \caption{Learning curves for the REINFORCE agent with different estimators against increasingly saturated policy initializations on the five state chain environment. The figure header shows the action preference initialization for $\theta_{\texttt{left}}$ and $\theta_{\texttt{right}}$ respectively. The estimators which don't mention the  baseline stepsize $\beta$ used the true value function $v_\pi$, whereas the others learned a baseline using Monte-Carlo sampling. The baseline was initialized to zero for all the states.} 
        \label{fig: multiple_bad_init_learning_curve}
      \end{figure}

      \begin{figure}[t]
        \centering
        \includegraphics[scale=0.32]{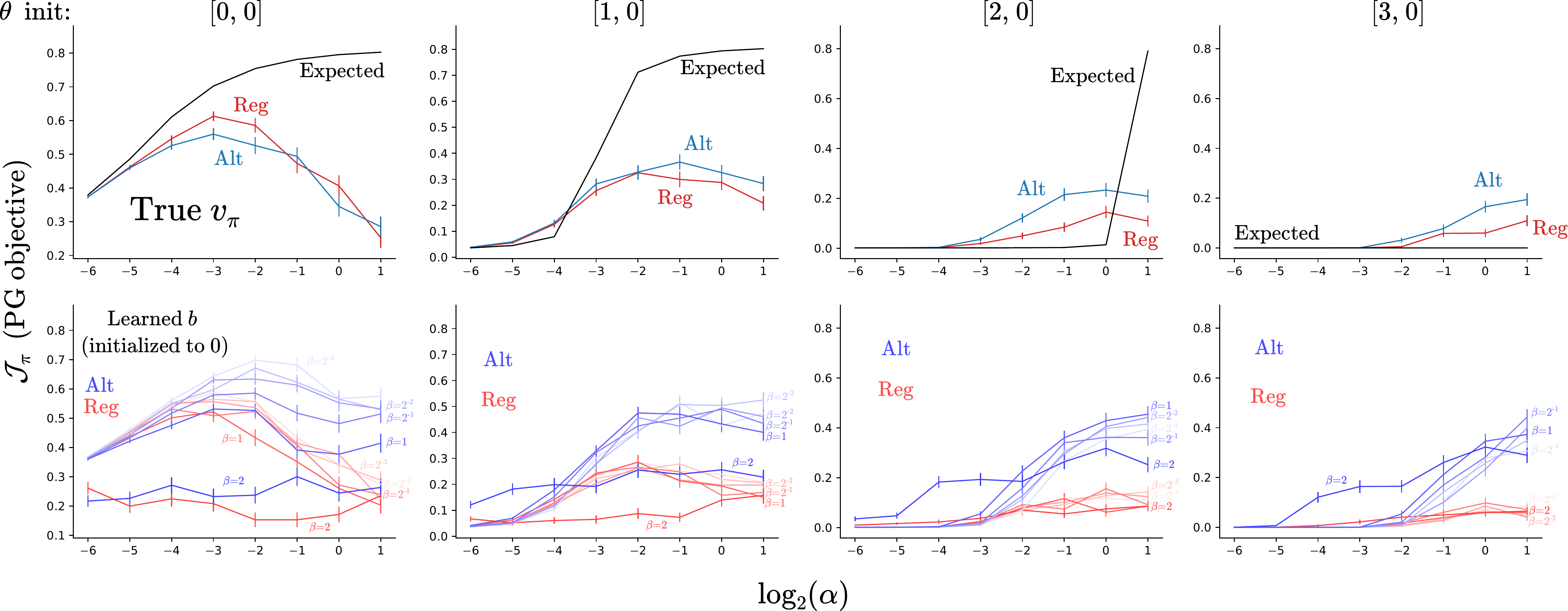}
        \caption{Sensitivity plots for different PG Estimators, showing the mean final performance during the last 10 episodes, against increasingly saturated policy initializations on the chain environment. The different columns correspond to the different degrees of policy saturations and the figure headers show the action preference initializations for the \texttt{left} and \texttt{right} actions respectively.} 
        \label{fig: multiple_bad_init_sensitivity}
      \end{figure}
      
      \subsubsection*{PG Estimators without Reward Noise}
      We now study the performance of different estimators when we remove the reward noise from the chain MDP. Figure \ref{fig: multiple_bad_init_no_noise_learning_curve} shows the learning curves and Figure \ref{fig: multiple_bad_init_no_noise_sensitivity} shows the parameter sensitivity plots for different estimators using REINFORCE on the chain MDP with reward noise $\epsilon = 0$. The results are somewhat similar to the bandit setting: in absence of reward noise, the performance of the alternate estimator drops with saturated policy initialization. However, there is one important difference from the bandit setting. For MDPs, the alternate estimator finally cares about the noise in the return estimate which depends on both the reward noise and the sampling noise. Therefore, even without reward noise, in MDPs the alternate estimator can utilize the noise in estimating the returns from Monte-Carlo sampling. However, for the chain MDP, the transitions were deterministic and therefore the sampling noise was also quite less.
      \begin{figure}[t]
        \centering
        \includegraphics[scale=0.32]{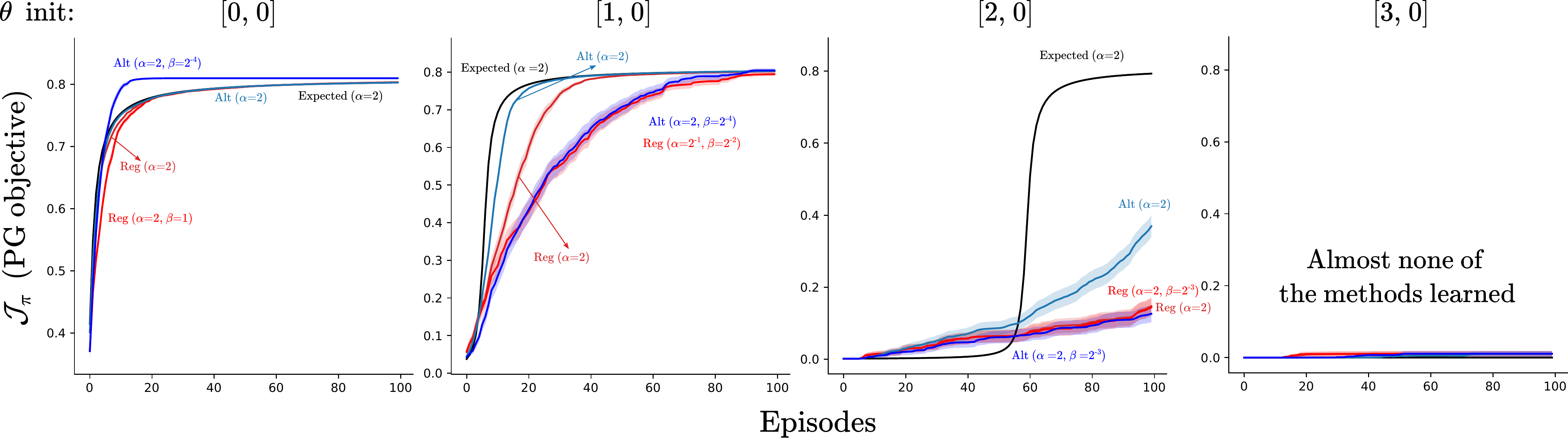}
        \caption{Learning curves for PG estimators without reward noise. The setting is same as Figure \ref{fig: multiple_bad_init_learning_curve}, except that the reward noise is set to zero. We see that, for saturated policies, the alternate estimator performs worse as compared to its performance in Figure \ref{fig: multiple_bad_init_learning_curve} where it had reward noise, and \textbf{curiously} alternate with $v_\pi$ performs better than alternate with a learned baseline.} 
        \label{fig: multiple_bad_init_no_noise_learning_curve}
      \end{figure}

      \begin{figure}[!hbp]
        \centering
        \includegraphics[scale=0.32]{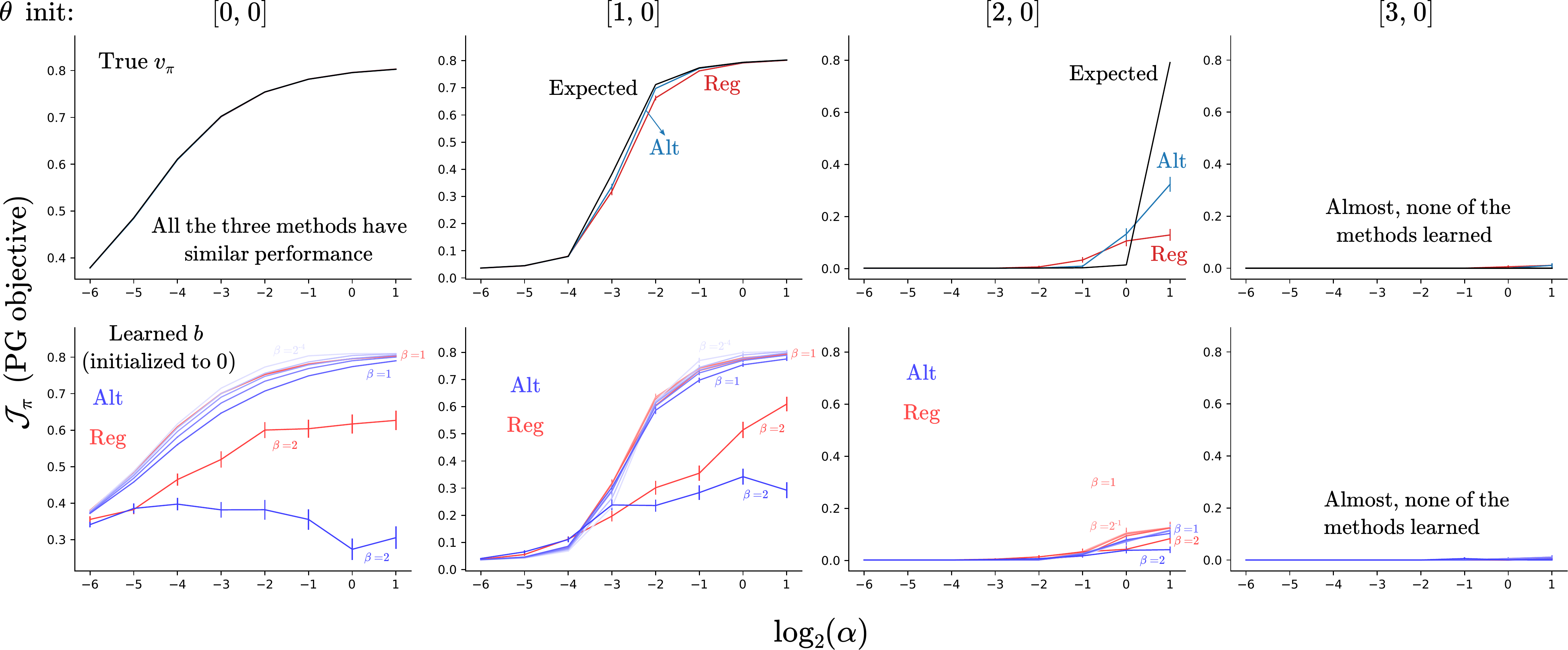}
        \caption{Sensitivity plots for PG estimators without reward noise. Setting is same as in Figure \ref{fig: multiple_bad_init_sensitivity} except that there is no reward noise here.} 
        \label{fig: multiple_bad_init_no_noise_sensitivity}
      \end{figure}

      \subsection{Optimistic and Pessimistic Value Function Estimates} \label{sec: opti_pessi_pg_experiments}  
      In this experiment, we study the performance of the alternate estimator when the value function baseline is initialized optimistically or pessimistically and then learned using Monte-Carlo sampling. Figures \ref{fig: optimistic_linear_chain_sensitivity} and \ref{fig: pessimistic_linear_chain_sensitivity} show the parameter sensitivity plots for agents with optimistically ($b = +4$) or pessimistically ($b = -4$) initialized baselines respectively, against different degrees of policy saturations. The results, which are yet again similar to the bandit setting, show that having an optimistic baseline significantly helped the alternate estimator; in particular, compare the performance of the alternate estimator on saturated policies with ($b = +4$, as shown in Figure \ref{fig: optimistic_linear_chain_sensitivity}) and without ($b = 0$, as shown in Figure \ref{fig: multiple_bad_init_sensitivity}) optimism. Whereas, a pessimistic baseline hurt its performance. Moreover, the alternate estimator with the optimistic baseline prefers smaller critic stepsizes (allowing it to enjoy the optimism for longer), and vice-versa for the pessimistic baseline. 

      \begin{figure}[!tbp]
        \centering
        \includegraphics[scale=0.32]{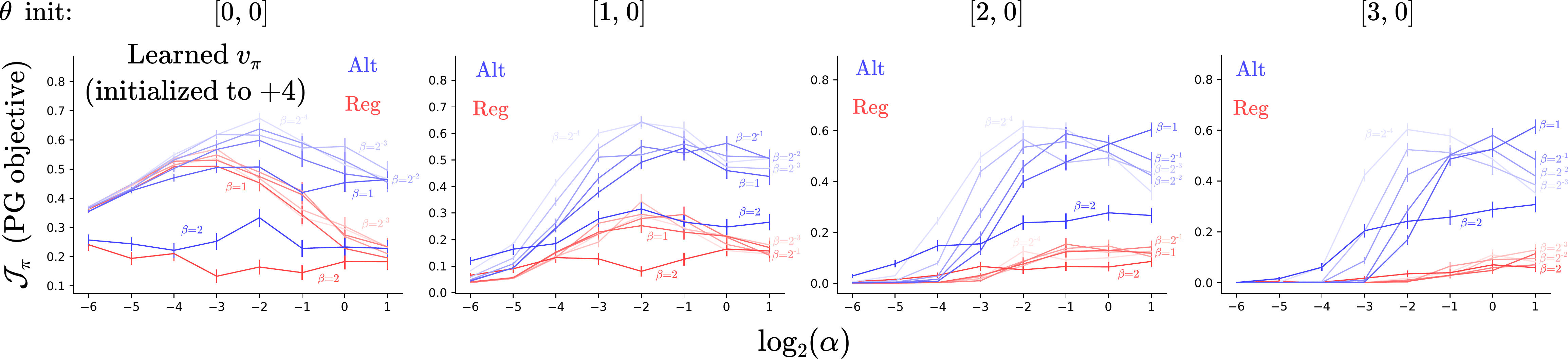}
        \caption{Sensitivity plots for an optimistically initialized value function estimate. The setting is similar to that in Figure \ref{fig: multiple_bad_init_sensitivity}, except that the baseline is initialized to $+4$.} 
        \label{fig: optimistic_linear_chain_sensitivity}
      \end{figure}

      \begin{figure}[!tbp]
        \centering
        \includegraphics[scale=0.32]{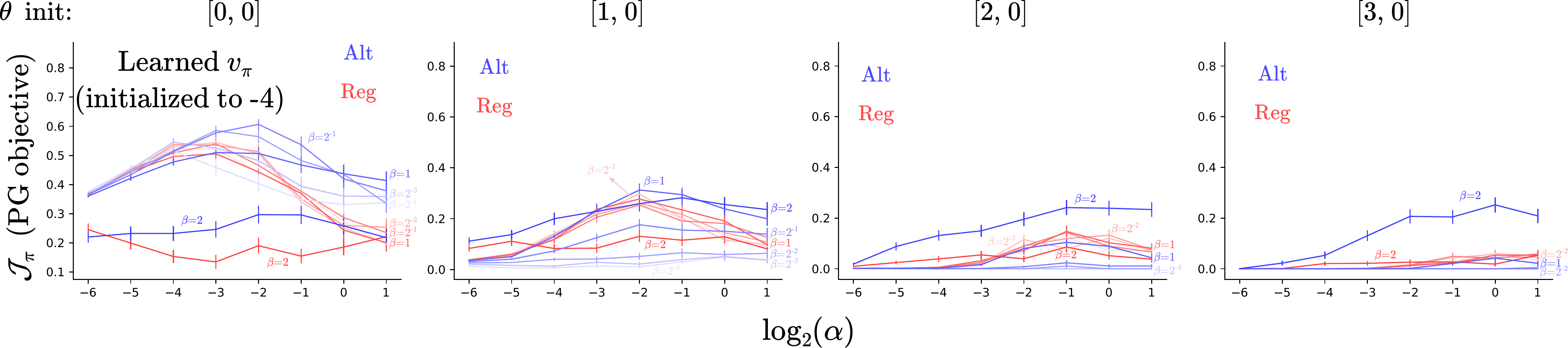}
        \caption{Sensitivity plots for a pessimistically initialized value function estimate. The setting is again similar to that in Figure \ref{fig: multiple_bad_init_sensitivity} except that the baseline is initialized to $-4$.} 
        \label{fig: pessimistic_linear_chain_sensitivity}
      \end{figure}
      
      \subsubsection*{Fixed Value Function Estimate}
      In this experiment, we fixed the value function estimate for both the estimators; which made the alternate, but not the regular, estimator biased. Figure \ref{fig: biased_pg_sensitivity} shows the sensitivity plots for the two estimators. This experiment is the MDP analog of the biased gradient bandit experiment (see Figure \ref{fig: bias_alternate_param_study}) and suggests that Theorem \ref{thm: fixed_pt_biased_pg_attract_repulse} might also hold for MDPs: since, we can show that the policy (for the optimistic case) converges to the fixed point $\pi^*(a) = \frac{1}{r(a) - b} \left( \sum_c \frac{1}{r(c) - b} \right)^{-1}$ given by Lemma \ref{thm: fixed_pt_biased_pg}. For MDPs, we need to compute this fixed point separately for each state: $\pi(\texttt{left} | s) = \pi(\texttt{right} | s) = 0.5$ for $s \in \{s_1, s_2, s_3, s_4\}$, and $\pi(\texttt{left} | s_5) = 0.43$ and $\pi(\texttt{right} | s_5) = 0.57$. Then, we analytically calculate $\mathcal{J}_{\pi^*} = v_{\pi^*}(s_3) = 0.28$ which approximately matches the value shown in Figure \ref{fig: biased_pg_sensitivity} (left) where the policy for the optimistic fixed baseline actually converged to; therefore, our prediction matches with the actual results. For a pessimistic baseline, the policy converged to the nearest corner for each state (in this case action \texttt{left}) and obtained zero return.
      
      \begin{figure}[!tbp]
        \centering
        \includegraphics[scale=0.6]{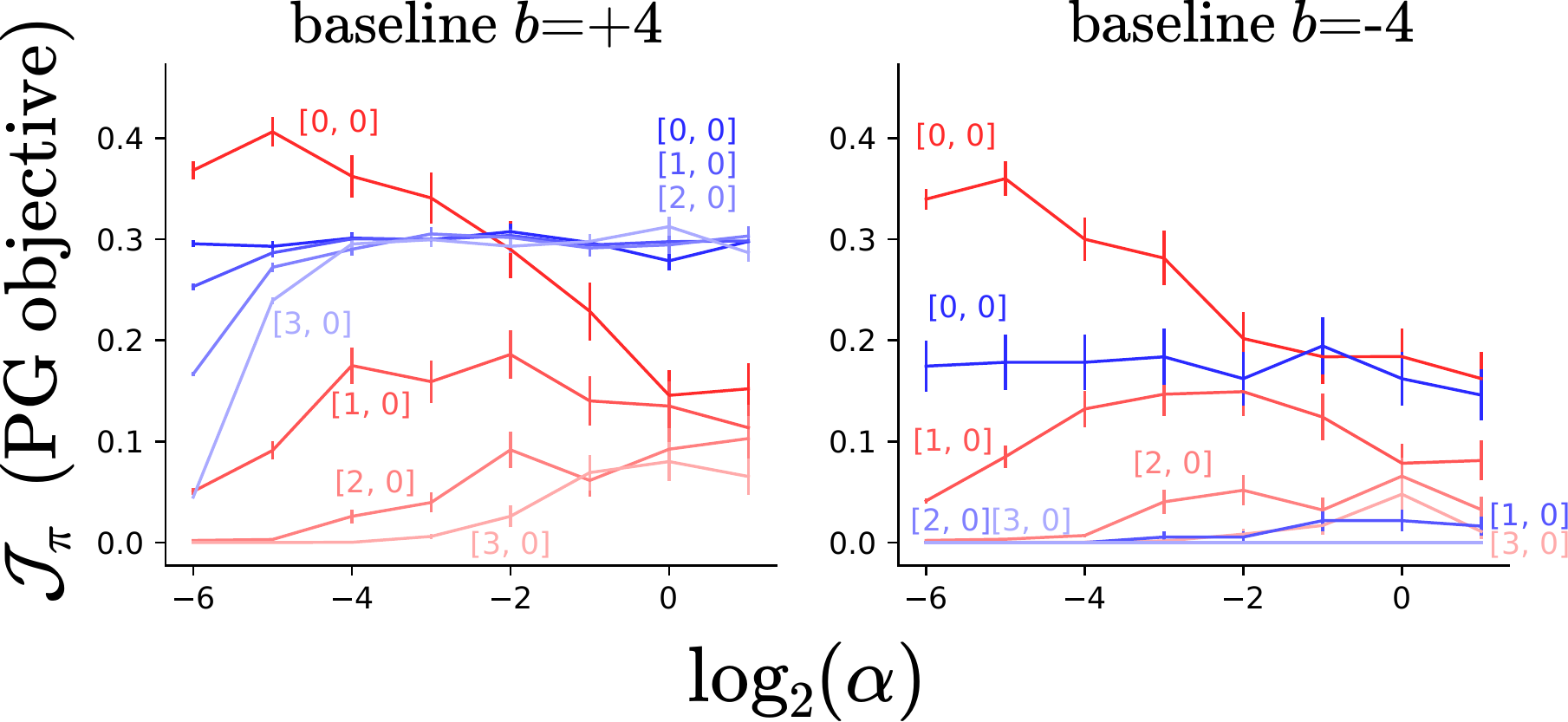}
        \caption{Bias in the alternate PG estimator for MDPs. Sensitivity plots for the REINFORCE agents using regular or alternate estimator with fixed baselines on the chain environment. The value function estimates were fixed to $+4$ (for left subplot) and $-4$ (for right subplot) and were not subsequently updated. The labels for the individual curves show the action preference initializations $[\theta_{\texttt{left}}, \theta_{\texttt{right}}]$ for all the five states.}
        \label{fig: biased_pg_sensitivity}
      \end{figure}

      \subsection{Effect of adding Gradient Noise to the Estimators} \label{sec: grad_noise_mdp}
      
      This experiment studies the efficacy of randomly perturbing the PG estimates, with a normally distributed noise while updating the action preferences, in helping the agent escape saturated policy regions. The preferences, for a given state $s$ and action $a$, were updated using $\theta_a^{\text{new}}(s) = \theta_a^{\text{old}}(s) + \alpha \big[ g_a(s) + \xi \big]$ with $\xi \sim \mathcal{N}(0, \texttt{grad-noise})$. We swept over the noise magnitude $\texttt{grad-noise} \in \{0, 0.1, 0.5, 1, 2\}$. Both the estimators learned a value function estimate which was initialized to zero\footnote{We do not present the results for expected PG and the regular and alternate REINFORCE estimators using the true $v_\pi$, since they were qualitatively similar to the results for the learned baseline case presented here.}. Figure \ref{fig: competitor_linear_chain_sensitivity} shows the sensitivity of the final performance for the two estimators. 

      These results show that adding gradient noise doesn't significantly help the regular estimator in dealing with saturated policies for MDPs: the performance of the regular estimator with gradient noise was only slightly better than its performance without gradient noise (cf. Figure \ref{fig: multiple_bad_init_sensitivity}). More importantly, the alternate estimator had a superior final performance compared to the regular estimator for both the uniform and the saturated policy initialization cases. This observation is in contrast to what we saw in the bandit case (\S \ref{sec: gradient_noise_bandits}), where the gradient noise seemed to help more. We explain this result by the fact that random noise merely increases the variance of the gradient estimate without providing any effective signal. And while this added variance helped the regular estimator in the simpler bandit problem, it is apparently not sufficient for the MDPs. The alternate estimator on the other hand relies on both the noise from the sampled return and also a much stronger signal from the bias in the critic estimate to escape the saturated policy regions.
      
      \begin{figure}[!tbp]
        \centering
        \includegraphics[scale=0.32]{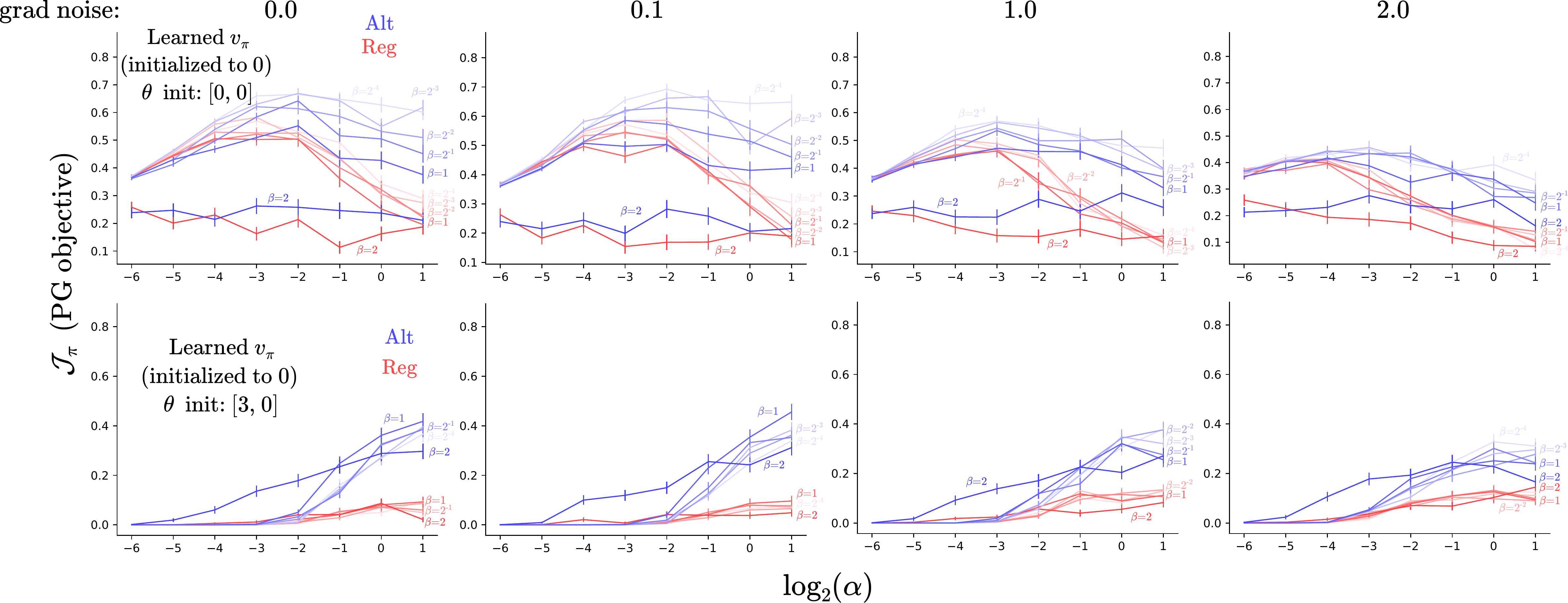}
        \caption{Adding gradient noise to the REINFORCE estimators. The top row shows the sensitivity plots for a uniform policy and the bottom row shows the sensitivity plots for a policy saturated as $\theta_{\text{left}} = 3$ and $\theta_{\text{right}} = 0$. Different columns correspond to different gradient noise magnitudes.} 
        \label{fig: competitor_linear_chain_sensitivity}
      \end{figure}       

      \subsection{A Uniform Policy Initialization can be Sub-optimal too!} \label{app: uniform_policy_bad}
      In all the previous experiments, we found that an optimistically initialized baseline helped the performance of the alternate estimator (by making the policy more uniform). In this section, we consider an MDP, where a uniform policy is bad for exploration, and therefore an optimistic baseline by making the agent's policy more uniform hurts its performance. Figure \ref{fig: chain_messed} shows this MDP, which is a modified form\footnote{This MDP was inspired from the ``Vanishing gradient example'' given in  Chapter 12 of Agarwal et al. (2019).} of the five state chain environment: instead of having just two actions, this modified chain has four actions for each state, three of which take the agent towards the left direction and the fourth towards the right direction.
      
      \begin{figure}[!hbp]
        \centering
        \includegraphics[scale=0.8]{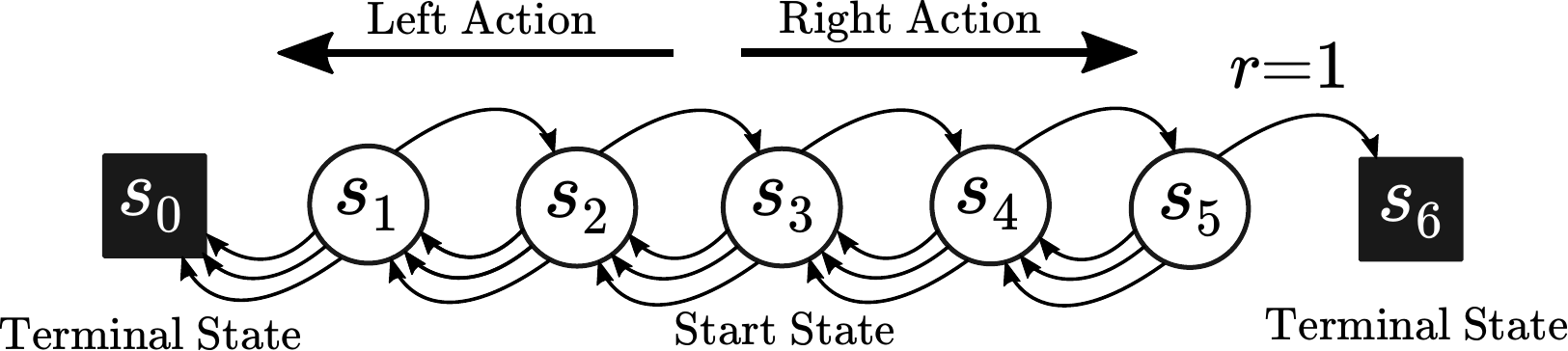}
        \caption{Linear chain environment where a uniform policy will explore poorly.} 
        \label{fig: chain_messed}
      \end{figure}

      Figure \ref{fig: chain_messed_sensitivity} shows the sensitivity plots for the REINFORCE agents with the regular and alternate estimators for four different settings: using the true $v_\pi$, and a learned baseline initialized to either of $0$, $+4$, or $-4$. The policy for all these settings was initialized uniformly, i.e. $\theta = 0$ for all states and action. The results show that in each case, the alternate estimator had a poorer performance as compared to the regular estimator. In particular, even an optimistically initialized baseline could not improve the alternate estimator's performance. We reconcile this fact by noting that in this problem the uniform random policy will actually lead to a sub-optimal exploration and an optimistic baseline will hamper the performance of the alternate estimator.
      
      \begin{figure}[!tbp]
        \centering
        \includegraphics[scale=0.32]{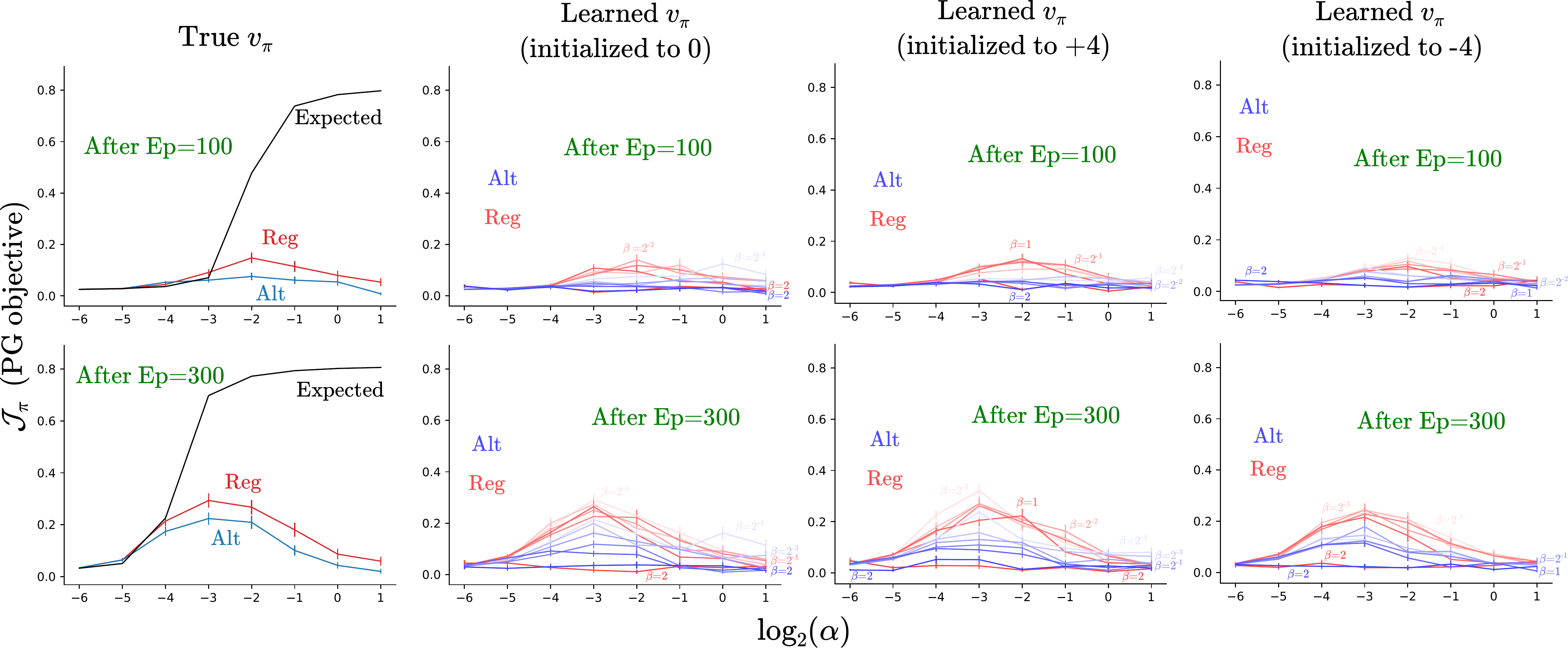}
        \caption{Sensitivity plots for different methods, initialized with a uniform policy, on the difficult chain environment (described in Figure \ref{fig: chain_messed}). Top row shows the mean final performance  (averaged over the last 10 episodes) of the estimators after 100 episodes and the bottom row shows the mean final performance (again averaged over the last 10 episodes) after 300 episodes. Different columns refer to either a true baseline or a learned baseline with different initializations. We see that even after 300 episodes, the algorithms are not able to solve the task (i.e. they are unable to achieve the maximum return of $0.81$). Further, even with an optimistic baseline, the alternate estimator performs quite poorly as compared to the regular estimator.} 
        \label{fig: chain_messed_sensitivity}
      \end{figure}

      \section{Online Actor-Critic with Linear Function Approximation} \label{app: experiments_linear}

      In this section, we demonstrate that the desirable properties of the alternate estimator continue to hold with function approximation. We ran experiments with online actor-critic (see Algorithm \ref{alg: online_ac}) on various MDPs using linear function approximation (with tile-coding) and neural networks. The agent was trained using either the regular or the alternate estimator and a critic learned using TD-style updates; in these experiments, we do not have access to the underlying dynamics, and therefore we did not conduct experiments with the true value function $v_\pi$. We considered two different types of experiments: one with artificially introduced policy saturation, and another with non-stationary MDPs which naturally give rise to sub-optimally saturated policies. Experiments where we explicitly saturate the agent's policy at the beginning of learning are helpful in understanding the behavior of the estimators by providing a controlled environment. The non-stationary MDPs serve as a proof-of-concept that sub-optimally saturated policies are not merely pathological constructs, but can occur in practical problems that a user might care about.

      \begin{algorithm}[!hbt]
        \caption{Online Actor-Critic}
        \label{alg: online_ac}
        \begin{algorithmic}
          \State Input a policy $\pi_{\wvec}(a|s)$ and a value function estimate $\hat{v}_{\omegavec}(s)$ 
          \State Input the policy stepsize $\alpha$ and the value function stepsize $\beta$
          \State Initialize $\wvec$ and ${\omegavec}$
          \For{each episode (in total number of episodes)}
          \State Sample $S \sim \mu$
          \State Initialize the geometric discounting term $I_\gamma \leftarrow 1$
          \While{the state $S$ is not terminal, i.e. $S \neq s_\times$}
          \State Sample $A \sim \pi_{\wvec}(\cdot|S)$ and $S', R \sim p(\cdot, \cdot | S, A)$
          \State $\delta \leftarrow R + \gamma \hat{v}_{\omegavec}(S') - \hat{v}_{\omegavec}(S)$ \hfill (if $S'$ is terminal, then $\hat{v}_{\omegavec}(S') = \hat{v}_{\omegavec}(s_\times) := 0$)
          \State \texttt{\textcolor{purple}{\# Calculate the gradient estimator}}
          \State $\hat{\gvec}^{\textrm{OAC}} = I_\gamma \delta \nabla_{\wvec} \log \pi_{\wvec}(A|S)$
          \State \texttt{\textcolor{purple}{\# Update the policy}}
          \State $\wvec \leftarrow \wvec + \alpha \cdot \hat{\gvec}^{\textrm{OAC}}$
          \State $I_\gamma \leftarrow \gamma I_\gamma$
          \State \texttt{\textcolor{purple}{\# Update the value function estimate}}
          \State ${\omegavec} \leftarrow {\omegavec} + \beta \cdot \delta \nabla_{\omegavec} \hat{v}_{\omegavec}(S)$
          \State $S \leftarrow S'$
          \EndWhile
          \EndFor
        \end{algorithmic}
      \end{algorithm}

      \subsection{Experimental Details for Linear Function Approximation}
      For experiments with linear function approximation, we used the environments\footnote{We used the environment implementations available here: \url{https://github.com/andnp/PyRlEnvs/tree/main/PyRlEnvs/domains}. In this repository, MountainCar is referred to as GymMountainCar.} Acrobot (Sutton, 1996) and MountainCar (\S 4.3, Moore, 1990; Singh and Sutton, 1996), which are depicted in Figure \ref{fig: environments}. Both of these environments are episodic in nature with a continuous state space and a discrete action space consisting of three actions: \texttt{left}, \texttt{do-nothing}, and \texttt{right}. In Acrobot, the goal is to control an under-actuated double pendulum such that its end effector reaches a certain height, by applying torque on the second joint. In MountainCar, the goal is to move an under-powered car from the bottom to the top of a hill. For both the environments, we fixed the discount factor $\gamma = 1$ and timed out the episode after 1000 timesteps if the agent was unable to solve it by then\footnote{For these experiments we bootstrap at the final state of the episode, i.e. if the episode times out, we bootstrap using $\hat{v}_\pi(s_{\text{timeout}})$, and if the episode actually ends, we use $v(s_\times) = 0$ for bootstrapping. This justifies the use of timeouts since (1) this process follows the recommendation made by Pardo et al. (2020), (2) this is an episodic task and as the agent's performance improves, the episodes will be timed out less, and (3) from our experiments, we observed that 1000 timesteps gave the agent enough time to encounter a true terminal state. Encountering true terminal states is important when learning with TD-style updates, since only at terminal state do we bootstrap from a true value (of zero); for all the other states, we bootstrap from an estimated value. Note that this argument is different from the numerical precision argument given in Footnote \ref{footnote: timeout_rambling} of \S \ref{app: experiments_tabular}. Also note that with $\gamma = 1$ and a timeout of 1000, for the agents that are unable to solve the task within 1000 timesteps, the value function estimates will diverge to negative infinity; for instance, see the value of $\hat{v}_\pi(s_0)$ against time for the regular estimator in Figure \ref{fig: linear_realmc_learning_curve}.}.
      
      \begin{figure}[!tbp]
        \centering
        \includegraphics[scale=0.4]{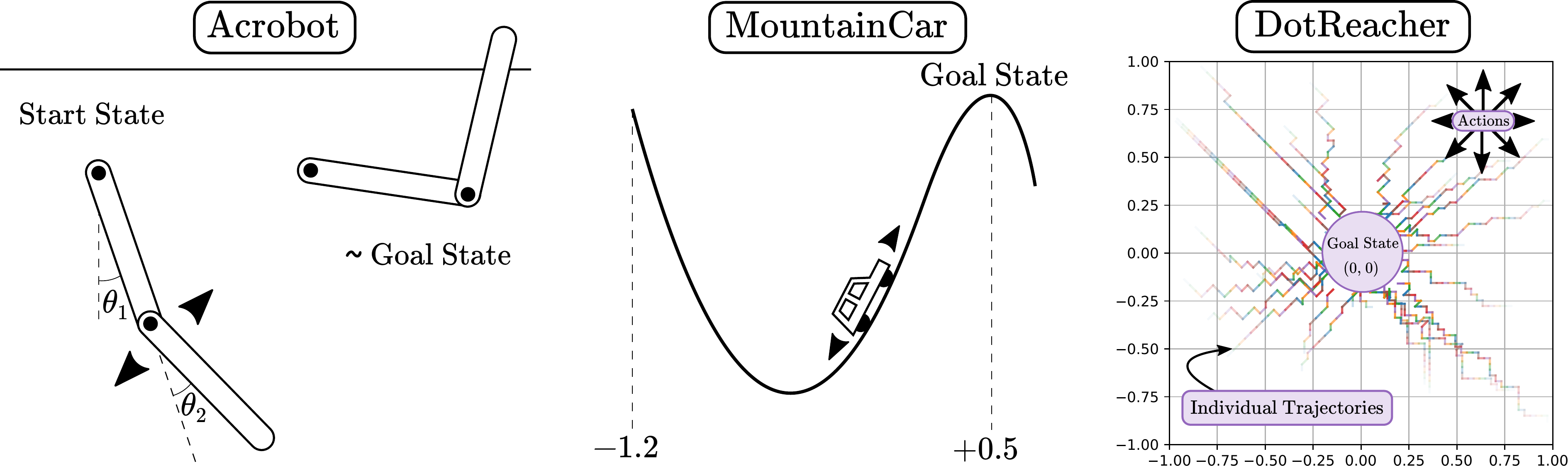}
        \caption{Schematic diagrams of Acrobot, MountainCar, and the DotReacher environments. For the DotReacher environment, we also plotted multiple trajectories for a particular instance of online actor-critic that learned to solve this task.} 
        \label{fig: environments}
      \end{figure}
      
      We performed sweeps over the policy stepsize $\alpha \in \{2^{-13}, 2^{-11}, \ldots, 2^5\}$ and the critic stepsize $\beta \in \{0.01, 0.05, 0.1, 0.5, 1, 2\}$. The state for Acrobot is $[\cos(\theta_1)\;\sin(\theta_1)\;\cos(\theta_2)\;\sin(\theta_1)\;\dot{\theta}_1\;\dot{\theta}_2]$, with $\dot{\theta}_1 \in [-4 \pi, +4 \pi]$ and $\dot{\theta}_2 \in [-9 \pi, +9 \pi]$. The state for MountainCar is $[\texttt{position}, \texttt{velocity}]$ (of the car), with $\texttt{position} \in [-1.2, +0.5]$ and $\texttt{velocity} \in [-0.07, +0.07]$. Both the policy (for details, see \S \ref{app: additional_details_online_ac}) and the value function used by the agent were linear in the state features. And the state features were constructed using tile-coding (\S 9.5.4, Sutton and Barto, 2018). In particular, we used four tiles across each dimension of the state space, and eight different tilings with randomly chosen offsets. We also added an ``always on'' bias feature and normalized the tile-coding features (by dividing them with nine) to have a unit sum.

      \subsection{Non-stationary MountainCar and Acrobot} \label{sec: nonstationary_tasks_alt_wins}
      This experiment investigates the performance of the online AC algorithm, with the regular and the alternate estimators, on non-stationary Acrobot and MountainCar. To induce non-stationarity in either task, we switched the \texttt{left} and \texttt{right} actions after half-time. The policy was initialized uniformly by setting all the action preferences to zero. The critic weights were also initialized to zero. Each agent was trained for $400k$ timesteps for Acrobot, or $200k$ timesteps for MountainCar.

      Figure \ref{fig: linear_nonstationary_combined} (left) shows the learning curves for the two estimators. For each estimator, we selected two different sets of stepsize configurations: one that had the best performance right before the non-stationarity hit, and another that had the best performance at the end of the experiment (in the figure, we denote this latter set by putting a \texttt{final} in the subscript of the corresponding estimators). For MountainCar, the alternate estimator was superior to the regular estimator for both sets of stepsizes. Remarkably, the regular estimator (\textcolor{red}{red curve}), with the best performing parameters at timestep $100k$, was unable to recover from the non-stationarity; whereas the alternate estimator (\textcolor{blue}{blue curve}) was able to recover, despite having a similar performance as the regular estimator at the end of $100k$ timesteps. On Acrobot, the alternate estimator was again superior to the regular estimator, however the difference was less evident.

      \begin{figure}[t]
        \centering
        \includegraphics[scale=0.31]{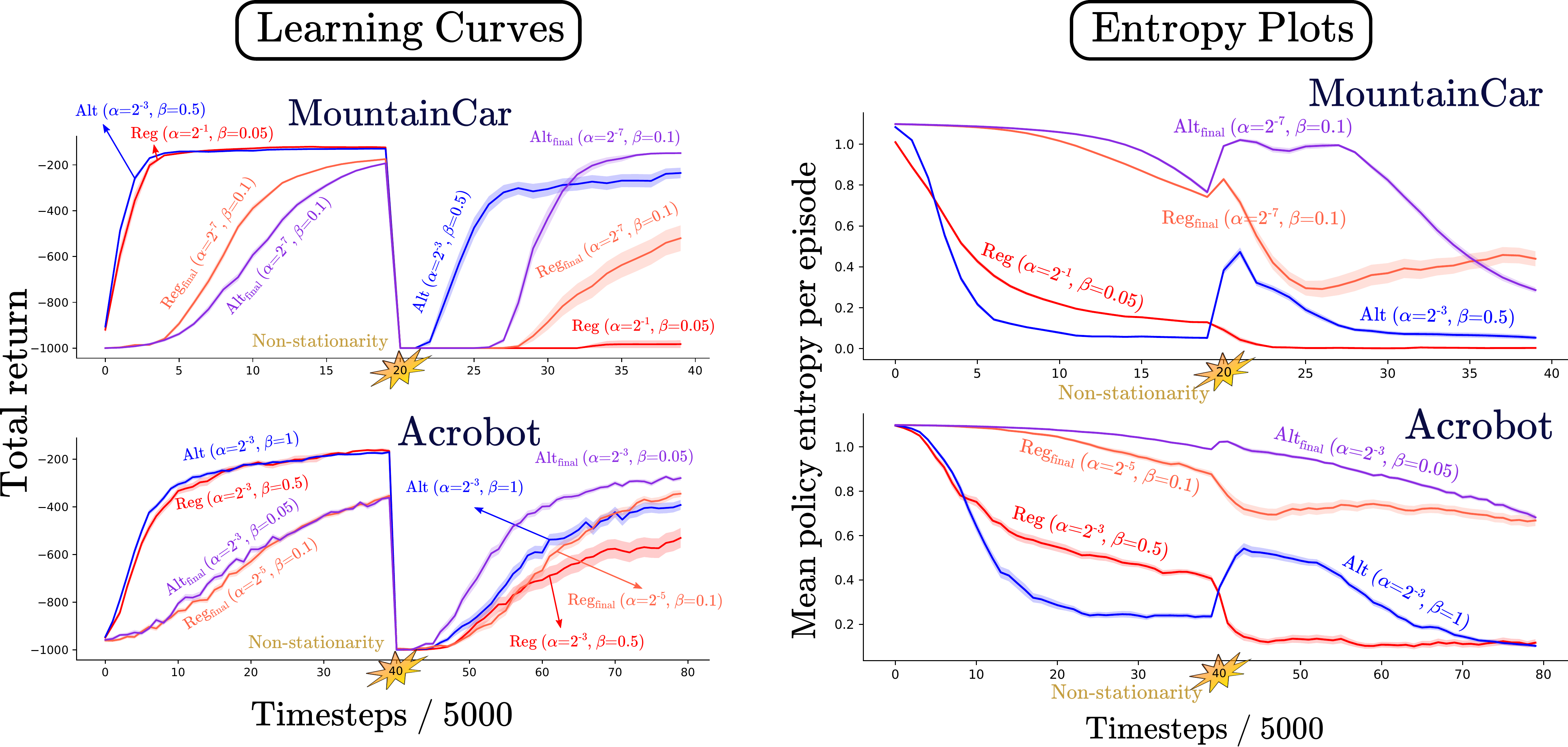}
        \caption{Learning curves and entropy plots for online linear AC with tile-coding on Acrobot and MountainCar. Both the estimators have one learning curve each for the best performing set of stepsizes right before the non-stationarity hit and at the end of learning. The right subplots show the entropy of the policy on the states encountered during each episode. All the curves show the mean values (thick line) and the standard error (shaded region) over 50 independent runs.}
        \label{fig: linear_nonstationary_combined}
      \end{figure}

      We attribute the superior performance of the alternate estimator to the bias in the critic estimate. For instance, consider what would happen in MountainCar for an online AC agent using a well performing stepsize configuration. At $100k$ timesteps, the agent would be able to learn a good policy and therefore the critic would learn to predict a return that is greater than $-200$. Now, when the non-stationarity hits, the agent's policy suddenly becomes sub-optimally saturated, and thus the agent starts receiving returns much lower than $-200$. This in turn means that at this time, the critic estimate has become optimistic. As a result, for the alternate estimator, the critic would encourage exploration by pushing the policy towards a more uniform distribution. Figure \ref{fig: linear_nonstationary_combined} (right) corroborates this point: the policy entropy for the alternate, but not for the regular, estimator has a spike right around the timestep when the non-stationarity was introduced.

      \begin{figure}[h]
        \centering
        \includegraphics[scale=0.31]{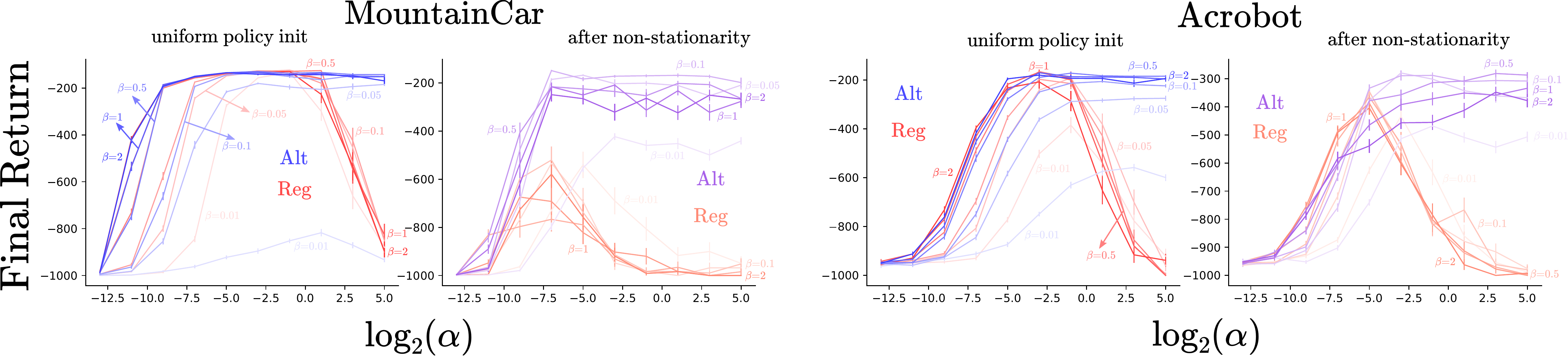}
        \caption{The sensitivity plots for online Actor-Critic with linear function approximation (using tile-coded state features) on MountainCar and Acrobot. The sensitivity plots show the mean performance during the last $5k$ timesteps, averaged over 50 independent runs.}
        \label{fig: linear_nonstationary_sensitivity}
      \end{figure}
      
      Figure \ref{fig: linear_nonstationary_sensitivity} shows the parameter sensitivity plots for these tasks. From the sensitivity plots, we see that the alternate estimator is much less sensitive to the stepsizes and achieves a superior or competitive final performance compared to the regular estimator, both before and after the non-stationarity was introduced. 

      \subsection{Online AC with Entropy Regularization and Escort Transform on Non-stationary Tasks} \label{sec: escort_entropy}
      We now study how the alternate estimator compares to entropy regularization and escort transform, two techniques that are designed to help policy gradient methods deal with policy saturation. In these experiments, we train the agent with the online Actor-Critic algorithm on the non-stationary MountainCar and Acrobot tasks (task description given in the previous section). For entropy regularization, we considered the softmax policy parameterization with either the regular or the alternate PG estimator. For the escort transform, we considered the simple online AC algorithm (without any entropy bonus). 
      
      \begin{figure}[h]
        \centering
        \includegraphics[scale=0.23]{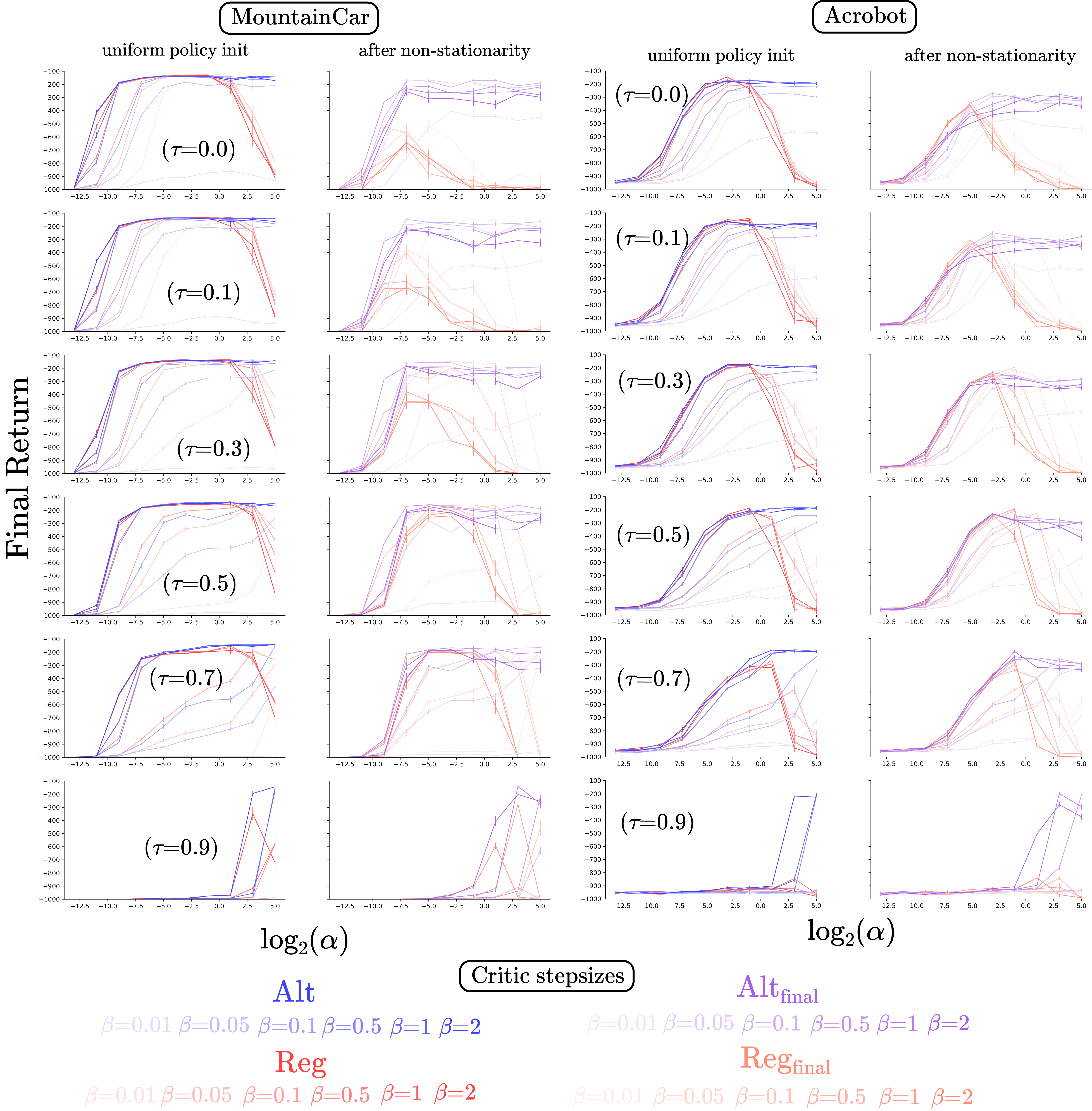}
        \caption{The sensitivity plots for online Actor-Critic with linear function approximation (using tile-coded state features) and entropy regularization on non-stationary MountainCar and Acrobot. The sensitivity plots show the mean performance during the last $5k$ timesteps, averaged over 50 independent runs. The $x$-axis refer to the policy stepsize $\alpha$ and the differently shaded lines correspond to different critic stepsizes $\beta$s. The different rows correspond to different values of the entropy regularization parameter $\tau$.}
        \label{fig: entropy_sensitivity}
      \end{figure}

      Since, both entropy regularization and escort transform have additional hyperparameters: the regularization factor $\tau$ for the first, and the escort power parameter $p$ for the latter, we swept over $\tau \in \{0.0, 0.1, 0.3, 0.5, 0.7, 0.9\}$ and $p \in \{1, 2, 3, 5, 10\}$. We initialized the value function weights to zero for both the experiments, and the policy uniformly randomly. In particular, we set the actor weights so that the initial action preference for softmax policy was $\thetavec^{\text{init}} = [0 \; 0 \; 0]^\top$ and for the escort policy was $\thetavec^{\text{init}} = [1 \; 1 \; 1]^\top$ (since the action preferences for escort transform cannot all be initialized to zero).
      
      Let us now look at the parameter sensitivity plots for this experiment. (We discussed the learning curves for this experiment in Figure \ref{fig: lc_escort_entropy} of the main paper.) Figure \ref{fig: entropy_sensitivity} shows the parameter sensitivity plots for entropy regularization. From this figure, we note that adding an entropy bonus to the alternate softmax estimator improved its performance slightly (especially for Acrobot) for small values of $\tau$. However, for larger values of $\tau$, the performance of the alternate estimator dropped. On the other hand, adding an entropy bonus to the regular estimator improved its sensitivity to the policy stepsize $\alpha$ quite a lot. Entropy regularization also significantly improved the performance of the regular estimator after non-stationarity: regular with entropy had higher final returns at the end of the experiment when compared with regular without entropy. Also, note that the alternate estimator (with or without the entropy regularization) remains superior (or competitive) to the regular estimator with entropy regularization. More importantly, recall from the learning curves (Figure \ref{fig: lc_escort_entropy}) that the entropy bonus was not able to help the best performing regular estimator (\textcolor{red}{red curve}) to recover from the non-stationarity, whereas the alternate estimator (\textcolor{blue}{blue curve}) was able to recover from it.
      
      \begin{figure}[h]
        \centering
        \includegraphics[scale=0.23]{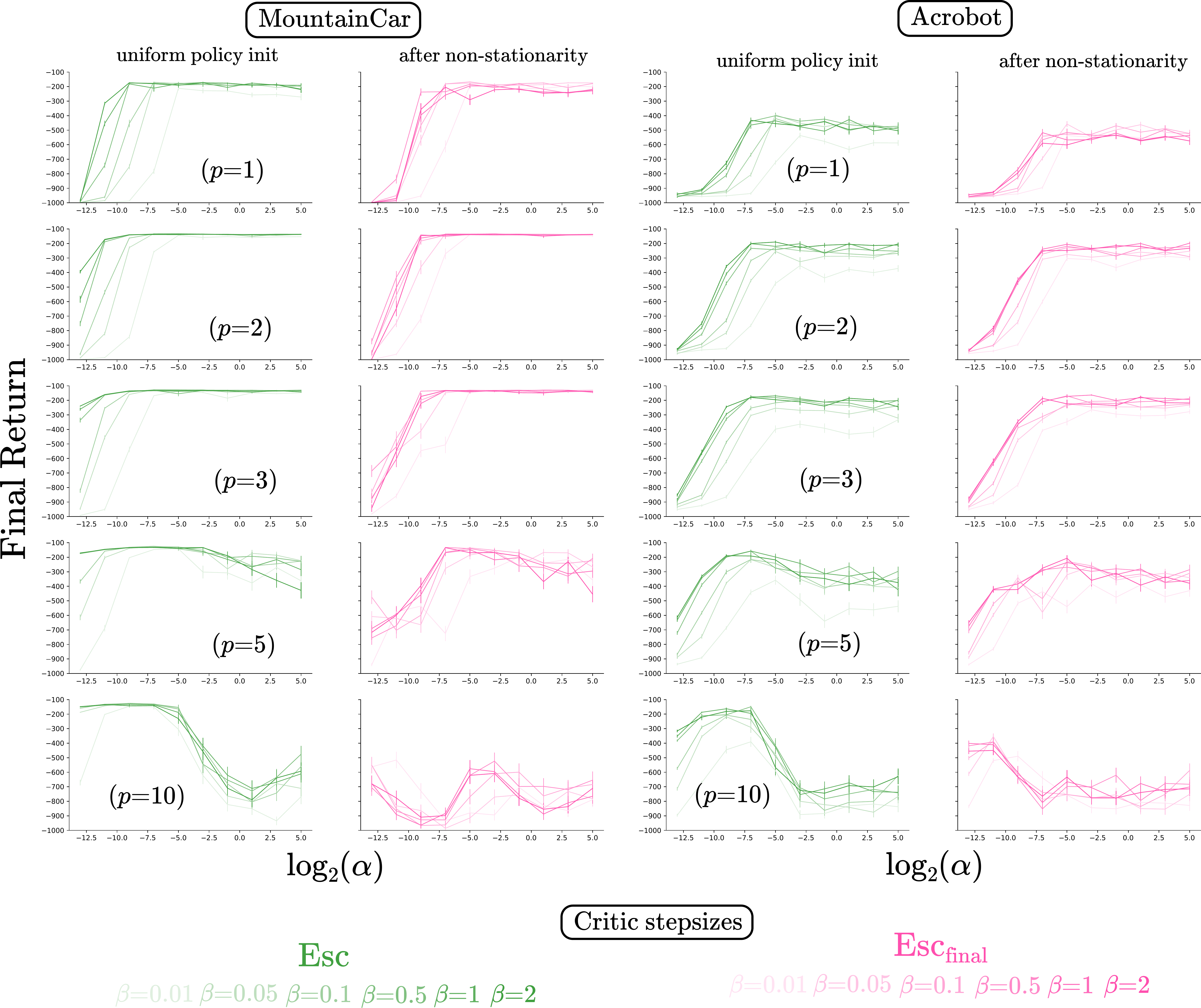}
        \caption{The sensitivity plots for escort transform with online Actor-Critic and linear function approximation (using tile-coded state features) on non-stationary MountainCar and Acrobot. The sensitivity plots show the mean performance during the last $5k$ timesteps, averaged over 50 independent runs. Different rows correspond to different values of the escort power parameter $p$.}
        \label{fig: escort_sensitivity}
      \end{figure}
      
      We show the parameter sensitivity plots for escort transform (ET) in Figure \ref{fig: escort_sensitivity}. These plots show that ET worked really well for $p=2$ and $p=3$. For these values of the escort power parameter, ET had a very low sensitivity to both the policy stepsize $\alpha$ and the critic stepsize $\beta$, and achieved a good performance for most of the parameter settings. However, for $p=1$ and $p=10$, its performance was much worse. These results suggest that ET is a great choice for handling non-stationarities in the environment. In fact, for the top performing choices of $p$, ET was able to achieve a better performance on Acrobot as compared to the alternate estimator. However, the results from the learning curves (Figure \ref{fig: lc_escort_entropy}) showed that the best performing parameter setting (before the non-stationarity hit) for ET (\textcolor{green}{green curve}) was not able to recover well from the non-stationarity. And, Figure \ref{fig: escort_sensitivity} hints that ET might be sensitive to the value of the parameter $p$, and a more extensive empirical evaluation with various PG algorithms and different environments is needed to study this phenomenon further.
      
      \subsection{Experiments with Saturated Linear Softmax Policies} \label{sec: saturated_linear_softmax_policies}
      We now consider the performance of the two PG estimators on Acrobot and MountainCar with artificially saturated policies. Our results show the same trends for alternate and regular estimators in the linear function approximation setting as we saw for the bandit and the tabular setting. For obtaining saturated policies, we initialized the action preference corresponding to the \texttt{do-nothing} action with a value of either $0, 5, 10$, or $100$ depending on the degree of policy saturation, whereas the preferences for the actions \texttt{left} and \texttt{right} were always initialized with zero. The baseline was initialized with different values ($0, +500$, or $-500$) imparting an optimistic or a pessimistic effect to the estimators. Each agent was trained for $50k$ timesteps for both the environments

      Figure \ref{fig: linear_ac_mc_learning_curve} shows the learning curves for for both the environments where the policy was initialized uniformly and the critic was initialized to zero. From this figure, we note that both the estimators were able to learn a good policy (a return of more than $-200$ is considered good on these environments). Further, the alternate estimator learned faster as compared to the regular estimator. We explain this behavior by noting that a baseline initialization of zero is effectively optimistic for a random policy (which will achieve large negative returns). Therefore, with an optimistic baseline, the alternate baseline was probably able to get to the fixed point of the biased update at a fast rate. Once it was near this fixed point, as its critic estimate improved, it slowly moved towards the optimal policy. Whereas, with the regular estimator, the agent just slowly moved towards the optimal policy, and thus had a lower sample efficiency.
      
      \begin{figure}[h]
        \centering
        \includegraphics[scale=0.45]{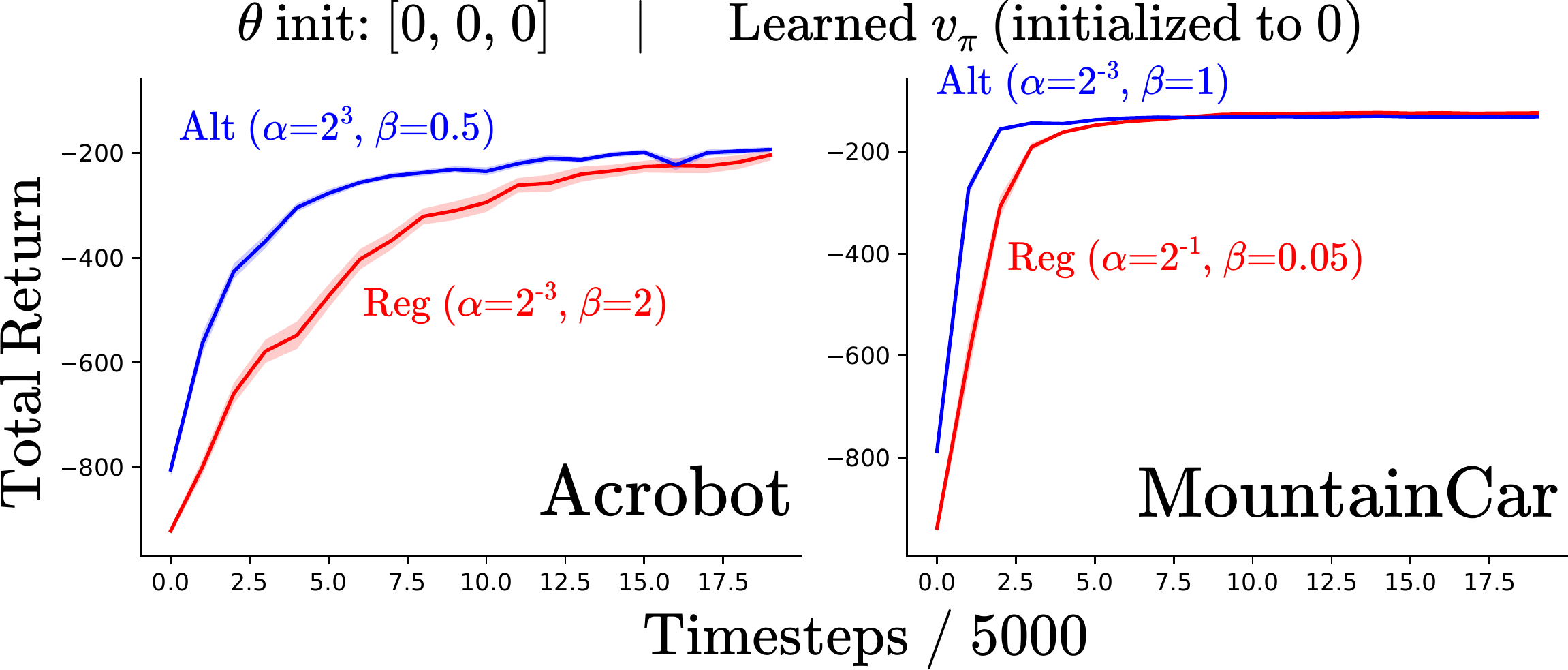}
        \caption{Learning curves for the best stepsize parameter configurations based on final performance on the two environments. The policy was initialized uniformly randomly and the baseline was initialized to zero. The curve show the mean performance and standard error over 50 runs.} 
        \label{fig: linear_ac_mc_learning_curve}
      \end{figure}
      
      We show the sensitivity plots for these estimators, with different policy saturations and baseline initializations, on both Acrobot and MountainCar in Figure \ref{fig: linear_ac_mc_stepsize_sensitivity}. The results follow similar trends that we observed for 3-armed bandits and the tabular chain environment. And impressively, these plots show that the alternate estimator can effectively escape saturated policies, even with a high policy saturation of $\thetavec^{\text{init}} = [0\;100\;0]^\top$.

      \begin{figure}[!tbp]
        \centering
        \includegraphics[scale=0.29]{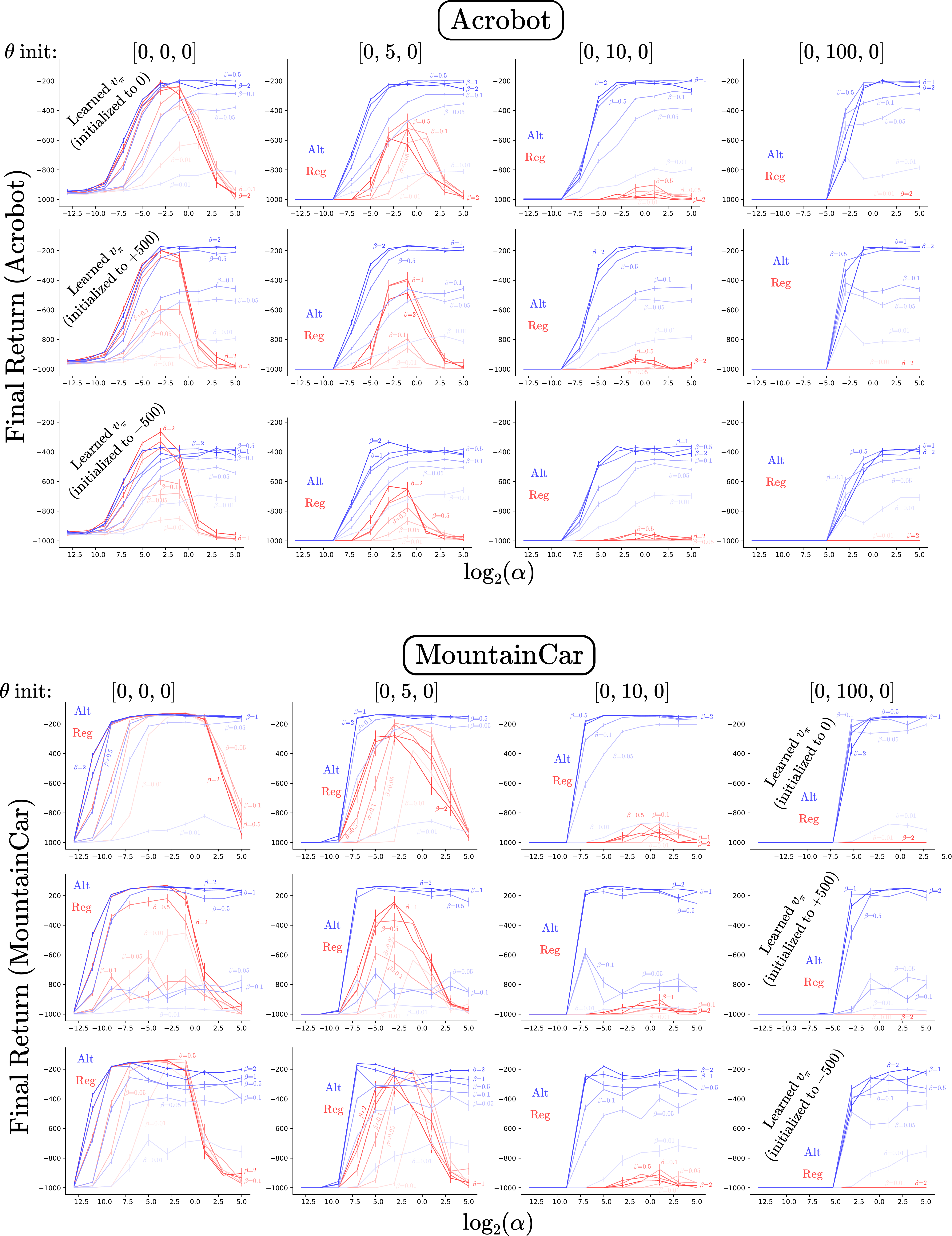}
        \caption{Parameter sensitivity plots for online AC with the two estimators on Acrobot (top) and MountainCar (bottom). The columns correspond to policies with different saturations and the rows correspond to differently initialized critic estimates. Each agent was run for $50k$ timesteps. The plots show the mean final performance during the last $5k$ steps, averaged over 50 runs.} 
        \label{fig: linear_ac_mc_stepsize_sensitivity}
      \end{figure}

      From the results in this figure, we observe that the alternate estimator had a lower parameter sensitivity and achieved a superior final performance compared to the regular estimator, for almost all the policy saturations and baseline initializations considered in this experiment; except for the (uniform policy and $b = -500$) initialization case, where the regular estimator had a better performance. We also observe that as the baseline was initialized to more negative values (to $-500$ instead of $0$), the performance of the alternate estimator became worse. The baseline initialization affects the regular estimator much less. And as the policy initialization became more saturated, the performance of both the methods worsened. Another thing to note is that the regular estimator had a lower sensitivity to the critic stepsize $\beta$; the alternate estimator performed poorly for lower critic stepsizes than it did for higher critic stepsizes. We believe that the reason for this is that with lower critic stepsizes, the critic takes more time to track the value function due to which the alternate estimator remains biased for longer; and this probably affects the final performance in a negative way. We make one final comment about this experiment: the initializations ($0, +500$, and $-500$) are all effectively optimistic for a saturated policy (which gets a return of less than $-1000$); this probably explains why the alternate estimator had such a good final performance, for most settings, in both the tasks.
      
      \subsection{Experiments on RealisticMountainCar}
      We now consider the performance of the two estimators on a difficult variant of the usual MountainCar environment which we call the RealisticMountainCar\footnote{RealisticMountainCar was proposed by Andy Patterson. We used the implementations available here: \url{https://github.com/andnp/PyRlEnvs/tree/main/PyRlEnvs/domains}. In this repository, MountainCar is referred to as GymMountainCar and RealisticMountainCar is referred to as MountainCar.}. Unlike the original environment that had an instantaneous dynamics model (\S 4.3, Moore, 1990), i.e. an acceleration applied to the car instantaneously changed its velocity, RealisticMountainCar introduces a delay between when the acceleration is applied and when the car's velocity increases, thereby making the dynamics more realistic. Moreover, as we find from our experiments, this delay makes the environment more difficult to solve for the regular PG estimator. The state-action space for this environment remains the same as the original MountainCar environment. The discount factor used was $\gamma = 1$ and the episode was cutoff after 1000 timesteps if the agent was unable to solve it by then. We performed sweeps over the policy stepsize $\alpha \in \{2^{-8}, 2^{-6}, \ldots, 2^{16}\}$ and the critic stepsize $\beta \in \{0.01, 0.05, 0.1, 0.5, 1, 2\}$. All the agents were trained for $100k$ timesteps and 50 independent runs.
      
      \begin{figure}[!hbp]
        \centering
        \includegraphics[scale=0.4]{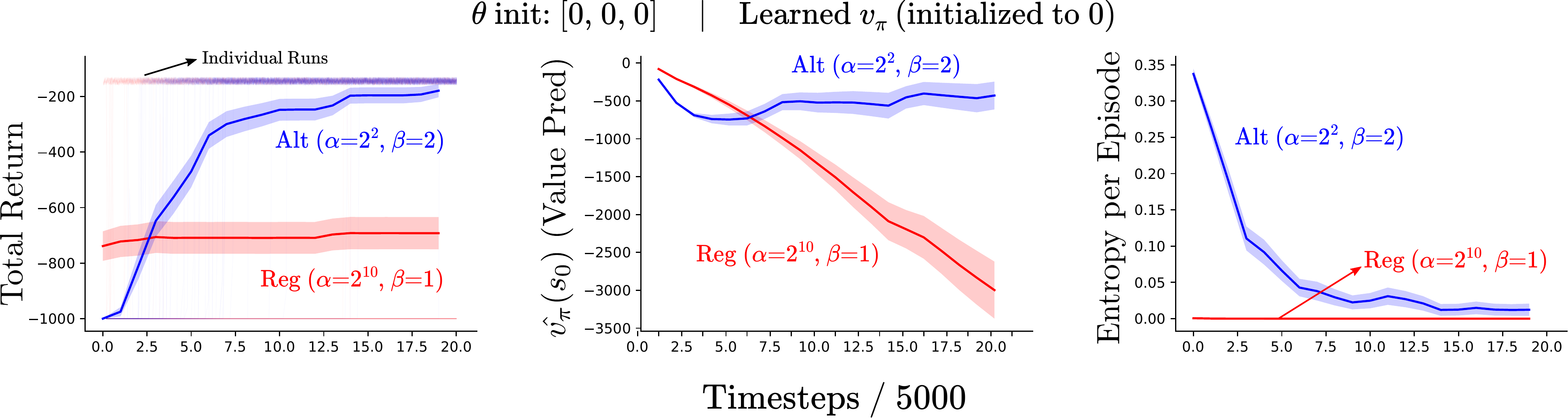}
        \caption{Learning curves for online AC on the RealisticMountainCar task for the best performing parameters. We also plot the value estimate of the initial state of the episode, and the mean policy entropy on the states encountered during different episodes. Even though the alternate estimator was able to learn an effective policy, the regular estimator failed to do so. Note that the regular estimator achieved its best performance at a very high stepsize that almost instantaneously pushed the policy in a highly saturated sub-optimal region; hence the learning curve for the regular estimator is flat (corresponding to an unchanging policy) and the entropy plot for the regular estimator has zero entropy throughout the learning period.} 
        \label{fig: linear_realmc_learning_curve}
      \end{figure}

      Figure \ref{fig: linear_realmc_learning_curve} shows the performance curves, the value function estimate of the initial states, and the policy entropy for the two estimators as the agent learned. From these results, we see that the alternate estimator was effectively able to solve this task, whereas the regular estimator was unable to learn a good policy. From the individual runs for the regular estimator, we see that some of them resulted in a good policy (achieving a return of $>-200$), whereas most of them didn't learn (achieving the maximum possible return of $-1000$), giving it a net average return of about $-700$. We also included the plots for $\hat{v}_\pi(s_0)$ where $\hat{v}_\pi$ is the critic estimate and $s_0$ is the start state observed at the beginning for each episode. From these plots, we see that the value function of the regular estimator diverges to negative infinity because we used a discount factor $\gamma = 1$ and the agent, for most runs, was unable to solve this task within 1000 timesteps.

      We show the sensitivity plots for the mean final performance of the two estimators during the last $5k$ timesteps in Figure \ref{fig: linear_realmc_stepsize_sensitivity}. These results show that the alternate estimator learns a good policy for a large range of stepsizes. Most curiously, for this experiment, the alternate estimator with a baseline initialization of $+500$ had a worse performance than the initialization of $0$ and $-500$. We attribute this result to the fact that overcoming an optimistic initialization of $+500$ takes the longest time (during which the estimator remains biased). Further, since $0$ and $-500$ are already optimistic for this problem, they had a good performance as well. This observation leads us to conclude that the critic for the alternate estimator should be initialized optimistically, but not too optimistically.

      \begin{figure}[t]
        \centering
        \includegraphics[scale=0.4]{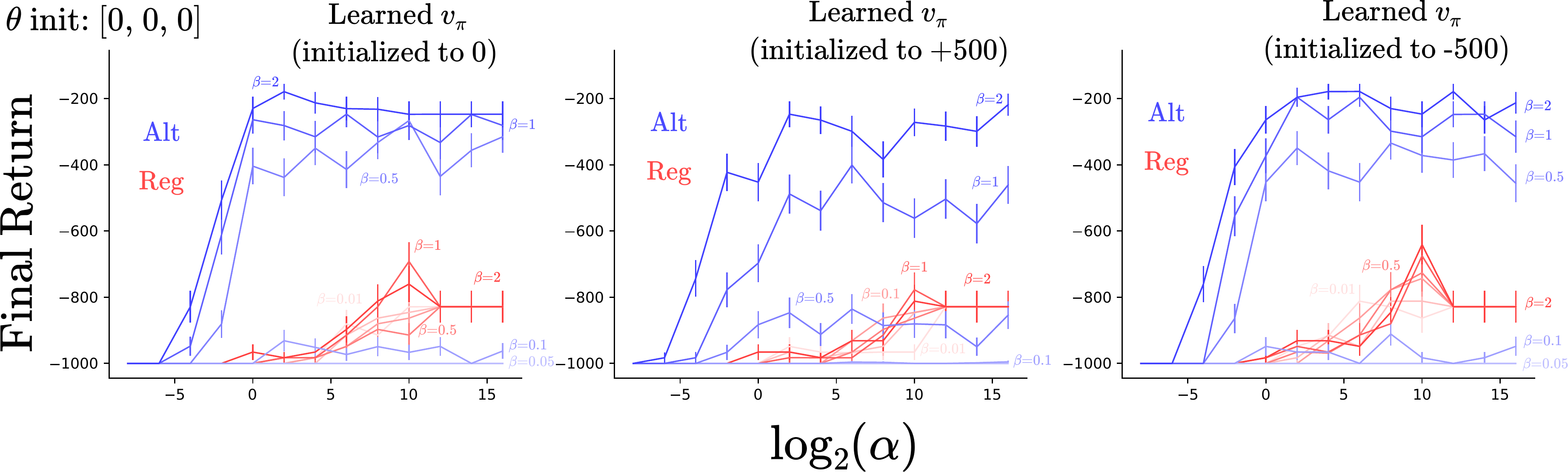}
        \caption{Parameter sensitivity of different PG estimators with online AC on RealisticMountainCar. The policy was initialized uniformly, and the value function was initialized to different value in each column. We see that the alternate estimator is fairly robust across parameters whereas the regular estimator exhibits poor performance consistently across different parameter settings.} 
        \label{fig: linear_realmc_stepsize_sensitivity}
      \end{figure}

      \section{Online AC with Neural Networks} \label{app: experiments_neural}
      In this section, we study the online AC algorithm with a neural policy and a neural critic on an environment called DotReacher. We again experimented separately with artificially saturated policy initializations and a non-stationary environment. In either case, we found that the results were similar to the linear function approximation case. We now specify the experimental details and present the empirical results.
      
      \subsection{Details about Experiments with Neural Networks}
      DotReacher\footnote{DotReacher was proposed by Rupam Mahmood. We used the implementation available here: \url{https://github.com/svmgrg/rl_environments/tree/main/DotReacher}.}, shown in Figure \ref{fig: environments} is a grid world environment where the agent is spawned uniformly randomly in a $2 \times 2$ unit squared arena. For the stationary version of DotReacher, the goal is to reach the $(0, 0)$ state within a tolerance of $0.1$. The agent has 9 different actions: eight corresponding to eight different directions and one for no movement. Each action deterministically moves the agent $0.03$ units in the corresponding direction. The agent gets a reward of $-0.01$ until it reaches the goal state. We fixed $\gamma = 1$ and the episode cutoff length as 1000. For the non-stationary variant, we initially set the goal state to $(-1, -1)$ and then shifted it to $(+1, +1)$ after $50k$ steps. 

      For both the experiments involving the stationary the non-stationary DotReacher, we swept the policy stepsize $\alpha \in \{2^{-17}, 2^{-15}, \ldots, 2^{-3}\}$, the critic stepsize $\beta \in \{2^{-17}, 2^{-15}, \ldots, 2^{-7}\}$, and used the Adam optimizer (Kingma and Ba, 2014) with the default PyTorch (Paszke et al., 2019) configuration\footnote{Refer here: \url{https://pytorch.org/docs/stable/generated/torch.optim.Adam.html}.}. We maintained two separate neural networks  (both with \texttt{ReLU}) for the policy and the critic: the policy network dimensions were $2 \times 10 \times 10 \times 9$ and the critic network dimensions were $2 \times 10 \times 10 \times 1$. For saturating the policy, we increased the action preference corresponding to the no movement action by modifying the bias units of the last layer of the neural network.

      \subsection{Non-stationary DotReacher} \label{sec: non_stationary_dotreacher}
      Figure \ref{fig: neural_nonstationary_dotreacher_stepsize_sensitivity} shows the learning curves and the stepsize sensitivity plots for the online AC algorithm with the two estimators on the DotReacher environment. The policy was initialized uniformly and the critic estimate was initialized to zero. The results follow the same pattern as the experiments on the non-stationary Acrobot and MountainCar. The alternate estimator is still competitive or superior than the regular estimator, although the difference is less pronounced. 
      
      \begin{figure}[t]
        \centering
        \includegraphics[scale=0.35]{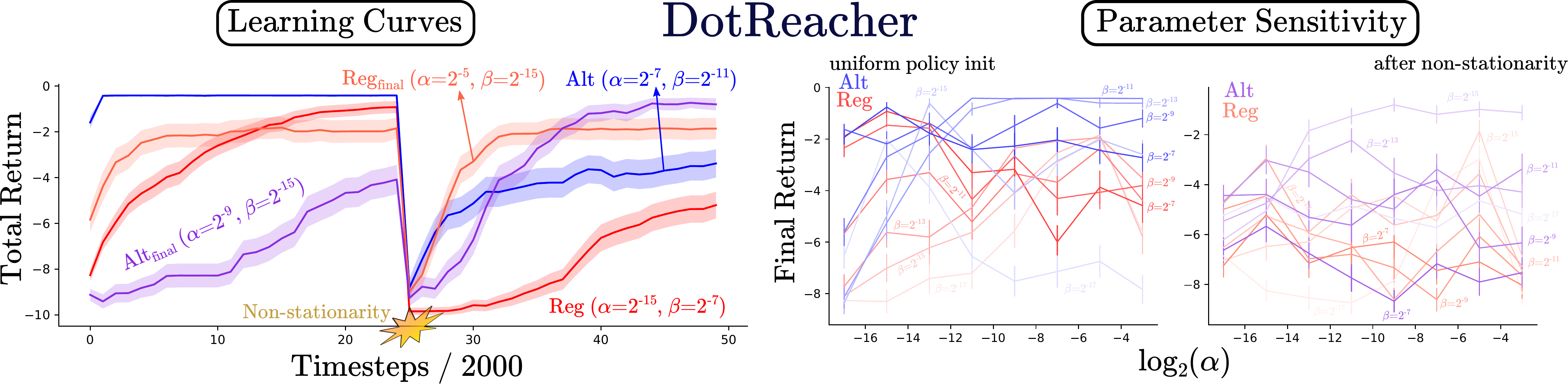}
        \caption{\textbf{(Left)} Learning curves for online AC with neural networks on DotReacher ($100k$ timesteps, 50 runs) for two sets of best performing parameter configuration: one that obtained the best performance right before the non-stationarity occurred, and the other that obtained the best final performance at the end of learning. \textbf{(Right)} The parameter sensitivity plots showing the mean final performance (during the last $2k$ timesteps) at the end of $50k$ timesteps and the mean final performance at the end of $100k$ timesteps. The mean final performance was computed by averaging over the last 2000 timesteps either at half-time or at the end of training.} 
        \label{fig: neural_nonstationary_dotreacher_stepsize_sensitivity}
      \end{figure}
      
      \subsection{Experiments with Saturated Neural Policies on DotReacher} \label{sec: opti_pessi_neural_pg}
      We now consider the experiments with artificially induced policy saturations on the DotReacher environment. Figure \ref{fig: neural_dotreacher_stepsize_sensitivity} shows the sensitivity plots for both the estimators for different policy saturations and critic initializations. In this problem, we consider a return of more than $-1$ as being good. The results are same as those discussed in \S \ref{sec: saturated_linear_softmax_policies}: the alternate estimator had a superior performance and exhibited less sensitivity to the stepsize $\alpha$ as compared to the regular estimator. Interestingly, for these results, we observe that for lower amounts of policy saturations ($\theta_{\text{no-action}} \in \{5, 10\}$), the regular estimator could learn policies almost as good as the alternate estimator. But for the highest saturation case, only the alternate estimator was able to escape the saturation and obtain high returns. We also see that an optimistic baseline initialization (for this problem, $b = 0$ and $b = +25$) helped the alternate estimator achieve a good performance, whereas the pessimistic baseline initialization ($b = -25$) made the performance of the alternate estimator even worse than the regular estimator (except in the case of super high policy saturation).
      
      \begin{figure}[h]
        \centering
        \includegraphics[scale=0.3]{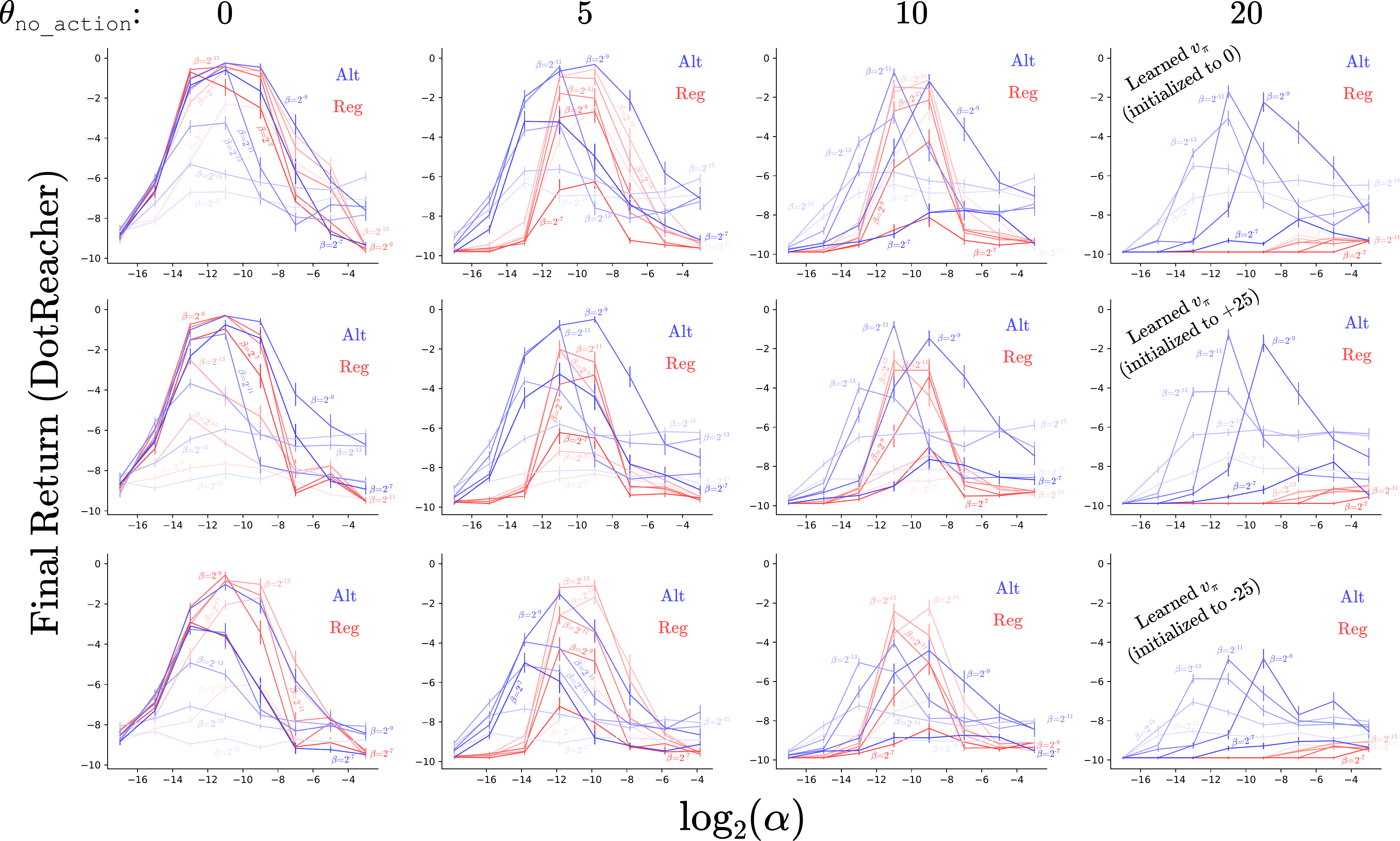}
        \caption{Online AC + neural networks with different baseline initializations and policy saturations on the DotReacher task. The plots show the mean final performance during the last $5k$ timesteps of the agent's $50k$ timestep lifetime. The figure header shows the action preference initialization for the no movement action; the rest of the preferences were initialized to zero. Different rows show differently initialized value function estimates ($0$, $+25$, or $-25$). We observe that for almost all the settings, the alternate estimator performed better than the regular estimator in recovering from saturated policies, especially for higher saturation magnitudes.} 
        \label{fig: neural_dotreacher_stepsize_sensitivity}
      \end{figure}

      \section{A Digressive Remark: Softmax Policies Learn even with Extremely Large Stepsizes}
      In this section, we mention a peculiar observation: online AC with the alternate estimator was able to learn a good policy even with extremely large stepsizes. The reader might have noticed from the parameter sensitivity plots presented in this section that the alternate estimator has a fairly good performance for a big range of the policy stepsize $\alpha$. In particular, the sensitivity plots, given in Figure \ref{fig: linear_ac_mc_stepsize_sensitivity}, for Acrobot and MountainCar show that for the stepsize range we tried (with the maximum value of $\alpha$ being $2^5 = 32$), the alternate estimator had a good performance throughout. More interestingly, for RealisticMountainCar (sensitivity plots given in Figure \ref{fig: linear_realmc_stepsize_sensitivity}), we tried stepsizes as large as $2^{16} = 65536$, and the alternate estimator still received a fairly high final return (only slightly lower than the what the alternate estimator with the best parameter setting achieved and much higher than what the regular estimator received). Even if we take into account the normalization factor from tile-coding (which was 9), the effective stepsize still remains quite high: $\nicefrac{2^{16}}{9} = 7282$. To put this number in perspective, the default choice of the stepsize for training neural networks with the Adam optimizer (Kingma and Ba, 2014) is $0.0003$. One might wonder, whether this only happens for online AC with linear function approximation setting, and the answer is no. We show the stepsize sensitivity plots for REINFORCE on the chain environment (the same experiment as discussed in \S \ref{sec: reinforce_chain_different_policy_saturations_exp} and \S \ref{sec: opti_pessi_pg_experiments}) with a wide range of the policy stepsize in Figure \ref{fig: tabular_00_v0_large}. And these plots display a similar pattern: the alternate estimator learns for extremely large values of the policy stepsize as well. The first two rows of Figure \ref{fig: tabular_00_v0_large} are exactly the same as Figure \ref{fig: multiple_bad_init_sensitivity} except that it includes a larger sweep for the policy stepsize $\alpha \in \{2^{-6}, 2^{-4}, \ldots, 2^{14}\}$. Similarly, the third and fourth rows are identical to Figure \ref{fig: optimistic_linear_chain_sensitivity} and Figure \ref{fig: pessimistic_linear_chain_sensitivity} respectively. We verified that none of these results were due to bugs in our experiments and now discuss some plausible explanations for this behavior.

      \begin{figure}[t]
        \centering
        \includegraphics[scale=0.33]{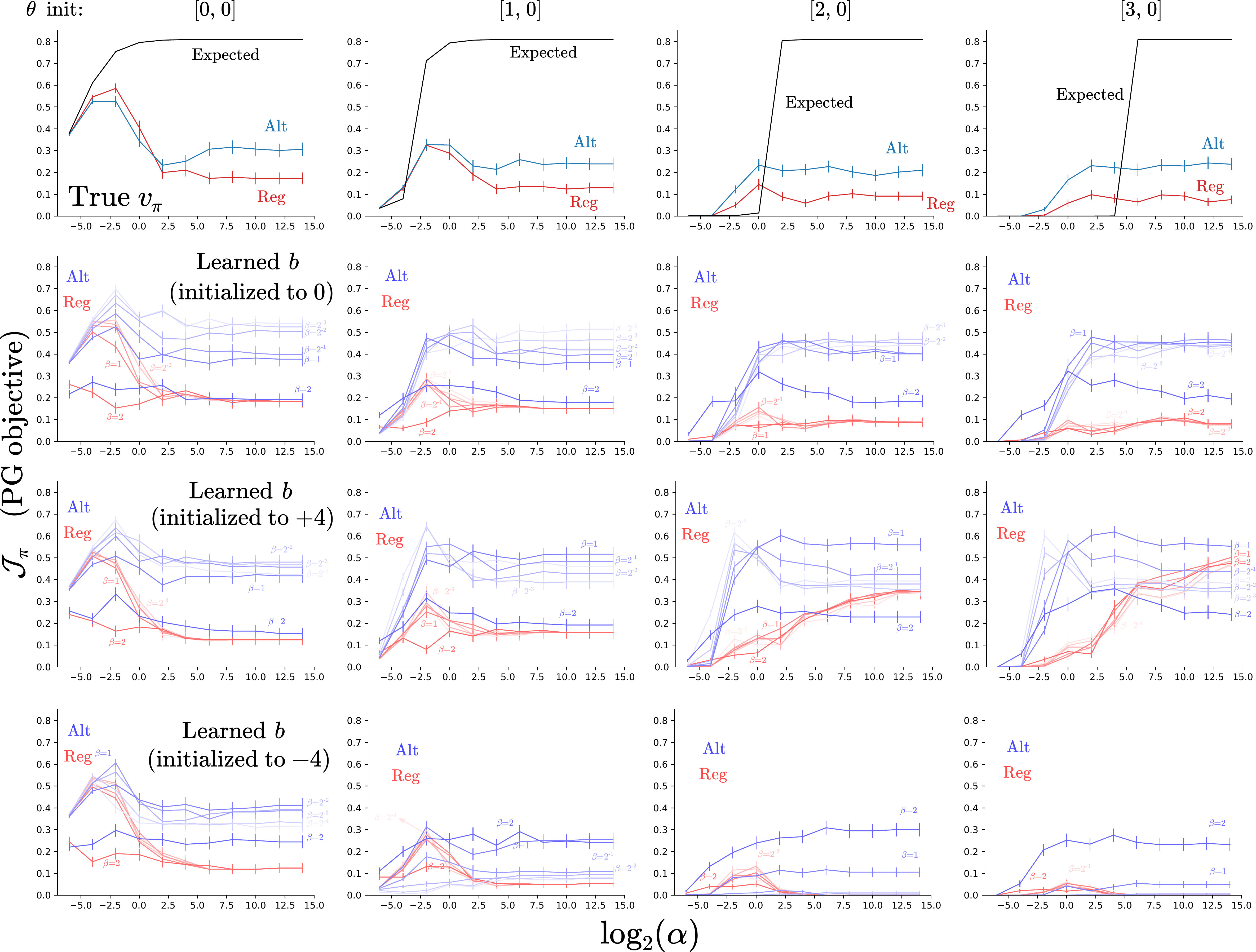}
        \caption{Sensitivity plots showing the final performance of \textbf{tabular} REINFORCE on the chain environment. The different columns correspond to an increasing degree of policy saturation and the figure headers show the action preference initializations for the \texttt{left} and \texttt{right} actions respectively. The first row shows the results for the true baseline $v_\pi$, and the next three rows show results for a learned baseline, which was initialized to either zero, $+4$, or $-4$ at the beginning of learning.} 
        \label{fig: tabular_00_v0_large}
      \end{figure}

      We ascribe these observations to the nature of softmax function and the policy optimization problem over the action preferences. Note that (1) the softmax function transforms any set of action preferences, irrespective of how large they are, to a proper probability distribution; and (2) for softmax policies, the optimal point which maximizes the PG objective, lies at infinity on the action preference landscape. The second point is true because for MDPs, the optimal policy is greedy, which requires the action preference for the greedy action to be infinitely large. This makes the optimization problem over the action preferences very different from, say, an unconstrained convex optimization problem. For convex functions, an update step with a high stepsize, even in the correct direction, can make the function value worse than its present value. In contrast, a larger stepsize with softmax functions will only drive the function value closer to the optimum at a faster rate. We illustrate this point in Figure \ref{fig: softmax_optim}. Therefore, as long as the update direction is correct, having large stepsizes should not be a concern for softmax policies.
      
      \begin{figure}[!tbp]
        \centering
        \includegraphics[scale=0.6]{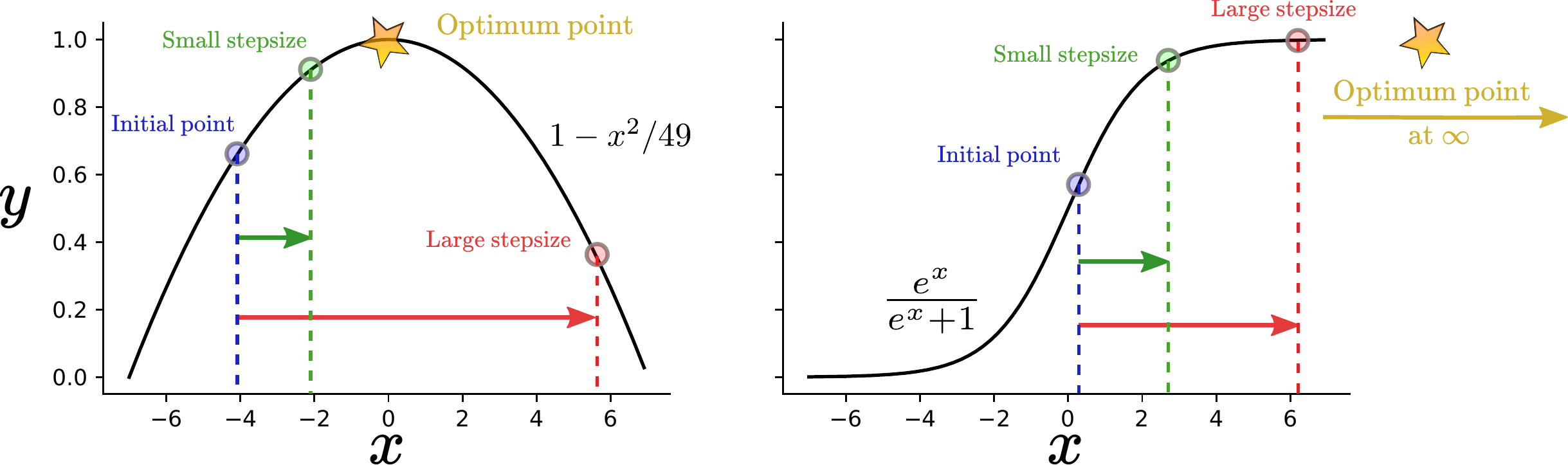}
        \caption{Effect of the stepsize magnitude on optimizing a function value. We consider the optimization problem on a convex function and the logistic function. Note that the logistic function $y = \frac{e^x}{e^x + 1}$ is essentially equivalent (except for a translation along the $x$-axis) to the softmax function $\pi(a) = \frac{e^{\theta_a}}{e^{\theta_a} + \sum_{c \in \mathcal{A} \setminus {a}} e^{\theta_c}}$. This figure shows that given the correct update direction (positive direction for both of these problems), an update step with a very large stepsize on the convex function can make the function value worse. Whereas, in the case of a softmax function, the larger the stepsize, the more the function value increases with a single update step.} 
        \label{fig: softmax_optim}
      \end{figure}
      
      For concreteness, let us consider the stepsize sensitivity behavior of expected PG in Figure \ref{fig: tabular_00_v0_large}: a larger stepsize for expected PG update resulted in a faster convergence to the optimal policy and consequently a higher final return. From gradient based optimization literature, we know that the expected (policy) gradient is the the update direction that results in the fastest local improvement in the (policy) objective, and therefore using the reasons given in the previous paragraph, we would expect a large stepsize to drive the policy towards this local optimum faster. We can also observe this trend for the expected gradient bandit algorithm. Figure \ref{fig: intuition_bandits_reward123_noise1} (bottom-left) shows that for a bandit problem, a large policy update stepsize should still make the policy converge towards the optimal (top) corner, albeit via a different route. But for that bandit problem, there is only one optimum that is also globally optimal. Also, intuitively, the policy gradient update in the (tabular) MDP setting can be considered as (kind of) making separate gradient bandit updates for each state.

      However, due to the sampling noise, the stochastic PG estimators might not always give an update direction that points towards the local optimum. Consequently, a very large stepsize could saturate the policy in a sub-optimal manner. We posit two reasons for why the alternate estimator might be better suited to handle very large policy stepsizes than the regular estimator. The first reason is that the alternate estimator can handle and escape sub-optimally saturated policies much better than the regular estimator. With very large stepsizes, the action preferences for both the estimators would be driven to large values and the policy would becomes saturated. For such cases, the regular gradient estimator becomes vanishingly small: $\hat{\gvec}^{\text{REG}}(S, A) = (R - b) (\evec_A - \pivec) \approx 0$, since near the corners $\evec_A \approx \pivec$ with a high probability. Therefore, even with a very large stepsize $\alpha$, the net update to the action preferences $\thetavec^{\text{new}} = \thetavec^{\text{old}} + \alpha \hat{\gvec}^{\text{REG}}(S, A)$ remains small, and the policy doesn't change much. In contrast, for the alternate estimator $\hat{\gvec}^{\text{ALT}}(S, A) = (R - b) \evec_A$, irrespective of the degree of policy saturation, the estimator remains non-zero, which when combined with a large stepsize can be effective in changing even very large action preferences. Although this explanation used the gradient bandit estimators, similar arguments should hold for the MDP policy gradient estimators as well.

      The second point is that, since the alternate estimator updates the preference corresponding to only the sampled action, there is much less generalization in the preference values across actions when using the alternate estimator. In contrast, the regular estimator updates all the action preferences and therefore has a high generalization across the action space. This distinction becomes important when the agent updates the policy using an incorrect direction and a large stepsize. With the alternate estimator, only some action preferences for some states get saturated, whereas with the regular estimator, all the preferences for a set of states get saturated. This generalization argument is also supported by the experiments presented in this document. With large stepsizes, the performance of the estimators, especially the regular version, becomes worse as we increase the generalization of the policies: compare the performance of the two estimators in Figure \ref{fig: tabular_00_v0_large} (tabular policy: no generalization); Figures \ref{fig: linear_ac_mc_stepsize_sensitivity} and \ref{fig: linear_realmc_stepsize_sensitivity} (linear policy with tile-coding: generalization across 8 tilings); and Figure \ref{fig: neural_dotreacher_stepsize_sensitivity} (neural policy: high generalization). With neural networks, none of the estimators  performed well with large stepsizes because updating even a single action preference (in the wrong direction) changes all the weights lying in the initial layers of the network, ultimately affecting all the action preferences.

      One experiment to test the above hypothesis could be to run online AC with both the estimators on Acrobot and MountainCar with a tabular policy. This could be achieved with a tile-coding that has super-fine meshes (that is a large number of tiles per dimension) and only a single tiling. If the performance of the regular estimator with high stepsizes improves as compared to Figure \ref{fig: linear_ac_mc_stepsize_sensitivity}, then that would serve as a strong verification for this hypothesis. Unfortunately, a detailed investigation of this phenomenon is out of scope of this work, and therefore we do not pursue it any further.

\end{document}